\definecolor{light-gray}{gray}{0.95}
\title{\textsf{SmartPM}: Automatic Adaptation of \\Dynamic Processes at Run-Time}
\author{Andrea Marrella}
\website{http://www.dis.uniroma1.it/$\sim$marrella}
\begin{document}

\frontmatter

\maketitle

\dedication{Dedicated to my grandmother, with love.}

\begin{comment}
\begin{abstract}
This document is an example which shows the main features of
the \LaTeXe\ class \texttt{sapthesis.cls} developed by Francesco Biccari
with the help of GuIT (Gruppo Utilizzatori Italiani di \TeX).
\end{abstract}
\end{comment}

\begin{acknowledgments}[Ringraziamenti]
\\
Questo lavoro di Tesi è il risultato di un percorso di ricerca durato 3 anni e mezzo. Provo emozioni contrastanti se ripenso all'esperienza di Dottorato che ho appena concluso. Ho vissuto momenti difficili, talvolta frustranti, ma l'impegno e la passione per la ricerca sono stati più forti delle difficoltà. In definitiva, non cambierei nulla della mia esperienza di Dottorato. Penso che mi abbia reso una persona migliore, dal punto di vista accademico ed umano.

Non sarei mai riuscito a completare questo percorso senza l'apporto di persone speciali. A questo proposito, vorrei principalmente ringraziare Chiara, che non ha mai mancato di incoraggiarmi e mi è stata vicina soprattutto nei momenti più difficili, e la mia famiglia, che mi ha sempre mostrato un sostegno impagabile in ogni situazione. Ringrazio anche i miei colleghi di Dottorato, in particolare Alessandro, Claudio, Francesco, Donatella, Mario, Riccardo, Paolo. Mi ritengo fortunato ad aver condiviso una parte della mia vita con loro.

Un ringraziamento speciale va a Massimo, grazie al quale ho acquisito un'esperienza lavorativa ed umana più unica che rara, e a quei ricercatori con cui ho lavorato e condiviso idee di ricerca. Una menzione particolare va al Prof. Yves Lesperance, grazie al quale ho potuto intraprendere una fantastica esperienza di ricerca e di vita in Canada, che mi ha arricchito culturalmente e professionalmente.
\\
\\
Grazie di cuore a tutti.

\end{acknowledgments}

\tableofcontents

\begin{sloppypar}
\newtheorem{mydef}{Definition}
\newtheorem{definition}{Definition}
\newtheorem{theorem}{Theorem}

\newtheorem{example}{Example}[section]

\newcommand {\myi} {\emph{(i)}~}
\newcommand {\myii} {\emph{(ii)}~}
\newcommand {\myiii} {\emph{(iii)}~}
\newcommand {\myiv} {\emph{(iv)}~}
\newcommand {\myv} {\emph{(v)}~}

\newcommand{\DD}{\mbox{$\cal D$}}                     % Calligraphic D
\newcommand{\CC}{\mbox{$\cal C$}}                     % Calligraphic C
\newcommand{\RR}{\mbox{$\cal R$}}                     % Calligraphic R

\newcommand{\hilight}[1] {\colorbox{yellow}{#1}}

\newcommand {\smartpm} {\textsf{SmartPM}\xspace}
\newcommand {\smartpmML} {\textsf{SmartPM Modeling Language}\xspace}
\newcommand {\smartML} {\textsf{SmartML}\xspace}
\newcommand {\indigolog} {\textsf{IndiGolog}\xspace}
\newcommand {\ConGolog} {\textsf{ConGolog}\xspace}
\newcommand {\Golog} {\textsf{Golog}\xspace}
\newcommand {\golog} {\textsf{Golog}\xspace}
\newcommand {\swiprolog} {\textsf{SWI-Prolog}\xspace}
\newcommand {\SWIProlog} {\textsf{SWI-Prolog}\xspace}
\newcommand {\sitcalc} {Situation Calculus\xspace}
\newcommand{\Prol}[1]{\texttt{\mbox{#1}}}
\newcommand{\kTrue}{\texttt{kTrue}}
\newcommand{\mTrue}{\texttt{mTrue}}

\newcommand{\arrow}[1]{\,\hbox{$\stackrel{\small \mbox{#1}}{\rightarrow}$}\,}
\newcommand{\arrowstar}{\,\hbox{$\stackrel{\mbox{$\ast$}}{\rightarrow}$}\,}
\newcommand{\isdef}{\ \hbox{$\stackrel{\mbox{\tiny def}}{=}$}\ }

\newcommand{\fontFluents}[1]{\mbox{\textit{#1}}}
\newcommand{\fontActions}[1]{\mbox{\textsl{#1}}}

\newcommand{\Poss}{\fontFluents{Poss}}
\newcommand{\Trans}{\fontFluents{Trans}}
\newcommand{\Final}{\fontFluents{Final}}
\newcommand{\Do}{\fontFluents{Do}}
%\newcommand{\do}{\fontFluents{do}}

%Macros for 3-Approach/Preliminaries
\newcommand{\aUnlock}{\fontFluents{unlock}}
\newcommand{\aOpenDoor}{\fontFluents{open}}
\newcommand{\aCloseDoor}{\fontActions{close}}
\newcommand{\fDoorOpen}{\fontFluents{DoorOpen}}
\newcommand{\fLocked}{\fontFluents{Locked}}
\newcommand{\fColor}{\fontFluents{Color}}
\newcommand{\fFree}{\fontFluents{Free}}
\newcommand{\fReserved}{\fontFluents{Reserved}}
\newcommand{\fAssigned}{\fontFluents{Assigned}}
\newcommand{\fIdentifier}{\fontFluents{Identifier}}

\def\prparallel{\mathrel{\rangle\!\rangle}}
\newcommand{\mif}{\mbox{\bf if}}
\newcommand{\mwhile}{\mbox{\bf while}}
\newcommand{\mreturn}{\mbox{\bf return}}
\newcommand{\mthen}{\mbox{\bf then}}
\newcommand{\melse}{\mbox{\bf else}}
\newcommand{\mdo}{\mbox{\bf do}}
\newcommand{\mnoOp}{\mbox{\bf noOp}}
\newcommand{\mproc}{\mbox{\bf proc}}
\newcommand{\mend}{\mbox{\bf end}}
\newcommand{\mendproc}{\mbox{\bf endProc}}
\newcommand{\mendif}{\mbox{\bf endIf}}
\newcommand{\mendwhile}{\mbox{\bf endWhile}}
\newcommand{\mendfor}{\mbox{\bf endFor}}
\newcommand{\mfor}{\mbox{\bf for}}
\def\supparallel{\mathord{|\!|}}
\newcommand{\tuple}[1]{\ensuremath{\langle #1 \rangle}}     % tuple
\newcommand{\search}{\mbox{$\Sigma$}}
\newcommand{\nil}{\mbox{\it nil}}
\newcommand{\false}{\mathtt{false}}
\newcommand{\False}{\mathtt{false}}
\newcommand{\true}{\mathtt{true}}
\newcommand{\True}{\mathtt{True}}
\newcommand{\xdo}{\mbox{\it do}}

\newcommand{\aAssign}{\fontActions{assign}}

\newcommand{\recovered}{\fontFluents{Recovered}}
\newcommand{\Monitor}{\fontFluents{Monitor}}
\newcommand{\Relevant}{\fontFluents{Relevant}}
\newcommand{\fRelevant}{\fontFluents{Relevant}}
\newcommand{\SameConfig}{\fontFluents{SameConfig}}
\newcommand{\SameState}{\fontFluents{SameState}}
\newcommand{\Linear}{\fontFluents{Linear}}
\newcommand{\Recovery}{\fontFluents{Recovery}}
\newcommand{\fRealityChanged}{\fontFluents{RealityChanged}}

\newcommand{\fService}{\fontFluents{Service}}
\newcommand{\workitem}{\fontFluents{Workitem}}
\newcommand{\fTask}{\fontFluents{Task}}
\newcommand{\fCapability}{\fontFluents{Capability}}
\newcommand{\fProvides}{\fontFluents{Provides}}
\newcommand{\fRequires}{\fontFluents{Requires}}
\newcommand{\fNeigh}{\fontFluents{Neigh}}
\newcommand{\fCovered}{\fontFluents{Covered}}
\newcommand{\fCapable}{\fontFluents{Capable}}
\newcommand{\fAvailable}{\fontFluents{Available}}
\newcommand{\fListElem}{\fontFluents{ListElem}}
\newcommand{\fVar}{\fontFluents{Var}}
\newcommand{\fX}{\fontFluents{X}}
\newcommand{\fY}{\fontFluents{Y}}
\newcommand{\fEnabled}{\fontFluents{Reserved}}
\newcommand{\DomainOfX}{\fontFluents{DomainOfX}}

\newcommand{\aStart}{\fontActions{start}}
\newcommand{\aAck}{\fontActions{ackCompl}}
\newcommand{\aRelease}{\fontActions{release}}
\newcommand{\aReady}{\fontActions{readyToStart}}
\newcommand{\aFinish}{\fontActions{finishedTask}}
\newcommand{\aGo}{\fontActions{go}}
\newcommand{\aMove}{\fontActions{move}}
\newcommand{\aTakePhoto}{\fontActions{takephoto}}
\newcommand{\aEvacuate}{\fontActions{evacuate}}
\newcommand{\aRemoveDebris}{\fontActions{removedebris}}
\newcommand{\aExtinguishFire}{\fontActions{extinguishfire}}
\newcommand{\aUpdateStatus}{\fontActions{updatestatus}}
\newcommand{\aChargeBattery}{\fontActions{chargeBattery}}

\newcommand{\aAdaptStart}{\fontActions{adaptStart}}
\newcommand{\aAdaptFinish}{\fontActions{adaptFinish}}
\newcommand{\aResetReality}{\fontActions{resetReality}}

\newcommand{\fAt}{\fontFluents{At}}
\newcommand{\fAtRobot}{\fontFluents{AtRobot}}
\newcommand{\fEvacuated}{\fontFluents{Evacuated}}
\newcommand{\fStatus}{\fontFluents{Status}}
\newcommand{\fPhotoTaken}{\fontFluents{PhotoTaken}}

\newcommand{\fMoveStep}{\fontFluents{MoveStep}}
\newcommand{\fDebrisStep}{\fontFluents{DebrisStep}}
\newcommand{\fBatteryRecharging}{\fontFluents{BatteryRecharging}}
\newcommand{\fGeneralBattery}{\fontFluents{GeneralBattery}}

\newcommand{\send}{\fontActions{send}}
\newcommand{\alignRealities}{\fontActions{alignRealities}}
\newcommand{\receive}{\fontActions{receive}}

\newcommand{\isConnected}{\fontFluents{isConnected}}
\newcommand{\isRobotConnected}{\fontFluents{isRobotConnected}}

\newcommand{\fBatteryLevel}{\fontFluents{BatteryLevel}}

\newcommand{\fPhotoOK}{\fontFluents{PhotoOK}}
\newcommand{\fRescueOK}{\fontFluents{RescueOK}}
\newcommand{\fSurveyOK}{\fontFluents{SurveyOK}}
\newcommand{\aPhoto}{\fontActions{photo}}
\newcommand{\aSurvey}{\fontActions{survey}}
\newcommand{\aRescue}{\fontActions{rescue}}

\newcommand{\photoLost}{\fontActions{photoLost}}
\newcommand{\rockSlide}{\fontActions{rockSlide}}
\newcommand{\fireRisk}{\fontActions{fireRisk}}

\newcommand{\fAdapting}{\fontFluents{Adapting}}

\newcommand {\ps} {\emph{Planning Service}\xspace}
\newcommand {\planlet} {\textsc{Planlet}\xspace}
\newcommand {\planlets} {\textsc{Planlets}\xspace}

\newcommand{\proc}[1]{\textbf{\normalfont \textbf{#1}}}
\newcommand{\fExog}{\fontFluents{Exogenous}}
\newcommand{\aResetExog}{\fontActions{resetExog}}
\newcommand{\fIsExog}{\fontFluents{ExogAction}}
\newcommand{\fIsFinished}{\fontActions{Finished}}
\newcommand{\finish}{\fontActions{finish}}
\newcommand{\wait}{\fontActions{wait}} 

\chapter{Extended Abstract}
\label{ch_extended_abstract}

Business Process Management~\cite{WeskeBook2007} (a.k.a. BPM) is a ``hot topic'' because it is highly relevant from a practical point of view while at the same it offers many challenges for computer scientists and researchers. It is based on the observation that each product that a company provides to the market is the outcome of a number of activities performed. \emph{Business processes} are the key instruments to organizing these activities and to improving the understanding of their interrelationships. BPM addresses the topic of process support in a broader perspective by incorporating different types of analysis (e.g., simulation, verification, and process mining) and linking processes to business and social aspects. Moreover, the current interest in BPM is fueled by technological developments (e.g., service oriented architectures) triggering standardization efforts. Several research issues have been addressed about the definition of models for describing processes, some of them more targeted towards domain-specific business designers (e.g., UML Activity Diagrams~\cite{umlactivitydiagrams}, BPMN -- Business Process Modeling Notation~\cite{BPMN20}), and others more targeted to formal definitions of processes, in order to enable verification over process schemas (e.g., workflow nets~\cite{vanderAllst@WorkflowNets1998} -- a variant of Petri Nets~\cite{murata1989petri,petriNet} targeted to describing processes, YAWL~\cite{YAWLBook2009}, etc.).

A \emph{Process Management System}~\cite{vanderAllstBook2004} (a.k.a. PMS) is a generic software system that is driven by explicit process representations (also called \emph{process models}) to coordinate the enactment of business processes, aiming at increasing the efficiency and effectiveness in their execution.
A process model, which is always built at design-time, is in charge of organizing the execution order of the activities of the business process. The basic constituents of a process model are \emph{tasks} that describe an activity to be performed by an automated service (e.g., within a service-oriented architecture) or a human (e.g., an employee). The procedural rules to control the tasks are usually described by routing constructs like sequences, loops, parallel and alternative branches that form the \emph{control flow} of the process~\cite{Minor@IS2012}.

The core of a PMS is the engine that takes in input a process model and manages the process routing by deciding which tasks are enabled for execution, taking into account the control flow, the value of process variables and tasks constraints. The representation of a single execution of a process model within the engine of the PMS is called a \emph{process instance}~\cite{Hollingsworth@WfMC1995}. Once a task is ready for being assigned, the engine is also in charge of assigning it to proper participants; this step is performed by considering the participant ``skills'' required by the single task: a task will be assigned to the participant that provides all of the skills required. Participants are provided with a client application, part of the PMS, named \emph{Task Handler}. It is aimed at receiving notifications of task assignments. Participants can, then, use this application to pick the next task to work on. Current technologies exist on the market which concretely allow the enactment of processes (e.g., the YAWL Engine~\cite{YAWLBook2009} and the jBPM orchestration engine~\cite{jBPMBook2007} based on the WS-BPEL~\cite{bpel} specification, etc.).

Traditionally, PMSs have focused on the support of predictable and repetitive business processes, which can be fully pre-specified in terms of formal process models. All possible paths through those processes are well-understood, and the process participants usually do not need to make a decision about what to do next since the path is completely determined by their data entry or other attributes of the process. This kind of highly-structured work includes mainly production and administrative processes (such as financial services, manufacturing, etc.)~\cite{LeymannBook2000}. However, current maturity of process management methodologies has led to the application of process-oriented approaches in new challenging knowledge-intensive scenarios~\cite{KiBP2012}, such as healthcare~\cite{Reichert-healthcare,Reichert-healthcare-processes_2007} or home automation (e.g., domotics~\cite{Helal2005}). In such working environments, changes in the operational context and in other heterogeneous contextual information may occur unpredictably and at any time, requiring the ability to react to those changes and properly \emph{adapt} and \emph{modify} process behavior. This has led to the need to provide support for flexible and adaptive process management~\cite{Weske@HICSS2001,ReichertBook2012}, by reconsidering the trade-off between flexibility and support provided by existing PMSs~\cite{Pesic@RD2009}. According to~\cite{Sadiq@ER2001}, \emph{Process Adaptation} can be seen as the ability of a process to react to exceptional circumstances (that may or may not be foreseen) and to adapt/modify its structure accordingly.

The current-day leading commercial PMS products~\cite{COSA,TIBCO,WebSphere,SAP} and research prototypes~\cite{Alonso@TSE2000,ADOME,YAWLBook2009} provide some techniques to react to exceptions and adapt process instances to mitigate their effects. Specifically, they provide the support for the handling of \emph{expected exceptions}. The process models are designed in order to cope with potential exceptions, i.e., for each kind of exception that is envisioned to occur, a specific contingency process (a.k.a. \emph{exception handler} or compensation flow) is defined. These approaches perform well with predictable and repetitive business processes, where all possible exceptions and deviations that can be encountered are predictable and defined in advance, along with the specific handling logic. However, in knowledge-intensive scenarios, the process usually \emph{dynamically evolves}, i.e., it strongly depends on user decisions made during process execution. For those processes, it is not possible to anticipate all real-world exceptions and to capture their handling in a process model at design-time.

In this direction, since the last Nineties, a new class of \emph{Adaptive Process Management Systems} is emerged, by facilitating structural changes of processes at run-time~\cite{Dadam@JIIS1998,adaptflow,agentwork,Reichert@ICDE2005,adeptcbr,Weber@ACBR2004}. When something goes wrong during the process execution, structural changes apply directly to process elements, and the adaptation is carried out by deleting, adding, or modifying one or several process elements. For example, an adaptive PMS like ADEPT2~\cite{Reichert@ICDE2005} is able to support the handling of unanticipated exceptions, by enabling different kinds of ad-hoc deviations from the pre-modeled process instance at run-time, according to the structural process change patterns defined in~\cite{Weber@DKE2008}. New process models can be created and tailored for a particular demand or business case, and process instances can be adapted after they have been started if some unforeseen events occur. Currently, adaptive PMSs have reached such a level of maturity that they are about to be transferred into practice~\cite{Minor@IS2012}.

The majority of the above approaches face the challenge to provide flexibility and adaptation and to offer process support at the same time. Traditional PMSs deal with expected exceptions at design-time by automatically providing an exception handler at run-time, while adaptive PMSs offers structural process change at run-time for unanticipated exceptions, but they do not automate the adaptation; a manual intervention of a domain expert is always required for adapting a faulty process instance at run-time. However, in the last years, the widespread availability of mobile computing platforms has led to the application of process-oriented approaches in \emph{pervasive and highly dynamic scenarios}~\cite{IEEE2008,EMSOA2010,ISCRAM2010,IJISCRAM2011}. An interesting example comes from the emergency management domain. During the management of complex emergency scenarios, teams of first responders act in disaster locations with the main purpose of achieving specific goals, including assisting potential victims and assessing and stabilizing the situation. The set of activities and procedures that collectively define an emergency response plan are characterized for being as complex as typical business processes and for involving teams of many members. Emergency response operators can benefit from the use of mobile devices and wireless communication technologies, as well as from the adoption of a process-oriented approach for team coordination~\cite{Rome4EU}. A response plan encoded as a business process and executed by a PMS deployed on mobile devices can help coordinate the activities of emergency operators equipped with PDAs and smartphones and supported by mobile networks. In this dynamic context the environment may change continuously and processes can be easily invalidated because of exogenous events and of tasks not executed as expected. This means that \myi it is not possible to predict all possible exceptions at design-time and that \myii adaptation ought to be as \emph{automatic} as possible and to require \emph{minimum manual human intervention} at run-time; in fact, in emergency management, saving minutes could result in saving injured people, preventing buildings from collapses, and so on. %Therefore, human intervention would cause delays, which might not be acceptable.

%In a knowledge-intensive process, the sequence of tasks depends heavily on the specifics of the context (e.g., which resources are available and what particular options exist at the time), and it is often unpredictable how the process will unfold. The design-time specification of all possible cases requires an extensive manual effort for the process designer, who has to anticipate all potential alternatives into the process model, in an attempt to deal with the context dependent nature of these processes. Such processes do not have the same level of repeatability found in classical business processes, and the execution changes on a case-by-case basis, generating instances that are different almost every time, depending on the context.

\section*{Research Contributions}
\label{sec:research_contributions}

The main focus of the author's research activity is to devise an approach for \emph{run-time automatic adaptation} of \emph{dynamic processes}. Dynamic processes are a particular kind of processes for which there is not a clear, anticipated correlation between a change in the context and corresponding process changes. Usually, the structure of a dynamic process can be completely captured with a procedural process model that explicitly defines the tasks and their execution constraints. Examples of dynamic processes are processes for emergency management (an extensive case study is presented in Section~\ref{sec:introduction-case_study}) and military forces deployment plans. A dynamic process is thought to be enacted in pervasive and highly dynamic scenarios, where exceptions and exogenous events ``are not the exception but the rule''. Dynamic context changes or undesirable outcome of some activities may often cause abnormal termination of the process activities and prevent the achievement of the business goals.

Traditional approaches that try to anticipate how the work will happen by solving each problem at design time~\cite{COSA,TIBCO,WebSphere,SAP,Alonso@TSE2000,ADOME,YAWLBook2009}, as well as approaches that allow to manually change the process structure at run time~\cite{Dadam@JIIS1998,adaptflow,agentwork,Reichert@ICDE2005,adeptcbr,Weber@ACBR2004}, are often ineffective or not applicable in rapidly evolving contexts. The design-time specification of all possible compensation actions requires an extensive manual effort for the process designer, that has to anticipate all potential problems and ways to overcome them in advance, in an attempt to deal with the unpredictable nature of dynamic processes. Moreover, the designer often lacks the needed knowledge to model all the possible contingencies, or this knowledge can become obsolete as process instances are executed and evolve, by making useless his/her initial effort. Although the exploitation of current adaptive PMSs to support the enactment of processes in pervasive and mobile scenarios represents a promising and helpful approach, dynamic processes demand a more agile approach recognizing the fact that in dynamic environments process models quickly become outdated and hence require closer interweaving of modeling and execution~\cite{ReichertBook2012}.

With respect to these needs, the author's research main target is to devise \emph{intelligent failure handling mechanisms} that allow to monitor running process instances and to react to tasks failures or to the occurrence of exogenous events that may put at risk process executions, by providing \emph{automatic adaptation} to identify a suitable compensation and recovery strategy. The idea is to provide an adaptation framework that is able to deal automatically with \emph{unanticipated exceptions at run-time}, without explicitly defining any handler/policy to recover from exceptions at design-time, and without the intervention of domain experts.

If compared with classical business processes, the execution and adaptation of dynamic processes demand two special requirements:
\begin{itemize}[itemsep=1pt,parsep=0.5pt]
\item A dynamic process must be \emph{adaptable to the context}. This means there is the need to explicitly define the contextual data describing the scenario in which the process will be enacted;
\item A PMS that executes a dynamic process must be able to \emph{automatically detect} exceptional situations, to \emph{derive} and to correctly \emph{apply} the recovery procedures necessary to handle them.
\end{itemize}
To this end, we use a specialized version of the concept of adaptation from the field of agent-oriented programming~\cite{GiacomoRS98}. Specifically, we consider adaptation as reducing the gap between the \emph{expected reality}, the (idealized) model of reality that is used by a PMS to reason on the dynamic process under execution, and the \emph{physical reality}, the real world with the actual values of conditions and outcomes. A misalignment of the two realities often stems from errors in the tasks outcomes (e.g., incorrect data values) or is the result of exogenous events coming from the environment.

Our approach to process adaptation is mainly based on well-established techniques and frameworks from Artificial
Intelligence (a.k.a. AI), such as \emph{\sitcalc}\cite{ReiterBook}, \indigolog~\cite{Indigolog:2009} and \emph{automatic planning}~\cite{TraversoBook2004}. \sitcalc is a logical language specifically designed for representing dynamically changing worlds in which all changes are the result of named actions. We used \sitcalc for providing a declarative specification of the domain (i.e., available tasks, contextual properties, tasks preconditions and effects, what is known about the initial state) in which the dynamic process has to be executed. On top of \sitcalc, we used the \indigolog language for formalizing the structure and the control flow of our dynamic process. \indigolog is a logic-based programming language used for robot and agent programming.

Then, we propose a PMS realization, namely \smartpm~\cite{COMA2008,SMARTPM2011}, which is based on the \indigolog\ interpreter\footnote{http://sourceforge.net/projects/indigolog/} developed at University of Toronto and RMIT University, Melbourne. The \indigolog interpreter reasons about the preconditions and effects of the process tasks to find a legal terminating execution of the process. \indigolog programs are executed online together with sensing the environment and monitoring for events. When an exception or an exogenous event is sensed, it results in a discrepancy between the physical and expected reality, and the \indigolog interpreter is in charge to determine if such event is able to invalidate the execution of the dynamic process under execution. If so, \smartpm allows the synthesis of a recovery procedure at run-time by invoking an external \emph{state-space planner}~\cite{LPG}. In AI, planning systems are problem-solving algorithms that operate on explicit representations of an initial state (the faulty process state that reflects the physical reality), a goal condition (the process state reflecting the expected reality) and actions (the set of tasks executable in the contextual scenario under observation). A state-space planner explores only strictly linear sequences of actions directly
connected from the initial state to the goal, i.e., in our case, it searches for a plan that may turn the physical reality into the expected reality. If the recovery plan exists, it will be executed by \smartpm for adapting the faulty process instance. Since the adaptation mechanism deployed on \smartpm is \emph{blocking} (i.e., the execution of the main process is stopped during the synthesis/enactment of the recovery procedure), we also propose a \emph{non-blocking} repairing mechanism based on \emph{continuous planning} techniques~\cite{BPMDS2011}, that avoids to stop directly any task in the main process during the computation of the recovery process.

\bigskip

In the second part of the thesis, we analyze a problem that often involves the design-time specification of a dynamic process. Since resources of a dynamic process are usually shared by the process participants, it is difficult to foresee all the potential tasks interactions in advance and there is the risk that concurrent tasks could not be independent from one from another (e.g., they could operate on the same data at the same time), resulting in incorrect outcomes. We address this issue and proposing an approach~\cite{BPMDS2013} that exploits \emph{partial-order planning algorithms}~\cite{TraversoBook2004,Weld@AImag1994} for building automatically a library of process template definitions.  Partial-order planning differs from classical state-space planning algorithms, that explore only strictly linear sequences of actions directly connected to the start or goal, by devising totally ordered plans. On the contrary, partial-order planning is based on the least commitment principle~\cite{Weld@AImag1994}, whose main advantage is that decisions about action ordering are postponed until a decision is forced, thus guaranteing flexibility in the execution of the plan and by possibly permitting actions to run concurrently. The strength of the approach is that resulting templates are reusable in a variety of partially-known contextual environments, and all concurrent tasks composing the templates are effectively independent one from another.

\bigskip

Finally, although \smartpm is born as a PMS for supporting first responders in emergency management scenarios, the planning-based adaptation approach employed is enough general for being used on top of existing PMSs. Specifically, in the third part of this Thesis, we concretize our approach to automatic process adaptation on top of the well-known YAWL modeling language and execution environment~\cite{YAWLBook2009}. To this end, we show a concrete design and implementation proposal of how the YAWL architecture can be extended to integrate planning capabilities and to support the handling of unanticipated exception at run-time~\cite{BPMDEMO2011,CollCOM2011,CoopIS2012}.

%IndiGolog is a high-level programming language where programs are executed incrementally to allow for interleaved action, planning, sensing, and exogenous events

%We want to underline that \sitcalc and \indigolog were born for robot and agent programming i.e.\ in different application fields that are far from the main topic of this Thesis.

% This effort requires to match together techniques coming from different areas, such as process adaption and flexibility, AI planning and context awareness.

In summary, the main contributions and results of the author's research activity can be summarized as follow:
\begin{itemize}[itemsep=1pt,parsep=1pt]
\item[\textbf{R1}] First of all, in~\cite{ISCRAM2007} we collected and analyzed the general requirements for the application of a process-oriented approach in emergency management scenarios. Starting from those requirements, in~\cite{EMSOA2010} we focus on the challenges related to the design and implementation of a PMS able to support first responders on the field, analyzing the core components of the overall architectural model.
\item[\textbf{R2}] In~\cite{IEEE2008} we present the high level architecture for emergency management systems devised in the European research project WORKPAD\footnote{http://www.dis.uniroma1.it/~workpad/}. Such an architecture is specifically tailored in supporting collaborative work of human operators during disaster scenarios, where different teams, belonging to different organizations, need to collaborate in order to reach a common goal. The overall approach, user-centered methodology and achieved results in the design, implementation and validation of the WORKPAD architecture are detailed in~\cite{SPR2009,HCI2009,ISCRAM2008,IJISCRAM2011}. Moreover, a qualitative and quantitative evaluation of main outcomes, as a result of on-the-field validation and showcase activities of the project are shown in~\cite{ISCRAM2010}.
\item[\textbf{R3}] Starting from the experience gained in the area and lessons learned, in~\cite{ISCRAM2011} we provide a general set of guidelines, suggestions and possible research directions on how to effectively design mobile information systems for supporting on-the-field collaboration of emergency operators.
\item[\textbf{R4}] In~\cite{CBMS2012} we present the design and prototype of a PMS for improving operational support to clinicians during their daily activities in hospital wards, on the basis of clinical guidelines.
\item[\textbf{R5}] In~\cite{KiBP2012}, starting from three different dynamic real world scenarios, we present a critical and comparative analysis of the existing approaches used for supporting, modeling and adapting dynamic and knowledge-intensive processes.
\item[\textbf{R6}] In~\cite{COMA2008,SMARTPM2011,BPMDS2011} we present and formalize \smartpm, our AI-based PMS that deals automatically with unanticipated exceptions at run-time. Specifically, in~\cite{COMA2008,SMARTPM2011} we exploit a built-in adaptation mechanism offered by the \indigolog interpreter (basically, a simple planner based on a breadth-first search algorithm) for the synthesis of the recovery procedure. In order to improve drastically the time needed for finding the recovery plan, in~\cite{BPMDS2011} we delegate every aspect of adaptation to an external state-of-the-art planner. This allows to separate the planning phase with the modeling and execution phase, and to introduce a \emph{non-blocking} repairing mechanisms, based on \emph{continuous planning} techniques.
\item[\textbf{R7}] In~\cite{BPMDEMO2011} we contextualize and demonstrate our approach in a service-oriented environment, as an application of the architectural solutions for the integration of different modeling approaches to achieve flexibility~\cite{faas}. We show how the YAWL environment (and its imperative modeling approach) can be complemented with the \smartpm execution environment~\cite{SMARTPM2011}.
\item[\textbf{R8}] In~\cite{CollCOM2011} we propose a general approach and a conceptual architecture to automatic process adaptation, based on the concept of declarative modeling of processes and the use of continuous planning techniques; we show the feasibility of the proposed approach by discussing its deployment on top of the YAWL system~\cite{YAWLBook2009}.
\item[\textbf{R9}] In~\cite{CoopIS2012}, we introduce and define \textsc{Planlets}, as self-contained YAWL specifications where process tasks are annotated at design-time with pre-conditions, desired effects and post-conditions. The role of declarative task annotations is twofold: \myi pre- and post-conditions enable run-time process execution monitoring and exception detection: they are checked respectively before and after task executions, and the violation of a pre- or post-condition results in an exception to be handled; \myii along with the input/output parameters consumed/produced by the task, pre-conditions and effects provide a complete specification of the task: this allows the task to be represented as an action in a planning domain description and used for solving a planning problem built to handle an exception. In the presence of an exception, this approach allows delegating to an external planner the automatic run-time synthesis of a suitable recovery procedure by contextually selecting the compensation tasks from a specific repository linked to the \textsc{Planlet} under execution.
\item[\textbf{R10}] Since the design time specification of dynamic processes can be time-consuming and error-prone, due to the high number of tasks involved and their context-dependent nature, such processes frequently suffer from potential interference among their constituents, since resources are usually shared by the process participants and it is difficult to foresee all the potential tasks interactions in advance. Concurrent tasks may not be independent from one from another (e.g., they could operate on the same data at the same time), resulting in incorrect outcomes.  To address these issues, in~\cite{BPMDS2013} we propose an approach that exploits partial-order planning algorithms for automatically synthesizing a library of process template definitions for different contextual cases. The resulting templates guarantee sound concurrency in the execution of their activities and are reusable in a variety of partially-known contextual environments.
\end{itemize}
During the realization of this thesis, the following publications have been produced:

\begin{small}
\begin{itemize}[itemsep=1pt,parsep=1pt]
\item[\cite{BPMDS2013}] \textbf{A. Marrella, Y. Lespérance}\\
    \emph{Synthesizing a Library of Process Templates through Partial-Order Planning Algorithms}.
    14th International Working Conference on Business Process Modeling, Development and Support (BPMDS 2013), Valencia, Spain, 17-18 June 2013.
\item[\cite{CoopIS2012}] \textbf{A. Marrella, A. Russo, M. Mecella}\\
    \emph{Planlets: Automatically Recovering Dynamic Processes in YAWL}.
    20th International Conference on Cooperative Information Systems (CoopIS 2012) - OTM Conferences (1), Rome, Italy, 10-14 September 2012.
\item[\cite{CBMS2012}] \textbf{F. Cossu, A. Marrella, M. Mecella, A. Russo, G. Bertazzoni, M. Suppa, F. Grasso}\\
    \emph{Improving Operational Support in Hospital Wards through Vocal Interfaces and Process-Awareness}.
    25th IEEE International Symposium on Computer-Based Medical Systems (CBMS 2012), Rome, Italy, 20-22 June 2012.
\item[\cite{KiBP2012}] \textbf{C. Di Ciccio, A. Marrella, A. Russo}\\
    \emph{Knowledge-intensive Processes: An Overview of Contemporary Approaches}.
    1st International Workshop on Knowledge-intensive Business Processes (KiBP 2012)), Rome, Italy, 15 June 2012.
\item[\cite{CollCOM2011}] \textbf{A. Marrella, M. Mecella, A. Russo}\\
    \emph{Featuring Automatic Adaptivity through Workflow Enactment and Planning}.
    7th International Conference on Collaborative Computing: Networking, Applications and Worksharing (CollaborateCom 2011), Orlando, Florida, USA, 15-18 October 2011.
\item[\cite{BPMDEMO2011}] \textbf{A. Marrella, M. Mecella, A. Russo, A.H.M. ter Hofstede, S. Sardina}\\
    \emph{Making YAWL and SmartPM Interoperate: Managing Highly Dynamic Processes by Exploiting Automatic Adaptation Features}.
    9th International Conference on Business Process Management (BPM 2011), Demonstration Track, Clermont-Ferrand, France, 28 August - 02 September 2011.
\item[\cite{SMARTPM2011}] \textbf{M. de Leoni, A. Marrella, M. Mecella, S. Sardina}\\
    \emph{SmartPM - Featuring Automatic Adaptation to Unplanned Exceptions}.
    Technical Report of Dipartimento di Informatica e Sistemistica ANTONIO RUBERTI, SAPIENZA - Università di Roma. June 2011.
\item[\cite{BPMDS2011}] \textbf{A. Marrella, M. Mecella}\\
    \emph{Continuous Planning for solving Business Process Adaptivity}.
    12th International Working Conference on Business Process Modeling, Development and Support (BPMDS 2011), London, UK, 20-21 June 2011.
\item[\cite{IJISCRAM2011}] \textbf{T. Catarci, M. de Leoni, A. Marrella, M. Mecella, A. Russo, M. Bortenschlager, R. Steinmann}\\
    \emph{WORKPAD : Process Management and Geo-Collaboration Help Disaster Response}.
    International Journal of Information Systems for Crisis Response and Management (IJISCRAM), Volume 3, Issue 1, pp. 32--49, 2011.
\item[\cite{ISCRAM2011}] \textbf{A. Marrella, M. Mecella, A. Russo}\\
    \emph{Collaboration On-the-field: Suggestions and Beyond}.
    8th International Conference on Information Systems for Crisis Response and Management (ISCRAM 2011), Lisbon, Portugal, 8-11 May 2011.
\item[\cite{EMSOA2010}] \textbf{M. de Leoni, A. Marrella, A. Russo}\\
    \emph{Process-aware Information Systems for Emergency Management}.
    International Workshop on Emergency Management through Service Oriented Architectures (EMSOA) co-located with the ServiceWave 2010 Conference, Ghent, Belgium, 13 December 2010.
\item[\cite{ISCRAM2010}] \textbf{T. Catarci, M. de Leoni, A. Marrella, M. Mecella, M. Bortenschlager, R. Steinmann}\\
    \emph{The WORKPAD Project Experience: Improving the Disaster Response through Process Management and Geo Collaboration}.
    7th International Conference on Information Systems for Crisis Response and Management (ISCRAM 2010), Seattle, USA, 2-5 May 2010.
\item[\cite{HCI2009}] \textbf{S. R. Humayoun, T. Catarci, M. de Leoni, A. Marrella, M. Mecella, M. Bortenschlager, R. Steinmann}\\
    \emph{The WORKPAD User Interface and Methodology: Developing Smart and Effective Mobile Applications for Emergency Operators}.
    13th International Conference on Human-Computer Interaction (HCI International 2009), Session ``Designing for Mobile Computing'', San Diego, USA, 19-24 July 2009.
\item[\cite{SPR2009}] \textbf{S. R. Humayoun, T. Catarci, M. de Leoni, A. Marrella, M. Mecella, M. Bortenschlager, R. Steinmann}\\
    \emph{Designing Mobile Systems in Highly Dynamic Scenarios. The WORKPAD Methodology}.
    Springer's International Journal on Knowledge, Technology and Policy, Volume 22, Number 1 - March 2009.
\item[\cite{COMA2008}] \textbf{M. de Leoni, A. Marrella, M. Mecella, S. Valentini, S. Sardina}\\
    \emph{Coordinating Mobile Actors in Pervasive and Mobile Scenarios: An AI-based Approach}.
    2nd IEEE International Workshop on Interdisciplinary Aspects of Coordination Applied to Pervasive Environments: Models and Applications (COMA 2008) at WETICE 08, Rome, Italy, 23-25 June 2008.
\item[\cite{ISCRAM2008}] \textbf{A. Capata, A. Marrella, R. Russo, M. Bortenschlager, H. Rieser}\\
    \emph{A Geo-based Application for the Management of Mobile Actors during Crisis Situations}.
    5th International Conference on Information Systems for Crisis Response and Management (ISCRAM 2008), Washington DC, USA, 4-7 May 2008.
\item[\cite{IEEE2008}] \textbf{T. Catarci, M. de Leoni, A. Marrella, M. Mecella, B. Salvatore, G. Vetere,S. Dustdar, L. Juszczyk, A. Manzoor, Hong-Linh Truong}\\
    \emph{Pervasive and Peer-to-Peer Software Environments for Supporting Disaster Responses}.
    IEEE Internet Computing Journal - Special Issue on Crisis Management - Volume 12, Number 1 - January-February 2008.
\item[\cite{ISCRAM2007}] \textbf{M. de Leoni, A. Marrella, M. Mecella, F. De Rosa, A. Poggi, A. Krek, F. Manti}\\
\emph{Emergency Management: from User Requirements to a Flexible P2P Architecture}.
4th International Conference on Information Systems for Crisis Response and Management (ISCRAM 2007), Delft, the Netherlands, 13-16 May 2007.
\end{itemize}
\end{small}
The work~\cite{BPMDS2013}, presented thoroughly in Chapter~\ref{ch:templates}, is the result of a research internship of the author at the Department of Computer Science and Engineering at York University in Toronto (Ontario, Canada), under the supervision of Prof. Yves Lespérance.

The implementation of the \indigolog based PMS has been developed in cooperation with Dr. Sebastian Sardina, research fellow at the Intelligent Agent Group of the RMIT University in Melbourne (Australia) and Dr. Massimiliano de Leoni, research assistant at the Eindhoven University of Technology (the Netherlands).

The works~\cite{BPMDEMO2011} and~\cite{CoopIS2012} are the result of a research collaboration with Prof. Arthur H. M. ter Hofstede, the co-leader of the BPM Group of the Faculty of Information Technology of Queensland University of Technology, Brisbane (Australia).

The author has also co-chaired a workshop on Knowledge-intensive Business Processes (KiBP 2012) held in Rome on June 15th, 2012, co-located with the 13th International Conference on Principles of Knowledge Representation and Reasoning (KR 2012)\footnote{Web site: \url{http://www.dis.uniroma1.it/~kibp2012/}}. The main focus of the workshop was to discuss about how the use of techniques that came from different fields, such as Artificial Intelligence (AI), Knowledge Representation (KR), Business Process Management (BPM), Service Oriented Computing (SOC), etc., can be used jointly for improving the modeling and the enactment phase of a knowledge-intensive process. The purpose was to devise interesting approaches that can still achieve the goals of understanding, visibility and control of these emergent processes. The KiBP 2012 proceedings are available online at \url{http://ceur-ws.org/Vol-861/}.

\section*{Thesis Outline}
\label{sec:extended_abstract-thesis_outline}

\begin{itemize}[itemsep=1pt,parsep=1pt]
\item Chapter~\ref{ch:introduction} introduces background concepts and definitions related to process flexibility and process adaptation, and provides a systematic view of the different approaches and methodologies that have emerged to support classes of processes with different requirements. This serves as the basis for positioning the performed work. Moreover, a case study based on a real emergency management scenario is presented.
\item Chapter~\ref{ch:state_of_the_art} analyzes the state of the art concerning process adaptation and process recovery. %Specifically, we first analyze existing process modeling techniques on the basis of the adaptation features provided by each modeling language.
    Specifically, we first analyze existing process adaptation and exception handling techniques.
    Then, we discuss on the degree of adaptation/flexibility provided by several commercial PMSs and academic prototypes. Finally, we directly compare \smartpm with other research works that exploit AI techniques for improving the degree of process adaptation.
\item Chapter~\ref{ch:approach} focuses on our general approach to automatic process adaptation, that involves formalizing processes in \sitcalc and \indigolog. We clearly define the execution semantic provided by \smartpm for the enactment of dynamic processes, together with the logic used for monitoring running process instances and adapting them when needed through the use of classical planning techniques.
\item Chapter~\ref{ch:framework} discusses the overall architecture of \smartpm, our AI-based PMS, including technical details about the system implementation, the \indigolog interpreter and the Task Handler used for assigning tasks to process participants. Then, we describe the \smartpmML (a.k.a. \smartML), which combines a modeling formalism for representing the information of the contextual scenario linked to a specific dynamic process, and a graphical tool (specifically, Eclipse BPMN\footnote{\url{http://www.eclipse.org/modeling/mdt/?project=bpmn2}}) for designing the control flow of the process. We also show how a dynamic process formalized through \smartML is automatically translatable in \sitcalc and \indigolog readable formats and is therefore ready for being executed by \smartpm. Finally, we analyze the algorithms for converting a \smartML domain theory in a PDDL planning domain and for generating a PDDL planning problem when process adaptation is required.
\item Chapter~\ref{ch:validation} reports on performance evaluation and system validation activities. Specifically, we first report on experimental evaluation results, in terms of time needed for automatically adapting the dynamic process taken from our case study when exceptions of growing complexity arise. Then we measure the effectiveness of \smartpm in finding recovery procedures by simulating the execution of thousands of processes instances having different control-flows in different contextual environments.
\item Chapter~\ref{ch:templates} proposes an approach that exploits partial-order planning algorithms for synthesizing automatically a library of process template definitions starting from a declarative specification of process tasks. A template can be seen as the ``closest thing'' to a completely defined process model. It guarantees sound concurrency in the execution of its activities and is reusable in a variety of partially-known contextual environments.
\item Chapter~\ref{ch:planlets} provides an in-depth discussion, a concrete design and implementation proposal of how the YAWL architecture can be extended to integrate planning capabilities. For this aim, we propose the approach of \textsc{Planlets}, self-contained YAWL specifications with recovery features, based on modeling of pre- and post-conditions of tasks and the use of planning techniques.
\item Chapter~\ref{ch:conclusion} concludes the thesis by discussing limitations and future developments of the planning-based adaptation approach we have proposed. Moreover, we show ongoing and future research activities we are currently investigating.
\end{itemize}

%%We technically address the need for automatic adaptation features in presence of events, contextual changes and exceptions, and we extensively report on performance evaluation and system validation activities.

\mainmatter

\chapter{Introduction}
\label{ch:introduction}

Business process management (BPM) solutions have been prevalent in both industry products and academic prototypes since the late 1990s~\cite{Sadiq@BIS2007}. A classical \emph{business process} reflects a ``preferred work practice'', i.e., a set of one or more connected activities which collectively realize a particular business goal~\cite{WeskeBook2007}. Usually, a business process is linked to an organizational structure defining functional roles and organizational relationships. Examples of business processes include insurance claim processing, order handling and personnel recruitment~\cite{ReichertBook2012}.

In order to improve their business processes, enterprises are increasingly interested in aligning their information systems in a process-centered way offering the right business functions to the right users at the right time~\cite{PAISBook2005,ReichertBook2012}. This need has evolved primarily from the desire to understand, organize, and automate the processes upon which a business is based. For this purpose, during the last decade, a new generation of information systems, called \emph{Process Management Systems} (a.k.a. PMSs, or more generally, Process-Aware Information Systems, a.k.a. PAISs) have become increasingly popular to effectively support the business processes of a company at an operational level. A PMS is a software system that manages and executes operational processes involving people, applications and information sources on the basis of process models~\cite{PAISBook2005}.

PMSs hold the promise of facilitating the everyday operation of many enterprises and work environments, by supporting
business processes in all the steps of their life-cycle~\cite{PAISBook2005}. As shown in Figure~\ref{fig:fig_introduction_spectrum_life_cycle}(a), the ``life'' of a business process is organized in 4 main stages. In the \emph{design} phase, starting from a requirements analysis, \emph{process models} are designed using a suitable modeling language. A process model is a formal representation of work procedures that controls the sequence of performed tasks and the allocation of resources to them~\cite{Oberweis@PAIS2005}. In the \emph{configuration} phase process models are implemented by configuring a PMS in order to support process enactment via an execution engine. In the \emph{enactment} phase \emph{process instances} are then initiated, executed and monitored by the run-time environment. The execution engine drives and monitors the work of the involved entities, and performed tasks generating execution traces are tracked and logged. After process execution, in the \emph{diagnosis} phase process logs are evaluated and mined to identify problems and possible improvements, potentially resulting in process re-design and evolution.

Traditionally, PMSs have focused on the management of ``administrative'' processes characterized by clear and well-defined structures. However, the use of business processes for supporting work in highly dynamic contexts (such as emergency management or health care) has become a reality, thanks also to the growing use of mobile devices in everyday life, which offer a simple way of picking up and executing tasks. Those processes are usually subject to an higher frequency of unexpected contingencies than classical scenarios; therefore, a certain degree of flexibility is needed to support dynamic process adaptation in case of exceptions.

In this chapter, that serves as the basis for positioning the performed author's research activities, we aim at discussing the flexibility requirements of actual business processes, ranging from pre-specified processes to dynamic and knowledge-intensive processes. For this purpose, in Section~\ref{sec:introduction-flexibility} we analyze concepts and definitions related to process flexibility and in Section~\ref{sec:introduction-spectrum} we classify business processes on the basis of their degree of structure by providing a systematic view of the different approaches and methodologies that have emerged to support classes of processes with different requirements. Specifically, we will focus on addressing fundamental adaptation needs for supporting dynamic processes in mobile settings for pervasive scenarios. To this end, in Section~\ref{sec:introduction-case_study} we present a case study based on a real dynamic process thought to be enacted in emergency management scenarios.

\section{Flexibility Issues in Process Management Systems}
\label{sec:introduction-flexibility}

The notion of flexibility has emerged as a main research topic in BPM over the last years~\cite{Soffer@CAISE2005,Bider@CAISE2005,Schmidt@CAISE2005}. The need for flexibility stems from the observation that organizations often face continuous and unprecedented changes in their respective business environments. Such disturbances and perturbations of business routines need to be reflected within the business processes in the sense that processes need to be able to adapt to such change. Research on process flexibility has traditionally explored alternative ways of considering flexibility during the design of a business process. The focus typically has been on ways of how the demand for process flexibility can be satisfied by advanced process modeling techniques, i.e., issues intrinsic to the processes~\cite{Rosemann@BPMDS2006}.

In~\cite{Sadiq@ER2001}, the authors define flexibility as ``the ability of the process to execute on the basis of a loosely, or partially specified model, where the full specification of the model is made at runtime, and may be unique to each instance''. Moreover, they advocate an approach that aims at making the process of change part of the business process itself. Such an approach, called \emph{Pockets of Flexibility}, relies on a pre-specified process model with placeholder activities. For each placeholder activity, a constraint-based process model (i.e., activities and constraints) can be specified. During process enactment, placeholder activities are refined, meaning that users
define a process fragment that has to satisfy the constraints and which substitutes the placeholder activity.

%\begin{figure}[t]
%\begin{center}
%\subfigure[Process life-cycle.] {
%\includegraphics[width=0.47\textwidth]{1-Introduction/images/process_life_cycle.pdf}
%\caption{Process life-cycle.}
%\label{fig:fig_introduction_lifecycle}
%}
%\qquad \qquad
%\subfigure[Spectrum] {
%\includegraphics[width=0.35\textwidth]{1-Introduction/images/spectrum.pdf}
%\caption{Process life-cycle.}
%\label{fig:fig_introduction_spectrum}
%}
%\end{center}
%\caption{Process life-cycle and Spectrum.}
%\end{figure}

\begin{figure}[t]
\centering{
 \includegraphics[width=0.95\columnwidth]{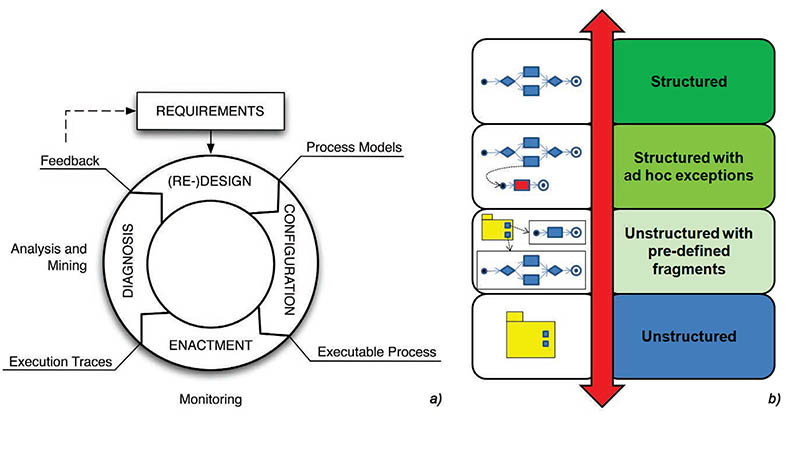}
 } \caption{Process life-cycle (a) and Spectrum (b).}
 \label{fig:fig_introduction_spectrum_life_cycle}
\end{figure}

An interesting look into the various ways in which flexibility can be achieved is made in~\cite{Schonenberg@CIAO2008}. Here an extensive taxonomy of process flexibility is proposed. In particular, it is presented a comprehensive description of four distinct approaches that can be taken to facilitate flexibility within a process. \emph{Flexibility by Design} is the ability to incorporate alternative execution paths within a process model at design time. The selection of the most appropriate execution path is made at run-time for each process instance. \emph{Flexibility by Deviation} is presented as the ability for a process instance to deviate at run-time from the execution path prescribed by the original process without altering its process model. \emph{Flexibility by Underspecification} is the ability to execute an incomplete process model at run-time, i.e., the model needs to be completed by providing a concrete realization for the undefined parts at run-time. Finally, \emph{Flexibility by Change} is the ability to modify a process model at run-time such that one or all of the currently executing process instances are migrated to a new process model. All of these strategies try to improve the ability of business processes to respond to changes in their operating environment without necessitating a complete redesign of the underlying process model, however they differ in the timing and manner in which they are applied. Moreover they are intended to operate independently of each other.

The above works concentrated on the control-flow perspective of a business process, while other perspectives addressing data and resources used in a process are also subject to flexibility requirements. A complete analysis that incorporates also these perspectives has been performed in~\cite{ReichertBook2012}, where flexible process support is characterized with four major flexibility needs, namely support for \myi \emph{variability}, \myii \emph{looseness}, \myiii \emph{adaptation}, and \myiv \emph{evolution}.
\begin{itemize}[itemsep=1pt,parsep=1pt]
\item \emph{Process variability} requires processes to be handled differently - resulting in different \emph{process variants} - on the basis of a given context. Starting from a fixed core process model, the course of actions may vary from variant to variant~\cite{Hallerbach@JSME2009}. Usually, there exists a multitude of variants of a particular process model, whereby each of these variants is valid in a specific scenario; i.e., the configuration of a particular process variant depends on requirements of the process context. Variability can be usually introduced due to different groups of customers involved in the process enactment or differences in regulations found in different countries~\cite{Reichert@ProcessVariants2010}.

    \vskip 0.5em
    \noindent\colorbox{light-gray}{\begin{minipage}{0.9\textwidth}
    \begin{example}
    \emph{A typical example of a real procedure that needs flexible support is the process for the organization of the study plans for master students in some European universities. The procedure for application, review and acceptance of study plans is managed by an on-line system and is generally well defined. After the enrollment, all the master students must submit a study plan from the on-line system. Let us suppose that the courses for the study plan can be chosen starting from 3 specialization options (Computer Networks, Software and Services for the Information Society, Distributed Systems and Architectures) with pre-selected combination of courses. If a student chooses to include in its study plan a pre-selected set of courses, the approval of the study plan is immediate (and the review phase is not required).}

    \emph{Variability can be introduced if a student decide to choose freely which courses to include in the study plan, by combining the single courses taken from each of the specialization. In this last case, the study plan must be reviewed by a university delegate, modified (if necessary), and, finally, approved or rejected. Another variant of the procedure is enacted when a foreign student wants to apply for a study plan. These kinds of students can not apply their study plan on-line, but they need to be received by the university delegate, and the study plan is built ad-hoc for the single cases.}
\end{example}
\end{minipage}
}\vskip 0.5em
\item \emph{Looseness} is a characteristic of \emph{Knowledge-intensive Processes}~\cite{KiBP2012}, that are processes characterized by being \emph{non repeatable} (the models of two process instances may differ one another), \emph{non predictable} (the course of the actions depends on context-specific parameters, whose values are not known a priori and may change during process execution) and \emph{emergent} (the course of the actions only emerges during process execution, when more information is available)~\cite{ReichertBook2012}. For those processes, only the goal and the modeling of the loose process are known a priori, meaning that those processes can not be fully pre-specified at design-time.

    \vskip 0.5em
    \noindent\colorbox{light-gray}{\begin{minipage}{0.9\textwidth}
    \begin{example}
    \emph{Typically, for being enacted, health-care processes requires a loose specification. For example, Patient Treatment Processes are rarely identical and the course of actions is unpredictable, since it depends on the specific patient's case (e.g., health status of the patient, allergies, examination results, etc.).}
    \end{example}
    \end{minipage}
    }\vskip 0.5em

\item \emph{Process Adaptation} is the ability of a process to react to exceptional circumstances (that may or may not be foreseen) and to adapt/modify its structure accordingly~\cite{Sadiq@ER2001}. Exceptions can be either \emph{expected} or \emph{unanticipated}~\cite{ReichertBook2012}.

    An expected exception can be planned at design-time, i.e., a process designer can provide an \emph{exception handler} which is invoked during run-time to cope with the expected exception. Therefore, if during run-time the PMS detects an expected exception, it immediately invokes a suitable exception handler for dealing with the exception itself. A list of \emph{exception handling patterns} that can be applied when exceptional situations arise is shown in~\cite{ex_handl_patterns}. Those patterns cover typical strategies to be used when defining exception handlers for a particular process model.
    \vskip 0.5em
    \noindent\colorbox{light-gray}{\begin{minipage}{0.9\textwidth}
    \begin{example}
    \emph{If we consider the procedure for managing study plans, a ``classical'' expected exception is captured when a student forgets to compile all the mandatory fields (e.g., the student ID code or the mobile phone number) needed for submitting the study plan on-line. In such a case, the on-line system
    notifies that some required information is missing, and asks to the student to compile correctly all the mandatory fields. The system will allow to submit the study plan only when all required information has been correctly provided.}
    \end{example}
    \end{minipage}
    }\vskip 0.5em

    Even thought the handling of expected exceptions is fundamental for every PMS, in many cases the number of possible exceptions may be too large, and requires an extensive manual effort for the process designer, that has to anticipate all potential problems and ways to overcome them in advance. Moreover, for many knowledge-intensive and dynamic scenarios, expected exceptions cover only partially relevant situations~\cite{Strong@ACMTIS1995}, and it is not realistic to assume that all exceptional situations, as well as required exception handlers, can be anticipated at design-time and thus incorporated into the process model~\cite{ReichertBook2012}. %In fact, the designer often lacks the needed knowledge to model all the possible contingencies at design-time, or this knowledge can become obsolete as process instances are executed and evolve, by making useless his/her initial effort.
    This means that PMSs should provide the support for the handling of \emph{unanticipated exceptions} at run-time. Such exceptions can be detected during the execution of a process instance, when a mismatch between the computerized version of the process and the corresponding real-world business process occurs. To cope with those exceptions, a PMS is required to allow \emph{structural adaptation} of its corresponding process model at run-time; structural changes apply directly to process elements, and the adaptation is carried out by deleting, adding, or modifying one or several process elements.

    \vskip 0.5em
    \noindent\colorbox{light-gray}{\begin{minipage}{0.9\textwidth}
    \begin{example}
    \emph{In many universities, it may happen that a student passes a course that is not listed in the her/his study plan. Such an exception can not be managed at design-time, since the university delegate, which supervises the procedure for managing study plans, may decide:
    \begin{itemize}[itemsep=0.2pt,parsep=0.2pt,topsep=0.2pt]
    \item to include the passed course in the student's study plan as an ``excess course'';
    \item to insert the passed course in the student's study plan by substituting it with another similar course already included in the study plan;
    \item to refuse to insert the passed course in the student's study plan.
    \end{itemize}
    Each decision requires a certain knowledge about the student's academic situation. For example, if a student of Computer Science Engineering passes an exam of Anatomy at the Faculty of Medicine, it is obvious that the university delegate will refuse to update the student's study plan by inserting the respective course (that is out of the scope of the student's course of studies). On the contrary, if a student passes a exam during a period spent in a foreign university, and the associate course is not included in her/his study plan but is very similar to another course included in her/his study plan, the university delegate may decide to substitute the planned course with the one passed by the student when s/he was abroad.}
   \end{example}
   \end{minipage}}\vskip 0.5em

\item \emph{Process Evolution} is the ability of an implemented process to change when the corresponding business process evolves~\cite{Casati@DKE1998}. It is often driven by changes in the business, the technological environment, and the legal context~\cite{vanderAllst@ISF2001}. The evolution may be \emph{incremental} as for process improvements~\cite{Dadam@EDS2009} (i.e., only small changes are required to the implemented process), or \emph{drastic} as for process innovation or process re-engineering~\cite{HammerBook1995} (i.e., if radical changes are required). The biggest problem here concerns the handling of active process instances, which were initiated in an old model, but need to comply with a new specification. Achieving compliance for these affected instances may involve loss of work and therefore has to be carefully planned~\cite{Sadiq@ER2001}. %The survey~\cite{rinderle2004correctness} considers changes at the process schema and how adaptation of the process instances to the new schema is performed. It surveys many approaches with respect to completeness, correctness criteria and change realization (i.e., migration).
    \vskip 0.5em
    \noindent\colorbox{light-gray}{\begin{minipage}{0.9\textwidth}
    \begin{example}
    \emph{Let us consider again the procedure for managing study plans, and suppose that the University revises the admission procedure for master students, requiring all applicants to submit a statement of purpose together with their application for the study plan. To implement this change, there can be two options available; one is to flush all existing applications, and apply the change to new applications only. Thus all existing applications will continue to be processed according to the old process model. The second option to implement the change is to migrate to the new process. It may be decided that all applicants, existing and new, will be affected by the change. Thus all admission applications, which were initiated under the old rules, now have to migrate to the new process.}
    \end{example}
    \end{minipage}
    }\vskip 0.5em
\end{itemize}

With respect to these needs, our research activities have broadly focused on the problem of \emph{process adaptation} defined as the ability of a PMS to adapt the process and its structure (i.e., pre-specified model) to emerging events. While several approaches~\cite{COSA,TIBCO,WebSphere,SAP,Alonso@TSE2000,ADOME,YAWLBook2009} have been proposed and implemented for dealing with \emph{expected} exceptions via exception handlers typically pre-specified by process designers at design-time, we tackled the problem of \emph{unanticipated} exceptions and their handling through structural process changes at run-time. Specifically, we focus our attention on \emph{dynamic processes}, a particular kind of processes thought to be enacted in pervasive and highly dynamic scenarios, and for which there is not a clear, anticipated correlation between a change in the context and corresponding process changes. Examples of dynamic processes are processes for emergency management (a case study is presented in Section~\ref{sec:introduction-case_study}) and military forces deployment plans.

Some systems supporting structural changes of processes at run-time exist and are well supported by the research community~\cite{Dadam@JIIS1998,adaptflow,agentwork,Reichert@ICDE2005,adeptcbr,Weber@ACBR2004}, but they do not automate the adaptation; a manual intervention of a domain expert is always required for adapting a faulty process instance at run-time. However, dynamic processes demand a more agile approach recognizing the fact that in dynamic environments process models quickly become outdated and hence require closer interweaving of modeling and execution~\cite{ReichertBook2012}. To this end, our approach, named the \smartpm approach, will allow to \emph{adapt automatically dynamic processes at run-time} when \emph{unanticipated exceptions} occur, without the need to the define any recovery policy at design-time.

In the following section we show which classes of business processes currently exist on the market, we classify them on the basis of their degree of structure and we provide a systematic view of the different approaches and methodologies that have emerged to support different classes of processes. Moreover, we make clear the main requirements for modeling, executing and adapting dynamic processes.

\section{The Spectrum of Process Management and Modeling Paradigms}
\label{sec:introduction-spectrum}

When realizing a PMS based on executable process models, there is a variety of processes showing different characteristics and needs. On one hand, there exist well-structured and highly repetitive processes whose behavior can be fully pre-specified; on the other hand, many processes are knowledge-intensive and highly dynamic: typically, they can not be fully pre-specified and require loosely specified models~\cite{ReichertBook2012}. In this section, we will analyze how operational and business processes can be classified on the basis of the degree of structuring they exhibit~\cite{Kemsley@BPM2011}, which directly influences the level of automation, control, support and flexibility that they can provide.

\subsection{Structured Processes}
\label{subsec:introduction-spectrum-structured}

Figure~\ref{fig:fig_introduction_spectrum_life_cycle}(b) shows how processes can be classified on the basis of their ``degree of structure''~\cite{Kemsley@BPM2011}. Traditional PMSs perform well with \emph{structured processes} and controlled interactions between participants. Structured processes are characterized by a well defined structure in terms of activities to be executed and relations among them. They reflect highly repeatable routine work with low flexibility requirements (such as back-office financial transactions, manufacturing, production and administrative processes) that can be easily standardized and automated to increase efficiency~\cite{LeymannBook2000}. A major assumption is that such processes, after having been modeled, can be repeatedly instantiated and executed in a predictable and controlled manner. All possible options and decisions (alternative paths) that can be made during process enactment are statically pre-defined at design time.

\emph{Structured processes with ad hoc exceptions} have similar characteristics, but events and exceptions can occur that
make the structure of the process less rigid and require process adaptation strategies. In the presence of expected
exceptions, possible events and deviations that can be encountered are predictable and defined in advance, along with the specific handling logic, whereas the handling of unanticipated exceptions typically require (manual) structural process changes at run-time~\cite{Reichert@ICDE2005}.

Structured processes can be completely captured by procedural process models that explicitly define the tasks and their execution constraints, participants, roles and input/output data. Most of classical process management environments deal with structured processes and are driven by imperative languages and procedural models, such as XPDL~\cite{xpdl}, WS-BPEL~\cite{bpel}, Event-driven Process Chains (EPCs)~\cite{epc}, BPMN~\cite{BPMN_book}, UML Activity Diagrams (UML)~\cite{umlactivitydiagrams} and YAWL nets~\cite{YAWLBook2009}. All these languages mainly focus on the control-flow perspective and are widely used in research prototypes (e.g., YAWL~\cite{YAWLBook2009}) and in open-source/free (e.g., jBPM~\cite{jBPMBook2007}, Apache ODE) and commercial products (e.g., Tibco Staffware Process Suite~\cite{TIBCO}, Oracle BPEL Process Manager, IBM Process Manager~\cite{WebSphere}). While YAWL was directly defined starting from the Petri Nets formalism~\cite{murata1989petri,petriNet}, most of the aforementioned languages have been mapped to (variants of) Petri Nets (e.g., in~\cite{umltocpn,bpmntopn}) in order to provide a formal semantics and enable different verification techniques.

\subsection{Loosely Structured Processes}
\label{subsec:introduction-spectrum-loosely}

In many application domains, pre-specifying the entire process model is not possible. A wide range of processes exhibit a loosely structured or semi-structured behavior (cf. unstructured processes with pre-defined fragments in Figure~\ref{fig:fig_introduction_spectrum_life_cycle}(b)), for which it is important to balance between flexibility and support (such as clinical guidelines and medical treatment procedures) and events and exceptions can occur that make the structure for the process significantly less rigid. While parts of the process logic are known at design-time, other parts are undefined or uncertain and can only be specified at run-time. For loosely specified processes, decisions regarding the specification of (parts of) the process have to be deferred to run-time.

Looseness is a characteristic of \emph{Knowledge-intensive Processes}, that are processes characterized by being \emph{non repeatable} (the models of two process instances may differ one another), \emph{non predictable} (the course of the actions
depends on context-specific parameters, whose values are not known a priori and may change during process execution) and \emph{emergent} (the course of the actions only emerges during process execution, when more information is available)~\cite{ReichertBook2012}. To some extent, knowledge-intensive processes and the looseness of their execution can be supported exploiting constraint-based process models. However, similarly to structured processes, loosely specified processes are still \emph{activity-centric}, i.e., they focus on a set of activities that may be performed during process execution. Procedural models are not able to provide the degree of flexibility required in these settings as they may unnecessarily limit possible execution behaviors, with either over-specified or over-constrained models~\cite{montali}. Process models should define tasks and their relationships in a less rigid manner, so that activities can be executed in multiple orders (or even multiple times) until the intended goals are achieved. Different modeling languages and management systems have been proposed and developed to meet these requirements, such as ADEPT2~\cite{Reichert@ICDE2005}, Flower~\cite{Berens_Flower@2005}, as well as service-oriented architectural solutions for the integration of different modeling approaches and PMS technologies~\cite{faas}.

Recent and ongoing work shows that declarative languages and models can be effectively used to increase the degree of flexibility for process specifications, still allowing to provide a good level of support. Languages such as ConDec~\cite{condec} and DecSerFlow~\cite{decserflow}, supported by the Declare tool~\cite{declare}, propose a declarative constraint-based approach for modeling, enacting and monitoring business processes. Instead of strictly and rigidly defining the control-flow of process tasks using a procedural language, they exploit the concept of control-flow \emph{constraints}, defined as Linear Temporal Logic (LTL) formulae~\cite{Vardi_LTL}, for the specification of relationships among tasks (generally classified as existence, choice, relation, negation and branching constraints over process activities). Constraints implicitly define possible execution alternatives by prohibiting undesired execution behavior, and they reflect policies and business rules to be satisfied and followed in order to successfully perform a process and achieve the intended goals.

\subsection{Unstructured Processes}
\label{subsec:introduction-spectrum-unstructured}

Unstructured processes are characterized by a low level of structuring and an high degree of flexibility. Process participants decide on the activities to be executed as well as their execution order, and the structure of a process thus dynamically evolves and strongly depends on user decisions made during process execution. These processes reflect knowledge work and collaboration activities driven by rules and events, for which no predefined models can be specified and little automation can be provided. Knowledge workers rely on their experience and capabilities to perform ad hoc tasks on a per-case basis and handle unexpected events and changes in the operational context.

For processes with these characteristics only their goal is known a priory and they can not be fully pre-specified at a fine-grained level at design time. In practice, approaches that focus on the role of \emph{data} as main driver for process execution and activity coordination are required. In this direction, Adaptive Case Management (ACM)~\cite{acm} has emerged as a way for supporting unstructured, unpredictable and unrepeatable business cases. ACM adopts a \emph{data-centric} (rather than an \emph{activity-centric}) approach, focusing on the concept of \emph{case} (an insurance claim, a customer purchase request, patient medical records, etc.) as primary object of interest, and the progress of the case itself is driven by the availability, values, changes and evolution of data objects and their dependencies. Each execution of a case management process involves a particular situation (the case) and a desired outcome (or goal) for that case, and the determination of actions to take in each case involves the exercise of human judgement and decision-making at run-time. In 2009, the Object Management Group (OMG) issued a Request for Proposal~\cite{omgrfp} for a meta-model extension to BPMN 2.0 to support modeling of case management processes but, to date, there exist no standards for supporting case management process modeling and execution.

Another approach for dealing with unstructured processes is the \emph{object-aware approach} proposed in~\cite{objectsimpda}. For object-aware processes, the information perspective is predominant and captures object types, their attributes, and their interrelations, which together form a data structure or \emph{information model}. In accordance to data modeling, the modeling and execution of processes can be based on two levels of granularity: object behavior and object interactions. At run-time, the different object types comprise a varying number of inter-related object instances, that evolve according to their specified behavior and interaction models. Process execution and possible actions are thus related to objects and their states~\cite{philharmonicflows}: the enabling of a process step or action does not directly depend on the completion of preceding steps (i.e., on the control-flow as for activity-centric approaches), but rather on the changes and evolution of object states and relations.

Similarly, artifact-centric models~\cite{artifact1,artifact2} aim at providing a declarative data-centric modeling approach where, rather than prescribing control-flow constraints between tasks (as in process-centric models), process specification and enactment are driven by data dependencies and evolutions of business entities.

\subsection{Dynamic Processes}
\label{subsec:introduction-spectrum-dynamic}

Both structured and loosely specified processes are activity-centric; i.e., they are based on a set of activities that may be performed during process execution. However, the class of \emph{dynamic processes} is transversal with respect to the classification proposed in~\cite{Kemsley@BPM2011}. These processes represent activities in highly dynamic situations and unforeseen exceptions (e.g., emergency management scenarios) and that are executed in a world with little structure and possibly imperfect information. The scenario dictates who should be involved and who is the right person to execute a particular step of the process, and collaborative interactions among the users typically is a major part of such processes.

When trying to support and implement dynamic processes through a PMS, a complete integration of processes, data and users has to be achieved. Usually, the structure of a dynamic process can be completely captured with a \textbf{\emph{procedural process model}} that explicitly defines the tasks and their execution constraints. However, a dynamic process is thought to be enacted in pervasive and highly dynamic scenarios, where exceptions and exogenous events ``are not the exception but the rule''. Therefore, there is the need to explicitly represent the \emph{\textbf{contextual data}} describing the scenario in which the process will be enacted, \textbf{\emph{constraints}} that evaluate if a particular activity can be applied in a specific state of the contextual scenario and \emph{\textbf{dependencies}} to specify execution dependencies between activities.

The execution of process tasks may affect the data associated to the contextual scenario, and it may happen that the \textbf{\emph{undesirable outcome of some activities}} causes changes in the contextual environment that may prevent the achievement of the business goals. The same risk comes from \textbf{\emph{exogenous events}}, that represent external events that come from the environment and correspond to changes in the data associated to the contextual scenario. In both cases, in order to make a dynamic process adaptable to the contextual environment, a PMS implementation should automatically \myi \textbf{\emph{detect exceptional situations}}, \myii \textbf{\emph{derive at run-time}} and \myiii \textbf{\emph{correctly apply}} the recovery procedures necessary to handle them, by involving some form of reasoning on the available process tasks and contextual data. In particular, the third point imposes a strong requirement: \textbf{\emph{any structural change made to the process instance at run-time must not violate the process model correctness and process instance execution}}.

In order to meet these requirements, in Chapters~\ref{ch:approach} and~\ref{ch:framework} we propose a modeling language, an approach and a PMS realization named \smartpm that allow to perform automatic adaptation of those processes at run-time. In the meanwhile, an extensive case study involving a real dynamic process is shown in the following section.

\section{Case Study}
\label{sec:introduction-case_study}

The \smartpm system has been validated in laboratory tests through the use of a real process of the Italian Railway company (that is ``Reti Ferroviarie Italiane''). Specifically, let us consider the emergency management scenario described in Fig.~\ref{fig:fig_introduction-case_study-context_1}(b). It concerns a train derailment and depicts a map of the area (as a 4x4 grid of locations) where the disaster happened. For the sake of simplicity, we suppose that the train is composed of a locomotive (located in \emph{loc33}) and two coaches (located in \emph{loc32} and \emph{loc31} respectively). The goal of an incident response plan defined for such a context is to evacuate people from the coaches located in \emph{loc32} and \emph{loc31} and to take pictures for evaluating possible damages to the locomotive, located in \emph{loc33}.

\begin{figure}[t]
\centering{
 \includegraphics[width=0.9\columnwidth]{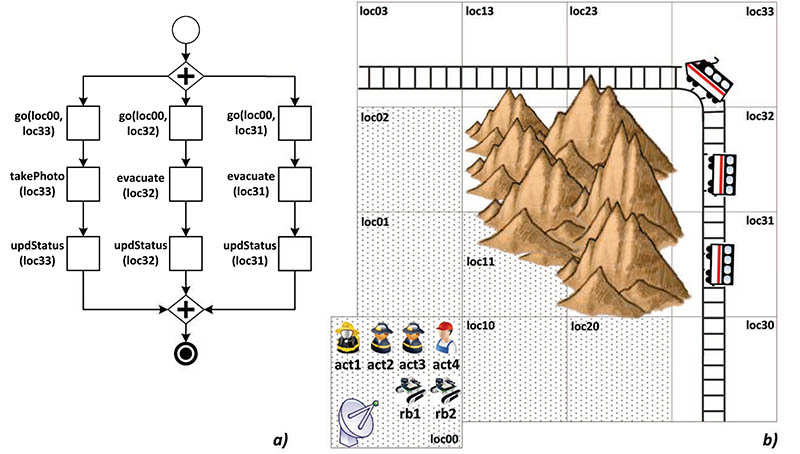}
 } \caption{Area (and context) of the intervention.}
 \label{fig:fig_introduction-case_study-context_1}
\end{figure}

Thus, a response team can be sent to the derailment scene. The team is composed of four first responders (in the remainder, we refer to them as \emph{actors}) and two robots, initially located in \emph{loc00}. We assume that actors are equipped with mobile devices (for picking up and executing tasks) and provide specific capabilities. For example, actor $act\emph{1}$ is able to extinguish fire and take pictures, while $act\emph{2}$ and $act\emph{3}$ can evacuate people from train coaches. The two robots, instead, may remove debris from specific locations. Each robot has a battery and each action consumes a given amount of battery charge. When the battery of a robot is discharged, actor $act\emph{4}$ can charge it. Moreover, the battery charge consumption amounts are provided for each action (e.g., taking pictures in a given location consumes 1 unit of battery charge, removing debris consumes 3 units, etc.).

In order to carry on the overall process, all the actors/robots need to be continually inter-connected. The connection between mobile devices is supported by a network provided by a fixed antenna (whose range is limited to the dotted squares in Fig.~\ref{fig:fig_introduction-case_study-context_1}(b)), and the robots \emph{rb1} and \emph{rb2} can act as wireless routers for extending the network range in the area. A robot provides a connection limited to the locations adjacent (in any direction) to its position. Each robot can move in the area, but it is constrained to be always connected to the main network. This is guaranteed if the intersection between the squares covered by the main network and the squares covered by the robot connection is not empty. A robot connected to the main network can act as a ``bridge'', allowing the other robot to be connected through it to the main network.

\begin{figure}[t]
\centering{
 \includegraphics[width=0.9\columnwidth]{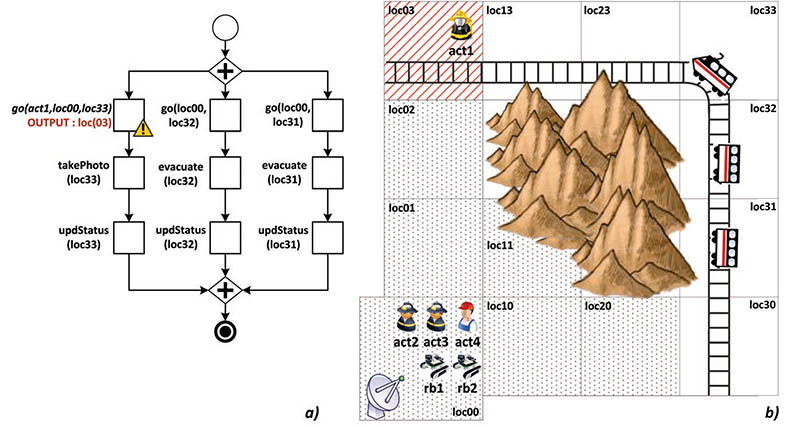}
 } \caption{Effects of a task with an outcome different from the one expected.}
 \label{fig:fig_introduction-case_study-context_2}
\end{figure}

Collected information is used for defining and configuring at run-time an incident response plan, defined by a contextually and dynamically selected set of activities to be executed on the field by first responders. A possible concrete realization of the incident response plan is shown in Figure~\ref{fig:fig_introduction-case_study-context_1}(a). The process is composed by three parallel branches with tasks that instruct first responders to act for evacuating people from train coaches, to take pictures and to assess the gravity of the accident. Despite the simple structure of the incident response plan, the high dynamism of the operating environment can lead to a wide range of exceptions. In general, for dynamic processes there is not a clear, anticipated correlation between a change in the context and corresponding process changes. Suppose, for example, that the task \emph{go(loc00,loc33)} is assigned to actor \emph{act\emph{1}} (cf. Fig.~\ref{fig:fig_introduction-case_study-context_2}(a)), which reaches instead the location \emph{loc03} (cf. Fig.~\ref{fig:fig_introduction-case_study-context_2}(b)). This means that $act\emph{1}$ is now located in a different position than the desired one, and s/he is out of the optimal network range. Since all the actors/robots need to be continually inter-connected to execute the process, the PMS has to find a recovery procedure that first instructs the robots to move in specific positions for maintaining the network connection, and then re-assign the task \emph{go(loc03,loc33)} to \emph{act1}.

\begin{figure}[t]
\centering{
 \includegraphics[width=0.9\columnwidth]{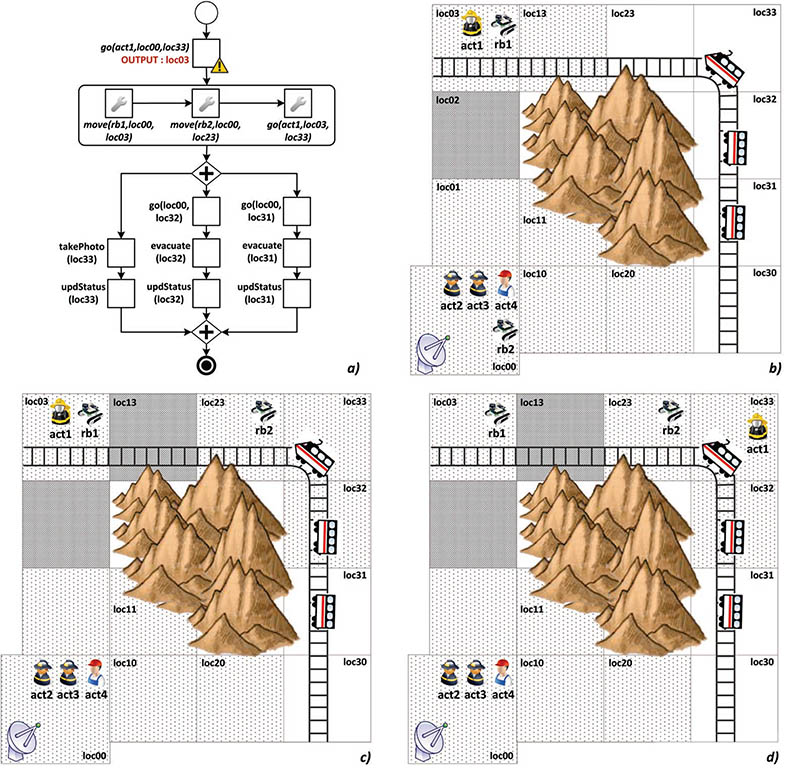}
 } \caption{Synthesis and enactment of the recovery process after an undesirable outcome.}
 \label{fig:fig_introduction-case_study-context_3}
\end{figure}

Even this very simple example shows that the same failure may require significantly different adaptation activities depending on the current context. For example, if robots \emph{rb1} and \emph{rb2} have enough battery charge, the PMS can instruct first \emph{rb1} to move in \emph{loc03} (cf. Fig.~\ref{fig:fig_introduction-case_study-context_3}(a) and Fig.~\ref{fig:fig_introduction-case_study-context_3}(b)), in order to re-establish the connection of actor \emph{act1} with the network. Then, robot \emph{rb2} can be instructed to reach \emph{loc23} (cf. Fig.~\ref{fig:fig_introduction-case_study-context_3}(a) and Fig.~\ref{fig:fig_introduction-case_study-context_3}(c)) and to broaden the network range and make \emph{loc33} (the expected destination of \emph{act1}) covered by the network connection. Finally, the task \emph{go(loc03,loc33)} can be effectively reassigned to \emph{act1} (cf. Fig.~\ref{fig:fig_introduction-case_study-context_3}(a) and Fig.~\ref{fig:fig_introduction-case_study-context_3}(d)).

After having executed the recovery procedure, the enactment of the main process can be resumed to its normal flow. The execution of a dynamic process can be also prevented by the occurrence of external events (a.k.a. exogenous events). These are events coming from the external environment that change asynchronously some contextual properties of the scenario in which the process is under execution. For example, in Fig.~\ref{fig:fig_introduction-case_study-context_4}(a) we are supposing that a rock slide has collapsed on location \emph{loc31}. The presence of a rock slide modifies the contextual properties of the scenario in a way not expected when the dynamic process was designed, by possibly jeopardizing the correctness of the dynamic process itself. Since a dynamic process - by definition - has to be \emph{adaptable to the context}, the PMS needs to find a recovery procedure that allows to remove the rock slide from \emph{loc31} by maintaining all the process participants inter-connected. A possible solution is shown in Fig.~\ref{fig:fig_introduction-case_study-context_4}(b) and Fig.~\ref{fig:fig_introduction-case_study-context_4}(c), and consists in moving the robot \emph{rb1} in \emph{loc31} and letting it remove debris.

It is unrealistic to assume that the process designer can pre-define all possible compensation activities for dealing with these exceptions (apparently simple), since the process may be different every time it runs and the recovery procedure strictly depends on the actual contextual information (the positions of actors/robots, the range of the main network, the battery level of each robot, etc.). In the worst case, the number of recovery processes to pre-define may depend to all the possible combinations of contextual information. For the same reason, it is also difficult to manually define an ad-hoc recovery procedure at run-time, as the correctness of the process execution is highly constrained by the values (or combination of values) of contextual data.

The main purpose of the \smartpm system is to develop a PMS providing automatic mechanisms that, starting from a process model, are able to adapt the process instance without explicitly defining handlers/policies to recover from exceptions and exogenous events and without the intervention of domain experts.

\begin{figure}[t]
\centering{
 \includegraphics[width=0.9\columnwidth]{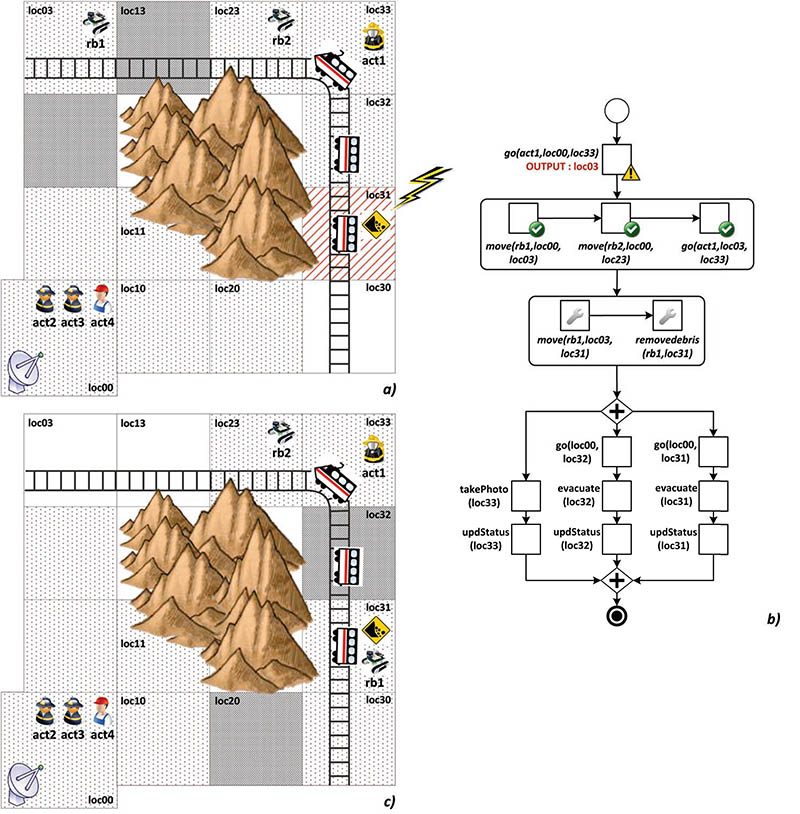}
 } \caption{Synthesis and enactment of the recovery process after the occurrence of an exogenous event.}
 \label{fig:fig_introduction-case_study-context_4}
\end{figure}

\chapter{State of the Art}
\label{ch:state_of_the_art}

Process adaptation refers to the ability of a PMS to modify its behavior according to environmental changes and exceptions that may occur during process execution. If not detected and handled effectively, exceptions can result in severe impacts on the cost and schedule performance of PMSs~\cite{SpIss_AdaptiveWfMSs}.

Nowadays, PMSs provide wide support for different modeling styles and for all phases of the process life-cycle, from the specification and enactment to the verification, monitoring and analysis of process models~\cite{PAISBook2005}. In addition, PMSs provide tools for modelling business processes that are predictable and repetitive. However, in many real world scenarios, enabling process adaptation is crucial for any PMS, as implemented processes may have to be adapted to deal with changing environments and evolving needs.

In this chapter, we focus on the \emph{exception handling} perspective provided by current state-of-the-art PMSs, by first showing some of the best known techniques for exception handling and then by reviewing some of the most interesting current commercial and academic prototype PMSs in relation to their approaches to process adaptation.
Finally, in Section~\ref{sec:state_of_the_art-adaptation_planning} we analyze a number of techniques from the field of Artificial Intelligence (AI) that were applied to BPM, with the purpose to facilitate automatic adaptation of a business process at run-time.

\section{Process Adaptation}
\label{sec:state_of_the_art-adaptation}

Over the last years, there was a trend in providing PMSs with a growing support for adapting business processes. Proposed approaches can be analyzed considering to what extent users are involved in the process of defining exception conditions and handling policies, and the degree of automation provided in the exception resolution and process adaptation stages.

%Research efforts in this field try to enhance the ability of processes and their support environments to modify their behavior in order to deal with contextual changes and exceptions that may occur in the operating environment during process enactment and execution.

%This section conducts a review of the literature and represents a discussion of the current issues regarding workflow flexibility and exception handling.

This section is thought for discussing the current state-of-the-art in process adaptation and exception handling (cf. Section ~\ref{subsec:state_of_the_art-ex_handling_techniques}) and for reviewing some of the most interesting current commercial and academic prototype PMSs in relation to their approaches to process adaptation (cf.~\ref{subsec:state_of_the_art-adaptation-PMSs}). In Section~\ref{subsec:state_of_the_art-adaptation-discussion} we provide a comparative analysis between the existing techniques and the \smartpm approach we are going to present in this thesis.

%In Section~\ref{subsec:state_of_the_art-ex_handling_techniques-expected} we focus on the handling of expected exceptions.

\subsection{Exception Handling Techniques}
\label{subsec:state_of_the_art-ex_handling_techniques}

One of the first attempts to define and categorize exceptions in PMSs was proposed in the nineties in~\cite{E_L@COOPIS1996}, where possible exceptions were classified as \emph{basic failures}, \emph{application failures}, \emph{expected exceptions} and \emph{unanticipated exceptions}. While basic failures (which reflect a failure at system level, e.g., DBMS or network failure) and application failures (corresponding to the failure of an application implementing a given task) are generally handled at the system and application levels, PMSs are in charge to provide support for \emph{exception handling}.

An \emph{\textbf{expected exception}} is an exception that can be planned at design-time, i.e., a process designer can provide an exception handler which is invoked during run-time to cope with the exception itself. In~\cite{Casati_2001}, Casati identifies 4 potential sources for expected exceptions:
\begin{itemize}[itemsep=1pt,parsep=1pt,topsep=1pt]
\item \emph{Process Exceptions}. They can occur during the process enactment, and are strictly related to \emph{task failures}. More in detail, a task can fail due to an abnormal termination of the invoked application or web service implementing the specific task, or because of a negative termination of the task itself (which returns an output different from the expected one).
\item \emph{Constraint Violations}. They refer to violations of constraints over data (e.g., data required for task execution might be missing), tasks (e.g., in terms of pre/post-condition of a task not satisfied before/after task execution) and resources (e.g., unavailability of resources during process execution).
\item \emph{Temporal Exceptions}. They can be associated with deadlines, and upon deadline expiration an exception is launched.
\item \emph{External Exceptions}. They happen when an external event may affect the control/data flow of the process under execution.
\end{itemize}

To enable an expected exception to be detected at run-time, a PMS has to notice its occurrence during process enactment. To this end, a process designer needs to associate pre-specified process models with exception handlers at design-time. Along this line, in~\cite{ex_handl_patterns} the authors propose a patterns-based approach to exception systematization, defining a general classification framework and language for exception handling. Specifically, exception handlers are modeled as alternative branches of the process model that add or cancel behavior to the normal flow of the process instance under execution.

Among early attempts to address the problem of exception handling, it is worth mentioning the transaction oriented Workflow Activity Model WAMO~\cite{WAMO}. The model allows the designer to define expected exceptions which may arise during process execution and to derive compensation tasks for exception handling. The specification of compensation activities avoids the designer to explicitly model all possible process execution alternatives in presence of exceptions, and allows the system to automatically control the reliable execution of exceptions and failures. Handling capabilities depend on task properties and range from simple re-execution of failed activities to the execution of complex compensation strategies.

In~\cite{sarn} the authors proposed Self-Adaptive Recovery Nets (SARNs), an extended Petri net model for specifying exceptional behavior in PMSs at design time. SARNs can adapt the structure of the underlying Petri net model at run time to handle pre-specified exceptions. Process adaptation is achieved by applying high-level recovery policies (such as skipping, redoing or compensating tasks when specific events occur) defined at design time for single tasks or sets of tasks.

In~\cite{Casati_1999}, authors argue that the majority of exceptions are asynchronous (they can occur at any time during the process execution, rather than before/after a task execution), and that they can not be suitably represented within the process model. Therefore, they advocate that the only way to handle exceptions is programmatically via Event-Condition-Action (ECA) rules. ECA rules have the form ``on \emph{event} if \emph{condition} do \emph{action}'' and specify to execute the \emph{action} (i.e., the exception handler) automatically when the \emph{event} happens (i.e., when the exception is caught), provided the a specific \emph{condition} holds.

ECA rules represent a good way for separating the graphical representation of the process with the ``exception handling flow''. A similar principle has been applied in YAWL~\cite{YAWLBook2009}. In YAWL, for each exception that can be anticipated, it is possible to define an exception handling process, named \emph{exlet}, which includes a number of exception handling primitives (for removing, suspending, continuing, completing, failing and restarting a workitem/case) and one or more compensatory processes in the form of \emph{worklets} (i.e., self-contained YAWL specifications executed as a replacement for a workitem or as compensatory processes). Exlets are linked to specifications by defining specific
rules (through the Rules Editor graphical tool), in the shape of \emph{Ripple Down Rules} specified as ``if condition then conclusion'', where the condition defines the exception triggering condition and the conclusion defines the exlet.

The above approaches use to deal with exceptions by attempting to anticipate all possible failures. This greatly complicates the process models and thereby obscures the main ``preferred'' process if the number of exceptions to define is too large~\cite{SpIss_AdaptiveWfMSs}. Moreover, in many cases, exceptions can only be modeled at the technical level, although their handling requires organizational knowledge and thus should be expressed in semantic process models as well. Hence,
authorized users should be allowed to situationally adapt single process instances during run-time to cope with unanticipated exceptions~\cite{ReichertBook2012}.

An \textbf{\emph{unanticipated exception}} can be detected during the execution of a process instance, when a mismatch between the computerized version of the process and the corresponding real-world business process occurs. To cope with those exceptions, a PMS is required to allow structural adaptation of its corresponding process model at run-time.

In~\cite{ReichertBook2012} is presented a basic taxonomy and respective patterns for applying structural changes to running process instances in case of unanticipated exceptions. Specifically, a structural change can be applied on the \emph{state of the process instance} or/and on the \emph{structure of its process model} and concerns the application of \emph{change primitives} to the process model (i.e., by adding or deleting single tasks or transitions) or of \emph{high-level change operations} (i.e., by inserting, deleting or moving activities or entire process fragments). Another dimension concerns the \emph{degree of automation} in applying a change. The majority of the approaches dealing with unanticipated exceptions provide a \emph{manual} definition of a structural change~\cite{Breeze,Weske@HICSS2001,bandinelli1993software,adept2a,Reichert@ICDE2005}; an expert user suspends the process instance, loads it into a process editor and manually adjusts its structure.

Since this procedure is quite complex for non expert end-users, a growing number of systems~\cite{adeptcbr,Weber@ACBR2004,minor2008agile} is providing a \emph{semi-automatic} way for assisting end-users in defining structural changes at run-time by exploiting the \emph{Case-Based Reasoning} (CBR) approach~\cite{CBR_Book}. CBR is a paradigm based on reasoning and machine learning, and solves an emerging problem by drawing on past experiences and by adapting respective solutions to the new problem situation. Problems and their solutions are described as \emph{cases} and stored in \emph{case-bases}. Each problem that has been solved provides a new case that is stored for future reuse, making learning a natural side-effect of the reasoning process. When using CBR, the end-user is required to provide a complete problem specification at design-time for case retrieval.

By the way, even systems that do support manual and semi-automatic dynamic process model modification at run-time do not help determine the best response to a given exception, leaving this often difficult decision to end-users~\cite{SpIss_AdaptiveWfMSs}. In order to relieve end-users from building complex adaptation tasks, there are few systems~\cite{Minor@IS2012,EPOS1993,Ferreira@IJCIS2006,Weske@ADBIS2004} that provide approaches for \emph{automatically} adapting a process in presence of unanticipated exceptions. While we will present the goal-based approaches~\cite{EPOS1993,Ferreira@IJCIS2006,Weske@ADBIS2004} in Section~\ref{sec:state_of_the_art-adaptation_planning}, it is interesting to say a few words about the work~\cite{Minor@IS2012}. It concerns a recent framework dealing with automatic adaptation of processes through CBR. The method employs process adaptation cases that record adaptation episodes from the past. The recorded changes can be automatically transferred to a new process that is in a similar situation of change. The case-based adaptation method includes the so-called \emph{anchor mapping algorithm} which identifies the parts of the target process where to automatically apply the changes.

% Similarly, a set of change and adaptation patterns has been proposed in~\cite{Weber@DKE2008}, with the aim of providing an evaluation framework for assessing change support features in process-aware information systems.

%Case Handling
%While a pure case-based approach (referred to as \emph{case handling}~\cite{casehandling}) has several strengths with respect to activity-centric paradigms (among them, users' familiarity with the case metaphor, the active role of data in achieving goals or reaching specific states, the availability of all information in a single artifact, etc.), the strong role of the case metaphor also results in weaknesses (e.g., the inability to apply ad hoc changes to cases, as discussed in~\cite{flexibilitybook}) that may lead to a lack of flexibility as for traditional process-oriented approaches.

\subsection{Analysis of Existing PMSs}
\label{subsec:state_of_the_art-adaptation-PMSs}

Generally, existing commercial PMSs provide only basic support for handling exceptions in a proprietary manner~\cite{zur2004workflow}. They typically require the process model to be fully defined before it can be instantiated, and changes must be incorporated by modifying the model statically:
\begin{itemize}[itemsep=1pt,parsep=1pt,topsep=1pt]
\item \emph{COSA}~\cite{COSA} allows to associate sub-processes to external ``triggers'' and events. All events, sub-processes and adaptation policies must be defined at design-time, although models can be modified at run-time (but only for future instantiations). If a task fails, the associated activity can be rolled back or restarted.
    When a deadline expires, a compensating activity is triggered. COSA also allows manual ad-hoc run-time adaptation by using change patterns~\cite{Weber@DKE2008} such as reordering, skipping, repeating, postponing or terminating steps.
\item \emph{Tibco iProcess Suite}~\cite{TIBCO} provides constructs called ``event nodes''. For each event node, a separate pre-defined exception handler can be activated when an exception occurs at that point. If no handler exists for the specific exception, the identified exception is forwarded into a ``default exception queue'' where it may be manually handled. Moreover, the designer can decide to manually skip tasks at run-time.
\item \emph{WebSphere MQ Workflow}~\cite{WebSphere} supports deadlines and, when they expire, some pre-defined exception handlers are invoked and/or notification messages are sent to the process administrator, which can manually suspend, restart or terminate processes, as well as s/he can reallocate tasks.
\item \emph{SAP Workflow}~\cite{SAP} allows to provide exception events for checking tasks pre- and post-constraints and for waiting until an external trigger occurs. Exception handlers are pre-defined at design-time and, when an exception occurs at run-time, the process administrator can manually select the most appropriate recovery procedure from the pre-specified ones. When an exception handler is found, all tasks in the block where the exception is caught are cancelled.
\end{itemize}
The main issue with the above approaches is that they provide solid support for transaction-level exceptions and very few support for exceptions at process or instance level. Moreover, exceptions are mainly thrown when some deadline expires, and also the behavior of those systems after a deadline is occurred depends mainly by a process administrator~\cite{AdamsPhDThesis}. Research in process adaptation has born mainly for overcoming the above limitations, with the purpose to increase the degree of automatic adaption of business processes. There have been a number of academic prototypes developed in the last decade, and we limit the discussion to the more popular and recent systems and prototypes\footnote{A more detailed analysis can be found in~\cite{AdamsPhDThesis}.}:
\begin{itemize}[itemsep=1pt,parsep=1pt,topsep=1pt]
\item The \emph{OPERA} prototype~\cite{Alonso@TSE2000} borrows its main ideas from a combination of programming language concepts and transaction processing techniques, adapting them to the special characteristics of PMSs. OPERA was one of the first systems that integrate language primitives for exception handling into PMSs. It has a modular structure in which activities are nested. When a task fails, its execution is stopped and a pre-specified exception handler is launched for dealing with that kind of exception. If the handler cannot solve the problem, it propagates the exception up the activity tree; if no handler is found the entire process instance aborts.
\item In the \emph{ADOME} system~\cite{ADOME}, the problem of exception handling is addressed adopting an integrated event-driven approach that covers exception detection, exception resolution, reuse of exception handlers, and automated resolution of expected exceptions. As exceptions are identified by specific events, Event-Condition-Action (ECA) rules are used to dynamically bind exception handlers to exception conditions at design time, and enable automatic exception detection and resolution at run-time.
\item An interesting approach for dealing with process adaptation is provided by the \emph{YAWL} system~\cite{YAWLBook2009}. Exception handling capabilities provided by YAWL were designed and implemented starting from the conceptual framework for workflow exception handling presented in~\cite{WorkflowExceptionPatterns2006}. For each exception that can be anticipated, it is possible to define at design time exception handling processes (\emph{exlets}) linked to process specifications by defining triggering conditions and rules in the form of Ripple Down Rules~\cite{yawlex}. Handling processes, which include exception handling primitives (for removing, suspending, continuing, completing, failing and restarting a work item/case) and one or more compensatory processes, are dynamically selected and incorporated at run-time in process instances. More details about YAWL are given in Chapter~\ref{ch:planlets}.
\item Strong support for adaptive process management and exception handling is provided by the \emph{ADEPT} system. $ADEPT_{flex}$ offers modeling capabilities to explicitly define pre-specified exceptions, and supported changes of process instances to enable different kinds of ad-hoc deviations from the pre-modeled process schemas in order to deal with run-time exceptions~\cite{Dadam@JIIS1998}. These features have been extended an improved in \emph{ADEPT2}~\cite{adept2a,Reichert@ICDE2005}, which provides full support for the structural process change patterns defined in~\cite{Weber@DKE2008}, and in \emph{ProCycle}, which combines ADEPT2 with conversational case-based reasoning (CCBR) methodologies~\cite{adeptcbr}. CCBR is an extension od CBR, where the user is not required to provide a complete a priori specification for case retrieval. Specifically, process instance changes are manually performed and stored as \emph{cases} in a case-base, to be retrieved and reused to perform similar changes in the future. Users are supported in finding relevant cases and adapting processes by taking into account how similar events were previously handled and reusing information about similar changes applied to previously executed instances.
\item The \emph{AdaptFlow} prototype~\cite{adaptflow} supports ECA rules-based detection of exceptions and the dynamic adaptation of process instances, although each adaptation must be confirmed manually by a process administrator before it is applied. If the exception handler does not satisfy the process administrator, s/he can manually deals with the exception. The prototype has been designed as an overlay to the ADEPT system, providing dynamic extensions.
\item Similarly, \emph{AgentWork}~\cite{agentwork} relies on ADEPT and exploits a temporal ECA rule model to automatically detect logical failures and enable both reactive and predictive process adaptation of control- and data-flow elements. Here, exception handling is limited to single tasks failures, and the possibility exists for conflicting rules to generate incompatible actions, which requires manual intervention and resolution.
%\item Starting from ADEPT models, in~\cite{G_G@BPM2005} the authors propose a mechanism for efficient management of flexible business processes based on backtracking and forward stepping at an instance level, along with an algorithm for the identification and application of alternative execution paths, defined in advance. In the case of an exception, the proposed semi-automatic approach identifies a suitable alternative path to bypass the process element that caused a failure.
\item Conversational CBR has been also applied in the \emph{CBRFlow} system~\cite{Weber@ACBR2004} for supporting adaptation of pre-defined process models to changing circumstances by allowing (manual) annotation of business rules during run-time via incremental evaluation by the user. Thus users must be actively involved in the inference process during each case.
%\item A recent framework dealing with automatic adaptation of processes through CBR is described in~\cite{Minor@IS2012}. The method employs process adaptation cases that record adaptation episodes from the past. The recorded changes can be automatically transferred to a new process that is in a similar situation of change. The case-based adaptation method includes the so-called \emph{anchor mapping algorithm} which identifies the parts of the target process where to automatically apply the changes.
\end{itemize}

\subsection{Discussion}
\label{subsec:state_of_the_art-adaptation-discussion}

All the approaches analyzed so far share a common background. At design-time the process modeler identifies possible exceptions that can occur, explicitly defines exception conditions, triggering events and handling policies, and integrates such information into the the process model. At run-time exceptions are detected and handled either manually or (semi-)automatically, adapting affected processes at the instance level.

A good way for understanding and comparing the kind of adaptation we are proposing with the other existing approaches is to consider the \texttt{try-catch} approach used in some programming languages such as Java\footnote{\url{http://www.java.com/en/}}. If the \texttt{try} block encloses the code that might throw an exception, the \texttt{catch} is the definition of the possible exceptions and the specification, defined at design-time by the process modeler, of how to deal with them.

For expected exceptions, both the \texttt{try} and the \texttt{catch} blocks need to be completely pre-specified, i.e., for each kind of exception that is envisioned to occur, a specific exception handler is defined. Systems like~\cite{COSA,TIBCO,WebSphere,SAP,Alonso@TSE2000,ADOME,YAWLBook2009} provide mainly this kind of support to process adaptation.

On the contrary, for unanticipated exceptions, the \texttt{try} is automatically derived as the situation in which the PMS does not adequately reflect the real-world process anymore. As a consequence, one or several process instances have to be adapted, and the \texttt{catch} block should include those recovery procedures required for realigning the computerized processes with the real-world ones. The common strategy used by the current adaptive PMSs~\cite{Dadam@JIIS1998,adaptflow,agentwork,Reichert@ICDE2005,adeptcbr,Weber@ACBR2004} is to \emph{manually} or \emph{semi-automatically} define at run-time the \texttt{catch} block, by devising an ad-hoc recovery procedure that deals with the specific exception captured.

Our proposal - instead - aims at \emph{synthesizing at run-time} the \texttt{catch} block, without the need of any manual intervention at run-time. This because the design-time specification of all possible compensation actions requires an extensive manual effort for the process designer, which has to anticipate all potential problems along with possible ways to overcome them. This is particularly true in real-world and dynamic scenarios, where the process designer often lacks the needed knowledge to model all the possible contingencies at design-time, or this knowledge can become obsolete as process instances are executed and evolve, by making useless his/her initial effort. For the same reason, analyzing and defining these adaptations ``manually'' becomes time-demanding and error-prone in real-world scenarios. Indeed, the designer should have a global vision of the application and its context to define appropriate recovery actions, which becomes complicated when the number of relevant context features and their interleaving increases.

%The above approaches use to deal with exceptions by attempting to include into the process model pre-defined conditional branches for all possible contingencies. It is difficult, however, to anticipate all possible failures, also for structured processes. Adding such branches greatly complicates the process models and thereby obscures the main ``preferred'' process~\cite{SpIss_AdaptiveWfMSs}.

% Even systems that do support exception modeling and dynamic process model modification do not help determine the best response to a given exception, leaving this often difficult decision to human workflow managers~\cite{SpIss_AdaptiveWfMSs}.

\section{AI-based Process Adaptation}
\label{sec:state_of_the_art-adaptation_planning}

%Jarvis@AAAIwrk1999
%%Meanwhile, the AI community has been investigating intelligent systems with the capability of achieving complex tasks in dynamic and uncertain environments for over thirty years. The community has realized that the unquestioning enactment of static process models is inadequate (cf. Ginsberg's critique of Universal Planning (Ginsberg 1989)) and has instead developed rich action representations and powerful reasoning engines for dynamically generating and repairing processes. This match of technology to application requirements could assist in the fast track development of adaptive workflow (Berry & Myers 1998).

The Artificial Intelligence (AI) community has been involved with research on process management for several decades~\cite{Myers@1998}. While BPM has concentrated on business and manufacturing processes, the AI community has been motivated primarily by domains that involve active control of computational entities and physical devices (e.g., robots, antennas, satellites, computer networks, software agents, etc.). Despite their differing origins and emphases, there is much overlap between the objectives, requirements, and approaches for process management within these two communities. Specifically, AI technologies can play an important role in the construction of the PMS engines that manage complex processes, while remaining robust, reactive, and adaptive in the face of both environmental and tasking changes. An interesting report that describes how techniques from the AI community could be leveraged to provide several of the advanced process management capabilities envisioned by the BPM community is shown in~\cite{Myers@1998}.

The advantage of integrating AI planning techniques~\cite{TraversoBook2004} for several applications in the field of BPM has long been acknowledged. For example, in~\cite{Jarvis@AAAIwrk1999} the authors take a broad view of the problem of adaptive PMSs, and show that there is a strong mapping between the requirements of such systems and the capabilities offered by AI techniques. The work describes how planning can be interleaved with process execution and plan refinement, and investigates plan patching and plan repair as means to enhance flexibility and responsiveness.
The approach presented in~\cite{Moreno@KBSJ2002} highlights the improvements that a legacy workflow application can gain by incorporating planning techniques into its day-to-day operation. The use of contingency planning~\cite{contingency_planning} to deal with uncertainty increases system flexibility, but it does suffer from a number of problems. Specifically, contingency planning is often highly time-consuming and does not guarantee a correct execution under all possible circumstances.

Also the use of planning techniques for process adaptation purposes is not strictly new. One of the first work dealing with this research challenge is~\cite{Beckstein1999}. It discusses the use of an intelligent assistant based on planning techniques, whose purpose is to suggest compensation procedures or the re-execution of activities if some failure arises during the process execution. The intelligent assistant makes use of a meta-level knowledge incorporated into the process model.

A number of \emph{goal-based approaches} exist for enabling automatic process instance change in case of emerging exceptions. Those approaches~\cite{EPOS1993,Ferreira@IJCIS2006,Weske@ADBIS2004} explicitly formalize an output for the process to be executed in terms of a process \emph{goal}. Starting from the process goal, they are able to automatically derive a process model (i.e., the activities to be performed and their execution order) that complies with the goal itself. In addition, if an activity failure occurs at run-time and leads to a goal violation, the process instance is adapted accordingly. Specifically:
\begin{itemize}[itemsep=1pt,parsep=1pt,topsep=1pt]
\item EPOS~\cite{EPOS1993} presents a general, incremental replanning algorithm, which allows to automatically adapting process instances when process goals change. The adaptation consists in synthesizing a new process specification that reflects the changes in the process goal.
\item A similar approach is proposed in~\cite{Weske@ADBIS2004}, where the authors present a concept for dynamic and automated workflow re-planning that allows recovering from task failures. To handle the situation of a partially executed process, a multi-step procedure is proposed that includes the termination of failed activities, the sound suspension of the process, the generation of a new complete process definition and the adequate process resumption.
%MODELING PHASE : .
%EX.EVENTS : NO.
%TASK FAILURES YES:
%TRY PART = When activity fails...
%CATCH PART = REPLANNING!
\item A new life cycle for workflow management based on the continuous interplay between learning and planning is proposed in~\cite{Ferreira@IJCIS2006}. The approach is based on learning business activities as planning operators and feeding them to a planner that generates a candidate process model that is able of achieving some business goals. If an activity fails during the process execution at run-time, an alternative candidate plan is provided on the same business goals.
%MODELING PHASE : The process model is automatically generated before the process execution.
%EX.EVENTS : NO.
%TASK FAILURES : YES. When an exception is caught, a new model is generated.
%TRY PART = When activity fails...
%CATCH PART = REPLANNING!
\end{itemize}

%MODELING PHASE : Manual.
%EX.EVENTS : NO.
%TASK FAILURES: YES.
%TRY PART = inferred.
%CATCH PART = semi-automatic. repair actions are explicit represented at design-time. The plan is built at run-time.
Planning techniques are also used in~\cite{Pernici@TSE2010} to define a self-healing approach for handling exceptions in service-based processes and repairing faulty activities. The purpose of~\cite{Pernici@TSE2010} is to provide a tool that analyzes the repairability of service-based processes and generates repair plans based on a set of design-time repair actions. At design time, the process description is augmented with information about: \myi the data dependencies within the
process, \myii the desired process results, and \myii available repair actions for each process activity. In addition, an heuristic-based approach is used for reasoning about the repairability of the process schema, by checking if it includes some design flaw that may possibly affects the repairability of its activities. %The result of the analysis is a set of non-repairable activities, and a \emph{reasoner tool} evaluates the impact of each non-repairable activity on the repairability of the process and ranks activities based on their relative impact.
%This ranking will help the designer to focus on those activities whose repairability more heavily affects the overall repairability.
%At runtime, the approach deals with the faulty process instances by generating repair plans based on reasoning about the process model and execution history.
When a failure is detected during the process execution, a \emph{diagnosis tool} determines what caused the failure and where (i.e., in which point of the process) the error took place. Such knowledge, obtained by the diagnosis tool or - if needed - with the support of a human process modeler, is used together with the repairability information inferred at design time for generating a repair plan dealing with the exception. Repair is specified as a planning problem whose goal is to build a plan consisting of recovery actions that, after being executed, should recover the faulty process instance. The repair plan is generated by taking into account constraints posed by the process structure and by applying or deleting repair actions taken from a given \emph{generic repair plan}. The generic repair plan is built by considering all possible applications of the recovery actions in a specific state.

%%MODELING PHASE : MANUAL.
%%TRIGGER FOR ADAPTATION: Data driven
%%EX.EVENTS : Yes.
%TASK FAILURES : NO.
%EFFECT : Non determinism
%%TRY PART = Manual - definition of business policies.
%%CATCH PART = It is generated automatically through a planner (very similar to our approach).
The work~\cite{Bucchiarone@SOCA2011} proposes a goal-driven approach to business process adaptation in service-based applications. The authors provide an approach that allows to automatically adapt business processes to run-time context changes that impede achievement of a business goal. The execution of the business process and the evolution of its context are continuously monitored against the desired model of the application, which is based on the notion of goal and on service annotations describing how services contribute to goal achievement. The model encodes also the business policies defined over the elements of the domain; each contextual property is modeled with a state transition system capturing all possible values of the specific property, and their evolution after a precondition violation or the occurrence of an exogenous event. The adaptation mechanism is based on service composition via automated planning techniques~\cite{TraversoBook2004}: whenever a deviation is detected (that results in a violation of some business policy), a specific planner will be in charge to generate a composition of available services that achieves the adaptation goals in compliance with the business policies. In this approach, compensation activities that compose a recovery procedure are not required to be explicitly represented at design-time, but are automatically derived from the various aspects of the environment (the contextual properties, the business policies, the available services, etc.).

%Specifically, the adaptation task aims at bringing the process into the expected context configuration, such that no business policies are violated and the process can safely resume its execution.

%Our framework triggers adaptation, by automatically composing and executing available services taking into account the current state of the process and its context.

%The adaptation consists in identifying recovery activities that guarantee that the execution of a business process can be successfully resumed and, as a consequence, the business goals are achieved.

%The main problem in this paper is that some aspects are not really clear. For example, authors
%MODELING PHASE : Data and Control Flow are manually specified.
%EX.EVENTS : Yes. An ex.event can be seen as an interference caused from another process running in the environment.
%TASK FAILURES : No
%TRY PART = Defined automatically (AUTOMATIC IDENTIFICATION OF CRITICAL SECTIONS)
%CATCH PART = MANUAL DEFINITION OF THE RECOVERY PROCESSES (CURRENTLY)
Another interesting work dealing with process interference is~\cite{vanBeest@ICSOC2010}. Process interference is a situation that happens when several concurrent business processes that depend on some common data are executed in a highly distributed environment. During the processes execution, it may happen that some of these data are modified causing unexpected or wrong business outcomes. To overcome this limitation, the work~\cite{vanBeest@ICSOC2010} proposes a run-time mechanism which uses \emph{Depepency Scopes} - for identifying critical parts of the processes whose correct execution depends on some shared variables - and \emph{Intervention Processes} for solving the potential inconsistencies generated from the interference.
In~\cite{vanBeest@KiBP2012}, the authors present an algorithm for automating the discovery of critical sections and the generation of dependency scopes, whereas intervention processes need to be manually defined at design time. One of the future work claimed by the authors concerns to devise a technique for the automatic synthesis of intervention processes through a domain independent planner based on CSP techniques. Details about the planner and the algorithm used for synthesizing a plan are shown in~\cite{Aiello@AAAI11}.

\subsection{Discussion}
\label{subsec:state_of_the_art-adaptation_planning-discussion}

%In order to provide automatic adaptation of dynamic processes at run-time for unanticipated exceptions, it is required that a PMS provides \emph{intelligent} context-aware process adaptation features. reasoning +planning

If compared with the above works, our approach provides some unique features in dealing with unanticipated exceptions for dynamic processes:
\begin{itemize}[itemsep=1pt,parsep=1pt,topsep=1pt]
\item The goal-based approaches~\cite{EPOS1993,Ferreira@IJCIS2006,Weske@ADBIS2004} apply planning techniques to automatically derive a process model from business goals, and to repair it during run-time if goals change by replanning the process specification. However, the structure of a dynamic process usually derives from a pre-defined high-level procedure, and it is difficult to extract a goal representing the correct execution of the process. Moreover, current planning methods do not still cover adequately all relevant modeling aspects (e.g., treatment of loops, appropriate handling of data flow, etc.). But the major issue lies in the replanning stage used for adapting a faulty process instance. In fact, dynamic processes usually involve real human participants acting in pervasive scenarios, and to re-define completely the process specification at run-time when the process goal changes (due to some activity failure) means to completely revolutionize the work-list of tasks assigned to the process participants. On the contrary, our approach adapts a running process instance by modifying only those parts of the process that need to be changed/adapted by keeping other parts stable.
\item In the work~\cite{Pernici@TSE2010}, it is required to explicitly define repair activities at design-time. But, for a dynamic process, it is not an easy task to predict how the process instances will unfold and to understand which recovery actions effectively will be required for adapting the processes. Instead, in our approach, the recovery procedure will be built by composing tasks stored in a specific repository. The repository contains both tasks used for defining the specific process instance under execution and other tasks built on the same contextual scenario and possibly used in past executions of the process. This means we don't explicitly define at design time any recovery action, by letting an external planner to decide which tasks are really required for building the recovery procedure. By the way, the real bottleneck of the approach presented in~\cite{Pernici@TSE2010} consists in the presence of a generic recovery plan from which an ad-hoc repair plan is derived when an exception occurs. The authors exploit the disjunctive logic programming system DLV~\cite{leone2006DLV} to search for successful repair plans starting from the generic one. However, given n repair actions, there exist, in the worst case, at least O(\emph{$n^{n}$}) possible repair plans, since all possible permutations of repair actions have to be considered. For avoiding such a complexity authors apply heuristics that limit the search space. The drawback is that some successful repair plans may be possibly not computed.
\item The work~\cite{Bucchiarone@SOCA2011} presents the interesting feature of providing services with non-deterministic outcomes. %However, by analyzing the paper, it is not clear if the planner employed for deriving the recovery procedure is really able to support actions with non-deterministic effects. 
    However, in this approach, service operations as well as context properties are modeled through state transition systems, by requiring a considerable manual effort at design-time. Furthermore, the approach does not support numeric variables and would suffer of state explosion and poor performance if monitored variables range over large domains. The main difference between the \smartpm approach and the one presented in~\cite{Bucchiarone@SOCA2011} is that in \smartpm the recovery procedure is synthesized at runtime, without the need to define any recovery policy at design-time. On the contrary, in~\cite{Bucchiarone@SOCA2011}, the process modeler is required to define some ``business policies'' for detecting the exceptions.
\item The work presented in~\cite{vanBeest@ICSOC2010} is the closest to the \smartpm approach, but it does not still provide automatic mechanisms for the building of the recovery procedure (however, it has been claimed to be one of the main future direction of this work) and it only considers external events as a source of failure (we focus on both external events and tasks failures).
\end{itemize}

\chapter{The \smartpm Approach}
\label{ch:approach}

Approaches enabling automated process instance adaptation at run-time aim at reducing error-prone and costly manual ad-hoc changes, and thus at relieving users from complex adaptations tasks~\cite{ReichertBook2012}. As a prerequisite for such
automated changes, a PMS must be able to:
\begin{itemize}[itemsep=1pt,parsep=1pt]
\item automatically detect exceptional situations;
\item derive at run-time a recovery procedure to handle them;
\item correctly apply the recovery procedure to the process instance involved in the specific exception.
\end{itemize}

To address this research issue, in the following two chapters we present \smartpm (that is the acronym for \emph{Smart Process Management}), which is a model and a prototype PMS featuring a set of techniques providing support for automatic adaptation of processes at run-time. Such techniques are able to automatically adapt process instances without explicitly defining handlers/policies to recover from tasks failures and exogenous events at run-time and without the intervention of domain experts. While in Chapter~\ref{ch:framework} we will describe in detail the architecture and the technological aspects of the \smartpm software prototype, in this chapter we focus on the theoretical aspects of our general approach for adaptation of dynamic processes.

To accomplish this, we make use of well-established techniques and frameworks from Artificial
Intelligence, such as \emph{\sitcalc}\cite{ReiterBook}, \indigolog~\cite{Indigolog:2009} and \emph{automatic planning}~\cite{TraversoBook2004}. After presenting in Section~\ref{sec:approach-overview} an overview of our general approach, in Section~\ref{sec:approach-preliminaries} we will discuss some preliminary notion around these challenging topics. Then, in Section~\ref{sec:approach-formalization} we show how we explicitly formalize processes in situation calculus and \indigolog and in Section~\ref{sec:approach-monitor} we discuss how this formalization can be used for devising a technique that automatically detect and recover from failures. Finally, in Section~\ref{sec:approach-adaptation} we present three different adaptation mechanisms developed within the \smartpm prototype.

\section{Overview of the Approach}
\label{sec:approach-overview}

\smartpm adopts a \emph{service-based} approach to process management, that is, tasks are executed by services (that could be software applications, human actors, robots). In \smartpm a process model is defined as a set of $n$ task definitions, where each task $t_{i}$ can be considered as a single step that consumes input data and produces output data. Data are represented through a set $F$ of \emph{fluents} $f_{j}$ whose definition depends strictly on the specific process domain of interest. In AI, a fluent is a condition that can change over time. Such fluents can be used to constrain the task assignment (in terms of \emph{task preconditions}), to assess the outcome of a task (in terms of \emph{task effects}) and as guards into the expressions at decision points (e.g., for cycles or conditional statements).

Choosing the fluents that are used to describe each activity falls into the general problem of \emph{knowledge representation}. To this end, the environment, services and tasks are grounded in domain theories described in \sitcalc~\cite{ReiterBook}. Situation Calculus is specifically designed for representing dynamically changing worlds in which all changes are the result of the tasks' execution. Processes are represented as \indigolog programs. \indigolog~\cite{Indigolog:2009} allows for the definition of programs with cycles, concurrency, conditional branching and interrupts that rely on program steps that are actions of some domain theory expressed in Situation Calculus. The dynamic world of \smartpm is modeled as progressing through a series of \emph{situations}. Each situation is the result of various tasks being performed so far. Fluents may be thought of as ``properties'' of the world whose values may vary across situations.

\bigskip

\smartpm provides mechanisms for adapting process schemas that require no pre-defined handlers. To this end, a specialized version of the concept of adaptation from the field of agent-oriented programming~\cite{GiacomoRS98} is used. Specifically, adaptation in \smartpm can be seen as reducing the gap between the \emph{expected reality}, the (idealized) model of reality that is used by the PMS to reason, and the \emph{physical reality}, the real world with the actual values of conditions and outcomes. The physical reality $\Phi_{s}$ reflects the concept of ``now'', i.e., what is happening in the real environment whilst the process is under execution. In general, a task can only be performed in a given physical reality $\Phi_{s}$ if and only if that reality satisfies the \emph{preconditions} $Pre_{i}$ of that task. Moreover, each task has also a set of \emph{effects} $Eff_{i}$ that change the current physical reality $\Phi_{s}$ into a new physical reality $\Phi_{s+1}$.

A PMS that takes in input such a process specification should guarantee that each task is executed correctly, i.e., with an output that satisfies the process specification itself. In fact, at execution time, the process can be easily invalidated because of task failures or since the environment may change due to some external event. For this purpose, the concept of \emph{expected reality}\ $\Psi_{s}$ is given.  A recovery procedure is needed if the two realities are different from each other. An execution monitor is responsible for detecting whether the gap between the expected and physical realities is such that the original process $\delta_0$ cannot progress its execution. In that case, the PMS has to find a recovery process $\delta_a$ that repairs $\delta_0$ and removes the gap between the two kinds of reality.

\bigskip

%Currently, an end-user of \smartpm may decide if use the first or the second adaptation mechanism for the building of the process recovery. The planning-based mechanism is

We developed three different mechanisms for deriving a recovery procedure $\delta_a$.
Currently, the adaptation algorithm deployed in \smartpm synthesizes a linear process $\delta_a$ (i.e., a process consisting of a sequence of tasks) and inserts it at a given point of the original process - specifically, that point of the process where the deviation was first noted. To provide more details, let us assume that the current process is $\delta_0$ = ($\delta_1;\delta_2$) in which $\delta_1$ is the part of the process already executed and $\delta_2$ is the part of the process which remains to be executed when a deviation is identified. The adapted process is $\delta'_0$ = $(\delta_1;\delta_a;\delta_2)$. However, whenever a process needs to be adapted, every running task is interrupted, since the ``repair'' sequence of tasks $\delta_a = [t_1, \ldots, t_n]$ is placed before them. Thus, active branches can only resume their execution after the repair sequence has been executed. This last requirement is fundamental to avoid the risk of introducing data inconsistencies during the repair phase.

\subsection{Representing Tasks in \smartpm}
\label{subsec:approach-overview-representing tasks}

An end user that wants to interact with the \smartpm prototype is not required to model a process and contextual data through complex languages such as \sitcalc and \indigolog. We use those languages for reasoning on the process under execution and for building automatically recovery procedures at run-time. In the following chapter, we describe how the end-user can build an admissible process specification for \smartpm with the \smartpm Definition Tool, that allows to graphically represent the control flow of a process through the BPMN language~\cite{BPMN_book} and to annotate in a descriptive way (through the \smartML language) the tasks composing the process. Before to execute a process defined with BPMN and annotated with \smartML, the \smartpm system automatically translates those specifications in situation calculus and \indigolog readable formats.

In \smartpm, tasks are collected in a specific repository, and each task can be considered as a single step that consumes input data and produces output data. For example, if we consider our case study introduce in Section~\ref{sec:introduction-case_study}, the task \aGo \ can be described with \smartML as follows:

\begin{footnotesize}
\begin{verbatim}
<task>
    <name>go</name>
    <parameters>
        <arg>from - Location_type</arg>
        <arg>to - Location_type</arg>
    </parameters>
    <precondition>at[SRVC] == from AND
                  isConnected[SRVC]
    </precondition>
    <effects>
        <supposed>at[SRVC] = to</supposed>
    </effects>
</task>
\end{verbatim}
\end{footnotesize}
The task \aGo \ involves two input parameters $from$ and $to$ of type $Location$, representing the starting and arrival locations, while $SRVC$ represents the service (i.e., the process participant) that will execute the task at run-time. An instance of this task can be executed only if the service $SRVC$ is currently at the starting location $from$ and is connected to the team's network. The supposed consequence of task execution is that $SRVC$ moves from the starting to the arrival location, and this is reflected by assigning to $at[SRVC]$ the value $to$ in the effect. But, in the real world, the execution of \aGo \ can lead to a different outcome from the expected one. If $SRVC$ reaches a location $to'$ different from $to$, it means that the expected reality changes as the task was executed correctly, while the physical reality continues to reflect the real world, by assuming that $SRVC$ has reached $to'$. In those cases, \smartpm will be able to adapt the process for re-aligning the two realities without any pre-defined policy.

Let us consider that not every task has an outcome that requires the \smartpm system to monitor the two realities. This is the case of the task \aMove:

\begin{footnotesize}
\begin{verbatim}
<task>
    <name>move</name>
    <parameters>
        <arg>from - Location_type</arg>
        <arg>to - Location_type</arg>
    </parameters>
    <precondition>atRobot[SRVC] == from AND
                  batteryLevel[SRVC]>=moveStep[] AND
                  isRobotConnected[SRVC]
    </precondition>
    <effects>
        <supposed>atRobot[SRVC] = to</supposed>
        <automatic>batteryLevel[SRVC] -= moveStep[]</automatic>
    </effects>
</task>
\end{verbatim}
\end{footnotesize}
A service $SRVC$ that execute the task \aMove \ (this task has been thought for representing the robots movement) should reach the location $to$, but the supposed effect involves a contextual property (i.e., \emph{atRobot[SRVC]}, used for recording the current position of a robot) flagged at design-time as \emph{not relevant}, meaning that we need only to update the physical reality with the real output returned by the task, but no update to the expected reality is required. The same is true for the $automatic$ effect that decreases the battery level of the robot of a fixed quantity equal to $moveStep[]$. An automatic effect is automatically applied when the task completes without the need to consider the task outcomes.

A thorough description of the \smartML language is given in Section~\ref{sec:framework-smartpm_definition_tool-smartml}. In this chapter, we concentrate on modeling processes, data and tasks through situation calculus and \indigolog, that allow us to perform some reasoning for automatically adapting faulty process instances ar run-time.

\subsection{Resource Model and Task Life-Cycle in \smartpm}
\label{subsec:approach-overview-resource_perspective}

%Pockets of Flexibility - Chapter 2 : Definition of Types, Instances etc.

When allocating tasks to the resources involved in business process execution, PMSs typically adopt a \emph{pull-based} approach where the system offers each task to one or more resources qualified for it (e.g., using a \emph{role-based} distribution approach) and a resource chooses among the offered items and commits to undertake the execution. This approach is motivated by the fact that, in business settings, a PMS has to leverage the interests, priorities, needs and constraints of single participants, and the overall revenue of the organizations they belong to.

The \smartpm prototype was instead initially developed for coordinating emergency management processes. Those dynamic processes are highly critical and time demanding, as well as they often need to be carried out within strictly specified deadlines. Therefore, it is inapplicable to use a \emph{pull} mechanism for task assignment, as the risk is to have some task(s) waiting indefinitely for being chosen and executed. In emergency management, the personal benefit of single operators is unimportant with respect to the overall effectiveness of rescue operations. Consequently, it is preferable a \emph{push-based} approach, where the system dynamically selects a resource qualified for executing a given task (e.g., using a \emph{role-based} or a more specific \emph{capability-based} distribution approach) and directly allocates the work item to the selected resource. According to this approach, in \smartpm each task is assigned \textbf{to only one resource}, and moreover, each resource gets assigned \textbf{at most one task}. In our model, resources involved in process executions are classified as either \emph{human} (e.g., first responders that act on the field), or \emph{non-human} (e.g., robots whose purpose is to maintain the network connection). However, we abstract all possible resources as \emph{services}, i.e., entities capable of executing specific tasks~\cite{resource-patterns}.

\begin{figure}[t]
\centering{
 \includegraphics[width=0.9\columnwidth]{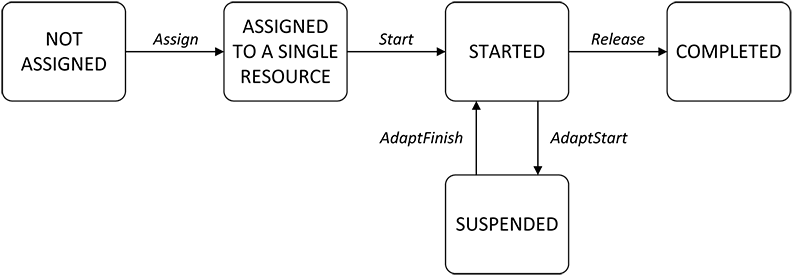}
 } \caption{The task-life cycle in \smartpm.}
 \label{fig:fig_introduction_task_life_cycle}
\end{figure}

The allocation of tasks to services and their execution during process enactment, as well as performed recovery and adaptation procedures, determine the life-cycle of a task, as illustrated in the state-transition diagram in Fig.~\ref{fig:fig_introduction_task_life_cycle}. In \smartpm, each task making part of the process control flow that was never been assigned to any service lies in a \emph{Not Assigned} state. A task becomes \emph{Assigned} after the PMS assigns it to a specific service qualified for executing the task and upon fulfillment of control-flow preconditions. With respect to the general task life-cycle defined in~\cite{resource-patterns}, in our model a task is never \emph{offered} to one or more services and is directly \emph{assigned} to a service for execution. When a service is ready for executing the task, it notifies the \smartpm system through its \emph{Task Handler} about the starting of the task itself; as a consequence the task moves to the \emph{started} state.  A started task may then be temporarily \emph{suspended}, specifically when the system needs to restructure and modify the process for adaptation purposes, and successfully \emph{completed}. A task failure due to the impossibility of the task complete its execution is seen by \smartpm as the task has completed with an ``undefined'' outcome.
%According to guideline (c) defined in Section~\ref{subsec:guidelines}, the system dynamically selects a service qualified for executing the task and directly allocates the work item to the selected service. As a consequence the task moves to the \emph{assigned} state and,

%Additional transitions are possible as a consequence of exception handling and process adaptation/redesign techniques (cfr. exception handling patterns as defined in~\cite{exceptionhandlingpatterns}). An \emph{assigned} or \emph{started} task may be reallocated to a different service for being executed, and an already \emph{running} task may be restarted by the same service. 

\section{Preliminaries}
\label{sec:approach-preliminaries}

In this section we introduce some preliminary notions, namely \emph{\sitcalc} and \indigolog, that are used as proper formalisms to reason about processes and exogenous events. Then, we will give some insights around the use of \emph{classical planning techniques}. This section is not meant to give an all-comprehensive and very formal introduction of the notions. It aims mostly at giving an overall insight to those who are not very expert on such topics.

%Planning, then discusses how to use these reasoning methods to decide what to do, particularly by constructing plans.

\subsection{Situation Calculus}
\label{subsec:approach-preliminaries-sitcalc}

The \sitcalc~\cite{McCarthy69-SitCalc,ReiterBook} is a logical language specifically designed for representing dynamically changing worlds in which all changes are the result of named \emph{actions}. A possible history of actions is represented by a so-called \emph{situation}, a first-order term in the language.  The constant $S_0$ denotes the initial situation, where no actions have yet been performed. Sequences of actions are built using the special function symbol $do(a,s)$, that denotes the successor situation resulting from performing action $a$ in situation $s$.

\vskip 0.5em \noindent\colorbox{light-gray}{\begin{minipage}{0.98\textwidth}
\begin{example}
\emph{Let us suppose we defined an action \aOpenDoor(x) concerning the opening of a door $x$. The situation term $do(\aOpenDoor(d_2),do(\aOpenDoor(d_1),S_0))$ denotes the situation resulting from first opening door $d_1$ in $S_0$ and then opening door $d_2$.}
\end{example}
\end{minipage}
}\vskip 0.5em
In general, the \sitcalc can be seen as a ``dialect'' of the first-order logic in which situations and actions are explicitly taken to be objects in the domain. Note that constants and function symbols for actions are completely application dependent. Properties that hold in a situation are called \emph{fluents}. Technically, these are predicates taking a situation term as their last argument.

\vskip 0.5em \noindent\colorbox{light-gray}{\begin{minipage}{0.98\textwidth}
\begin{example}
\emph{The fluent $\fDoorOpen(x,s)$ may denote that door $x$ is open in situation $s$. The formula $\neg\fDoorOpen(d_1,s) \land \fDoorOpen(d_1,do(\aOpenDoor(d_1),s))$ says that the door $d_1$ is closed in some situation $s$, but it will be opened in the situation that results from opening it in $s$.}
\end{example}
\end{minipage}
}\vskip 0.5em
A distinguished predicate $\Poss(a,s)$ is used to state that action $a$ is executable in $s$.

\vskip 0.5em \noindent\colorbox{light-gray}{\begin{minipage}{0.98\textwidth}
\begin{example}
\emph{The predicate $\Poss(\aOpenDoor(x),S_0)$ says that is possible to open the door $x$ in the initial situation $S_0$.}
\end{example}
\end{minipage}
}\vskip 0.5em

To reason about a changing world, it is necessary to have beliefs not only about what is true initially, but also about how the world changes as result of actions. Actions typically have \emph{preconditions}, that is, conditions that need to be true for the action to occur. Formulas like the one in the Example \ref{ex:Preliminaries:Poss} are called \emph{preconditions axioms}.

\vskip 0.5em \noindent\colorbox{light-gray}{\begin{minipage}{0.98\textwidth}
\begin{example}
\label{ex:Preliminaries:Poss}
\emph{The action of opening a door in situation $s$ is possible only if the door is closed in $s$:
\[
\begin{array}{l}
\Poss(\aOpenDoor(x),s)\equiv \neg \fDoorOpen(x,s).
\end{array}
\]}
\end{example}
\end{minipage}
}\vskip 0.5em

Actions typically have also \emph{effects}, that is, fluents that are changed as a result of performing the action.

\vskip 0.5em \noindent\colorbox{light-gray}{\begin{minipage}{0.98\textwidth}
\begin{example}
\label{ex_positive_effect}
\emph{If a door $x$ is closed and not locked, the action \aOpenDoor(x) can be used to open it:
\[
\begin{array}{l}
(\neg\fDoorOpen(x,s) \land \neg\fLocked(x,s)) \supset \fDoorOpen(x,do(\aOpenDoor(x),s)).
\end{array}
\]}
\end{example}
\end{minipage}
}\vskip 0.5em

\vskip 0.5em \noindent\colorbox{light-gray}{\begin{minipage}{0.98\textwidth}
\begin{example}
\label{ex_negative_effect}
\emph{To close a door causes it to be no more opened:
\[
\begin{array}{l}
\neg\fDoorOpen(x,do(\aCloseDoor(x),s)).
\end{array}
\]}
\end{example}
\end{minipage}
}\vskip 0.5em

These formulas are called \emph{effects axioms}; they can be \emph{positive} (cf. Example~\ref{ex_positive_effect}) if they describe when a fluent becomes true, and \emph{negative}  (cf. Example~\ref{ex_negative_effect}) otherwise. In general, for any fluent $F(\vec{x},s)$ we can rewrite all positive effects axioms as a single formula of the form:
\begin{equation}
\label{eq_positive_effects}
P_F(\vec{x},a,s) \supset F(\vec{x},do(a,s))
\end{equation}
and all the negative effect axioms as a single formula of the form:
\begin{equation}
\label{eq_negative_effects}
N_F(\vec{x},a,s) \supset \neg F(\vec{x},do(a,s))
\end{equation}
If we assume that formulas \ref{eq_positive_effects} and \ref{eq_negative_effects} characterize all the conditions under which an action $a$ changes the value of a fluent $F$ (this is called the \emph{completeness assumption}, cf.~\cite{Brachman:2004}), and if we state that no action $a$ satisfies the condition of making the fluent $F$ both true and false at the same time, then all the effects axioms that are linked to a specific fluent $F$ can be combined in a single \emph{successor state axiom} that completely characterizes the value of $F$ in the successor state resulting from performing action $a$ in situation $s$.
\begin{equation}
F(\vec{x},do(a,s)) \equiv P_F(\vec{x},a,s) \lor (F(\vec{x},s) \land \neg N_F(\vec{x},a,s))
\end{equation}
Specifically, $F$ is true after doing $a$ if and only if before doing $a$, $P_F$ (the positive effect condition for $F$) was true, or both $F$ and $\neg N_F$ (the negative effect condition for F) were true.

\vskip 0.5em \noindent\colorbox{light-gray}{\begin{minipage}{0.98\textwidth}
\begin{example}
\emph{The successor state axiom for fluent $\fDoorOpen(x,s)$ is as follows:
\[
\begin{array}{l}
	\fDoorOpen(x,do(a,s)) \equiv {}\\
	\qquad (a=\aOpenDoor(x) \land \neg \fLocked(x,s) \land \neg\fDoorOpen(x,s)) \lor \\
	\qquad  \fDoorOpen(x,s) \land a\not=\aCloseDoor(x);
\end{array}
\]
That is, a door is open after an action $a$ has been performed in $s$ iff $a$ denotes
the action of opening that door and the door is closed and not locked, or the door is already 
open in $s$ and the action $a$ is not that one of closing it.}
\end{example}
\end{minipage}
}\vskip 0.5em

The use of of successor state axioms for capturing how a specific fluent changes after performing some action solves automatically the so-called \emph{frame problem}~\cite{Brachman:2004}. In fact, to really know how the world can change, it should be also necessary to know what fluents are \emph{unaffected} by performing an action.
\vskip 0.5em \noindent\colorbox{light-gray}{\begin{minipage}{0.98\textwidth}
\begin{example}
\emph{To open a door $x$ does not change its color $c$:
\[
\begin{array}{l}
\neg\fColor(x,c,s) \longrightarrow \fColor(x,c,do(\aOpenDoor(x),s)).
\end{array}
\]}
\end{example}
\end{minipage}
}\vskip 0.5em
These kinds of formulas are called \emph{frame axioms}, because they limit or frame the effects of an action, and they would be required to fully capture the dynamism of a situation. Usually, it is expected that only a very small number of actions affects the value of a specific fluent; the rest leave it invariant (e.g., the color of a door is unaffected by opening that same door). The \emph{frame problem} forces to know and reason with a large number of frame axioms; if $A$ is the number of actions and $F$ the number of fluents, we should reason on about 2x$A$x$F$ facts that do not change when we are performing an action. A simple solution to the frame problem has been proposed in~\cite{ReiterBook,Brachman:2004}, and consists of using:
\begin{itemize}[itemsep=1pt,parsep=1pt]
\item precondition axioms, one per action (specifying when the action is executable);
\item successor state axioms, one per fluent (capturing the effects and non-effects of actions);
\item \emph{initial state axioms} describing what is true initially (i.e., what is true in the initial situation $S_0$);
\item unique name axioms for actions (stating that the only action terms that can be equal are two identical actions with identical arguments):
\begin{itemize}
\item $A(\vec{x}) = A(\vec{y}) \supset (x_1 = y_1) \land ... \land (x_n = y_n)$
\item $A(\vec{x}) \neq B(\vec{y})$ where $A$ and $B$ are distinct action names.
\end{itemize}
\end{itemize}
The completeness assumption for the effects of actions allows to conclude that action that are not mentioned explicitly in the effect axioms leave the fluent invariant (so, we do not need anymore to represent explicitly frame axioms).

\subsubsection{Using the Situation Calculus}
\label{subsubsec:approach-preliminaries-sitcalc-using}

Given a knowledge base KB containing facts expressed in the situation calculus, there are various sorts of reasoning tasks we can consider~\cite{ReiterBook,Brachman:2004}. Two basic reasoning tasks are \emph{projection} and \emph{legality testing}.

Given a sequence of actions and some initial situation $S_0$, the \emph{projection task} aims at determining what would be true if those actions were performed starting in that initial situation. In~\cite{Brachman:2004}, the problem is formalized as follows:

\begin{description}
\item [Projection Task] Suppose that $\xi(s)$ is a formula with a single free variable $s$ of the situation sort, and that $\vec{a}$ is a sequence of actions <$a_1,...,a_n$>. To find out if $\xi(s)$ would be true after performing $\vec{a}$ starting in the initial situation $S_0$, we determine whether or not KB $\models \xi(do(\vec{a},S_0))$, where $do(\vec{a}, S_0)$
is an abbreviation for $do(a_n, do(a_{n-1},..., do(a_2, do(a_1, S_0))...))$, and for
$S_0$ itself when n = 0.
\end{description}
\vskip 0.5em \noindent\colorbox{light-gray}{\begin{minipage}{0.98\textwidth}
\begin{example}
\emph{Let us suppose that in $S_0$ the doors $d_1$ and $d_2$ are closed and locked:}
\[
\begin{array}{l}
\fLocked(d_1,S_0) \land \fLocked(d_2,S_0) \land \neg\fDoorOpen(d_1,S_0) \land \neg\fDoorOpen(d_2,S_0)
\end{array}
\]
\emph{and $\xi(s) \equiv (\fDoorOpen(d_1,s) \land \fDoorOpen(d_2,s))$. The sequence of actions}:
\[
\begin{array}{l}
<\aUnlock(d_1),\aOpenDoor(d_1),\aUnlock(d_2),\aOpenDoor(d_2)>
\end{array}
\]
\emph{makes the formula $\xi(s)$ equal to true. In other words, $\xi(s)$ holds in the situation:}
\[
\begin{array}{l}
s = do(\aOpenDoor(d_2),do(\aUnlock(d_2),do(\aOpenDoor(d_1),do(\aUnlock(d_1),S_0))))
\end{array}
\]
\end{example}
\end{minipage}
}\vskip 0.5em

Another interesting task performable through situation calculus is the \emph{legality testing task}, that determines whether a sequence of actions leads to a legal situation. In~\cite{Brachman:2004}, the problem is formalized as follows:
\begin{description}
\item [Legality Task] Suppose that $\vec{a}$ is a sequence of actions <$a_1,...,a_n$>. To find out if $\vec{a}$ can be legally performed starting in the initial situation $S_0$, we determine whether or not KB $\models \Poss(a_i,do($<$a_1,...a_{i-1}$>$,S_0))$ for every $i$ such that $1 \leq i \leq n$.
\end{description}
\vskip 0.5em \noindent\colorbox{light-gray}{\begin{minipage}{0.98\textwidth}
\begin{example}
\emph{For example, given the initial situation:}
\[
\begin{array}{l}
\fLocked(d_1,S_0) \land \fLocked(d_2,S_0) \land \neg\fDoorOpen(d_1,S_0) \land \neg\fDoorOpen(d_2,S_0)
\end{array}
\]
\emph{although the situation term} $do(\aOpenDoor(d_2),do(\aUnlock(d_1),S_0))$ \emph{is well-formed, it is not a legal situation, because the precondition for opening $d_2$ states that $d_2$ has to be unlocked, while the first action in the situation term unlocks door $d_1$. The situation term:}
\[
\begin{array}{l}
s = do(\aOpenDoor(d_2),do(\aUnlock(d_2),do(\aOpenDoor(d_1),do(\aUnlock(d_1),S_0))))
\end{array}
\]
\emph{is instead legal, since the sequence of actions can be legally performed starting from $S_0.$}
\end{example}
\end{minipage}
}\vskip 0.5em

%On top of Situation Calculus action theories, logic-based programming languages can be defined, which, in addition to the primitive actions, allow the definition of \emph{complex} actions.

\subsection{Indigolog}
\label{subsec:approach-preliminaries-indigolog}

On top of Situation Calculus action theories, one can define complex control behaviors by means of high-level programs expressed in \golog-like programming languages. Specifically we focus on \indigolog~\cite{Indigolog:2009}, which provides a set of programming constructs sufficient for defining every well-structured process as defined in~\cite{workflow-patterns}.

\indigolog is a programming language for autonomous agents that sense their environment and do planning as they operate.  The programmer provides a high-level nondeterministic program involving domain-specific actions and tests to perform the agent's tasks. The \indigolog interpreter then reasons about the preconditions and effects of the actions in the program to find
a legal terminating execution. To support this, the programmer provides a declarative specification of the domain (i.e., primitive actions, preconditions and effects, what is known about the initial state) in the \sitcalc. \indigolog programs are executed \textbf{online} together with sensing the environment and monitoring for events.

\indigolog derives from \ConGolog\ to which it adds basically the lookahead \emph{search operator}. Such operator allows to simulate the execution of a process with the aim of searching for a successful termination before actually performing
the program in the real world. In its, turn \ConGolog~\cite{Congolog:2000} extends the original \Golog~\cite{Golog:1997} by introducing construct for current execution of different operations. Table~\ref{tab:indigolog_constructs} summarizes the constructs of \indigolog used in this thesis.

\begin{table}[t]
\caption{\indigolog\ constructs.}
\centering
\begin{footnotesize}
\begin{tabular}{|l|p{0.70\columnwidth}|
}
  \hline
  \textbf{Construct} & \textbf{Meaning}
  \\\hline
  $a$ & A primitive action.
  \\\hline
  $\sigma?$ & Wait while the $\sigma$ condition holds.
  \\\hline
  $(\delta_1;\delta_2)$ & Sequence of two sub-programs $\delta_1$ and $\delta_2$.
  \\\hline
  $proc~P(\overrightarrow{v})~\delta$ & Invocation of a \indigolog\ procedure $\delta$
  passing a vector $\overrightarrow{v}$ of parameters.
   \\\hline
  $(\delta_1 | \delta_2)$ & Non-deterministic choice among (sub-)programs $\delta_1$ and $\delta_2$.
  \\\hline
  $if~\sigma~then~\delta_1~else~\delta_2$ & Conditional statement: if $\sigma$ holds, $\delta_1$ is executed; otherwise $\delta_2$.
  \\\hline
  $while~\sigma~do~\delta$ & Iterative invocation of $\delta$.
  \\\hline
  $(\delta_1\parallel\delta_2)$ & Concurrent execution.
  \\\hline
  $(\delta_1\prparallel\delta_2)$ & Prioritized concurrent execution.
  \\\hline
  $\delta^*$ & Non-deterministic iteration of program execution.
  \\\hline
  $\pi a.\delta$ & Non-deterministic choice of argument $a$ followed by the execution of $\delta$.
  \\\hline
  $\tuple{\sigma \rightarrow \delta}$ & $\delta$ is repeatedly executed until
  $\sigma$ becomes false, releasing control to anyone
  else able to execute.\\\hline
  $\Sigma(\delta)$ & search operator
  \\\hline
  $send(\Upsilon,\overrightarrow{v})$ & a vector $\overrightarrow{v}$ of parameters is passed to an external program $\Upsilon$.\\\hline
  $receive(\Upsilon,\overrightarrow{v})$ & a vector $\overrightarrow{v}$ of parameters is received by an external program $\Upsilon$.\\\hline
\end{tabular}
\end{footnotesize}
  \label{tab:indigolog_constructs}
\end{table}

In the first line, $a$ stands for a situation calculus action term whereas, in the second line, $\sigma$? stands for a formula over situation calculus predicates and fluents that needs to be evaluated when reached.

There are some constructs for concurrent programming. In particular ($\delta_1\parallel\delta_2$) expresses the concurrent execution (interpreted as interleaving) of the programs $\delta_1$ and $\delta_2$. %Observe that a program may become blocked when it reaches a primitive action whose preconditions are false or a wait action $\sigma$? whose condition $\sigma$ is false. Then, execution of $(\delta_1\parallel\delta_2)$ may continue provided another program executes next.
Another concurrent programming construct is $(\delta_1 \prparallel \delta_2)$, where $\delta_1$ has higher priority than $\delta_2$, and $\delta_2$ may only execute when $\delta_1$ is done or blocked.

The list in Table~\ref{tab:indigolog_constructs} includes also some nondeterministic constructs. For example, $(\delta_1 \mid \delta_2)$ nondeterministically chooses between programs $\delta_1$ and $\delta_2$. Test actions $\sigma$? can be used to control which branches may be executed, e.g., $[(\sigma?;  \delta_1)  |  (\neg\sigma?; \delta_1)]$ will perform $\delta_1$ when $\sigma$ is \emph{true} and $\delta_2$ when $\sigma$ is \emph{false} (we use [...] and (...) interchangeably to disambiguate structure in programs). The construct $\pi\,a.\, \delta$, nondeterministically picks a binding for the variable $a$ and performs the program $\delta$ for this binding of $a$. Finally, $\delta^*$, performs $\delta$ zero or more times. $\pi\,a_1,\ldots,a_n.\, \delta$ is an abbreviation for $\pi\,a_1.\ldots.\pi\, a_n\, \delta$.

The constructs $\mif\ \sigma\ \mthen\ \delta_1\ \melse\ \delta_2$ and $\mwhile\ \sigma\ \mdo\ \delta$ are the synchronized versions of the usual if-then-else and while-loop. They are synchronized in the sense that testing the condition $\sigma$ does not involve a transition per se: the evaluation of the condition and the first action of the branch chosen are executed as an atomic unit. So these constructs behave in a similar way to the test-and-set atomic instructions used to build semaphores in concurrent programming.

Let's focus on the interrupt construct:
\[\begin{array}{l}
\langle\ \sigma \rightarrow
\delta\ \rangle\ \isdef\
\mwhile\ Interrupts\_running\ \mdo\\
\mif\ \sigma\ \mthen\
\delta\ \melse\
false\
\mendif\\
\mendwhile
\end{array}\]
To see how this works, first assume that the special fluent $Interrupts\_running$ is identically true. When an interrupt $\tuple{\sigma \rightarrow \delta}$ gets control from higher priority processes, suspending any lower priority processes that may have been advancing, it repeatedly executes $\delta$ until $\sigma$ becomes false. Once the interrupt body $\delta$ completes its execution, the suspended lower priority processes may resume. The control release also occurs if $\sigma$ cannot progress (e.g., since no action meets its precondition).

%Finally, an interrupt $\tuple{\sigma \rightarrow \delta}$ has a trigger condition $\sigma$, and a body $\delta$.  If the interrupt gets control from higher priority processes and the condition $\sigma$ is true, the interrupt triggers and the body is executed.  Once the body completes execution, the interrupt may trigger again. $\tuple{\vec{x}\ :\ \sigma \rightarrow \delta}$ is an abbreviation for $\tuple{\exists \vec{x}. \sigma \rightarrow \pi \vec{x}.\delta}$.

Since \indigolog\ provides flexible mechanisms for interfacing with other programming languages such as \texttt{Java} or \texttt{C} and for socket communication (cf. Section~\ref{sec:framework-indigolog_platform}), for our convenience we have defined here two more abstract constructs to send/receive parameters as well as values with external programs, defined out of the range of the \indigolog\ process. %For more details about the communication between \indigolog\ and external programs, we refer the reader to~\cite{IndiGolog2009}.

From a formal point of view, two predicates are introduced to specify program transitions:
\begin{itemize}[itemsep=1pt,parsep=1pt]
    \item $Trans(\delta',s',\delta'',s'')$, given a program $\delta'$
    and a situation $s'$, returns \myi a new situation $s''$ resulting from executing
    a single step of
    $\delta'$, and \myii $\delta''$ which is the remaining program to be executed.
    \item $Final(\delta',s')$ returns true when the program $\delta'$
    can be considered successfully completed in situation $s'$, i.e., $\delta'$ is \emph{legally terminated} in situation $s'$.
\end{itemize}

The predicates $\Trans$ and $Finals$ for programs %without procedures
are characterized by the following set of axioms:
\begin{enumerate}[itemsep=1.5pt,parsep=1pt]

\item Empty program:
\[
    \Trans(\nil,s,\delta',s') ~\Leftrightarrow~ \False.
\]
\[
    \Final(\nil,s) \equiv \True.
\]

\item Primitive actions:

\[
\Trans(a,s,\delta',s') ~\Leftrightarrow~
    \Poss(a[s],s) \land \delta'=\nil \land s'=\xdo(a[s],s).
\]
\[
    \Final(a,s) \equiv \False.
\]

\item Test/wait actions:
\[
\Trans(\sigma?,s,\delta',s') ~\Leftrightarrow~
    \sigma[s] \land \delta'=\nil \land s'=s.
\]
\[
    \Final(\sigma?,s) \equiv \False.
\]

\item Sequence:
\begin{eqnarray*}
\lefteqn{\Trans(\delta_1;\delta_2,s,\delta',s') ~\Leftrightarrow}\\
    & \phantom{xxx} &
    \exists\gamma.\delta'=(\gamma;\delta_2) \land
    \Trans(\delta_1,s,\gamma,s') \lor
    \Final(\delta_1,s) \land \Trans(\delta_2,s,\delta',s').
\end{eqnarray*}
\[
    \Final(\delta_1;\delta_2,s) \equiv \Final(\delta_1,s) \land \Final(\delta_2,s).
\]

\item Nondeterministic branch:
\[
\Trans(\delta_1 \mid \delta_2,s,\delta',s') ~\Leftrightarrow~
    \Trans(\delta_1,s,\delta',s') \lor \Trans(\delta_2,s,\delta',s').
\]
\[
    \Final(\delta_1 | \delta_2,s) \equiv \Final(\delta_1,s) \lor \Final(\delta_2,s).
\]

\item Nondeterministic choice of argument:
\[
\Trans(\pi{v}.\delta,s,\delta',s')
       ~\Leftrightarrow~
    \exists x.\Trans({\delta}^{v}_{x},s,\delta',s').
\]
\[
    \Final(\pi{v}.\delta,s) \equiv \exists x.\Final(\delta^{v}_{x},s).
\]

\item Nondeterministic iteration:
\[
\Trans(\delta^*,s,\delta',s') ~\Leftrightarrow~
   \exists\gamma.(\delta'=\gamma;\delta^*) \land \Trans(\delta,s,\gamma,s').
\]
\[
    \Final(\delta^*,s) \equiv \True.
\]

\item Synchronized conditional:
\begin{eqnarray*}
\lefteqn{\Trans(\mif\ \sigma\ \mthen\ \delta_1\ \melse\ \delta_2\
\mendif,s,\delta',s')~~\Leftrightarrow~~{}}\\
    & \phantom{xxx} &
    \sigma[s] \land \Trans(\delta_1,s,\delta',s') \lor
    \lnot\sigma[s] \land \Trans(\delta_2,s,\delta',s').
\end{eqnarray*}
\begin{eqnarray*}
    \Final(\mif\ \sigma\ \mthen\ \delta_1\ \melse\ \delta_2\ \mendif,s) \equiv \qquad \qquad \qquad\\
    \qquad \qquad \qquad \sigma[s] \ \land \ \Final(\delta_1,s) \ \lor \ \neg\sigma[s] \ \land \ \Final(\delta_2,s).
\end{eqnarray*}

\item Synchronized loop:
\begin{eqnarray*}
\lefteqn{\Trans(\mwhile\ \sigma\ \mdo\ \delta\
\mendwhile,\,s,\delta',s')
    ~\Leftrightarrow~}\\
    & \phantom{xxx} &
    \exists\gamma. (\delta'= \gamma;\mwhile\ \sigma\ \mdo\ \delta)
         \land \sigma[s] \land \Trans(\delta,s,\gamma,s').
\end{eqnarray*}
\[
    \Final(\mwhile\ \sigma\ \mdo\ \delta\ \mendwhile,\,s) \equiv \neg\sigma[s] \ \lor \ \Final(\delta,s).
\]

\item Concurrent execution:
\begin{eqnarray*}
\lefteqn{\Trans(\delta_1\parallel\delta_2,s,\delta',s') ~\Leftrightarrow~}\\
    &  &
    \exists\gamma. \delta'= (\gamma\parallel\delta_2) \land
        \Trans(\delta_1,s,\gamma,s') \lor
    \exists\gamma. \delta'=(\delta_1\parallel\gamma) \land
        \Trans(\delta_2,s,\gamma,s').
\end{eqnarray*}
\[
    \Final(\delta_1\parallel\delta_2,s) \equiv \Final(\delta_1,s) \land \Final(\delta_2,s).
\]

\item Prioritized concurrency:
\begin{eqnarray*}
\lefteqn{\Trans(\delta_1\prparallel\delta_2,s,\delta',s') ~~\Leftrightarrow~~{}}\\
    &\phantom{xxx} &
    \exists\gamma. \delta'=(\gamma\prparallel\delta_2) \land
        \Trans(\delta_1,s,\gamma,s') ~\lor~{}\\
    &\phantom{xxx} &
    \exists\gamma. \delta'=(\delta_1\prparallel\gamma) \land
        \Trans(\delta_2,s,\gamma,s') \land
    \lnot\exists\zeta,s''.\Trans(\delta_1,s,\zeta,s'').
\end{eqnarray*}
\[
    \Final(\delta_1\prparallel\delta_2,s) \equiv \Final(\delta_1,s) \land \Final(\delta_2,s).
\]

\item Concurrent iteration:
\[
\Trans(\delta^{\supparallel},s,\delta',s') ~\Leftrightarrow~
    \exists{\gamma}. \delta'=(\gamma\parallel\delta^{\supparallel})
    \land \Trans(\delta,s,\gamma,s').
\]
\[
    \Final(\delta^{\supparallel},s) \equiv \True.
\]

\end{enumerate}

The \textbf{off-line} execution of programs, which is the kind of execution originally proposed for \Golog~\cite{Golog:1997} and \ConGolog~\cite{Congolog:2000} is characterized using the $Do(\delta,s,s')$ predicate, which means that there is an execution of program $\delta$ that starts in situation $s$ and terminates in situation $s'$:
\begin{displaymath}
Do(\delta,s,s')\Leftrightarrow\exists\delta'.Trans^*(\delta,s,\delta',s')
\wedge Final(\delta',s')
\end{displaymath}
\noindent where $Trans^*$ is the definition of the reflective and
transitive closure of \emph{Trans}. Thus there is an execution of program $\delta$ that starts in situation $s$ and terminates in situation $s'$ if and only if we can perform 0 or more transitions from program $\delta$ in
situation $s$ to reach situation $s'$ with program $\delta'$ remaining, at which point
one may legally terminate. Notice that there may be more
than one resulting situation $s'$ since \indigolog programs can be
non-deterministic (e.g., due to concurrency).

The off-line execution model of \Golog and \ConGolog requires the executor to search over the whole program to find a complete execution before performing any action. This is obviously problematic for agents that need to sense their environment as they operate. On the contrary, the strength of \indigolog is that it provides an \textbf{online} execution model that allows to execute actions on the real world, to update its knowledge after each action execution and to monitor for possible exogenous events or actions not executed as expected.

Finally, to cope with the impossibility of backtracking actions executed in
the real world, \indigolog\ incorporates a new programming
construct, namely the {\em search operator} $\search(\delta)$, which is used to specify that lookahead should be performed over the (nondeterministic) program $\delta$ to ensure that nondeterministic choices are resolved in a way that guarantees its successful completion. More precisely, let $\delta$ be any
\indigolog\ program, which provides different alternative executable
actions. When the interpreter encounters program $\Sigma(\delta)$,
before choosing among alternative executable actions of $\delta$, it
performs reasoning in order to decide for a step which still allows
the rest of $\delta$ to terminate successfully. Formally,
according to \cite{degiacomo:1999}, the semantics of the
search operator is:
\[
Trans(\Sigma(\delta),s,\Sigma(\delta'),s')\,\,\Leftrightarrow\,\,
Trans(\delta,s,\delta',s')\,\land\,\exists{s^*}.Do(\delta',s',s^*).
\]
If $\delta$ is the entire program under consideration,
$\Sigma(\delta)$ emulates completely its off-line execution as if it is executed on-line.

\begin{comment}
Finally, our adaptation procedure will make use of \emph{regression}
(see~\cite{BaierMcIlraithKR06} and~\cite{ReiterBook}). Let
$\varphi(do([a_1,\ldots,a_n],s))$ be a SitCalc formula with
situation argument $do([a_1,\ldots,a_n],s)$. Then,
$\RR^s(\varphi(do([a_1,\ldots,a_n],s)))$ is the formula with
situation $s$ which denotes the facts/properties that must hold
before executing $a_1,\ldots,a_n$ in situation $s$ for
$\varphi(do([a_1,\ldots,a_n],s))$ to hold (aka the weakest
preconditions for obtaining $\varphi$). To compute the regressed
formula $\RR^s(\varphi(do([a_1,\ldots,a_n],s)))$ from
 $\varphi(do([a_1,\ldots,a_n],s))$, one iteratively replaces every
occurrence of a fluent with the right-hand side of its successor
state axiom (Formula~\ref{equ:successorStateAxiom}) until every
atomic formula has a situation argument that is simply $s$.
\end{comment}

%Instead of classical planning, it supports high-level program execution.

%The programmer can control the amount of nondeterminism in the program and how much of it is searched over. The language is rich and supports concurrent programming.

%We discuss the language, its implementation, and applications that have been realized with it. 

\subsection{Classical Planning}
\label{subsec:approach-preliminaries-planning}

Automated Planning~\cite{TraversoBook2004} is the branch of AI that consists in the deliberation process of building plans, i.e., organized actions, in order to fulfill some pre-stated objectives. Typically, these plans are executed by intelligent agents, and the solution amounts to synthesize agent's plans satisfying a goal specification.

There exist several forms of planning. We can distinguish between \emph{classical planning} domains, \emph{conditional planning} and \emph{conformant planning}. \emph{Classical planning} domains are fully observable, static, and deterministic, in which plans can be computed in advance and then applied unconditionally. Therefore a \emph{plan} is a simple sequence $a_0,a_1,\ldots,a_n\in Act^*$ of actions, for a fixed action alphabet $Act$. \emph{Conditional} (or \emph{contingency}) \emph{planning} deals instead with bounded indeterminacy by constructing a conditional plan with different branches for the different contingencies that may arise, and even though plans are pre-computed, the agent finds out which part of the plan to execute by including sensing actions in the plan to test for the appropriate conditions. Therefore, conditional plans can be thought of as tree-like structures, in contrast with sequential plans that are instead action sequences. Finally, \emph{conformant planning}~\cite{smith1998conformant} aims to construct standard, sequential plans that are executed in partially-observable settings, without perception. Namely, they are required to achieve the goal in all possible circumstances, regardless of the true initial state and the actual action outcomes.

%Planning systems are problem-solving algorithms that operate on explicit representations of states and actions.

In this thesis, we focus on classical planning techniques. Classical planning has made huge advances in the last twenty years, leading to solvers able to create plans with thousands of actions for problems described by hundreds of propositions. The standard representation language for classical planners is known as the Planning Domain Definition Language (PDDL (cf.~\cite{PDDL})); it allows one to formulate a problem $\textsf{PR}$ through the description of the initial state of the world $init_{\textsf{PR}}$, the description of the desired goal condition $goal_{\textsf{PR}}$ and a set of possible actions. An action definition defines the conditions under which an action can be executed, called \emph{preconditions}, and its \emph{effects} on the state of the world. The set of all action definitions $\Omega$ is the \emph{domain} $\textsf{PD}$ of the planning problem. Each action $a \in \Omega$ has a precondition list and an effect list, denoted respectively as $Pre_a$ and $Eff_a$. For example, let us consider the following (trivial) planning domain named ``example'':

\begin{footnotesize}
\begin{verbatim}
(define (domain example)
(:predicates
  (x) (y) (z)
  (k) (v) (s)
)
 (:action t1
    :precondition (x)
    :effect (and (not(x) (z) (k))))
 (:action t2
    :precondition (z)
    :effect (y))
 (:action t3
    :precondition (v)
    :effect (s))
 (:action t4
    :precondition (k)
    :effect (x))
)
\end{verbatim}
\end{footnotesize}

The meaning is straightforward. There are 6 predicates and 4 planning actions, and for each action some preconditions and effects are defined. For example, the first action definition states that for executing $t_1$, the predicate $x$ must hold. Then, it states that a successful execution of $t_1$ guarantees that predicates $\neg x$, $z$ and $k$ will hold together. Given a specific planning domain, it is possible to define a proper planning problem:

\begin{footnotesize}
\begin{verbatim}
(define (problem pr) (:domain example)
(:init
(x) (v)
)
(:goal
(and
(y) (s)
))
)
\end{verbatim}
\end{footnotesize}

It states that in $init_{\textsf{PR}}$ only predicates $x$ and $v$ are known to be $true$, and the goal condition $goal_{\textsf{PR}}$ is a formula where the conjunction of $y$ and $s$ is \emph{true}. Note that one of the main assumption we rely in building planning problem is the well known \emph{closed-world assumption}~\cite{Reiter@NMR1987}. Basically, it states that all unmentioned literals in a planning problem are considered to be \emph{false}. Hence, for example, in the planning problem defined above we are supposing that predicates $z$, $k$, $y$ and $s$ are initially $false$.

A planner that works on such inputs generates a sequence of actions (the \emph{plan}) that corresponds to a path from the initial state to a state meeting the goal condition. A simple plan satisfying the above planning problem consists first in executing action $t3$ (since predicate $v$ is \emph{true} in the initial state), that makes $s$ equal to \emph{true}, and then on executing $t1$ and $t2$ in sequence, for turning the value of $y$ to $true$. This simple sequence composed by three actions is an \emph{optimal solution}, since it requires the minimum number of steps for satisfying the goal condition. Other possible plans that satisfy the goal condition but require more steps if compared with the optimal solution, are called \emph{suboptimal}.

In this chapter, we represent planning domains and planning problems making use of PDDL version 2.2~\cite{PDDL22}. PDDL 2.2 is characterized for enabling the representation of realistic planning domains, which include (in particular) actions and goals involving numerical expressions, operators with universally quantified effects or existentially quantified preconditions, operators with disjunctive or implicative preconditions, derived predicates and plan metrics. However, currently, our formalism does not allow to represent conditional and universally quantified effects.

Specifically, we synthesize our recovery plan through the LPG-td planner~\cite{LPG} (Local search for Planning Graphs). It is a planner based on local search and planning graphs that handles PDDL 2.2 domains. The basic search scheme of LPG-td is inspired by Walksat~\cite{WALKSAT}, an efficient procedure to solve SAT-problems. The search space of LPG consists of ``action graphs'', particular subgraphs of the planning graph representing partial plans. The search steps are certain graph modifications transforming an action graph into another one.  LPG-td exploits a compact representation of the planning graph to define the search neighborhood and to evaluate its elements using a parametrized function, where the parameters weight different types of inconsistencies in the current partial plan, and are dynamically evaluated during search using discrete Lagrange multipliers. The evaluation function uses some heuristics to estimate the ``search cost'' and the ``execution cost'' of achieving a (possibly numeric) precondition. The planner can produce good quality plans in terms of one or more criteria. This is achieved by an anytime process producing a sequence of plans, each of which is an improvement of the previous ones in terms of its quality. More details on the search algorithm and heuristics devised for this planner can be found at~\cite{LPG,LPGth}.  

\section{Formalizing processes in \indigolog}
\label{sec:approach-formalization}

Our approach to process management relies on the \indigolog platform~\cite{Indigolog:2009}, that is able to reason about a dynamic changing world by updating its knowledge after each action execution. The PMS provided by the \smartpm system has been developed on top of the \indigolog platform (in the rest of this thesis, when we refer to the \indigolog PMS, or to the \indigolog engine, we always mean the PMS developed through the \indigolog platform). The \indigolog PMS takes in input a theory of actions - specified in the situation calculus - representing the contextual environment in which the process operates, and an \indigolog program (specified through the constructs shown in Table~\ref{tab:indigolog_constructs}) that reflects the control flow of the dynamic process to be executed. To this end, in this section, we describe how processes can be formalized in situation calculus and \indigolog. We refer to our case study introduces in Section~\ref{sec:introduction-case_study}.

To denote the various objects of interest, we make use of the following domain-independent predicates (that is, non-fluent rigid predicates):

\begin{itemize}[itemsep=1pt,parsep=1pt,topsep=0.5pt]
    \item $\fService(c)$: $c$ is a service (i.e., a process participant);
    \item $\fTask(t)$: $t$ is a task;
    \item $\fCapability(b)$: $b$ is a capability;
    \item $\fProvides(c,b)$: service $c$ provides the capability $b$;
    \item $\fRequires(t,b)$: task $t$ requires the capability $b$.
\end{itemize}

\vskip 0.5em \noindent\colorbox{light-gray}{\begin{minipage}{0.98\textwidth}
\begin{example}
\emph{Let us consider the case study defined in Section~\ref{sec:introduction-case_study}. We need to define 6 services (4 human actors and 2 robots), a repository of emergency-management tasks and a number of capabilities to be associated both to services and tasks.}\\
\\
Service(act1), Service(act2), Service(act3), Service(act4), Service(rb1), Service(rb2).\\
\\
Capability(hatchet), Capability(extinguisher), Capability(camera), Capability(battery), Capability(movement), Capability(gprs), Capability(digger), Capability(powerpack).\\
\\
Task(chargebattery), Task(go), Task(move), Task(extinguishfire), Task(evacuate), Task(removedebris), Task(takephoto), Task(updatestatus).\\
\\
Provides(act1,movement), Provides(act1,gprs), Provides(act1,extinguisher), Provides(act1,camera), Provides(act2,movement), Provides(act2,gprs),\\
Provides(act2,hatchet), Provides(act3,movement), Provides(act3,gprs),\\
Provides(act3,hatchet), Provides(act4,movement), Provides(act4,powerpack),\\
Provides(act4,gprs), Provides(rb1,battery), Provides(rb1,digger),\\
Provides(rb2,battery), Provides(rb2,digger).\\
\\
Requires(go,movement), Requires(evacuate,hatchet), Requires(takephoto,camera),
\\Requires(updatestatus,gprs), Requires(extinguishfire,extinguisher),
\\Requires(move,battery), Requires(removedebris,digger),\\Requires(chargebattery,powerpack).
\end{example}
\end{minipage}
}\vskip 0.5em

To refer to the ability of a service $c$ to perform a certain task $t$, we introduce the following abbreviation:
\begin{equation} \label{eq:eq_capable}
\begin{array}{l}
\fCapable(c,t) \isdef \forall b.\fRequires(t,b) \Rightarrow \fProvides(c,b).
\end{array}
\end{equation}
That is, service $c$ can carry out a certain task $t$ if and only if $c$ provides all capabilities required by the task $t$.

The life-cycle of a task involves the execution of four basic actions:
\begin{itemize}[itemsep=1pt,parsep=1pt]
    \item $\aAssign(c,id,t,\vec{i},\vec{p})$: a task $t$ with input $\vec{i} = [i_1,...,i_z]$ is assigned to a service $c$. With $\vec{p} = [p_1,...,p_w]$ we denote the list of \emph{expected outputs} that $t$ is supposed to return if its execution is successful;
    \item $\aStart(c,id,t)$: service $c$ is notified to start task $t$;
    \item $\aAck(c,id,t)$: service $c$ acknowledges of the completion of task $t$;
    \item $\aRelease(c,id,t,\vec{i},\vec{p},\vec{q})$: service $c$ releases task $t$, that was executed with the list of inputs $\vec{i}$ and expected outputs $\vec{p}$, and returns a list of \emph{physical outputs} $\vec{q} = [q_1,...,q_w]$.
\end{itemize}

Every running task will be associated with a unique identifier $id$ (to this end, we provide a list of admissible identifiers and a further predicate $\fIdentifier(id)$ that is $true$ if $id$ represents a valid identifier) and with a number of input and output parameters. In this sense, the terms $\vec{i}$, $\vec{p}$ and $\vec{q}$ denote arbitrary sets of input/output, which depend on the specific task. The special constant $\emptyset$ denotes empty input or output. Note that we suppose to work with domains in which services, tasks, input and output parameters are finite.

The actions performed by the process need to be ``complemented'' by other actions executed by the services themselves. The following are used to inform the PMS engine about how tasks execution is progressing:
\begin{itemize}[itemsep=1pt,parsep=1pt]
    \item $\aReady(c,id,t)$: service $c$ declares to be ready to start performing $t$;
    \item $\aFinish(c,id,t,\vec{q})$: service $c$ declares to have completed the execution of task $t$ with a list of physical outputs denoted by $\vec{q}$.
\end{itemize}

\begin{figure}[t]
\centering{
 \includegraphics[width=0.85\columnwidth]{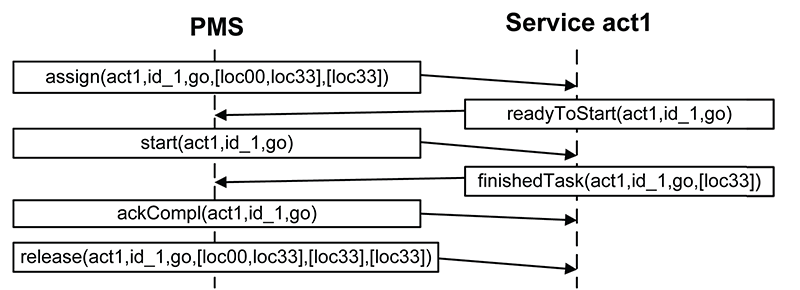}
 } \caption{The protocol for task assignment and execution.}
 \label{fig:fig_approach-formalization_seqDiagram}
\end{figure}

The protocol for a successful execution of a specific task $t_j$ can be described as follows. After the PMS assigns $t_j$  to a service $c$ through the action $\aAssign(c,id,t_j,\vec{i},\vec{p})$, the service $c$ has to report to be ready to execute the task itself by launching the action $\aReady(c,id,t_j)$. When it does happen, the PMS can eventually perform the action $\aStart(c,id,t_j)$, meaning that $c$ has been formally allowed to start executing $t_j$. When $c$ completes the execution of $t_j$, it invokes the action $\aFinish(c,id,t_j,\vec{q})$, with $\vec{q}$ representing the real outcomes (we also call them ``physical'' outcomes) of $t_j$ applied on the contextual environment. At this point the PMS can update the properties (i.e., the situation calculus fluents, see later) reflecting the evolution of the contextual scenario in which the process is under execution. In addition, the PMS first acknowledges the completion of $t_j$ through the action $\aAck(c,id,t_j)$, and then releases the service $c$ from the task $t_j$. After the execution of $\aRelease(c,id,t_j,\vec{i},\vec{p},\vec{q})$, service $c$ is again ready for a new task assignment. In Fig.~\ref{fig:fig_approach-formalization_seqDiagram} we describe the protocol for assigning the task \aGo \ to service \emph{act1} and executing it properly. In fact, when \emph{act1} launches the \aFinish \ action (meaning the task has been completed by \emph{act1}) it returns an expected output equal to the desired destination expected at design time. On the contrary, note that the scenario described in  Fig.~\ref{fig:fig_introduction-case_study-context_2} reflects a wrong execution of \aGo, with an output different from the one expected.

In order to describe the contextual scenario in which the process will be executed and to bound task inputs and outputs, there is the need to define some basic data types. The \indigolog language already provides the $Boolean\_type$ (for representing boolean values) and the $Integer\_type$ (for representing integers from 0 to a specific customizable maximum value - the default is 30). Furthermore, to clearly delineating our case study, we also need to define a data type $Location\_type$ that represents locations in the area and a data type $Status\_type$ for describing if a specific location is on fire or buried by debris.

\vskip 0.5em \noindent\colorbox{light-gray}{\begin{minipage}{0.98\textwidth}
\begin{example}
\emph{The list of required data types for our case study is as follows:}\\
\\Boolean\_type(true).\\
Boolean\_type(false).\\
Integer\_type(0).\\
...\\
Integer\_type(30).\\
Status\_type(ok).\\
Status\_type(fire).\\
Status\_type(debris).\\
Location\_type(loc00).\\
...\\
Location\_type(loc33).
\end{example}
\end{minipage}
}\vskip 0.5em

Data types can be associated to tasks for bounding input and output parameters. For this aim, we introduce the concept of \emph{workitem}, that reflects the run-time container of a task, including the task identifier and its input/output parameters. This means that the same task can be assigned multiple times to services if and only if it is associated with different identifiers, i.e, if it is contained in different workitems. Before to execute the process, we need to clearly state which workitems are admissible for being executed by the PMS.
\vskip 0.5em \noindent\colorbox{light-gray}{\begin{minipage}{0.98\textwidth}
\begin{example} \emph{The task \aGo \ can be executed by the PMS if and only if it is associated to a couple of inputs \emph{from} and \emph{to}, that represent starting and arrival locations, and to an expected output \emph{to}, that corresponds to the expected destination to be reached after the task execution. Note that in this specific case the expected output corresponds to the second argument given as input to the task \aGo.}
\begin{equation*}
\begin{array}{l}
\forall \ (from,to,id) \ s.t. \ Identifier(id) \ \land \ Location\_type(from) \ \land \\
\qquad Location\_type(to) \Rightarrow workitem(\aGo,id,[from,to],[to]).
\end{array}
\end{equation*}
\end{example}
\end{minipage}
}\vskip 0.5em

%The program coding the process will the executed by only one actor, specifically the PMS. Therefore, actions $\aReady(\cdot)$ and $\aFinish(\cdot)$ are considered as external and, hence, not coded in the program itself.
\bigskip
The formalization of processes in \indigolog\ requires two distinct sets of fluents. A first set includes those \emph{domain-independent fluents} that the PMS uses to manage the task life-cycle and the resource perspective of a process. Domain-independent fluents do not change across different domains. The second set concerns those fluents that we use to denote the data needed by a process instance and the properties of the contextual scenario in which the process is under execution. We call them \emph{data fluents}, and their definition depends strictly on the specific process domain of interest.

\vskip 0.5em
\noindent{\textbf{Representing domain-independent fluents.}}

Task assignment is driven by the fluent $\fFree(c,s)$, which denotes whether a service $c$ is available for task assignments in situation $s$. Basically, $\fFree(c,s)$ holds if the service $c$ is free from any assignment in situation $s$. The corresponding successor state axiom can be defined as follows:
\begin{equation}\label{eq:freeAxiom}
\begin{array}{l}
\fFree(c,do(a,s)) \Leftrightarrow {}\\
\qquad\big(\exists \ t,id,\vec{i},\vec{p},\vec{q} \ \ s.t. \ a = \aRelease(c,id,t,\vec{i},\vec{p},\vec{q}) \big) \ \vee
{}\\
\qquad\big(\fFree(c,s) \wedge \nexists \ t,id,\vec{i},\vec{p} \ \ s.t. \ a = \aAssign(c,id,t,\vec{i},\vec{p})\big)
\end{array}
\end{equation}
Therefore, service $c$ is considered as free in the current situation if and only if it has just been released from a task assignment or it was free in the previous situation and no task has been assigned to it.

The domain-independent fluent $\fAssigned(c,id,t,s)$ aims at representing if a task $t$ with identifier $id$ has been assigned to a service $c$ in situation $s$.
\begin{equation}\label{eq:assignedAxiom}
\begin{array}{l}
\fAssigned(c,id,t,do(a,s)) \Leftrightarrow \\
\qquad\big(\exists \ \vec{i},\vec{p} \ \ s.t. \ a = \aAssign(c,id,t,\vec{i},\vec{p}) \big)  \ \vee {}\\
\qquad\big( \fAssigned(c,id,t,s)
\wedge \ \nexists \ \vec{i},\vec{p},\vec{q} \ \ s.t. \ a = \aRelease(c,id,t,\vec{i},\vec{p},\vec{q}) \big)
\end{array}
\end{equation}
The fluent holds when a task $t$ with identifier $id$ is assigned to a service $c$,
%that was previously free from any task assignment,
or when $t$ has already been assigned to $c$, and no release action concerning $t$ and $c$ is launched in the current situation $s$.

A third fluent used for constraining the protocol for task assignment and execution is \fReserved. The corresponding successor-state axiom can be defined as follows:
\begin{equation}\label{eq:reservedAxiom}
\begin{array}{l}
\fReserved(c,id,t,do(a,s)) \Leftrightarrow \\
\qquad\big( \fAssigned(c,id,t,s) \wedge a = \aReady(c,id,t) \big)  \ \vee {}\\
\qquad\big( \fReserved(c,id,t,s)
\wedge \ \nexists \ \ \vec{q} \ \ s.t. \ a = \aFinish(c,id,t,\vec{q}) \big)
\end{array}
\end{equation}
This fluent holds when a service $c$ is ready for start executing a task $t$ with identifier $id$ that was already assigned to it, or if $c$ is still executing $t$ and no \aFinish \ action has been launched from $c$.

The $\fReserved$ fluent can be complemented with the \emph{precondition axioms} defined for the $\aStart$ and $\aAck$ actions:
\begin{equation}\label{eq:possStartStop}
\begin{array}{l}
Poss(\aStart(c,id,t),s) \Leftrightarrow \fReserved(c,id,t,s)\\
Poss(\aAck(c,id,t),s) \Leftrightarrow \neg \fReserved(c,id,t,s)\\
\end{array}
\end{equation}
The meaning of those axioms is straightforward: $\aStart(c,id,t)$ can be launched by the PMS only if $\fReserved(c,id,t,s)$ is true, therefore after $c$ has performed the action $\aReady(c,id,t)$ (cf. Equation~\ref{eq:reservedAxiom}). The action $\aAck(c,id,t)$ can be instead invoked when $\fReserved(c,id,t,s)$ does not hold anymore, so when $c$ launches $\aFinish(c,id,t,\vec{q})$ (cf. again Equation~\ref{eq:reservedAxiom}). The combination of precondition axioms defined in Equation~\ref{eq:possStartStop} and of the state-successor axiom represented in Equation~\ref{eq:reservedAxiom} states that $\aAck(c,id,t)$ never comes before $\aStart(c,id,t)$, by guaranteeing the synchronization of the single steps provided by the protocol for task assignment and execution.

\vskip 0.5em
\noindent{\textbf{Representing data fluents.}}

Data Fluents need to be customized for every domain, and they can be used to constrain the task assignment and as guards into the expressions at decision points (e.g., for cycles, conditional statements).  However, a data fluent $\fX_\varphi$\footnote{Sometimes we use \emph{arguments-suppressed} formulas; these are uniform formulas with all arguments suppressed (e.g. $\fX_\varphi$ denotes the arguments-suppressed expression for $\fX_\varphi(\vec{o},s))$.} is mainly meant to capture one of the outcomes of a (specific) task $T$.
The general way for expressing a successor-state axiom of a data fluent is as follows:
\begin{equation} \label{eq:eq_datafluent}
\begin{array}{l}
\fX_\varphi(\vec{o},do(a,s)) = q_j \equiv{} \\
\quad \big(\exists c,id,\vec{i},\vec{p},\vec{q} \ \ s.t. \ \ a=\aRelease(c,id,T,\vec{i},\vec{p},\vec{q})\big) \ \land \ \vec{q} = [q_0,...,q_j,...,q_w] \ \lor{} \\
\quad\big(\fX_\varphi(\vec{o},s)=q_j\ \land \\
\quad \neg \exists c,id,\vec{i},\vec{p},\vec{q'} \ s.t. \ \ a=\aRelease(c,id,T,\vec{i},\vec{p},\vec{q'}) \land \vec{q'} = [q'_{0},...,q'_{j},...,q'_{w}] \land (q'_{j} \neq q_j)  \big).
\end{array}
\end{equation}
The value of $\fX_\varphi$ is changed to value $q_j$ when the task $T$ finishes with output $\vec{q} = [q_0,...,q_j,...,q_w]$ or, more formally, when the PMS launches the action $\aRelease(c,id,T,\vec{i},\vec{p},\vec{q})$, with $q_j \in \vec{q}$. With $\vec{o}$ we denote the list (possibly empty) of arguments of a data fluent, which can be customized in according to the process needs. For example, one can require some fluents defined for each service $c$ in the formalization:
\vskip 0.5em \noindent\colorbox{light-gray}{\begin{minipage}{0.98\textwidth}
\begin{example}
\emph{The task \aGo \ is used for instructing a service (specifically, a human actor) $act$ to move from a location \emph{from} to a location \emph{to}. When $act$ terminates the execution of the task \aGo \ that was assigned to her/him with identifier $id_\emph{1}$, its final position $q_j$ in situation $s$ is stored in a specific data fluent $\fAt_\varphi$.}
\begin{equation} \label{eq:eq_atRobot}
\begin{array}{l}
\fAt_\varphi(act,do(a,s)) = q_j \equiv{} \\
\quad \big(\exists \ q_j \ s.t. \ Location\_type(q_j) \ \land \\
\quad a=\aRelease(act,id_\emph{1},\aGo,[from,to],[to],[q_j])\big) \lor{} \\
\quad\big(\fAt_\varphi(act,s) = q_j\ \land \\
\quad \neg \exists \ q_j' \ s.t. Location\_type(q_j')\ \land \\
\quad a=\aRelease(act,id_1,\aGo,[from,to],[to],[q_j']) \land (q_j' \neq q_j)  \big).
\end{array}
\end{equation}
\end{example}
\end{minipage}
}\vskip 0.5em
\vskip 0.5em \noindent\colorbox{light-gray}{\begin{minipage}{0.98\textwidth}
\begin{example}
\emph{The task \aMove \ is used for instructing a robot $rb$ to move from a location \emph{from} to a location \emph{to}. When $rb$ terminates the execution of the task \aMove \ that was assigned to it with identifier $id_\emph{2}$, its final position $q_j$ in situation $s$ is stored in a specific data fluent $\fAtRobot_\varphi$.}
\end{example}
\end{minipage}
}

\noindent\colorbox{light-gray}{\begin{minipage}{0.98\textwidth}
\begin{equation} \label{eq:eq_atRobot}
\begin{array}{l}
\fAtRobot_\varphi(rb,do(a,s)) = q_j \equiv{} \\
\quad \big(\exists \ q_j \ s.t. \ Location\_type(q_j) \ \land \\
\quad a=\aRelease(rb,id_2,\aMove,[from,to],[to],[q_j])\big) \lor{} \\
\quad\big(\fAtRobot_\varphi(rb,s) = q_j\ \land \\
\quad \neg \exists \ q_j' \ s.t. \ Location\_type(q_j') \ \land \\
\quad a=\aRelease(rb,id_2,\aMove,[from,to],[to],[q_j']) \land (q_j' \neq q_j)  \big).
\end{array}
\end{equation}
\end{minipage}
}\vskip 0.5em

Data fluents are mainly used for recording the effects of a task. If we analyze the descriptions of the tasks \aGo \ and  \aMove \ provided in Section~\ref{subsec:approach-overview-representing tasks}, we can see that the content of the <$effects$> tag coincides with the successor state axioms described above.

When a task returns some real-world outcome after its completion, we define that outcome as \emph{supposed}, since its physical value may be different from the expected one as thought at design-time. However, sometimes it may happen that a task effect is \emph{automatic}, i.e., it is applied every time a task completes its execution, independently by the outcomes returned by the task itself.

\vskip 0.5em \noindent\colorbox{light-gray}{\begin{minipage}{0.98\textwidth}
\begin{example}
\emph{After a robot $rb$ completes the execution of the task \aMove \ with identifier $id_\emph{2}$, its battery level decreases of a fixed quantity equal to $\fMoveStep_\varphi(s)$. The battery charge level of a robot can be represented with a fluent $\fBatteryLevel_\varphi$:}
\begin{equation} \label{eq:eq_BatteryLevel}
\begin{array}{l}
\fBatteryLevel_\varphi(rb,do(a,s)) = v \equiv{} \\
\quad \big(\exists \ \vec{i},\vec{p},\vec{q} \ s.t. \ a=\aRelease(rb,id_2,move,\vec{i},\vec{p},\vec{q}) \ \land \\
\quad v = \fBatteryLevel_\varphi(rb,s) - \fMoveStep_\varphi(s) \land Integer\_type(v)\big) \
\lor{} \\
\quad\big(\fBatteryLevel_\varphi(rb,s) = v\ \land \\
\quad \neg \exists \ \vec{i},\vec{p},\vec{q} \ s.t. \ \ a=\aRelease(rb,id_2,move,\vec{i},\vec{p},\vec{q})\big).
\end{array}
\end{equation}
\emph{The fluent holds every time a $\aRelease$ action involving the task $move$ and the service $rb$ is launched by the PMS, and it results in decreasing the level of the battery of the fixed quantity $\fMoveStep_\varphi(s)$ by obtaining a new battery charge level equal to $v$.}
\end{example}
\end{minipage}
}\vskip 0.5em

The representation of a dynamic scenario may require to define some \emph{fixed} data fluents that reflect the contextual properties of the scenario itself. For example, our case study requires to define a data fluent $\fNeigh(loc1,loc2,s)$ that expresses the neighborhood property between two locations $loc1$ and $loc2$, and a fluent $\fCovered(loc,s)$ that indicates which locations $loc$ are directly covered by the radio range provided by the main antenna. Moreover, some data fluents can be used for recording constant values. For example, the fluent $\fMoveStep(s)$ indicates the fixed quantity of battery that will be consumed by a robot after the execution of a \aMove \ task. Those kinds of fluents, in general, never change their values during process execution.

\vskip 0.5em
\noindent{\textbf{Representing complex formulae and exogenous events.}}

%\vskip 0.5em \noindent\colorbox{light-gray}{\begin{minipage}{0.98\textwidth}
%\begin{example}
%\emph{The list of domain-dependent predicates for our case study is as follows:}\\
%neigh(loc00,loc10).\\
%neigh(loc00,loc11).\\
%...\\
%covered(loc00).\\
%...\\
%covered(loc02).
%\end{example}
%\end{minipage}
%}\vskip 0.5em

The contextual scenario presented in the case study in Section~\ref{sec:introduction-case_study} requires also to define some properties that help to understand if a service is connected to the network provided by the main antenna. To this end, we can express some situation calculus \emph{abbreviations} for verifying the connection status of each service in every possible situation $s$. An abbreviation is a complex formula that does not depend directly from tasks effects, and that can be evaluated in each situation $s$. Specifically, given a human service \emph{act}, the abbreviation  $\isConnected(\emph{act},s)$ holds in situation $s$ if the service $act$ is situated in a location covered by the main network. Otherwise, if $act$ is outside the main network, it results as being connected only if it is close (cf. fluent \fNeigh) to a location where a robot $rb$ is situated, and the robot is - in its turn - connected to the network.
\begin{equation} \label{eq:eq_possconnected}
    \begin{array}{l}
    \isConnected(act,s) \equiv \\
     \quad provides(act,movement) \ \land \ \fAt_\varphi(act,s)=loc \ \land \ Location\_type(loc) \ \land\\
     \quad \big(\fCovered(loc,s) \ \ \lor\\
     \quad (\exists \ rb \ s.t. \ provides(rb,battery) \ \land \ \fAtRobot(rb,s)=loc_{rb} \ \land \\
     \quad Location\_type(loc_{rb}) \ \land \ \fNeigh(loc,loc_{rb},s) \ \land \ \isRobotConnected(rb,s))\big).
    \end{array}
\end{equation}
We omit here the definition of the second abbreviation \isRobotConnected \ (interesting readers can found it in the appendix), that holds if a robot is connected to the main network on the basis of the connection policy described in Section~\ref{sec:introduction-case_study}.

Abbreviations can be used together with the data fluents for defining the \emph{preconditions} of each task. In \indigolog, we make explicit task preconditions through the axiom \Poss \ applied on the \aAssign \ basic action:
\begin{equation*} \label{eq:eq_possassign}
\begin{array}{l}
\Poss(\aAssign(c,id,t,\vec{i},\vec{p}),s) \Leftrightarrow workitem(t,id,\vec{i},\vec{p})\ \land \\
\qquad (\fX_{\varphi,1} \land ... \land \fX_{\varphi,m}) \land (...list \ of \ abbreviations...).
\end{array}
\end{equation*}
A task $t$ with input $\vec{i}$ and expected outputs $\vec{p}$ can be assigned to a service $c$ iff the task $t$ combined with the associated inputs/outputs represents an admissible workitem. Moreover, values of data fluents $\fX_{\varphi,y}$ (where y ranges over \{1..m\})\ and abbreviations possibly included in the axiom should be evaluated.

\vskip 0.5em \noindent\colorbox{light-gray}{\begin{minipage}{0.98\textwidth}
\begin{example}
\emph{A task \aGo \ with inputs \emph{from} and \emph{to} can be assigned to a service \emph{act} in situation $s$ if and only if \emph{act} is located in $from$ and is connected to the network.}
\begin{equation} \label{eq:eq_possassignGO}
\begin{array}{l}
\Poss(\aAssign(act,id,\aGo,[from,to],[to]),s) \Leftrightarrow \\
\quad workitem(\aGo,id,[from,to],[to])\ \land \ \fAt_{\varphi}(act,s) = from\ \land \isConnected(act,s).
\end{array}
\end{equation}
\end{example}
\end{minipage}
}\vskip 0.5em

Finally, we conclude the section by pointing out that in dynamic domains it is typical that variables asynchronously change their value in an impredicative fashion. In order to represent this, we define \emph{exogenous events} as external actions coming from the environment that may change the values of data fluents. If we consider our case study, we have defined three different exogenous events :
\begin{itemize}[itemsep=1pt,parsep=1pt]
\item \photoLost(\emph{loc}) indicates that all the pictures taken in location $loc$ have been lost;
\item \fireRisk(\emph{loc}) alerts about a fire that is broken out in location $loc$;
\item \rockSlide(\emph{loc}) alerts about a rock slide collapsed in location $loc$.
\end{itemize}
When an exogenous event is detected, it possibly causes an update in the values of some data fluent. This means that interested fluents has to capture the possible exogenous event modification through their successor state axiom.
\vskip 0.5em \noindent\colorbox{light-gray}{\begin{minipage}{0.98\textwidth}
\begin{example}
\emph{The task \aTakePhoto \ is used for instructing a service (specifically, a human actor) $act$ to move in a location \emph{loc} in order to take some pictures. When $act$ completes the execution of the task \aTakePhoto \ that was assigned to her/him, the physical outcome $q_j$ of the task is stored in a specific data fluent $\fPhotoTaken_\varphi$. Note that if an exogenous event $\photoLost$ is captured in $s$, the value of the fluent $\fPhotoTaken_\varphi$ (that holds in $s$) switches to false in the situation \emph{do(a,s)}.}
\begin{equation} \label{eq:eq_photoTaken}
\begin{array}{l}
\fPhotoTaken_\varphi(loc,do(a,s)) = true \equiv{} \\
\quad \big(\exists \ q_j \ s.t. \ a=\aRelease(act,id,\aTakePhoto,[loc],[true],[q_j]) \ \land (q_j = true)\big) \lor{} \\
\quad\big(\fPhotoTaken_\varphi(loc,s) = true \ \land \\
\quad \big((a\neq\photoLost(loc)) \ \lor \\
\quad \neg \exists \ q_j' \ s.t. \ (a=\aRelease(act,id,\aTakePhoto,[loc],[true],[q_j']) \land (q_j' = false))\big).
\end{array}
\end{equation}
\end{example}
\end{minipage}
}\vskip 0.5em

The list of all predicates, fluents, exogenous events and abbreviations defined through \indigolog can be named as the \emph{\textbf{SitCalc Theory}}. Remember that, given a SitCalc Theory, before to execute a process linked to the SitCalc Theory through the \indigolog PMS, there is the need to clearly specify the values of each data fluent in the initial situation $S_0$, that represents the situation in which no actions have yet occurred. In the following example, we show how we set the values of data fluents in $S_0$ for representing the starting state of our contextual scenario.
\vskip 0.5em \noindent\colorbox{light-gray}{\begin{minipage}{0.98\textwidth}
\begin{example}
\emph{We complete the formalization of the scenario described in our case study by providing the whole list of data fluents (all the corresponding successor state axioms are shown in the Appendix), workitems and precondition axioms:}

\begin{itemize}[itemsep=1pt,parsep=1pt]
\item $\fAt_\varphi(c,s) = loc$ \emph{records the location $loc$ in which the human service $c$ is situated in situation $s$. In the initial situation $S_0$, each human service is located in \emph{loc00}.}
\item $\fAtRobot_\varphi(c,s) = loc$ \emph{records the location $loc$ in which the robot service $c$ is situated in situation $s$. In the initial situation $S_0$, each robot service is located in \emph{loc00}.}
\item $\fEvacuated_\varphi(l,s)$ \emph{is true if the location $l$ is proven to be evacuated in situation $s$. In $S_0$, no location has been already evacuated.}
\item $\fStatus_\varphi(l,s) = st$ \emph{records the status of a location $l$ in situation $s$, that can be equal to '\emph{ok}' (meaning that nothing wrong still happened), or to '\emph{fire}' (meaning that a fire has broken out in location $l$) or to '\emph{debris}' (meaning that a rock slide has collapsed in $l$). Asynchronously, in any moment, an exogenous event \fireRisk(l) can change the value of $\fStatus_\varphi(l,s)$ to '\emph{fire}'. At the same way, an exogenous event \rockSlide(l) can turn the value of $\fStatus_\varphi(l,s)$ to '\emph{debris}'. In $S_0$, $\forall \ loc \ s.t \ Location\_type(loc) \ \Rightarrow \ \fStatus_\varphi(l,S_0) = ok$.}
\item $\fPhotoTaken_\varphi(l,s)$ \emph{is true if in situation $s$ some pictures have been taken in location $l$. If an exogenous event \photoLost(l) is captured, the value of $\fPhotoTaken_\varphi(l,s)$ is turned asynchronously to $false$. In $S_0$, no picture has been captured in any location $l$.}
\end{itemize}
\end{example}
\end{minipage}
}
\vskip 0.5em \noindent\colorbox{light-gray}{\begin{minipage}{0.98\textwidth}
\begin{itemize}[itemsep=1pt,parsep=1pt]
\item $\fBatteryLevel_\varphi(c,s)$ \emph{stores the battery charge level of each robot $c$ in situation $s$. The level of the robot battery may decrease of a fixed quantity (equal to the value of $\fMoveStep_\varphi(s)$ or to $\fDebrisStep_\varphi(s)$) after $c$ completes a \aMove \ task or a \aRemoveDebris \ task. A human service can recharge the battery of a robot by increasing its level of a fixed quantity equal to the value of $\fBatteryRecharging_\varphi(s)$. For each robot service $c$, $\fBatteryLevel_\varphi(c,S_0)$ = 3}.
\item $\fGeneralBattery_\varphi(s)$ \emph{reflects the total amount of battery contained in the power pack used for recharging the battery of each robot. Each recharging action, that corresponds to the execution of the \aChargeBattery \ task, decreases the value of $\fGeneralBattery_\varphi(s)$ of a fixed quantity equal to the value of $\fBatteryRecharging_\varphi(s)$. In the initial situation $S_0$, $\fGeneralBattery_\varphi(S_0) = 30$.}
\item $\fBatteryRecharging_\varphi(s)$ \emph{is a data fluent that reflects the amount of each recharging action. In the initial situation $S_0$, $\fBatteryRecharging_\varphi(S_0) = 10$.}
\item $\fMoveStep_\varphi(s)$ \emph{indicates the fixed quantity of battery that will be consumed by a robot after the execution of a \aMove \ task. In the initial situation $S_0$, $\fMoveStep_\varphi(S_0) = 2$.}
\item $\fDebrisStep_\varphi(s)$ \emph{indicates the fixed quantity of battery that will be consumed by a robot after the execution of a \aRemoveDebris \ task. In the initial situation $S_0$, $\fDebrisStep_\varphi(S_0) = 3$.}
\item $\fNeigh_\varphi(loc1,loc2,s)$ \emph{expresses the neighborhood property between two locations $loc1$ and $loc2$. In $S_0$, the fluent holds for all that locations considered as adjacent in the scenario depicted in our case study. Hence, $\fNeigh_\varphi(loc00,loc01,S_0) = true$, $\fNeigh_\varphi(loc00,loc10,S_0) = true$, etc.}
\item $\fCovered_\varphi(loc,s)$ \emph{indicates all that locations that are directly covered by the radio range provided by the main antenna. In $S_0$, $\fCovered_\varphi(loc00,S_0) = true$, $\fCovered_\varphi(loc01,S_0) = true$, etc.}
\end{itemize}

%A complete view describing how tasks effects and preconditions can be formalized is presented in Section~\ref{sec:approach-smartpm_definition_tool}.

\emph{Now, we show the complete list of the admissible workitems executable by the \indigolog PMS and precondition axioms for each task.}

\begin{itemize}[itemsep=1pt,parsep=1pt]
\item
    \emph{The task \aGo \ can be executed by the PMS if and only if it is associated to a couple of inputs, that represent two valid $Location\_type$, and to an expected output, that is again a $Location\_type$.}
    \begin{equation*}
    \begin{array}{l}
    \forall \ (from,to,id) \ s.t. \ Identifier(id) \ \land \ Location\_type(from) \ \land \\
    \qquad Location\_type(to) \Rightarrow workitem(\aGo,id,[from,to],[to]).
    \end{array}
    \end{equation*}
    \emph{A task \aGo \ with inputs \emph{from} and \emph{to} can be assigned to a service \emph{act} in situation $s$ if and only if \emph{act} is located in $from$ and is connected to the network.}
    \begin{equation*}
    \begin{array}{l}
    \Poss(\aAssign(act,id,\aGo,[from,to],[to]),s) \Leftrightarrow \\
    \quad workitem(\aGo,id,[from,to],[to])\ \land \ \fAt_{\varphi}(act,s) = from\ \land \\ \quad\isConnected(act,s).
    \end{array}
    \end{equation*}
            \end{itemize}
    \end{minipage}
}
    \vskip 0.5em \noindent\colorbox{light-gray}{\begin{minipage}{0.98\textwidth}
    \begin{itemize}[itemsep=1pt,parsep=1pt]
\item \emph{The task \aMove \ can be executed by the PMS if and only if it is associated to a couple of inputs, that represent two valid $Location\_type$, and to an expected output, that is again a $Location\_type$.}
    \begin{equation*}
    \begin{array}{l}
    \forall \ (from,to,id) \ s.t. \ Identifier(id) \ \land \ Location\_type(from) \ \land \\
    \qquad Location\_type(to) \Rightarrow workitem(\aMove,id,[from,to],[to]).
    \end{array}
    \end{equation*}
    \emph{A task \aMove \ with inputs \emph{from} and \emph{to} can be assigned to a service \emph{rb} in situation $s$ if and only if \emph{rb} is located in $from$, is connected to the network and has enough battery charge level for executing the task.}
    \begin{equation*}
    \begin{array}{l}
    \Poss(\aAssign(rb,id,\aMove,[from,to],[to]),s) \Leftrightarrow \\
    \quad workitem(\aMove,id,[from,to],[to])\ \land \ \fAtRobot_{\varphi}(rb,s) = from \ \land \\
    \quad \isRobotConnected(rb,s) \ \land \ \fBatteryLevel_{\varphi}(rb) >= \fMoveStep_\varphi(s).
    \end{array}
    \end{equation*}
\item \emph{The task \aTakePhoto \ can be executed by the PMS if and only if it is associated to an input that represents the $Location\_type$ where the pictures have to be taken, and to an expected output corresponding to a boolean value indicating the correct execution of the task.}
    \begin{equation*}
    \begin{array}{l}
    \forall \ (loc,id) \ s.t. \ Identifier(id) \ \land \ Location\_type(loc) \Rightarrow \\
    \qquad workitem(\aTakePhoto,id,[loc],[true]).
    \end{array}
    \end{equation*}
    \emph{A task \aTakePhoto \ with input \emph{loc} can be assigned to a service \emph{act} in situation $s$ if and only if \emph{act} is located in $loc$ and is connected to the network.}
    \begin{equation*}
    \begin{array}{l}
    \Poss(\aAssign(act,id,\aTakePhoto,[loc],[true]),s) \Leftrightarrow \\
    \quad workitem(\aTakePhoto,id,[loc],[true])\ \land \ \fAt_{\varphi}(act,s) = loc \ \land \\
    \quad \isConnected(act,s).
    \end{array}
    \end{equation*}

\item \emph{The task \aEvacuate \ can be executed by the PMS if and only if it is associated to an input that represents the $Location\_type$ where it is required to evacuate some people, and to an expected output corresponding to a boolean value indicating the correct execution of the task.}
    \begin{equation*}
    \begin{array}{l}
    \forall \ (loc,id) \ s.t. \ Identifier(id) \ \land \ Location\_type(loc) \Rightarrow \\
    \qquad workitem(\aEvacuate,id,[loc],[true]).
    \end{array}
    \end{equation*}
     \emph{A task \aEvacuate \ with input \emph{loc} can be assigned to a service \emph{act} in situation $s$ if and only if \emph{act} is located in $loc$ and is connected to the network. Moreover, it is required that $loc$ is not affected by debris or fire in situation $s$, and it has not already been evacuated.}
    \begin{equation*}
    \begin{array}{l}
    \Poss(\aAssign(act,id,\aEvacuate,[loc],[true]),s) \Leftrightarrow \\
    \quad workitem(\aEvacuate,id,[loc],[true])\ \land \ \fAt_{\varphi}(act,s) = loc\ \land \\
    \quad \fEvacuated(loc,s) = false \ \land \ \fStatus(loc,s) = ok \ \land \ \isConnected(act,s).
    \end{array}
    \end{equation*}
            \end{itemize}
    \end{minipage}
}
    \vskip 0.5em \noindent\colorbox{light-gray}{\begin{minipage}{0.98\textwidth}
    \begin{itemize}[itemsep=1pt,parsep=1pt]
\item \emph{The task \aUpdateStatus \ can be executed by the PMS if and only if it is associated to an input that represents the $Location\_type$ where it is required to update the status of the emergency, and to an expected output indicating that the final status of the location is good (i.e., value 'ok' for the expected output).}
    \begin{equation*}
    \begin{array}{l}
    \forall \ (loc,st,id) \ s.t. \ Identifier(id) \ \land \ Location\_type(loc) \ \land \\
    \qquad Status\_type(st) \Rightarrow workitem(\aUpdateStatus,id,[loc],[ok]).
    \end{array}
    \end{equation*}
    \emph{A task \aUpdateStatus \ with input \emph{loc} can be assigned to a service \emph{act} in situation $s$ if and only if \emph{act} is located in $loc$ and is connected to the network. Moreover, it is required that $loc$ is not affected by debris or fire in situation $s$.}
    \begin{equation*}
    \begin{array}{l}
    \Poss(\aAssign(act,id,\aUpdateStatus,[loc],[ok]),s) \Leftrightarrow \\
    \quad workitem(\aUpdateStatus,id,[loc],[ok])\ \land \ \fAt_{\varphi}(act,s) = loc \ \land \\
    \quad \fStatus_{\varphi}(loc,s) = ok \ \land \ \isConnected(act,s).
    \end{array}
    \end{equation*}

\item \emph{The task \aExtinguishFire \ can be executed by the PMS if and only if it is associated to an input that represents the $Location\_type$ where it is required to extinguish a fire, and to an expected output indicating that after the execution of the task, the fire will be extinguished (i.e., value 'ok' for the expected output).}
    \begin{equation*}
    \begin{array}{l}
    \forall \ (loc,st,id) \ s.t. \ Identifier(id) \ \land \ Location\_type(loc) \ \land \\
    \qquad Status\_type(st) \Rightarrow workitem(\aExtinguishFire,id,[loc],[ok]).
    \end{array}
    \end{equation*}
    \emph{A task \aExtinguishFire \ with input \emph{loc} can be assigned to a service \emph{act} in situation $s$ if and only if \emph{act} is located in $loc$ and is connected to the network. Moreover, it is required that in location $loc$ a fire broke out in situation $s$.}
    \begin{equation*}
    \begin{array}{l}
    \Poss(\aAssign(act,id,\aExtinguishFire,[loc],[ok]),s) \Leftrightarrow \\
    \quad workitem(\aExtinguishFire,id,[loc],[ok])\ \land \ \fAt_{\varphi}(act,s) = loc \ \land \\
    \quad \fStatus_{\varphi}(loc,s) = fire \ \land \ \isConnected(act,s).
    \end{array}
    \end{equation*}

\item \emph{The task \aRemoveDebris \ can be executed by the PMS if and only if it is associated to an input that represents the $Location\_type$ where it is required to remove some debris due to a rock slide, and to an expected output indicating that after the execution of the task the debris will be removed (i.e., value 'ok' for the expected output).}
    \begin{equation*}
    \begin{array}{l}
    \forall \ (loc,st,id) \ s.t. \ Identifier(id) \ \land \ Location\_type(loc)\ \land \\
    \qquad Status\_type(st) \Rightarrow workitem(\aRemoveDebris,id,[loc],[ok]).
    \end{array}
    \end{equation*}
    \emph{A task \aRemoveDebris \ with input \emph{loc} can be assigned to a service \emph{rb} in situation $s$ if and only if \emph{rb} is located in $loc$, is connected to the network and has enough battery charge level for executing the task. Moreover, it is required that in location $loc$ a rock slide has collapsed in situation $s$.}
    \begin{equation*}
    \begin{array}{l}
    \Poss(\aAssign(rb,id,\aRemoveDebris,[loc],[ok]),s) \Leftrightarrow \\
    \quad workitem(\aRemoveDebris,id,[loc],[ok])\ \land \ \fAtRobot_{\varphi}(rb,s) = loc\ \land \\
    \quad \fStatus_{\varphi}(loc,s) = debris \ \land \ \fBatteryLevel_{\varphi}(rb) >= \fDebrisStep_\varphi \ \land \\ \quad \isConnected(act,s).
    \end{array}
    \end{equation*}
    \end{itemize}
    \end{minipage}}
    \vskip 0.5em \noindent\colorbox{light-gray}{\begin{minipage}{0.98\textwidth}
    \begin{itemize}[itemsep=1pt,parsep=1pt]
\item \emph{The task \aChargeBattery \ can be executed by the PMS if and only if it is associated to an input that represents the robot that needs to be charged. The task has no expected outputs, meaning that we consider a recharging activity as if it always succeeds.}
    \begin{equation*}
    \begin{array}{l}
    \forall \ (rb,id) \ s.t. \ Identifier(id) \ \land \ Service(rb) \Rightarrow \\
    \qquad  workitem(\aChargeBattery,id,[rb],[]).
    \end{array}
    \end{equation*}
    \emph{A task \aChargeBattery \ with input \emph{rb} can be assigned to a service \emph{act} in situation $s$ if $act$ is connected to the network, $rb$ provides a battery (meaning that it is effectively a robot), and both \emph{rb} and \emph{act} are located in $loc$.}
    \begin{equation*}
    \begin{array}{l}
    \Poss(\aAssign(act,id,\aChargeBattery,[rb],[]),s) \Leftrightarrow \\
    \quad workitem(\aChargeBattery,id,[rb],[ok])\ \land \\
    \quad \fAt_{\varphi}(act,s) = \fAtRobot_{\varphi}(rb,s)\ \land \\
    \quad \fProvides(rb,battery) \ \land \ \isConnected(act,s).
    \end{array}
    \end{equation*}
    \end{itemize}
    \end{minipage}}

\subsection{Realizing the Framework}
\label{sec:approach-formalization-framework}

In Figure~\ref{fig:fig_approach_core}, we show how \smartpm has been concretely coded by the interpreter of \indigolog. The main procedure of the \indigolog\ program is $\proc{Main}()$, which involves three interrupts running at different priorities. Each interrupt is guarded by the fluent \fIsFinished; if it holds, it means that the process execution has been completed successfully.

\begin{equation} \label{eq:eq_finished}
\begin{array}{l}
\fIsFinished(do(a,s)) = true \equiv{} \\
\quad a=\finish \ \lor{} \ (\fIsFinished(s) = true).
\end{array}
\end{equation}

The procedure $\proc{Monitor}()$, which runs at higher priority, is in charge of monitoring changes in the environment and adapting accordingly. The first step in \proc{Monitor} checks whether fluent $\fRealityChanged$ holds
true, meaning that a task has completed its execution or that an exogenous (unexpected) action has occurred in the system.

\begin{equation} \label{eq:eq_reality_changed}
\begin{array}{l}
\fRealityChanged(do(a,s)) = true \equiv{} \\
\quad \exists \ t,c,id,\vec{i},\vec{p},\vec{q} \ \ s.t. \ a = \aRelease(c,id,t,\vec{i},\vec{p},\vec{q}) \ \lor \\
\quad exogenous(a) \ \lor\\
\quad \big(\fRealityChanged(s) = true \land a \neq \aResetReality \big).
\end{array}
\end{equation}

If some change in the contextual data is considered as $\Relevant$, the procedure $\proc{Adapt}()$ is launched for the synthesis of the recovery procedure. Both if $\Relevant$ holds or not, the $\proc{Monitor}()$ concludes by executing the action $\aResetReality$, which turns the fluent $\fRealityChanged$ to $false$. We give more details about the $\Relevant$ fluent and the working of the $\proc{Monitor}()$ procedure in the following sections.
% %
%As a result, when no exogenous events has yet occurred, the procedure is stuck on that test, i.e., it cannot perform a step. It is only then when the business process of interest (i.e., program \proc{Process}) may execute/advance.

At a lower priority, the system runs the actual \indigolog\ program representing the dynamic process to be executed, namely procedure $\proc{Process}()$. As shown in Fig.~\ref{fig:fig_approach_core}, the process is composed by three branches of tasks to be executed in parallel (cf. Section~\ref{subsec:approach-preliminaries-indigolog} for the list of \indigolog constructs), that correspond exactly to the emergency management process defined in Section~\ref{sec:introduction-case_study}. For example, the procedure $\proc{Branch1}()$ depicts three tasks to be executed in sequence. Specifically, $\proc{Branch1}()$ instructs first a selected service to reach location $loc33$ starting from location $loc00$, then to take pictures in that location (the expected output of this task is $true$, meaning that the pictures have been correctly collected), and finally to update the status of location $loc33$ (the 'ok' value means that nothing strange, such as a fire or a rock slide, has happened in $loc33$). Note that in Fig.~\ref{fig:fig_approach_core} we make use of a Prolog-like notation and lists of inputs/expected outputs/physical outputs are enclosed between squared brackets.

\begin{figure}[t]
\centering{
 \includegraphics[width=0.8\columnwidth]{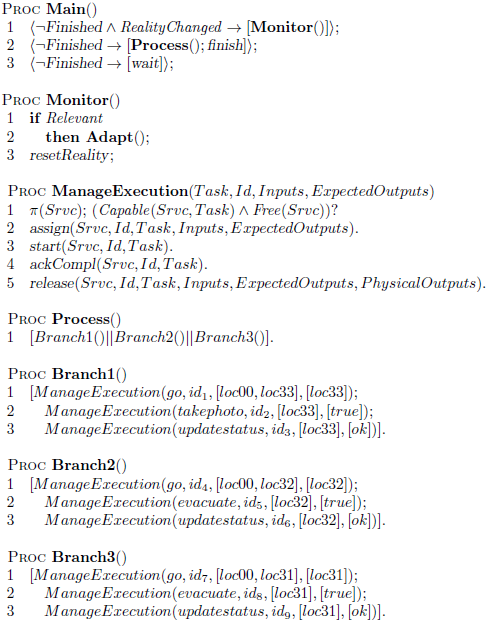}
 } \caption{The core procedures of \smartpm.}
 \label{fig:fig_approach_core}
\end{figure}

Each $\proc{Branch}()$ procedure relies, in turn, on procedure $\proc{ManageExecution}$.  Given a specific task $T$, the procedure $\proc{ManageExecution}$ first searches for a service $c$ that is \fFree\ in situation $s$ and is \fCapable\ to execute the task and, if such a service exists, the procedure will be in charge to manage task assignment, start signaling, acknowledgment of completion, and final release.

In the list of interrupts, at the lowest priority, there is also a third possibility for managing the progression of the process, that is activated only if the process is still not finished, but for some reason, it can not progress in its execution. The third interrupt, that consists just in waiting, reflects the fact that, for example, a certain task can not be assigned to any qualified service (i.e., the $\pi$ function is unable to find any service providing all the capabilities required by that task) or all qualified services are currently involved in the performance of other tasks. If the third interrupt is activated, the control passes back to the process designer, which can manually manage the situation (for example, by adding new services or by updating the capabilities of the existing services).

%The first highest priority interrupt fires when an exogenous event has occurred (i.e., condition $exogenous$ is true). In such a case the \textsc{Monitor} procedure. If no exogenous event has been sensed, the second triggers and the execution of the actual process is attempted.  

\section{Monitoring for Failures}
\label{sec:approach-monitor}

In this section, we turn our attention to the mechanism for automatically detecting failures. As described in Section~\ref{sec:approach-overview}, adaptation in \smartpm can be seen as reducing the gap between the \emph{expected reality}, the (idealized) model of reality that is used by the PMS to reason, and the \emph{physical reality}, the real world with the actual values of conditions and outcomes. In order to understand how the proposed technique works, we start by formalizing the concepts of \emph{physical} and \emph{expected reality}.
\begin{mydef}
A physical reality $\Phi(s)$ is the set of all data fluents $\fX_{\varphi,y}$ (where y ranges over \{1..m\}) defined in the SitCalc theory. Hence, $\Phi(s) = \bigcup_{y = {1..m}}\{\fX_{\varphi,y}\}$.
\end{mydef}
The physical reality $\Phi(s)$ captures exactly the value assumed by each \emph{data fluent} in the situation $s$. Such value reflects what is really happening in the real environment whilst the process is under execution. However, the PMS must guarantee that each task in the process is executed correctly, i.e., with an output that satisfies the process specification. For this purpose, the concept of \emph{expected reality} \ $\Psi(s)$ is needed. For some data fluents, the ones affected by a \emph{supposed} task effect (cf. Section~\ref{sec:approach-formalization}), we introduce a new \emph{expected fluent} $\fX_{\psi}$ that is meant to record the ``expected'' value of $X$ after the execution of a task $T$. The successor state axiom for this new fluent is straightforward:
\begin{equation} \label{eq:eq_expectedfluent}
\begin{array}{l}
\fX_\psi(\vec{o},do(a,s)) = p_j \equiv{} \\
\quad \big(\exists c,id,\vec{i},\vec{p},\vec{q} \ \ s.t.  \ a=\aRelease(c,id,T,\vec{i},\vec{p},\vec{q}) \ \land \\ \quad \vec{p} = [p_0,...,p_j,...,p_w]\big) \ \lor{} \\
\quad\big(\fX_\psi(\vec{o},s) = p_j\ \land \
\neg \exists c,id,\vec{i},\vec{p'},\vec{q} \ \ s.t.  \ a=\aRelease(c,id,T,\vec{i},\vec{p'},\vec{q}) \ \land \\
\quad \vec{p'} = [p'_{0},...,p'_{j},...,p'_{w}] \land (p'_{j} \neq p_j)  \big).
\end{array}
\end{equation}
The fluent states that, in the expected reality, a task is \emph{always} executed correctly (also when it is not), by forcing the value of $\fX_\psi$ to the value of the expected output $p_j$.

\vskip 0.5em \noindent\colorbox{light-gray}{\begin{minipage}{0.98\textwidth}
\begin{example}
\emph{When an actor \emph{act} terminates the execution of the task \aGo \ from a location \emph{from} to a location \emph{to}, the expected fluent $\fAt_\psi$ will assume the value \emph{to} without considering the real outcome of the task.}
\begin{equation} \label{eq:eq_atRobot}
\begin{array}{l}
\fAt_\psi(act,do(a,s)) = to \equiv{} \\
\quad \big(\exists \ q_j \ s.t. \ a=\aRelease(act,id,\aGo,[from,to],[to],[q_j])\big) \lor{} \\
\quad\big(\fAt_\psi(act,s) = to\ \land \\
\quad \neg \exists \ to', q_j \ s.t. \ a=\aRelease(act,id,\aGo,[from,to'],[to'],[q_j]) \land (to' \neq to)  \big).
\end{array}
\end{equation}
\end{example}
\end{minipage}
}\vskip 0.5em
Therefore, an expected fluent $\fX_\psi$ holds every time a task is completed with any real effect, by storing one of the expected outcomes of the task. More precisely, given a task $T$ whose execution returns a list of real outcomes $\vec{q} = [q_0,...,q_j,...,q_w]$ and a list of expected outcomes $\vec{p} = [p_0,...,p_j,...,p_w]$, there exists a data fluent $\fX_\varphi$ and a corresponding expected fluent $\fX_\psi$ that will assume respectively the values $q_j$ and $p_j$. We need to underline that some data fluents do not provide an associated expected fluent; for example, the fluent $\fBatteryLevel_\varphi$ records the battery charge level of each robot, and this value changes of a fixed quantity after each robot movement, whatever is the real final position of the robot. On the contrary, given a data fluent $\fX_{\varphi}$ that records one of the \emph{supposed} effects of a task, an expected fluent $\fX_{\psi}$ could be defined. %Note that it is also possible to decide to avoid to capture the expected value of some fluent.
For example, although the task \aMove \ provides a ``supposed'' outcome, there is no need to record the expected position of a robot. In fact, if a robot reaches a location different from the one expected, the important aspect is that this physical position allows in any way to guarantee the network connection.

\begin{mydef}
An expected reality $\Psi(s)$ is the set of all expected fluents $\fX_{\psi,y}$ (where y ranges over \{1..k\}, with $k \leq m$) defined in the SitCalc theory. Hence, $\Psi(s) = \bigcup_{y = {1..k}}\{\fX_{\psi,y}\}$.
\end{mydef}

In Figure~\ref{fig:fig_approach-execution_monitor}, the overall framework for detecting failures is depicted. The PMS starts by taking in input a process specification $\delta_0$ formalized in \indigolog, and builds the SitCalc theory (i.e., the set of first-order predicates and situation calculus fluents) representing the contextual environment in which the process has to be executed. Finally, the PMS instantiates the initial situation $S_0$, that indicates starting values for each data and expected fluent.

The execution of a process can be interrupted by the \emph{Monitor} module when a misalignment between the physical and the expected reality is discovered. Specifically, the Monitor blocks the execution of the main process $\delta$ \myi after the occurrence of a $\aRelease$ action in the current situation $s$, meaning that some service has completed the execution of a task, or \myii if some exogenous event has been catched in $s$. While in the first case there is the possibility that some physical and expected fluent has changed its value, an exogenous event may affect directly only some data fluent (i.e., the physical reality).

In this sense, our framework is able to capture - and to recover from - two different kinds of task failure. An \emph{internal failure} is related to the failure in the execution of a task, i.e., the task does not terminate, or it
is completed with an output that differs from the expected one. %Example~\ref{ex:ex_1} shows such a case.
An \emph{external failure} is represented as an exogenous event $e$, given in input by the external environment, that forces a set of data fluents to assume a value imposed by the event itself. Such a new value could differ from the expected one, by generating a discrepancy between the two realities. %In order to capture the effects of $e$, the process designer has to refine the successor state axiom of those fluents whose value can be affected by $e$. An example of how to catch an exogenous event is shown in the following section.
%We are able to capture both failures by monitoring the changes in the two realities.

A recovery procedure is needed if the two realities are different from each other, i.e., some tasks in the process failed their execution by returning an output $q$ whose value is different from the expected output $p$, or if some exogenous event has modified the physical reality in a undesirable way.

%Since the PMS has to guarantee that every task is executed correctly, if a discrepancy occurs it will derive a flow of repairing actions that turn the physical reality into the expected reality.

\begin{figure}[t]
\centering{
 \includegraphics[width=0.85\columnwidth]{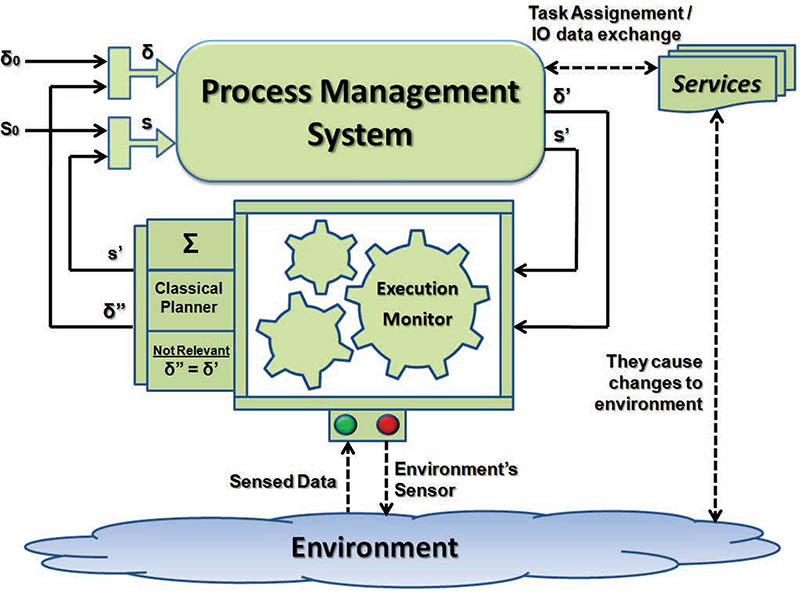}
 } \caption{Execution monitoring in \smartpm.}
 \label{fig:fig_approach-execution_monitor}
\end{figure}

The reduction of this gap requires sufficient knowledge of both kinds of realities. This knowledge, recorded in data and expected fluents, allows the PMS to sense deviations and to modify the process to ensure that, at the end, the above gap will be removed. Formally, a situation $s$ is known as \Relevant - candidate for adaptation - iff :

\begin{equation} \label{eq:eq_samestate}
\begin{array}{l}
\Relevant(\delta,s) \equiv{} \neg \SameState(\Phi(s),\Psi(s))
\end{array}
\end{equation}

Predicate $\SameState(\Phi(s),\Psi(s))$ holds iff the states denoted by $\Phi(s)$ and $\Psi(s)$ are the same\footnote{The evaluation of \SameState \ is performed only for those data fluents for which there exists a corresponding expected fluent. It is obvious that for data fluents considered as \emph{not relevant} for adaptation there is no need to monitor their evolution over situations.}.

\begin{mydef}
Given a situation $s$ and a set $\vec{F}$ of fluents , a $state(\vec{F}(s))$ is the set composed by the values - in situation $s$ - of each fluent $F_j$ that belongs to $\vec{F}$. Hence, $state(\vec{F}(s))$ = $\bigcup_{j = {1..m}}\{F_j\} \ s.t. \ F_j \in \vec{F}$.
\end{mydef}

%%%%%%%%%%%%%%%%%%%%%%%%%%%%%%%%%%%

Next, let us formalize how the monitor works. After the termination of a task, or after the occurrence of an exogenous event, the PMS infers the new situation $s'$ derived from the execution of a $\aRelease$ action (or from the arrival of an exogenous event) and passes it to the Monitor, together with the fragment of process $\delta'$ remaining to be executed.
In case of task completion in situation $s$, the realities $\Phi(s)$ and $\Psi(s)$ will evolve in $\Phi(s')$ and $\Psi(s')$, while in case of an exogenous event, only $\Phi(s)$ evolves in $\Phi(s')$, while $\Psi(s')$ remains equal to $\Psi(s)$. Finally, it is obvious that in case of exogenous events, $\delta' = \delta$ (i.e., an exogenous event does not reflects a task completion, but a change in the physical reality).

Now the Monitor component checks if $\Relevant(\delta',s')$ holds. If it does not hold, it means that no adaptation is required for the process $\delta'$, that can be still executed in situation $s'$. In such a case, the Monitor put $\delta''$ equal to $\delta'$ %and $s''$ equal to $s'$%
and carries on with the main process execution.

On the contrary, if $\Relevant(\delta',s')$ holds, adaptation of the process $\delta'$ is needed. Specifically, the purpose of \smartpm is to devise a recovery procedure $\delta_a$ that turns the \emph{wrong} physical reality $\Phi(s')$ in the \emph{correct} expected reality $\Psi(s')$. The \smartpm system proposes two different adaptation mechanisms for recovering the process from a failure or for adapting it to changing circumstances in the contextual environment due to an exogenous event. The first technique consists in using the lookahead search construct $\Sigma$ provided by \indigolog, while the second technique, which is the one currently deployed on the \smartpm prototype, is based on classical planning algorithms. Both the adaptation techniques are able to synthesize a linear process $\delta_a$, i.e., a process consisting of a sequence of tasks, such that $\delta_a = [t_{a_{0}};...;t_{a_{n}}]$. Such a recovery procedure $\delta_a$ will be inserted just before $\delta'$, by devising a new process $\delta'' = (\delta_a;\delta')$. Note that whenever a process needs to be adapted, every running task is interrupted, since the ``repair'' sequence of tasks $\delta_a$ is placed before them. Thus, active branches can only resume their execution after the repair sequence $\delta_a$ has been executed. This last requirement is fundamental to avoid the risk of introducing data inconsistencies during the repair phase.

The adapted process $\delta'' = (\delta_a;\delta')$ must guarantee some properties in order to ensure the correctness of the recovery mechanism:
\begin{itemize}[itemsep=1pt,parsep=1pt]
\item The recovery process $\delta_a$ has to be synthesized for being executed in situation $s'$. Moreover, after the execution of $\delta_a$, it should be guaranteed that $\Phi(s')$ is turned in $\Psi(s')$. We will discuss this  aspect in the following sections.
\item The execution of the recovery procedure $\delta_a$ corresponds to a new situation $s''$ and to new realities $\Phi(s'')$ and $\Psi(s'')$. In order to ensure that the remaining part of the main process $\delta'$ is still executable in $s''$, the recovery mechanism should guarantee that $\Psi(s')$ and $\Psi(s'')$ represent the same expected state of the world, i.e., the fluent $\SameState(\Psi(s'),\Psi(s''))$ must hold after the execution of $\delta_a$. Specifically, we want that after the execution of the recovery process $\delta_a$ in situation $s'$ (that results in a new situation $s''$), the remaining process $\delta'$ to be executed in $s''$ is equivalent to execute $\delta'$ in $s'$.
\end{itemize}

We formalize this second point by exploiting the concept of \emph{bisimulation}. In Computer Science, two systems are \emph{bisimilar} if they match each other's moves, i.e., one system simulates the other and vice-versa. In this sense, each of the systems can not be distinguished from the other by external an observer. For proving this, we define the the predicate $\SameConfig(\delta_\alpha,s_\alpha,s_\beta,\delta_\beta)$ as follows:
\begin{definition}
A predicate $SameConfig(\delta_\alpha,s_\alpha,\delta_\beta,s_\beta)$ is correct if for every $\delta_\alpha, s_\alpha,\delta_\beta,s_\beta$:
\begin{enumerate}[itemsep=1.2pt,parsep=1pt]
    \item $Final(\delta_\alpha,s_\alpha) \Leftrightarrow Final(\delta_\beta,s_\beta)$
    \item
    $\forall~a,\delta_\alpha \ s.t. \ Trans\big(\delta_\alpha,s_\alpha,\overline{\delta_\alpha},do(a,s_\alpha)\big) \Rightarrow$
    \\$\exists~\overline{\delta_\beta} \ s.t. \ Trans\big(\delta_\beta,s_\beta,\overline{\delta_\beta},do(a,s_\beta)\big)
    \wedge SameConfig\big(\overline{\delta_\alpha},do(a,s_\alpha),\overline{\delta_\beta},do(a,s_\beta)\big)$
    \item
    $\forall~a,\delta_\beta \ s.t. \ Trans\big(\delta_\beta,s_\beta,\overline{\delta_\beta},do(a,s_\beta)\big) \Rightarrow$
    \\$\exists~\overline{\delta_\alpha} \ s.t. \ Trans\big(\delta_\alpha,s_\alpha,\overline{\delta_\alpha},do(a,s_\alpha)\big)
    \wedge SameConfig\big(\overline{\delta_\beta},do(a,s_\beta),\overline{\delta_\alpha},do(a,s_\alpha)\big)$
\end{enumerate}
\end{definition}

Intuitively, a predicate $SameConfig(\delta_\alpha,s_\alpha,s_\beta,\delta_\beta)$ is said to be correct if $\delta_\alpha$ and $\delta_\beta$ are terminable either both or none of them. Furthermore, for each action $a$ performable by $\delta_\alpha$ in the situation $s_\alpha$, there exists the same action $a$ performable by $\delta_\beta$ in the situation $s_\beta$  (and viceversa). Moreover, the resulting configurations $(\overline{\delta_\alpha},do(a,s_\alpha))$ and $(\overline{\delta_\beta},do(a,s_\beta))$ must still satisfy $SameConfig$.

%The use of the bisimulation criteria to state when a predicate $SameConfig(\cdots)$ is correct, derives from the notion of equivalence introduced in~\cite{HiddersDAHV05}.

In our case, we can adopt a specific definition for \SameConfig, that we call $\SameConfig_{PM}(\delta_\alpha,s_\alpha,s_\beta,\delta_\beta)$:
\begin{equation}
\begin{array}{l}
\SameConfig_{PM}(\delta_\alpha,s_\alpha,s_\beta,\delta_\beta) \Leftrightarrow {} \\
\qquad \SameState(\Psi(s_\alpha),\Psi(s_\beta)) \wedge \delta_\alpha=\delta_\beta
\end{array}
\end{equation}

In other words, $\SameConfig_{PM}$ states that $\delta_\alpha$,$s_\alpha$ and $\delta_\beta$, $s_\beta$ are the same configuration if \myi all fluents have the same truth values in both $\Psi(s_\alpha)$ and $\Psi(s_\beta)$ ($\SameState$), and \myii $\delta_\beta$ is equal to $\delta_\alpha$ (i.e., both $\delta_\alpha$ and $\delta_\beta$ correspond to the remaining process to be executed $\delta'$). The following shows that $\SameConfig_{PM}$ is correct.

\begin{theorem}
\label{teo:bisimulation}
$\SameConfig_{PM}(\delta_\alpha,s_\alpha,s_\beta,\delta_\beta)$ is correct.
\end{theorem}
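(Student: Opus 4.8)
The plan is to show that $\SameConfig_{PM}$ satisfies the three clauses of the correctness definition, i.e.\ that the relation it defines is a bisimulation between program configurations. Since $\SameConfig_{PM}(\delta_\alpha,s_\alpha,s_\beta,\delta_\beta)$ forces $\delta_\alpha=\delta_\beta$, I would first collapse the two programs into a single program $\delta$ and reduce the whole statement to the following: whenever $\SameState(\Psi(s_\alpha),\Psi(s_\beta))$ holds, the configurations $(\delta,s_\alpha)$ and $(\delta,s_\beta)$ make exactly the same moves, and each pair of residual configurations again lies in $\SameConfig_{PM}$.

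Two lemmas would carry the argument. The first (preservation of agreement) states that $\SameState(\Psi(s_\alpha),\Psi(s_\beta))$ implies $\SameState(\Psi(do(a,s_\alpha)),\Psi(do(a,s_\beta)))$ for every action $a$. This follows directly from the shape of the successor-state axiom for expected fluents (the $\fX_\psi$ axiom, Equation~\ref{eq:eq_expectedfluent}): the value of each $\fX_\psi$ after $a$ is a function only of $a$ and of the value of $\fX_\psi$ before $a$, so since the pre-values agree by hypothesis and the action is identical, the post-values agree too. The second lemma (move agreement) states that, under $\SameState(\Psi(s_\alpha),\Psi(s_\beta))$, for every action $a$ and residual $\overline\delta$ we have $\Trans(\delta,s_\alpha,\overline\delta,do(a,s_\alpha)) \Leftrightarrow \Trans(\delta,s_\beta,\overline\delta,do(a,s_\beta))$ and $\Final(\delta,s_\alpha)\Leftrightarrow\Final(\delta,s_\beta)$. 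I would prove this by structural induction on $\delta$, walking through the twelve $\Trans$/$\Final$ axioms listed above. The base cases are the only place where the situation is inspected semantically: for a primitive action the transition is gated by $\Poss(a,\cdot)$, and for a test $\sigma?$, a synchronized conditional, or a synchronized loop the move is gated by a guard $\sigma$; in each case the gating formula is a uniform formula over the monitored fluents, so $\SameState(\Psi(s_\alpha),\Psi(s_\beta))$ makes it evaluate identically in the two situations. The compound cases (sequence, nondeterministic branch, $\pi$, iteration, concurrency, prioritized concurrency) reduce the transition/$\Final$ test to those of strict subprograms evaluated in the same situations, so the induction hypothesis applies verbatim; the prioritized-concurrency case additionally uses move agreement for $\delta_1$ to transfer its negative blocking condition $\lnot\exists\zeta,s''.\,\Trans(\delta_1,\cdot)$.

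With the two lemmas in hand the three clauses follow immediately. Clause~1 is exactly the $\Final$ half of move agreement. For clause~2, given a move $\Trans(\delta,s_\alpha,\overline\delta,do(a,s_\alpha))$, move agreement yields the identical move $\Trans(\delta,s_\beta,\overline\delta,do(a,s_\beta))$ (so one takes $\overline{\delta_\beta}=\overline{\delta_\alpha}=\overline\delta$), and preservation of agreement gives $\SameState(\Psi(do(a,s_\alpha)),\Psi(do(a,s_\beta)))$, hence $\SameConfig_{PM}(\overline\delta,do(a,s_\alpha),do(a,s_\beta),\overline\delta)$; clause~3 is symmetric. I expect the main obstacle to be the semantic assumption underlying the base cases of move agreement, namely that every guard and every task precondition appearing in an admissible process refers only to fluents tracked by the expected reality $\Psi$ and hence equated by $\SameState$. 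This must be argued from the way processes are formalized in Section~\ref{sec:approach-formalization} (guards and $\Poss$ axioms written over the monitored data/expected fluents) rather than from the bisimulation machinery itself; once it is granted, the rest is a routine case analysis. A secondary subtlety worth flagging is the coinductive reading of ``correct'': because $\SameConfig$ reappears on the right-hand sides of clauses~2 and~3 and programs may iterate unboundedly, the argument is really a proof that $\SameConfig_{PM}$ is a post-fixpoint (a bisimulation), with the recursive occurrence instantiated to $\SameConfig_{PM}$ itself, which is exactly what the clause-by-clause verification above establishes.
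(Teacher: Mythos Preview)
Your approach is essentially the paper's: show that $\SameConfig_{PM}$ is a bisimulation by verifying the three clauses of the correctness definition, using that equal expected realities make guards/preconditions and hence $\Trans$/$\Final$ agree, and that expected fluents are preserved by any action. The paper's proof is much terser---it asserts both facts in one sentence each without the structural induction or the explicit preservation lemma---so your decomposition into two lemmas and the inductive case analysis over the $\Trans$/$\Final$ axioms is a faithful and more rigorous expansion of the same argument; your flagging of the semantic assumption (that guards and $\Poss$ conditions are expressed over the fluents equated by $\SameState$) is exactly the point the paper leaves implicit.
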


\begin{proof}
We show that \textsc{$SameConfig_{PM}$} is a bisimulation. Indeed:
\begin{itemize}
\item Since $\SameState(\Psi(s_\alpha),\Psi(s_\beta))$ requires all expected fluents to have the same values both in $s_\alpha$ and $s_\beta$, we have that $\big(Final(\delta_\alpha,s_\alpha) \Leftrightarrow Final(\delta_\beta,s_\beta)\big)$.

\item Since $\SameState(\Psi(s_\alpha),\Psi(s_\beta))$ requires all expected fluents to have the same values both in $s_\alpha$ and $s_\beta$, it follows that the PMS is allowed for the same process $\delta_\alpha$ to
assign the same tasks both in $s_\alpha$ and in $s_\beta$ and moreover for
each action $a$ and situation $s_\alpha$ and $s_\beta$ s.t.
$\SameState(\Psi(s_\alpha),\Psi(s_\beta))$, we have that $\SameState(\Psi(do(a,s_\alpha)),\Psi(do(a,s_\beta))$
hold. As a result, for each $a$ and $\overline{\delta_\alpha}$ such that
$Trans\big(\delta_\alpha,s_\alpha,\overline{\delta_\alpha},do(a,s_\alpha)\big)$ we have that
$Trans\big(\delta_\alpha,s_\beta,\overline{\delta_\alpha},do(a,s_\beta)\big)$ and
$\SameConfig_{PM}\big(\overline{\delta_\alpha},do(a,s_\alpha),\overline{\delta_\beta},do(a,s_\beta)\big)$.
Similarly for the other direction.
\end{itemize}
\end{proof}

We can now formalize a predicate $\Recovery$ that describes how our recovery mechanism works:

\begin{equation}
\begin{array}{l}
\Recovery(\delta',s',s'',\delta'') \Leftrightarrow {}\\ \qquad
\exists \delta_a \ s.t. \ \delta''=\delta_a;\delta' \wedge
Linear(\delta_a) \wedge {} \\ \qquad Do(\delta_a,s',s'') \wedge \neg \Relevant(\delta',s'') \wedge \\
\qquad \SameConfig_{PM}(\delta',s',s'',\delta')
\end{array}
\end{equation}

Intuitively, $\Recovery(\delta',s',s'',\delta'')$ holds if the program $\delta'$ that was intended to be executed in $s'$ is adapted in a new program $\delta''$. The adapted process $\delta''$ is a sequence composed by the linear recovery procedure $\delta_a$ and the remaining part of the process to be executed $\delta'$. The execution of $\delta_a$ in $s'$ results in a new situation $s''$, where $\SameState(\Phi(s''),\Psi(s''))$ (i.e., $\neg \Relevant(\delta',s'')$) and $\SameConfig_{PM}(\delta',s',s'',\delta'))$ hold.

The nice feature of $\Recovery$ is that it searches for a linear program that achieves a certain formula, namely
$\SameState(\Phi(s),\Psi(s))$. Moreover, restricting to sequential programs obtained by planning with no concurrency does not prevent any recoverable process from being adapted. In sum, we have reduced the synthesis of a recovery program to a
classical Planning problem in AI~\cite{TraversoBook2004}. As a result we can adopt a well-developed literature about planning for our aim. In particular, if services and input/output parameters are finite, then the recovery can be reduced to \emph{propositional} planning, which is known to be decidable in general.

\section{The \smartpm Adaptation Mechanisms}
\label{sec:approach-adaptation}

\subsection{The Built-in Adaptation Mechanism}
\label{subsec:approach-adaptation-built_in_adaptation}

The first adaptation technique we analyze for \smartpm is based on synthesizing a number of candidate recovery processes and on simulating them off-line. More precisely, when the fluent \Relevant \ holds, we need to find a recovery procedure which is able to align the physical reality $\Phi(s)$ with the expected reality $\Psi(s)$. A quick solution to this problem consists of devising the recovery plan on-line, by trying to execute performable actions in $\Phi(s)$ and senses what the next action should be, on the basis of a ``distance'' notion between the two realities. Such a solution does not require a reasoner to determine a lengthy course of action (formed perhaps of hundreds of tasks) before executing the first step in the world.

However, on the other hand, once an action has been executed in the world, there may be no way of backtracking it if it is later found out that it was performed incorrectly. This aspects assumes a great value if we think that we are executing tasks in an emergency management context, where delays or useless/incorrect activities may easily prevent the correctness of the whole procedure. As a result, an on-line execution of a program may fail where an off-line execution would succeed.

%Unlike in classical planning-based architectures, execution happens at each step, and there is no lookahead to check that the selected plan can be successfully expanded and executed. The assumption is that the careful crafting of plans' preconditions to ensure the selection of appropriate plans at execution time, together with a built-in mechanism for trying alternative options, will usually ensure that a success- ful execution is found, even in the context of a changing environment. The approach works well if good plans can be specied for all objectives that the agent may acquire and all contingencies that may arise. However, there are often too many possible objectives and contingencies for this to be practical. Trying alternative options may not work in an environment where choices cannot be \undone." Thus supplying some form of lookahead planning in an agent programming language remains valuable provided it can be eectively controlled.

\bigskip

On Fig.~\ref{fig:fig_approach_built-in}, we show the fragment of the \indigolog code related to our built-in adaptation mechanism. Procedure $\proc{Adapt}()$ starts by invoking a basic action \aAdaptStart, whose effect is to make the fluent \fAdapting \ equal to $true$.

\begin{equation} \label{eq:eq_adapting}
\begin{array}{l}
\fAdapting(do(a,s)) = true \equiv{} \\
\quad a=\aAdaptStart \ \lor{} \\
\quad\big(\fAdapting(s) = true\ \land a \neq \aAdaptFinish \big).
\end{array}
\end{equation}

Then, $\proc{Adapt}()$ invokes the procedure $\proc{AdaptingProgram}()$ in order to build and execute the recovery program and, at the same time (cf. the prioritized concurrency in line 2), it waits until the recovery procedure has been completely performed. This will happen when $\proc{AdaptingProgram}()$ will terminate its execution, and the basic action \aAdaptFinish \ will turn the fluent \fAdapting \ to $false$.

\begin{figure}[t]
\centering{
 \includegraphics[width=0.95\columnwidth]{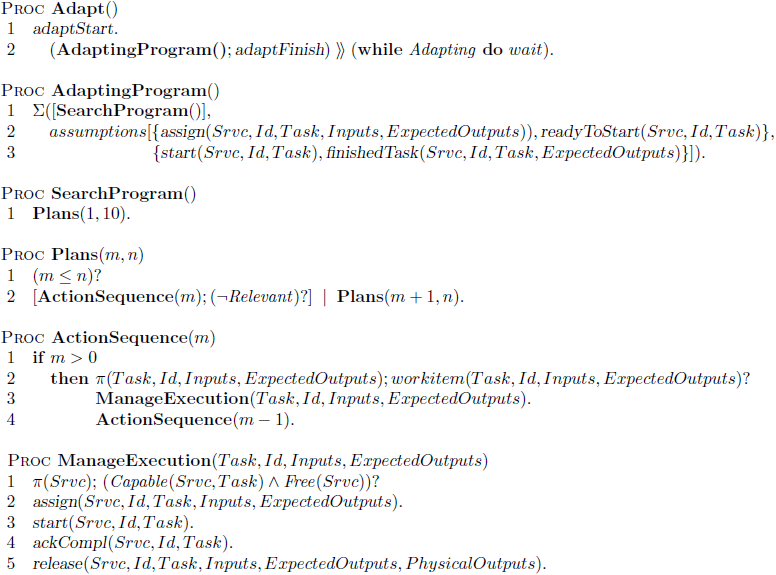}
 } \caption{The built-in adaptation mechanism of \smartpm.}
 \label{fig:fig_approach_built-in}
\end{figure}

Procedure $\proc{AdaptingProgram}()$ relies on the \indigolog \emph{search operator} $\Sigma$, that provides a form of lookahead planning. The idea is that given any program $\delta$, the program $\Sigma(\delta)$ executes online just like $\delta$ does off-line. In other words, before
taking any action, it first ensures using online reasoning that this step can be followed successfully by the rest of $\delta$. More precisely, according to~\cite{degiacomo:1999}, the semantics of the search operator is that:
\[
Trans(\Sigma(\delta),s,\Sigma(\delta'),s')\,\,\Leftrightarrow\,\,
Trans(\delta,s,\delta',s')\,\land\,\exists{s^*} \ s.t. \ Do(\delta',s',s^*).
\]
If $\delta$ is the entire program under consideration, $\Sigma(\delta)$ emulates complete off-line execution. In \smartpm, we use a specialized version of $\Sigma$ that relies on some assumptions on the performable actions. Each assumption is of form \ $\{\fontActions{actionPMS}(\vec{x}),\fontActions{actionService}(\vec{y})\}$, meaning that any action $\fontActions{actionPMS}$ executed by the \smartpm engine with input $\vec{x}$ will be eventually ``complemented'' by action $\fontActions{actionService}$ executed by a service with input $\vec{y}$. Vector of parameters $\vec{y}$ is a fully-deterministic transformation of $\vec{x}$. Here we are using the simple case where $\vec{y}$ is a subset to $\vec{x}$, but one can customize for specific tasks/actions.

Specifically, for \smartpm, we have coded two assumptions: the first is that the action $\aReady(Srvc,Id,Task)$ performed by a certain service $Srvc$ is expected to follow the PMS action
$\aAssign(Srvc,Id,Task,Inputs,ExpectedOutputs))$; the second concerns the PMS action $\aStart(Task,Id,Srvc)$, which is supposed to come before the action $\aFinish(Srvc,Id,Task,Inputs,PhysicalOutputs,ExpectedOutputs)$, executed by $Srvc$. (cf. also Fig.~\ref{fig:fig_approach-formalization_seqDiagram})

%For example, in lines 2-3 we are stating that during the simulation of the recovery procedure,

Finally, let us focus on the actual program in charge of building the recovery plan, namely procedure $\proc{SearchProgram}()$ that, in turn, invokes procedure $\proc{Plans}(m,n)$.
Generally speaking, such procedure will try to reach a situation in which \Relevant\ does not hold anymore (see line $2$).
% %
For this aim, the $\proc{Plans}$ procedure builds recovery plans of growing length and simulates their execution starting from $\Phi(s)$. Recovery plans may consist of a variable number of tasks to be executed in sequence, that range from $m$ to $n$ tasks.

Let us suppose, for example, that $m = 1$, meaning that we are executing $\proc{Plans}(1,10)$ for the first time. As a consequence, the procedure $\proc{ActionSequence}(1)$ is invoked. It tries to generate all sequences composed by a single task. Then it simulates their execution in $\Phi(s)$ (cf. the invocation of $\proc{ManageExecution}$ in line 3) and checks if at least one of those sequences can turn the value of $\Phi(s')$ ($s'$ results from the simulated execution of the candidate recovery process) such that $\SameState(\Phi(s'),\Psi(s'))$ (cf. line 2 of the procedure $\proc{Plans}$). If it does not happen, the value of $m$ is increased of one unit (cf. line 2 of the procedure $\proc{Plans}$) and $\proc{Plans}(2,10)$ is invoked.

Now, $\proc{ActionSequence}(2)$ searches for sequences of two tasks $[t_1;t_2]$ to be executed in $\Phi(s)$. The execution of each candidate recovery plan happens off-line (i.e., when the main process has been stopped for waiting for the building of the recovery plan itself), but the built-in algorithm simulates an on-line execution of the plan, by applying the effects of the sequence directly on the current $\Phi(s)$. To be more precise, $\proc{ActionSequence}(2)$ first picks an admissible workitem (cf. line 2 of the procedure $\proc{ActionSequence}$), and then simulates its execution on $\Phi(s)$, by obtaining a new reality $\Phi(s')$. Now, starting from $\Phi(s')$, $\proc{ActionSequence}$ is invoked again for searching any task that is executable on $\Phi(s')$, by devising the new reality $\Phi(s'')$. Again, the control is passed back to $\proc{Plans}$, that verifies if $\SameState(\Phi(s''),\Psi(s''))$.

We need to underline that every expected reality $\Psi(s_\alpha)$ devised in any situation $s_\alpha$ (starting from the situation $s$) that comes from the simulated execution of recovery processes of growing length, will be always equal to $\Psi(s)$ (i.e., to the expected reality stored after a failure has been sensed). In fact, when the fluent \fAdapting \ holds, meaning that the system is searching and simulating the execution of some recovery procedure, the system is not allowed to change the values of expected fluents. This because the built-in algorithm tries to execute each task of the recovery procedure as if it returns its expected outcomes, with the purpose to turn $\Phi(s)$ into $\Psi(s)$. This means that $\SameState(\Psi(s),\Psi(s_\alpha))$ and, consequently, $\SameConfig_{PM}(\delta',s,s_\alpha,\delta'))$ hold. Moreover, if the $\Sigma$ search operator is able to find a recovery procedure $\delta_a$ composed by a sequence of tasks such that $Do(\delta_a,s,s_\alpha)$, then also $\Recovery(\delta',s,s_\alpha,(\delta_a;\delta'))$ is \emph{true}.

In general, the search technique is iterative deepening: if there exists no sequence whose length is less or equal to $m$ tasks, it tries with length-$(m+1)$ task sequences. Observe also that $\proc{ActionSequence}$ uses a simple breadth-first search mechanism, which specializes what proposed in~\cite{ReiterBook}.  The procedure tries to generate all admissible sequences of tasks composed by $m$ task. This keeps going deeper and deeper till reaching a sequences of 10 tasks or any task sequence that recovers. If no task sequence of at most 10 tasks exists, it is assumed that no recovery is possible. We have adopted 10 as bound to the length of the sequence as it is a reasonable assumption in our scenario.

It is worth highlighting that, because the monitor runs at a higher-priority level than the actual process, the solution plan found for the recovery program $\Sigma[\proc{SearchProgram}]$ would run at higher-priority than program $\proc{Process}$. So, the program $\proc{Process}$ cannot progress until the recovery has been finished and applied.
% %
Consequently, after a sensed deviation, the program executed will be equivalent
to $(\Sigma[\proc{SearchProgram}()];\delta')$, where $\delta'$ is the program remaining
from procedure $\proc{Process}$.
%thus matching exactly the definition of Equation~(\ref{equ:Recovery}).

\begin{theorem}
\label{teo:decidible} Let assume a domain in which services and
input and output parameters are finite. Then given a process
$\delta'$ and situations $s'$ and $s''$, it is decidable to compute
a recovery process $\delta''$ with the built-in approach just devised such that
$\Recovery(\delta',s',s'',\delta'')$ holds.
\end{theorem}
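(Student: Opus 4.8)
The plan is to show that satisfying $\Recovery(\delta',s',s'',\delta'')$ reduces to a classical (propositional) planning problem, which the finiteness hypothesis renders decidable. First I would observe that, among the conjuncts defining $\Recovery$, two are discharged automatically by the way the built-in mechanism operates. The clause $\delta''=\delta_a;\delta'$ is purely syntactic once $\delta_a$ is fixed, and $\SameConfig_{PM}(\delta',s',s'',\delta')$ holds by construction: while $\fAdapting$ is $true$ the engine is forbidden from altering any expected fluent, so $\Psi(s'')=\Psi(s')$ and hence $\SameState(\Psi(s'),\Psi(s''))$; together with the trivial $\delta'=\delta'$ this yields $\SameConfig_{PM}$. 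Thus the genuine content of $\Recovery$ is to find a \emph{linear} $\delta_a$ with $Do(\delta_a,s',s'')$ and $\neg\Relevant(\delta',s'')$, i.e.\ a sequence of tasks whose simulated execution drives $\Phi(s')$ to a situation $s''$ in which $\SameState(\Phi(s''),\Psi(s'))$ holds.

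Next I would cast this as a planning problem in the sense of Section~\ref{subsec:approach-preliminaries-planning}: the initial state is the physical reality $\Phi(s')$, the goal condition is $\SameState(\cdot,\Psi(s'))$ (the expected reality, frozen during adaptation), and the actions are exactly the admissible tasks with their $\Poss$-preconditions and successor-state effects drawn from the SitCalc theory. The crux of decidability is then a finiteness argument. By hypothesis the sets of services and of input/output parameters are finite; consequently the set of admissible workitems, the set of ground primitive actions ($\aAssign$, $\aStart$, and the others with concrete arguments), and the set of ground data fluents are all finite. Since every state assigns values from finite domains to these finitely many data fluents, the reachable state space $\{\Phi(\cdot)\}$ is finite, and the branching factor at each state (the number of $\Poss$-satisfied ground actions) is bounded.

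Finally I would argue termination of the built-in search. The procedure $\proc{Plans}$ performs iterative-deepening breadth-first generation of task sequences, and each candidate is checked effectively by simulating it through $\Trans^*$ on ground terms and evaluating the first-order predicate $\SameState$. Because the state space is finite, any successful $\delta_a$ may be taken acyclic---revisiting a state only lengthens the plan---so the length of a shortest recovery sequence is bounded by the number of distinct states; searching to that depth therefore suffices. Hence the search either returns a witness $\delta_a$ (and the adapted $\delta''=\delta_a;\delta'$) or exhausts the finitely many acyclic sequences and reports that no recovery exists, terminating in both cases and establishing decidability.

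The main obstacle I anticipate is not the finiteness bookkeeping but justifying the \emph{depth bound}: the $\search$ operator in $\proc{SearchProgram}$ a priori permits recovery programs of unbounded length, and the hard-coded cutoff of ten tasks only yields a semi-decision procedure. To obtain genuine decidability one must replace that pragmatic cutoff with the state-count bound above, arguing that no shortest plan need revisit a physical reality; this is precisely the step that leans on the reduction to propositional planning and its classical decidability result~\cite{TraversoBook2004}.
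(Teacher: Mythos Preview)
Your proposal is correct and takes essentially the same approach as the paper: reduce the recovery question to propositional planning via the finiteness of ground actions and fluents, then invoke the known decidability of propositional planning. The paper's actual proof is only two sentences and neither separately discharges the $\SameConfig_{PM}$ clause nor addresses the hardcoded depth bound of ten; your version is more careful on both points than the original.
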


\begin{proof} In domains in which services and input and
output parameters are finite, also actions and fluents instantiated
with all possible parameters are finite. Hence we can phrase the
domain as a propositional one and the thesis follows from
decidability of propositional planning~\cite{TraversoBook2004}.
\end{proof}

%The obvious question that arises is what happens if assumptions are not held (e.g., $\aAssign(srvc,[task])$ is followed by a certain action $\aFinish(srvc,task,p)$ or a new exogenous event). In that case, the built recovery plan is considered as failed and a new one is built again starting from the new situation.

%The next step on which we are currently working is to overcome the intrinsical planning inefficiency of Prolog by making use of efficient state-of-the-art planners. 

%\subsection{The plan-based Adaptation Mechanisms}
%\label{subsec:approach-adaptation-plan_based_adaptation}

%A planner that works on such inputs generates a sequence of actions (the plan) that leads from the initial broken state $s$ to a safe state $s'$ meeting the goal. The planner contextually selects the compensation tasks from the repository linked to the \textsc{Planlet} under execution and synthesizes a linear process $\delta_h$ (i.e., a process consisting of a sequence of tasks).

\subsection{The Plan-based Adaptation Approach}
\label{subsec:approach-adaptation-plan_based_adaptation}

We now turn our attention to the plan-based adaptation mechanism currently working in the \smartpm system. Before starting the execution of the process $\delta$, the PMS builds the \textsc{PDDL} representation of each task defined in the SitCalc theory and sends it to an external planner. More in detail, \smartpm is able to build a PDDL planning domain starting from a \emph{domain theory} defined through \smartML, our declarative language used for representing contextual properties of a dynamic scenario. We show the syntax of \smartML in Section~\ref{sec:framework-smartpm_definition_tool-smartml}, but we anticipate that a \smartML specification can be easily converted in a SitCalc theory (cf. Section~\ref{sec:framework-smartpm_definition_tool-xml-to-indigolog}).

When a misalignment between $\Phi(s')$ and $\Psi(s')$ is sensed (we consider $s'$ as the situation where something wrong has happened and $\delta'$ as the faulty process), the $\proc{Monitor}()$ launches the $\proc{InvokePlanner}()$ procedure for recovering the faulty process $\delta'$. \proc{InvokePlanner} starts by building a PDDL planning problem that reflects the gap between the two realities. Specifically, it first determines the initial state $Init$ of the planning problem, by making it equal to $\Phi(s')$. In addition to the values of data fluents, $Init$ can also include the values of $\fProvides(c,b)$ and of $\fFree(c,s')$, for each capability $b$ and service $c$ defined in SitCalc theory. Such values will be used by the planner for scheduling correctly recovery tasks during the building of the plan.

\begin{figure}[t]
\centering{
 \includegraphics[width=0.8\columnwidth]{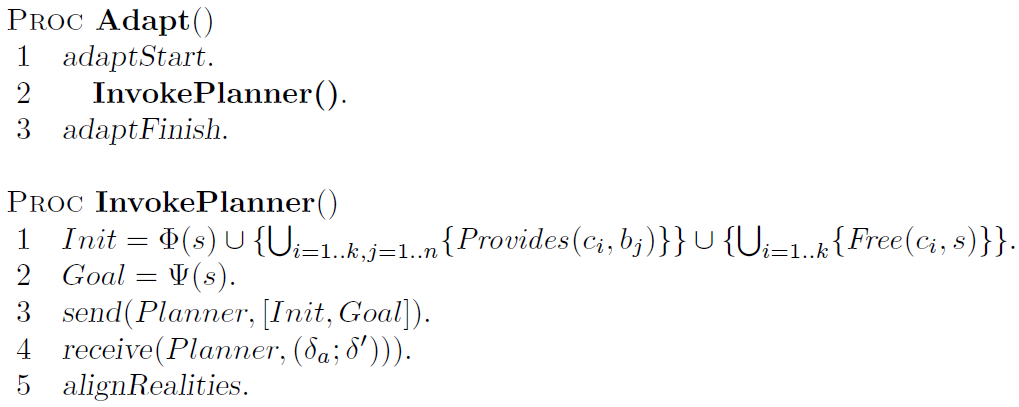}
 } \caption{The plan-based adaptation mechanism of \smartpm.}
 \label{fig:fig_approach_built-in}
\end{figure}

Then, \proc{InvokePlanner} builds the $goal$ of the planning problem. Since our target is to convert $\Phi(s')$ into $\Psi(s')$, it is clear that the \emph{goal} will be equal to $\Psi(s')$. There exists the possibility to add to the \emph{goal} some situation calculus abbreviation that is considered to be relevant for adaptation. For example, in our case study we need to guarantee that each service is connected to the network. Therefore, the $goal$ can be augmented with information stating that the execution of the synthesized recovery plan would not prevent the connection of services to the main network. Since our $goals$ are basically a conjunction of literals, the presence of SitCalc abbreviations does not prevent the reachability of the atomic goals contained in $\Psi(s')$ (unless the SitCalc abbreviation directly contradicts some expected fluents value, but this would be considered as a design-time error). The only effect deriving from the presence of SitCalc abbreviations within the $goal$ definition is that the searching for a recovery plan will be a bit more constrained.

\vskip 0.5em \noindent\colorbox{light-gray}{\begin{minipage}{0.98\textwidth}
\begin{example}
\label{example_plan_based}
\emph{Let us consider the example described in our case study. Specifically, suppose, for example, that the task \emph{go(loc00,loc33)} is assigned to actor \emph{act\emph{1}} (cf. Fig.~\ref{fig:fig_introduction-case_study-context_2}(a)), which reaches instead the location \emph{loc03} (cf. Fig.~\ref{fig:fig_introduction-case_study-context_2}(b)). This means that $act\emph{1}$ is now located in a different position than the desired one, and s/he is out of the optimal network range. Therefore, after the $\aRelease$ action has been executed, the fluent $\fAt_{\varphi}$ takes the value \emph{loc03}. But this output does not satisfy the expected outcome. The expected output \emph{loc33} is stored in the fluent $\fAt_{\psi}$; it generates a discrepancy between $\Phi(s')$ and $\Psi(s')$. This means that $\fRelevant(\delta,s')$ holds, and the main process $\delta'$ needs to be adapted.}
\end{example}
\end{minipage}
}\vskip 0.5em

Now, the \indigolog engine invokes an external planner by giving as inputs the $init$ and $goal$ sets just computed. We have to underline that the communication between the \indigolog engine and the planner is mediated by a \textbf{Synchronizer} component, that launches the planner and, when a recovery plan has been computed, translates the plan in a new \indigolog adapted process $\delta''=(\delta_a;\delta')$. We give more information about the \textbf{Synchronizer} in Chapter~\ref{ch:framework}.

The \receive \ command in line 4 of the procedure \proc{InvokePlanner} is blocking, i.e, the PMS waits until the recovery procedure has been devised. If the recovery procedure is empty, it means that no plan exists for the current planning problem, and the control passes back to the process designer, that can try to manage manually the exception. Otherwise, if a recovery process $\delta''=(\delta_a;\delta')$ is returned, the \indigolog engine can start to execute it. If we consider the example~\ref{example_plan_based}, the recovery plan needed for removing the gap between the two realities is shown in Fig.~\ref{fig:fig_introduction-case_study-context_3}(a).

The last command of the procedure \proc{InvokePlanner} is used for forcing $\Psi(s')$ to the current value of $\Phi(s')$. This because the execution of the recovery plan $\delta_a$ will turn the initial state (i.e., $\Phi(s')$) into the goal state (i.e., $\Psi(s')$). Moreover, after the execution of every recovery task of $\delta_a$ the situation $s'$ will evolve in a situation $s''$, where $\Psi(s'')$ is exactly the goal of the planning problem just reached. This means that $\SameConfig_{PM}(\delta',s',s'',\delta'))$ holds and, since the execution of the recovery plan guarantees that also $\SameState(\Phi(s''),\Psi(s''))$ holds, the predicate $\Recovery(\delta',s',s'',(\delta_a;\delta'))$ is \emph{true}.

\begin{theorem}
Let assume a domain in which services and input and output parameters are finite. Then, given a process $\delta'$ and situations $s'$ and $s''$, it is decidable to compute a recovery process $\delta''$ with the planning-based approach just devised such that $\Recovery(\delta',s',s'',\delta'')$ holds.
\end{theorem}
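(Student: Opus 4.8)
The plan is to mirror the decidability argument already given for the built-in mechanism in Theorem~\ref{teo:decidible}, since the two theorems differ only in \emph{how} the recovery procedure is computed (the \indigolog\ search operator versus an external PDDL planner) and not in \emph{what} has to be computed. First I would recall the reduction carried out in Section~\ref{sec:approach-monitor}: the predicate $\Recovery(\delta',s',s'',\delta'')$ holds exactly when there is a \emph{linear} program $\delta_a$ with $\delta''=\delta_a;\delta'$ such that $Do(\delta_a,s',s'')$, $\neg\Relevant(\delta',s'')$ (equivalently $\SameState(\Phi(s''),\Psi(s''))$), and $\SameConfig_{PM}(\delta',s',s'',\delta')$. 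As already argued there, finding such a $\delta_a$ is precisely a classical planning problem whose goal is the formula $\SameState(\Phi(s),\Psi(s))$, so the whole question is whether the particular planning instance produced by the plan-based approach is effectively solvable.

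Next I would show that the problem assembled by \proc{InvokePlanner} is exactly this planning instance. The procedure sets $init=\Phi(s')$ (augmented with the rigid facts $\fProvides(c,b)$ and the fluents $\fFree(c,s')$), sets $goal=\Psi(s')$ (optionally conjoined with the relevant SitCalc abbreviations), and uses as operators the PDDL translations of the task actions. A solution returned by the planner is therefore a sequence of tasks $\delta_a=[t_{a_0};\ldots;t_{a_n}]$ that is $\Linear$ by construction and satisfies $Do(\delta_a,s',s'')$ with $\SameState(\Phi(s''),\Psi(s''))$, i.e. $\neg\Relevant(\delta',s'')$. The final assignment in \proc{InvokePlanner}, which forces $\Psi(s')$ to the value of $\Phi(s')$, guarantees that the expected reality is preserved across the repair, so that $\SameState(\Psi(s'),\Psi(s''))$ and hence, by Theorem~\ref{teo:bisimulation}, $\SameConfig_{PM}(\delta',s',s'',\delta')$ hold. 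Thus a plan for the constructed instance is in one-to-one correspondence with a witness $\delta''=(\delta_a;\delta')$ for $\Recovery$.

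Finally I would close with the decidability step, which is identical in spirit to Theorem~\ref{teo:decidible}: under the hypothesis that services and input/output parameters are finite, the set of ground instantiations of all actions and fluents is finite, so the PDDL instance built by \proc{InvokePlanner} is a finite \emph{propositional} planning problem, which is decidable~\cite{TraversoBook2004}. Hence it is decidable whether a recovery plan exists, and when it does one can be computed, yielding the required $\delta''$. The step I expect to be the main obstacle is the faithfulness of the encoding in the previous paragraph rather than the decidability itself: one must check that the closed-world assumption the planner relies on correctly renders $\SameState$ as a conjunctive goal, and that adding the SitCalc abbreviations to $goal$ does not render any atomic goal of $\Psi(s')$ unreachable. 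As the text already observes, the latter can only fail if an abbreviation directly contradicts an expected-fluent value, which is a design-time error and thus excluded; once this encoding correctness is in place, the reduction to propositional planning and the conclusion follow immediately.
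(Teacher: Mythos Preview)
Your proof is correct and actually more detailed than the paper's own argument. You mirror Theorem~\ref{teo:decidible} exactly: reduce $\Recovery$ to finding a linear $\delta_a$, show that \proc{InvokePlanner} builds the corresponding planning instance, and conclude by observing that finiteness of services and parameters yields a finite propositional planning problem, which is decidable~\cite{TraversoBook2004}.

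The paper takes a different, more implementation-bound route: rather than appealing to the abstract decidability of propositional planning, it grounds the claim in the termination guarantee of the concrete planner used, LPG-td~\cite{LPG}. The paper's proof simply states that LPG-td is a stochastic local-search planner inspired by Walksat~\cite{WALKSAT}, and that on finite domains LPG-td terminates. Your approach is the more general one---it does not depend on any particular planner and would survive swapping LPG-td for any sound and complete propositional planner---whereas the paper's argument is tighter to the actual system but correspondingly more fragile (it relies on a termination claim for a stochastic procedure). Your additional care about the faithfulness of the encoding (closed-world treatment of $\SameState$, non-interference of the added abbreviations) is not addressed in the paper's proof at all; the paper takes that for granted.
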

\begin{proof}
The decidability of the plan-based approach relies on the external planner used for the synthesis of the recovery procedure. In the \smartpm system, we have used the LPG-td planner~\cite{LPG} for synthesizing recovery plans. LPG-td is a state-based planner that is based on a stochastic local search in the space of particular ``action graphs'' derived from the planning problem specification. The basic search scheme of LPG-td is inspired to Walksat~\cite{WALKSAT}, an efficient procedure for solving SAT-problems. If a domain in which services and input and output parameters are finite, the LPG-td planner guarantees to terminate~\cite{LPG}.

%If we know every objects set (actor, actions and so on\ldots) is
%finite, we can represent the first-order formulas on situations by
%propositional formulas. Moreover, thanks to
%Theorem~\ref{teo:LineProgram}, we are sure if $\delta'$ admits
%recovery then there exists a linear recovery program $\delta''$.
%Under aforementioned statements, we could use an existing planning
%technique (such as that in \cite{Planning}) to get $s''$ as goal.
%Well-known technical results guarantee to find a recovery program
%$\delta''$ whenever it exists or, conversely, to return
%``failure'' if it doesn't.
\end{proof}

%In such a case, there exists the possibility that \proc{IndiPOP} updates continuously the external planner, that would not be able to terminate by returning the expected recovery plan.

\begin{comment}
   \vskip 0.5em \noindent\colorbox{light-gray}{\begin{minipage}{0.9\textwidth}
    \begin{example}
    \emph{In the case of our running example, actor \emph{act1} reaches a wrong location \emph{loc03} after the execution of the task \emph{go}, and s/he is no longer connected to the network, resulting in a post-condition failure (cf. Figure~\ref{fig:fig_introduction-case_study-context_2}(b)). Therefore, whereas the initial state of the planning problem generated for dealing with such an exception corresponds exactly to the invalid state of the process (hence, with the value of the predicate \emph{Connected(act1)=FALSE} and \emph{at(act1)=loc03}), the goal state is composed by all those predicates/functions not affected by the task \emph{go} (that must remain unchanged after the recovery procedure), and by the post-conditions just violated, i.e., \emph{Connected(act1)=TRUE} and \emph{at(act1)=loc33}, which is the desired location.}
    \end{example}
    \end{minipage}
    }\vskip 0.5em

\end{comment} 

\subsection{The Continuous Planning Approach}
\label{subsec:approach-adaptation-continuous_planning}

In this section, we show a third approach for adapting a process from failures or exogenous events. It is based on \emph{Continuous Planning} techniques and, if compared with the two adaptation approaches employed on \smartpm and discussed above, it provides two interesting features in adapting a process: \myi it is a non-blocking technique, i.e., it does not stop directly any task in the main process during the computation of the recovery procedure and \myii it allows concurrent branches in the recovery procedure. This approach, developed on the same SitCalc Theory defined in Section~\ref{sec:approach-formalization}, requires to modify the working of the \proc{Monitor} procedure, and its implementation on the \indigolog platform is currently on going.

\emph{Continuous Planning}~\cite{CONTINUAL} refers to the process of planning in a world under continual change, where the planning problem is often a matter of adapting to the world when new information is sensed. A continuous planner is designed to persist indefinitely in the environment. Thus it is not a ``problem solver'' that is given a single goal and then plans and acts until the goal is achieved; rather, it lives through a series of ever-changing goal formulation, planning, and acting phases. Rather than thinking of the planner and execution monitor as separate processes, one of which passes its results to the other, we can think of them as a single process (cf. Fig.~\ref{fig:fig_approach-continuous_planning_approach}). %In order to validate our approach, we make use of a continuous planner working on top of UCPOP planner~\cite{UCPOP}.

\begin{figure}[t]
\centering{
 \includegraphics[width=0.8\columnwidth]{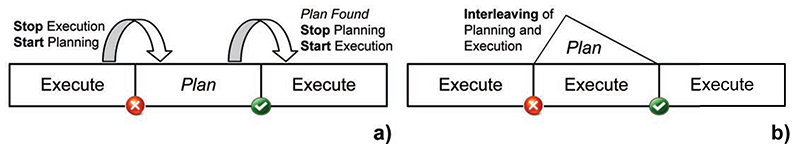}
 } \caption{Traditional ``plan then execute'' cycle (a) and the \emph{continuous planning} approach.}
 \label{fig:fig_approach-continuous_planning_approach}
\end{figure}

The proposed idea is to build the recovery procedure $\delta_a$ \emph{in parallel} with the execution of the main process $\delta$, avoiding to stop directly any task in the main process. Once ready, $\delta_a$ will be inserted as a new branch of $\delta$ and will be executed in concurrency with every other task. This means that the recovery branch will turn $\delta$ into a new process $\delta''$ = ($\delta||\delta_a$). Before to analyze how the whole approach works, we need to provide some further formal definition.

Since we are working in a domain in which services and input/output parameters are finite, each task defined in $\delta$ affects (or is affected by) only a finite number of fluents. This means that each task is interested only in that fragment of reality it contributes to modify.
\begin{mydef}
\label{affect}
A task T affects a data/expected fluent $\fX$ iff \ $\exists c,id,\vec{i},\vec{p},\vec{q},a \ \ s.t.  \ \ a= \\ \aRelease(c,id,T,\vec{i},\vec{p},\vec{q})$. We denote it with $T\triangleright{\fX}$.
\end{mydef}
\begin{mydef}
\label{affected}
A task T is affected by a data/expected fluent $\fX$ iff \ $\exists c,id,\vec{i},\vec{p},a \ \ s.t. \ \ a=  \aAssign(c,id,T,\vec{i},\vec{p})$. We denote it with $T\triangleleft{\fX}$.
\end{mydef}
The two latter definitions allow to state a new further definition of $\Phi(s)$ and $\Psi(s)$, whose range can be limited to a specific task $T$.
\begin{mydef}
\label{limitedPhysicalReality}
Given a specific task T, a T-limited physical reality \ $\Phi|_T(s)$ is the set of that data fluents $\fX_{\varphi,y}$ (where y ranges over \{1..m\}) such that $T\triangleright{\fX_{\varphi,y}}$ or $T\triangleleft{\fX_{\varphi,y}}$. We denote these fluents as $\fX_{\varphi|_T}$. Hence, $\Phi|_T(s) = \bigcup_{y = {1..m}}\{\fX_{\varphi,y|T}\}$ and $\Phi|_T(s) \subseteq \Phi(s)$.
\end{mydef}
\begin{mydef}
\label{limitedExpectedReality}
Given a specific task T, a T-limited expected reality \ $\Psi|_T(s)$ is the set of that expected fluents $\fX_{\psi,y}$ (where y ranges over \{1..m\}) such that $T\triangleright{\fX_{\psi,y}}$ or $T\triangleleft{\fX_{\psi,y}}$. We denote these fluents as $\fX_{\psi|_T}$. Hence, $\Psi|_T(s) = \bigcup_{y = {1..m}}\{\fX_{\psi,y|T}\}$ and $\Psi|_T(s) \subseteq \Psi(s)$.
\end{mydef}
From definitions \ref{limitedPhysicalReality} and \ref{limitedExpectedReality}, the following ones stem :
\begin{mydef}
Let $T_1, ..., T_n$ all tasks defined in the SitCalc theory. A physical reality $\Phi(s)$ is the union of all T-limited physical realities that hold in situation s : $\Phi(s) = \bigcup_{i = {1..n}} \Phi|_{T_i}(s)$.
\end{mydef}
\begin{mydef}
Let $T_1, ..., T_n$ all tasks defined in the SitCalc theory. An expected reality $\Psi(s)$ is the union of all T-limited expected realities that hold in situation s : $\Psi(s) = \bigcup_{i = {1..n}} \Psi|_{T_i}(s)$.
\end{mydef}
Now, the predicate \fRelevant \ can be easily refined in a way that focuses on a specific task $T$:
\begin{equation} \label{eq:eq_partialsamestate}
\begin{array}{l}
\fRelevant_T(\delta,s) \equiv{} \neg \SameState(\Phi|_T(s),\Psi|_T(s))
\end{array}
\end{equation}

\vskip 0.5em \noindent\colorbox{light-gray}{\begin{minipage}{0.98\textwidth}
\begin{example}
\label{example_continuous}
\emph{Let us consider again the example described in our case study. The task \emph{go(loc00,loc33)} was assigned to actor \emph{act\emph{1}}, which has reached the wrong location \emph{loc03} (cf. Fig.~\ref{fig:fig_introduction-case_study-context_2}(b)). Therefore, $act\emph{1}$ is now located in a different position than the desired one, and s/he is out of the optimal network range. After the $\aRelease$ action has been executed, the fluent $\fAt_{\varphi}$ takes the value \emph{loc03}, while the expected output \emph{loc33} is stored in the fluent $\fAt_{\psi}$. This generates a discrepancy between $\Phi|_{\aGo}(s)$ and $\Psi|_{\aGo}(s)$. This means that $\fRelevant_{\aGo}(\delta,s)$ holds, and the main process $\delta$ needs to be adapted.}
\end{example}
\end{minipage}
}\vskip 0.5em

In Fig.~\ref{fig:fig_approach-continuous_planning_proc} we show how we have concretely coded the continuous planning approach with \indigolog. Like in the plan-based adaptation approach, before starting the execution of the process $\delta$, the PMS builds the \textsc{PDDL} representation of each task defined in the SitCalc theory and sends it to a whatever external planner that implements the POP algorithm.

The main procedure involves four concurrent programs in priority. The interrupt at lowest priority, that consists just in waiting, is activated only if the process is still not finished, but for some reason, it can not progress in its execution (e.g., the PMS is unable to find any service providing all the capabilities required by a specific task to be executed). If the fourth interrupt is activated, the control passes back to the process designer, which can manually manage the situation.

Then, the system runs the actual \indigolog\ program representing the process to be executed (the procedure $\proc{Process}$). This procedure relies, in turn, on procedure $\proc{ManageExecution}$, which includes task assignment, start signaling, acknowledgment of completion, and final release (cf. Section~\ref{sec:approach-formalization-framework}). The monitor, which runs at higher priority, is in charge of monitoring changes in the environment and adapting accordingly. The first step in procedure \proc{Monitor} checks whether fluent $\fRealityChanged$ holds true, meaning that a service has terminated the execution of a task or an exogenous (unexpected) action has occurred in the system. Basically, the procedure $\proc{Monitor}$ is enabled when the physical or the expected reality (or both) change.

\begin{figure}[t]
\centering{
 \includegraphics[width=0.85\columnwidth]{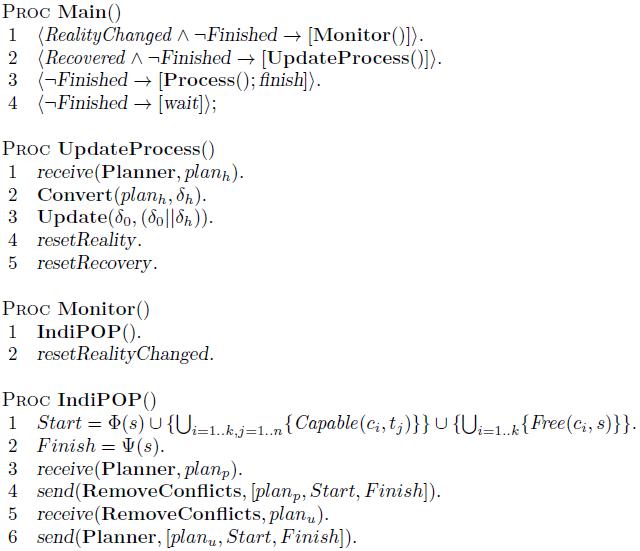}
 } \caption{The \indigolog main procedure customized for the continuous planning approach.}
 \label{fig:fig_approach-continuous_planning_proc}
\end{figure}

If it happens, the monitor calls the procedure \proc{IndiPOP}, whose purpose is to manage the execution of the external planner by updating its initial states and expected goals according with changes in the two realities. \proc{IndiPOP} first builds the two sets \emph{Start} (the initial state) and \emph{Finish} (the goal), by making them equal respectively to $\Phi(s)$ and $\Psi(s)$. As far as concerns the initial state, it will include, for each task $t$ and service $c$ defined in SitCalc theory, the values of $\fCapable(c,t)$ and of $\fFree(c,s)$ in addition to the values of data fluents.

Then \proc{IndiPOP} catches the partial plan $plan_p$ (that has the form of a set of partial ordering constraints between tasks; it is empty if no failure has happened yet) built till that moment by the external planner and updates it with the new sets Start and Finish. Such updating finds something about $plan_p$ that needs fixing in according with the new realities. Since $plan_p$ has been built working on old values of the two realities, it is possible that some ordering constraints between tasks are not valid anymore. This causes the generation of some conflicts, that need to be deleted by $plan_p$ through the external procedure $\proc{RemoveConflicts}$. Basically, \proc{IndiPOP} can be seen as a conflict-removal procedure that revises the partial recovery plan to the new realities. At this point, $plan_u$ (that is, $plan_p$ just updated, i.e., without conflicts) is sent back to the external planner together with the sets Start and Finish. The external planner can now restore its planning procedure.

Note that if the predicate \fRelevant(s) holds, meaning that a misalignment between the two realities exists, the PMS tries to continue with its execution. In particular, every $T_i$ whose T-limited expected reality $\Psi|_{T_i}(s)$ is different from the T-limited physical reality $\Phi|_{T_i}(s)$ could not anymore proceed with its execution. However, every task $T_j$ not affected by the deviation can advance without any obstacle. Once sent the sets of fluents composing the two realities to the external planner, the monitor resets the fluent $\fRealityChanged$ to \emph{false}, and the control passes to the process of interest (i.e., program \proc{Process}), that may again execute/advance.

When the external planner finds a recovery plan that can align physical and expected reality, the fluent \recovered\ is switched to \emph{true} and the procedure $\proc{UpdateProcess}$ is enabled. Now, after receiving the recovery process $\delta_a$ from the planner, the PMS updates the original process $\delta$ to a new process $\delta''$ that, respect to its predecessor, has a new branch to be executed in parallel; such branch is exactly $\delta_a$. It contains all that tasks able to repair the physical reality from the discrepancies (i.e. to unblock all that tasks stopped in $\delta$ because their preconditions did not hold). Note that when $\delta_a$ is merged with the original process $\delta$, the two realities are still different from each others. Therefore, the PMS makes them equal by forcing $\Psi(s)$ to the current value of $\Phi(s)$. This because the purpose of $\delta_a$, after that all recovery actions have been executed, is to turn the current $\Phi(s)$ into $\Psi(s')$, where $s'$ is that situation reached after the execution of recovery actions.

Let us now formalize the concept of $strongly\ consistency$ for a process $\delta$.
\begin{mydef}
Let $\delta$ a process composed by n tasks $T_1,..,T_n$. $\delta$ is \textbf{strongly consistent} iff:
\begin{itemize}
\item Given a specific task T and an input I, $\nexists c,c',\vec{p},\vec{p'},\vec{q},\vec{q'},a,a' \ s.t. \\a=\aRelease(c,T,I,\vec{p},\vec{q}) \ \land a'=\aRelease(c',T,I,\vec{p'},\vec{q'}) \land (\vec{p} \neq \vec{p'})$.
\item $\forall y\in{1..m}, \nexists (T_i,T_k)_{i \neq k} \ s.t. (T_i \triangleright{X_{\varphi,y}} \land T_k \triangleright{X_{\varphi,y}})$.
\end{itemize}
\end{mydef}
Intuitively, a process $\delta$ is strongly consistent if a specific task, executed on a given input, cannot return different values for its expected output; moreover, the above condition holds if do not exist two different tasks that affect the same fluent. For strongly consistent processes, we can state the concept of $goal$ :
\begin{mydef}
Given a strongly consistent process $\delta$, composed by $n$ tasks $T_1,, ... ,T_n$,  the goal of \ $\delta$ can be defined as the set of all expected fluents $\fX_{\psi,y}$ that are affected by $T_1,T_2, ... ,T_n$. Hence, \ $Goal(\delta) =\{ {\fX_{\psi,y}} \ s.t. \ \exists i_{1..n}.(T_i\triangleright{\fX_{\psi,y}})\}$.
\end{mydef}
After a recovery procedure $\delta_a$, $Goal(\delta)$ $\subseteq$ $Goal(\delta||\delta_a)$ , since the recovery procedure can introduce new tasks with respect to the original process $\delta$. Anyway, the original $Goal(\delta)$ is preserved also after the adaptation procedure.

\begin{figure}[t]
\centering{
 \includegraphics[width=0.95\columnwidth]{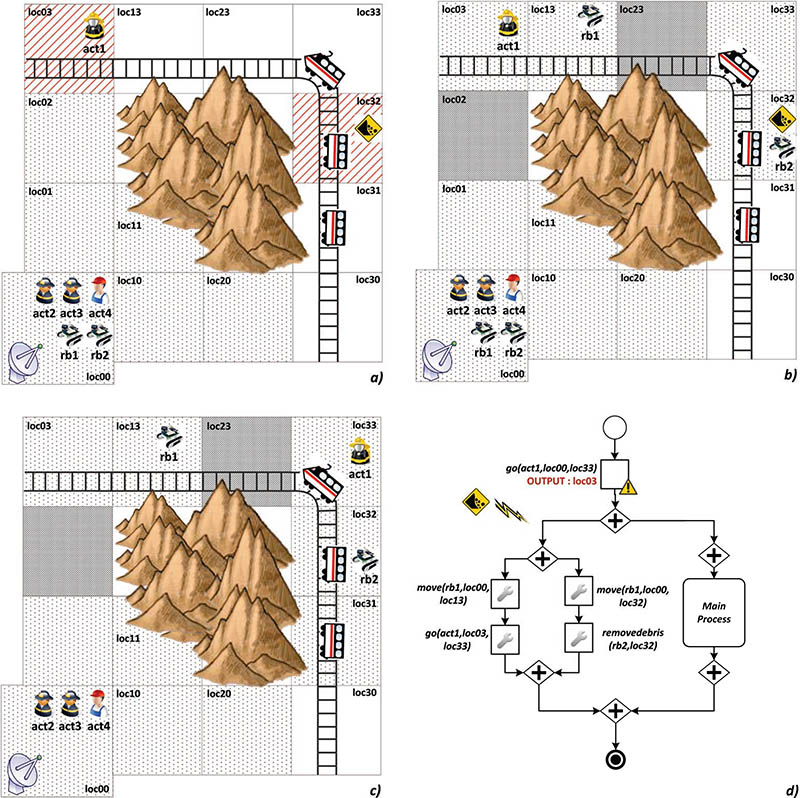}
 } \caption{An example of adaptation through the continuous planning approach.}
 \label{fig:fig_approach-continuous_planning_example}
\end{figure}

\begin{theorem}[Termination]
Let $\delta$ be a strongly consistent process composed by a finite number of tasks $T_1, ... ,T_n$. If $\delta$ does not contain while and iteration constructs, and the number of exogenous events is finite, then the core procedure of \indigolog\ PMS terminates.
\end{theorem}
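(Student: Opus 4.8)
The plan is to exhibit a well-founded measure on the reachable configurations $(\delta,s)$ of the core procedure and to show that an infinite execution would induce an infinite strictly-decreasing chain, contradicting well-foundedness. First I would treat the process \emph{without} adaptation. Since $\delta$ contains neither the $\mwhile$ construct nor the nondeterministic iteration $\delta^*$ and mentions only the finitely many tasks $T_1,\dots,T_n$, the syntactic structure of $\proc{Process}$ is finite and acyclic, and by the $\Trans/\Final$ axioms of Section~\ref{subsec:approach-preliminaries-indigolog} each task runs through the fixed finite life-cycle $\aAssign \to \aReady \to \aStart \to \aFinish \to \aAck \to \aRelease$ (cf. the protocol in Figure~\ref{fig:fig_approach-formalization_seqDiagram}). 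I would therefore take as base measure the lexicographic pair $(\#\text{tasks not yet released},\ \#\text{life-cycle steps remaining for active tasks})$ and observe that every non-adaptation $\Trans$ step of $\proc{Process}$ and $\proc{ManageExecution}$ strictly decreases it, while the lowest-priority $\wait$ interrupt can only fire when no such step is enabled and $\fIsFinished$ is still false.

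Second, I would bound the number of adaptation events. An adaptation is triggered only when $\fRealityChanged$ becomes true, i.e. after a $\aRelease$ action or an exogenous event (Equation~\ref{eq:eq_reality_changed}). Exogenous events are finite by hypothesis and thus contribute only finitely many triggers. For triggers caused by task completions I would invoke strong consistency: since each data/expected fluent is affected by a \emph{unique} task and $\delta$ has no loops, every relevant fluent can be misaligned at most once by the main process, so over the finitely many fluents only finitely many misalignments arise. Each time $\fRelevant$ holds, $\proc{IndiPOP}$ feeds the external planner, which over a finite domain of services and input/output parameters reduces to propositional planning and hence terminates (as already established for the plan-based mechanism), producing a \emph{finite} recovery branch $\delta_a$; when $\proc{UpdateProcess}$ merges $\delta_a$ the realities are forced equal, discharging the misalignment.

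The hard part will be ruling out an infinite regress of nested recoveries, since a recovery task inserted by $\delta_a$ may itself complete with an unexpected physical outcome and trigger a further adaptation. To close this I would augment the measure with a component counting the still-misaligned fluents (those for which $\neg\SameState$ holds on the $T$-limited realities $\Phi|_T$ and $\Psi|_T$) and argue, again via strong consistency, that a successful recovery strictly reduces this count without re-introducing a misalignment on an already-aligned fluent, because distinct tasks never affect the same fluent; only the finitely many exogenous events can ever raise it. The overall lexicographic measure $(\#\text{remaining exogenous events},\ \#\text{misaligned fluents},\ \#\text{tasks not yet released},\ \#\text{life-cycle steps remaining})$ is then well-founded and strictly decreased along every maximal run, giving termination. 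The delicate residual point to verify carefully is that the continuous, parallel construction of $\delta_a$ by $\proc{IndiPOP}$ and $\proc{RemoveConflicts}$ does not itself loop, which once more follows from planner termination on the finite domain.
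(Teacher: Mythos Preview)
Your approach is considerably more elaborate than the paper's, which gives only a short case analysis rather than a well-founded measure. The paper simply observes that the two realities can change only when (i) a task of the \emph{original} process $\delta$ completes, or (ii) an exogenous event occurs; since $\delta$ has finitely many tasks and no loops or iteration there are at most $n$ completions, and exogenous events are finite by hypothesis, so the planner is restarted only finitely many times and eventually runs to completion uninterrupted. In particular, the paper does \emph{not} attempt to bound nested recoveries: it counts only reality changes arising from the original $\delta$, and the argument stops once the planner terminates with some $\delta_a$ (or a failure).

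Your attempt to go further and rule out an ``infinite regress of nested recoveries'' has a genuine gap. The claim that strong consistency prevents re-misalignment of an already-aligned fluent does not work: strong consistency says that two \emph{different} tasks never affect the same fluent, but the recovery branch $\delta_a$ is built from the same task repository as $\delta$ and may well contain another instance of the \emph{same} task type (e.g., another \aGo). If that recovery \aGo\ completes with an unexpected outcome, it re-misaligns exactly the fluent $\fAt_\varphi$ that was just repaired; strong consistency is silent here because it is the same task, not a distinct one. Relatedly, your third lexicographic component ``\# tasks not yet released'' \emph{increases} when $\proc{UpdateProcess}$ merges $\delta_a$ into the process, so the proposed measure is not strictly decreasing along a run. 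The paper sidesteps both issues by not claiming anything about the execution of $\delta_a$ itself; if you want the stronger statement you are sketching, you would need an additional hypothesis (e.g., that recovery tasks cannot themselves fail, or an explicit global bound on the number of recoveries).
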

\begin{proof}(\emph{Sketch}) \
We have to consider every case in which one of the two reality can change, since only in such a case the PMS updates the recovery plan under construction and restarts the external planner with new values of initial state and goals.
\begin{itemize}
\item \emph{Case 0 : No Failures.} We know that both $\Phi(s)$ and $\Psi(s)$ can change when a task $T$ ends its execution. As the number of tasks composing the original process $\delta$ is finite, if no internal failures or exogenous events occur (meaning that each task terminates its execution by returning the expected output), the process terminates without any need to compute a recovery procedure.
\item \emph{Case 1 : Internal Failures.} Again, since the number of tasks composing the original process $\delta$ is finite, also the number of \emph{internal failures} that could happen will be finite. In the worst case, in which $\delta = T_1 || T_2 || ... || T_n$, if every task fails during its execution, we will have exactly $n$ internal failures, meaning that $\Phi(s)$ and $\Psi(s)$ change $n$ times. In such a case, no task can more proceed in the execution, and the external planner can terminate (with a recovery plan $\delta_a$ or with a failure) without any more alteration of its initial situation and goals.
\item \emph{Case 2 : Exogenous Events.} As the number of exogenous events that affect the execution of $\delta$ is finite, when the very last exogenous event has occurred, the only way to change realities is either by the standard execution of $\delta$, or by internal failures, falling respectively into \emph{Case 0} or \emph{Case 1}.
\end{itemize}
\end{proof}
We want to underline that the termination cannot be guaranteed if $\delta$ contains loops or iteration, since potentially the two realities could indefinitely change. The same is true if the number of exogenous events is unbounded. In such a case, there exists the possibility that \proc{IndiPOP} updates continuously the external planner, that would not be able to terminate by returning the expected recovery plan.

\vskip 0.5em \noindent\colorbox{light-gray}{\begin{minipage}{0.98\textwidth}
\begin{example}
\emph{Let us suppose that during the synthesis of the recovery plan needed for dealing with the exception arose in the previous example~\ref{example_continuous}, an exogenous event $\rockSlide(loc32)$ is captured by the PMS. It aims at alerting about a rock slide collapsed in location \emph{loc32} (cf. Fig.~\ref{fig:fig_approach-continuous_planning_example}(a)).
Hence, we have different values for $\fStatus_\varphi(loc32,s')=debris$ and $\fStatus_\psi(loc32,s')=ok$. Again, the PMS invokes the external planner by obtaining the partial plan $plan_p$ built till that moment and verifies if it needs to be fixed according with new values of the two realities. If no conflicts are individuated, the PMS sends back $plan_p$ to the planner together with the information about the initial state and the goal, updated to situation $s'$. When the planner ends its computation, it returns a recovery process $\delta_a$ that includes two concurrent branches. The left branch of $\delta_a$ instructs robot \emph{rb1} to reach a position where actor \emph{act1} is again connected to the network, and can finally reach its original expected destination $loc33$ (cf. Fig.~\ref{fig:fig_approach-continuous_planning_example}(c)). In parallel, robot $rb2$ can reach location \emph{loc32} and remove debris. Note that $\delta_a$ can be executed in concurrency with $\delta$ (see the right-hand side of Figure~\ref{fig:fig_approach-continuous_planning_example}(d)) by preserving its original goal.}
\end{example}
\end{minipage}
}

\section{Conclusion}
\label{sec:approach-conclusion}

In this chapter we have presented the formal foundations of our general approach based on formalizing processes with \sitcalc and \indigolog, on detecting exceptions with execution monitoring and on adapting automatically a dynamic process through classical planning techniques. Such an approach is (i) practical, since it relies on well-established planning techniques, and (ii) does not require the definition of the adaptation strategy in the process itself (as most of the current approaches do).

We also gave details on three different techniques for adapting a faulty process instance. The built-in adaptation approach,  based on the $\Sigma$ search construct provided by \indigolog, allows to incorporate adaptation features directly into the PMS. However, the computation of a recovery plan with this approach is very inefficient, since it is based on a breadth first search algorithm that results in an exponential explosion in the size of the inputs/services.

The plan-based technique for adapting processes is instead very efficient, since it delegates the building of the plan to state-of-the-art external planners. Specifically, we used the LPG-td planner~\cite{LPG} for the synthesis of the recovery plans, and in Chapter~\ref{ch:validation} we clearly demonstrate the feasibility of this choice by showing some interesting experimental tests. The drawback of this approach lies in the intrinsic assumptions
of classical planning (determinism in the effects, model completeness, etc.), that could be too restrictive to address complex problems.

The continuous planning approach seems very promising for reducing the overall response time during adaptation. The strength of the approach lies in the ability to incorporate execution feedback directly into the plan, without blocking the execution of the main process. In fact, during the plan synthesis, if a positive event occurs (such as an external event or a task whose effects are to ``adjust'' the compromised situation), the system is able to take advantage of such an opportunity without the need of synthesizing a new plan. However, even though our intent is to make the planning process very responsive, there still remains a synchronization process between planning and execution, which can require a significant response time. Another issue concerns the augmented response time needed by the planner when more than one exception is treated at the same time. In such a case, the recovery plan could be constituted by $n$ recovery sub-processes independent one from another, and the overall time depends by the number of steps required for finding the longest one (in terms of recovery activities).

%In conclusion, we are willing to underline once more that the approach proposed, which this section has shown an implementation of, is not another way to catch pre-planned exception. Other approaches rely on rules to define the behaviors when special events are triggered. Here we simply model (a subset of) the running environment and the actions' effects, without considering any possible exceptional events. We argue that, in most of cases, modeling the environment, even in detail, is easier than modeling all possible exceptions.

\chapter{The \smartpm System}
\label{ch:framework}

%In order to effectively support and manage process executions in a dynamic working environment, a  should provide automatic adaptability to anomalous events that may occur at run-time (disconnections of devices, crashes, etc.). With respect to classical \emph{business} scenarios, in emergency management these events  can be frequent and unpredictable, and therefore a continuous process monitoring approach is required. While for some events it is possible to have an \emph{a priori} knowledge and embed handling procedures into process models, other events have to be detected and handled on a \emph{per-case} basis. Upon the occurrence of an unexpected event that may jeopardize process execution and the achievement of given objectives, the system necessarily has to react by modifying the running process instance. Dynamic process adaptations may simply consist in reassigning a given task to another resource (e.g., in case of resource unavailability, crash or disconnection) or may require complex restructuring on the process model. Process adaptation techniques should exploit contextual data~\cite{contexawarereqs}, in order to identify a suitable compensation and recovery strategy and minimize the impact on the involved resources and on the process itself.

The \smartpm approach described in Chapter~\ref{ch:approach} has shown that a PMS supporting process adaptation at run-time requires an integrated approach that covers the modeling, execution and monitoring stages of process life-cycle. The combination of procedural and imperative models with declarative elements, along with the exploitation of techniques from the field of artificial intelligence (AI) such as planning algorithms and tools, is required for increasing the ability of a PMS of supporting dynamic processes.

%thoroughly the AI-based approach used for adapting process instances at run-time and the architecture of \smartpm, a proof-of-concept PMS we developed ...

%The potential benefits coming from the application of process management principles and practices to emergency management settings are manifold. Providing coordination support by means of an adequate workflow management approach specifically tailored for emergency workers allows promoting a collaboration-aware and process-aware methodology, still preserving the autonomy of individual mobile operators. As stated in~\cite{pervasiveworkflow}, workflows and process management techniques offer many advantages when used for modeling and executing user interactions in mobile pervasive environments: workflows are well suited for coordinating services and operators to achieve a certain goal, as they enable to define structured and guided interactions between users and the environment.

To this end, in this chapter we aim at presenting the overall architecture of \smartpm and at giving technical details about its atomic components. In Section~\ref{sec:approach-architecture}, we show how software components of \smartpm have been organized into multiple logical layers, which correspond to the system's main features. %We provide a detailed description of each layer, focusing on internal software components, their interfaces and main design choices.
Then, in Section~\ref{sec:framework-indigolog_platform} we presents the \indigolog platform we used for building our PMS and the execution monitor. In Section~\ref{sec:approach-smartpm_definition_tool} we describe \smartML, which combines a modeling formalism for representing the information of the
contextual scenario linked to a specific dynamic process, and a graphical tool (specifically, Eclipse BPMN\footnote{\url{http://www.eclipse.org/modeling/mdt/?project=bpmn2}}) for designing the control flow of the process. We also show how a dynamic process formalized through \smartML is automatically
translatable in situation calculus and \indigolog readable formats and is therefore ready for being executed by \smartpm.
Finally, in Section~\ref{sec:framework-builders} we give some high level detail on the translation algorithms used for building planning domains and problems starting from \smartML/\indigolog specifications, and in Section~\ref{sec:framework-screenshots}, we show some screenshots of the \smartpm system in action.

%Specifically, in Section~\ref{sec:framework-indigolog_platform} we describe the most advanced \indigolog-based platform currently available\footnote{Available at \url{http://indigolog.sourceforge.net/}}. This implementation of \indigolog is modular and easily extensible so as to deal with any external platform, as long as the suitable interfacing modules are programmed.

\section{System Architecture}
\label{sec:approach-architecture}

Our approach to integration of process execution and planning for providing automatic adaptation features at run-time relies on three main architectural layers as shown in Fig.~\ref{fig:fig_framework-smartpm_architecture}.

\begin{figure}[t]
\centering{
 \includegraphics[width=0.95\columnwidth]{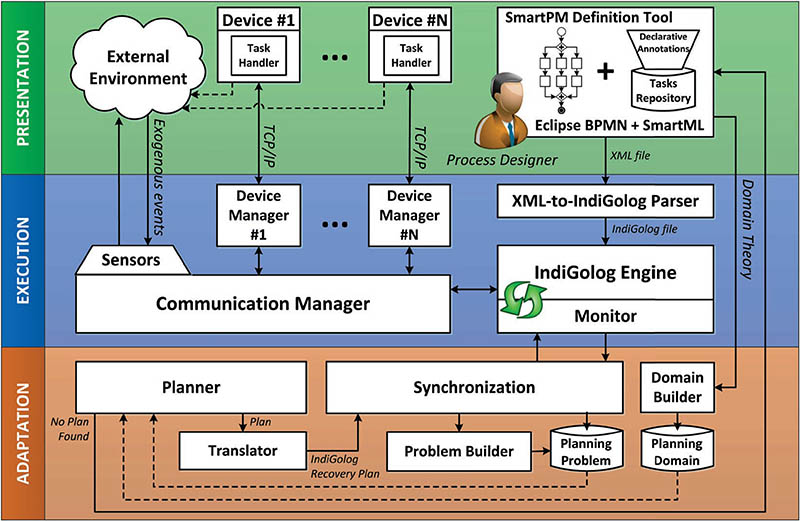}
 } \caption{The architecture of \smartpm.}
 \label{fig:fig_framework-smartpm_architecture}
\end{figure}

The \textbf{Presentation Layer} has a twofold purpose. On one hand, it allows a human process designer to define a dynamic process and all the contextual information concerning the scenario in which the process will be enacted. To this end, we provide a clarifying descriptive tool called \textbf{\emph{\smartpm Definition Tool}}. It is a GUI-based tool that can be used by a process designer to build a process specification defined according to the \smartpmML (a.k.a. \smartML) presented in Section~\ref{sec:framework-smartpm_definition_tool-smartml}. The language allows to specify a so-called \emph{Domain Theory}, i.e., to clearly represent the contextual properties reflecting a dynamic scenario. Moreover, through \smartML a process designer can build a repository of tasks defined in a declarative way, by explicitly providing tasks pre-conditions and effects based on the domain theory previously defined. Finally, the process designer can build graphically the control flow of a dynamic process through \emph{\textbf{Eclipse BPMN}}\footnote{\url{http://eclipse.org/bpmn2-modeler/}}, a graphical editor that allows to specify business processes using the BPMN 2.0 notation~\cite{BPMN20}. Note that the set of tasks composing the control flow of the dynamic process must be selected from the repository of tasks specified through \smartML. In Section~\ref{sec:approach-smartpm_definition_tool} we will show how to build a dynamic process through the \smartpm Definition Tool, that integrates the graphical features of Eclipse BPMN with the \smartML declarative language. The outcome of the process design activity will be a complete XML-encoded process specification to be passed to the \indigolog engine.

%Additional functionalities of the Process Definition Tool includes the ability to enact processes and monitor team activities and process execution progress.

On the other side, the Presentation Layer includes every real world device that may interact with the \indigolog engine. In order to manage such an interaction, there is the need to installing and configuring a \emph{\textbf{Task Handler}} module on top of every device. The Task Handler is an interactive GUI-based software application that - during process execution - supports the visualization of assigned tasks and allows each participant to start task execution and notify task completion by selecting an appropriate outcome. %We want to underline that currently the Task Handler does not provide the possibility to support task execution with interactive applications or external web services.
A task is seen by the Task Handler as a black-box activity, and the only relevant information captured are the starting and the completion of the task itself (together with the respective outcomes). Finally, since our \smartpm system is mainly a proof-of-concept implementation of an adaptive PMS, we simulate the \textbf{\emph{External Environment}} as a software module that sends asynchronously exogenous events that may prevent the correct execution of the dynamic process.

%If the execution of a given task by a team member needs to be supported by an interactive application, the \emph{Task Handler} automatically starts the required application when the operator begins the execution.

%During the initial stage of a process execution, the \emph{Task Handler} allows each operator to retrieve and visualize process description, overall goals and required capabilities, and to join the team by selecting the capabilities s/he provides.

The \textbf{Execution Layer} is in charge of managing and coordinating the execution of dynamic processes. It performs task assignments and it manages and stores information about services and tasks involved in process executions, tasks to be completed and variables/data modified during task executions. The Execution Layer is basically a customized version of the \indigolog platform, whose details are provided in Section~\ref{sec:framework-indigolog_platform}.

A dynamic process built through Eclipse BPMN and annotated with \smartML is taken as input from the \emph{\textbf{XML-to-\indigolog\ Parser}} component, which translates this specification in \sitcalc and \indigolog readable formats, in order to make the process executable by the \indigolog engine.

The \textbf{\emph{\indigolog Engine}} provides a proper execution engine that manages the process routing and decides which tasks are enabled for execution, by taking into account the control flow, the value of predicates and preconditions and effects of each task. Before a process starts its execution, the \indigolog engine builds its physical reality $\Phi_{s}$ by taking the initial context from the environment, and the expected reality $\Psi_{s}$, which initially is equal to $\Phi_{s}$.

Once a task is ready for being assigned, a component named \textbf{\emph{Communication Manager}} is in charge of assigning it to a proper service (which may be a human actor, a robot, a software application, etc.) that is available (i.e., free from any other task assignment) and that provides all the required capabilities for task execution. Every step of the task life cycle - ranging from the assignment to the release of a task - requires an interaction between the Communication Manager and the devices. For each real world device, the Communication Manager generates a separate \textbf{\emph{Device Manager}}, which is a software component that is able to interact with the Task Handlers deployed on the devices. Each device manager establishes a communications channel with the associated device by using TCP/IP stream sockets. Such an interaction is mainly intended for notifying the corresponding device of actions performed by the \indigolog engine as well as for notifying the engine of actions executed by the Task Handlers of the corresponding device. Finally, the Communication Manager allows also to catch exogenous events coming from the environment and, when the process terminates, to notify the process completion to the devices (again, through the Device Managers).

%The component is deployed on team leader's device and directly interacts with the \emph{Process Definition Tool} used for producing process schemas, and with the \emph{Service Management} components for task assignment and execution.

%takes in input a process representation - described as a list of predicates and tasks with preconditions and effects, as well as a control flow that drives tasks within the process

The \textbf{\emph{Monitor}} component, which interacts continually with the \indigolog engine, is in charge of monitoring contextual data in order to identify changes, modifications or events which may affect process execution, and notify them to the Adaptation layer. Specifically, at each execution step - i.e., when the ending of a task or an exogenous event has turned $\Phi_{s}$ into $\Phi_{s+1}$ - the monitor checks if the new situation $s+1$ can be classified as relevant (cf. Equation~\ref{eq:eq_samestate}). If this is the case, the monitor collects the physical reality $\Phi_{s+1}$, the expected reality $\Psi_{s+1}$ and sends them to the synchronization component. In a nutshell, the Monitor component decides whether adaptation is needed.

The \textbf{Adaptation Layer} is in charge of reacting to undesired or unforeseen events which may invalidate process execution. The \textbf{\emph{Synchronization}} component acts as unique entry point for incoming notifications from the Execution Layer, in order to adapt process execution through specific recovery/adaptation techniques. It enforces synchronization between the \indigolog engine, the monitor and the planner. Every time it receives from the monitor the two realities, it builds a corresponding planning problem in~\textsc{PDDL} (through the \textbf{\emph{Problem Builder}} component), by converting the physical reality $\Phi_{s}$ into the initial state and the expected reality $\Psi_{s}$ into the goal.

The \textbf{\emph{Planner}} component is invoked when the Synchronization component builds a new planning problem. In addition to the initial state and the goal, the Planner needs a specification for the planning domain too (that is, a PDDL specification with tasks and predicates). For this purpose, the \textbf{\emph{Domain Builder}} component translates the domain theory defined in \smartML in a PDDL planning domain readable by the Planner. In the \smartpm system, we synthesize our recovery plans through the LPG-td planner~\cite{LPG} (Local search for Planning Graphs). It is a planner based on local search and planning graphs that handles PDDL 2.2 domains and can produce good quality plans in terms of one or more criteria. In order to invoke the planner, the Synchronization component  needs to specify the value of three parameters indicating: \myi a file containing a set of PDDL operators (i.e., the planning domain, provided by the Domain Builder), \myii a file containing a planning problem (i.e., the initial state and goal of the problem, provided by the Problem Builder) formalized in PDDL and \myiii a running mode, which is either ``speed'' (for devising sub-optimal solutions) and ``quality'' (for devising optimal solutions). More details about the planner are given in Section~\ref{sec:validation-experiments}.

Finally, when a plan satisfying the goal is found, it is sent back to a \textbf{\emph{Translator}} component, that converts it in a readable format for the \indigolog engine and passes it back to the Synchronization component. The Synchronization component combines the faulty process instance $\delta'$ with the recovery plan $\delta_a$ just built, and obtains the adapted process $\delta'' = (\delta_a;\delta')$ to be executed by the \indigolog engine. If the Planner is not able to find any recovery plan for a specific deviation, the control is given back to the process manager.

\section{The \indigolog Platform}
\label{sec:framework-indigolog_platform}

The \indigolog platform is a logic-programming implementation of \indigolog that allows the incremental execution of high-level Golog-like programs. Part of this section is a summary of the work published in~\cite{Indigolog:2009}, that we customized to our needs in collaboration with the creators of \indigolog. Although most of the code is written in vanilla Prolog, the overall architecture is written in the well-known open source \swiprolog\footnote{Available at \url{http://www.swi-prolog.org/}}~\cite{SWIPROLOG}. \swiprolog provides flexible mechanisms for interfacing with other programming languages such as Java or C, allows the development of multi-threaded applications, and provides support for socket communication and constraint solving.

Generally speaking, the \indigolog implementation provides an incremental interpreter of high-level programs as well as a framework for dealing with the real execution of these programs on concrete platforms or devices. This amounts to handling the real execution of actions on concrete devices (e.g., a PDA), the collection of sensing outcome information (e.g., retrieving some sensor's output), and the detection of exogenous events happening in the world. To that end, the architecture is modularly divided into six parts, namely, (i) the top-level main cycle; (ii) the language semantics; (iii) the temporal projector; (vi) the communication manager; (v) the set of device managers; and finally (vi) the domain application. The first four modules are completely domain independent, whereas the last two are designed for specific domain(s). The architecture is depicted in Fig.~\ref{fig:fig_framework-indigolog_architecture}.

\begin{figure}[t]
\centering{
 \includegraphics[width=0.98\columnwidth]{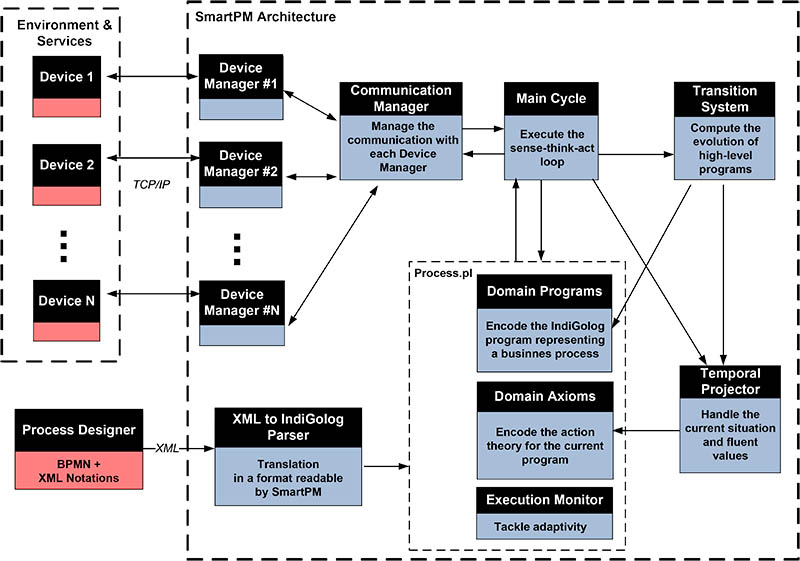}
 } \caption{The architecture of the \indigolog platform.}
 \label{fig:fig_framework-indigolog_architecture}
\end{figure}

\subsection{The Top-level Main Cycle and Language Semantics}
\label{sec:framework-indigolog_platform-topCycle}

The \indigolog\ platform codes the \textit{sense-think-act} loop well-known in the agent community \cite{kowalski1995}:
\begin{enumerate}[itemsep=1pt,parsep=1pt]
\item check for exogenous events that have occurred;
\item calculate the next program step; and
\item if the step involves an action, \textit{execute} the action.
\end{enumerate}
While executing actions, the platform keeps updated an history, which is the sequence of actions performed so far.

The main predicate of the main cycle is \texttt{indigo/2}; a goal of the form \texttt{indigo(E,H)} states that the high-level program \texttt{E} is to be executed online at history \texttt{H}.

The first thing the main cycle does is to assimilate all exogenous events that have occurred since the last execution step. After all exogenous actions have been assimilated and the history progressed as needed, the main cycle goes on to actual executing the high-level program \texttt{E}. First, if the current program to be executed is terminating in the current history, then the top-level goal \texttt{indigo/2} succeeds.
Otherwise, the interpreter checks whether the program can evolve a single step by relaying on predicate \texttt{trans/4} (explained below). If the program evolves without executing any action, then the history remains unchanged and we continue to execute the remaining program from the same history. If, however, the step involves performing an action, then this action is executed and incorporated into the current history, together with its sensing result (if any), before continuing the execution of the remaining program.

As mentioned above, the top-level loop relies on two central predicates, namely, \texttt{final/2} and \texttt{trans/4}. These predicates implement relations $Trans$ and $Final$, giving the single step semantics for each of the constructs in the language. It is convenient, however, to use an implementation of these predicates defined over histories instead of situations. Indeed, the constructs of the \indigolog\ interpreter never treat about situations but they are always assuming to work on the current situation.
So, for example, these are the corresponding clauses for sequence (represented
as a list), tests, nondeterministic choice of programs, and primitive actions:
\begin{verbatim}
final([E|L],H) :- final(E,H), final(L,H).
trans([E|L],H,E1,H1) :- final(E,H), trans(L,H,E1,H1).
trans([E|L],H,[E1|L],H1) :- trans(E,H,E1,H1).

final(ndet(E1,E2),H) :- final(E1,H) ; final(E2,H).
trans(ndet(E1,E2),H,E,H1) :- trans(E1,H,E,H1).
trans(ndet(E1,E2),H,E,H1) :- trans(E2,H,E,H1).

trans(?(P),H,[],H) :- eval(P,H,true).
trans(E,H,[],[E|H]) :- action(E), poss(E,P), eval(P,H,true).
/* Obs: no final/2 clauses for action and test programs */
\end{verbatim}
These Prolog\ clauses are almost directly ``lifted'' from the
corresponding axioms for $Trans$ and $Final$.
Predicates \texttt{action/1} and \texttt{poss/2} specify the actions of the
domain and their corresponding precondition axioms; both are defined in the
domain axiomatization (see below).
More importantly, \Prol{eval/3} is used to check the truth of a condition at
a certain history, and is provided by the temporal projector, described next.

The naive implementation of the search operator would deliberate from scratch
at every point of its incremental execution. It is clear, however, that one
could do better than that, and cache the successful plan obtained and avoid
planning in most cases:
\begin{verbatim}
final(search(E),H) :- final(E,H).
trans(search(E),H,path(E1,L),H1) :-
      trans(E,H,E1,H1), findpath(E1,H1,L).

/* findpath(E,H,L): solve (E,H) and store the path in list L  */
/* L = list of configurations (Ei,Hi) expected along the path */
findpath(E,H,[(E,H)]) :- final(E,H).
findpath(E,H,[(E,H)|L]) :- trans(E,H,E1,H1), findpath(E1,H1,L).

\end{verbatim}
So, when a search block is solved, the whole solution path found is
stored as the sequence of configurations that are expected. If the
actual configurations match, then steps are performed without any
reasoning (first \texttt{final/2} and \texttt{trans/4} clauses for
program \texttt{path(E,L)}). On the other hand, if the actual
configuration does not match the one expected next, for example,
because an exogenous action occurred and the history thus changed,
re-planning is performed to look for an alternative path (code not
shown).

\subsection{The Temporal Projector}
\label{sec:framework-indigolog_platform-temporal_projector}

The temporal projector is in charge of  maintaining the agent's beliefs about
the world and evaluating a formula relative to a history.
The projector module provides an implementation of predicate
\texttt{eval/3}: goal \texttt{eval(F,H,B)} states that formula
\texttt{F} has truth value \texttt{B}, usually \texttt{true} or
\texttt{false}, at history \texttt{H}.

Predicate \texttt{eval/3} is used to define \texttt{trans/4} and
\texttt{final/2}, as
 the legal evolutions of high-level programs may often depend on what things are
believed true or false.

We assume then that users provide definitions for each of the
following predicates for fluent $f$, action $a$, sensing result $r$,
formula $w$, and arbitrary value $v$:

\begin{description}
%\addtolength{\itemsep}{-0.5ex}
\item[\texttt{fun\_fluent(f)}]  \texttt{f} is a functional fluent;

\item[\texttt{rel\_fluent(f)}]  \texttt{f} is a functional fluent;

\item[\texttt{prim\_action(a)}]  \texttt{a} is a ground action;

\item[\texttt{init(f,v)}]  \texttt{v} is the value for fluent \texttt{f} in the starting situation;

\item[\texttt{poss(a,w)}]  it is possible to execute action \texttt{a} provided
formula \texttt{w} is known to be true;

\item[\texttt{causes\_val(a,f,v,w)}]  action \texttt{a} affects the value of
\texttt{f}
\end{description}

Formulas are represented in Prolog\ using the obvious names for the logical
operators and with all situations suppressed; histories are represented by lists
of the form $o(a,r)$ where $a$ represents an action and $r$ a sensing result.
We will not go over how formulas are recursively evaluated, but just
note that there exists a predicate \textit{(i)} $\kTrue(w,h)$ is the
main and top-level predicate and it tests if the formula $w$ is at
history $h$. Finally, the interface of the module is defined as
follows:

\begin{verbatim}
eval(F,H,true) :- kTrue(F,H).
eval(F,H,false) :- kTrue(neg(F),H).
\end{verbatim}

\subsection{The Communication Manager}
\label{subsec:framework-communication_manager}

The \indigolog system was meant for being used with concrete agent/robotic platforms, as well as with software/simulation environments. To this end, the online execution of \indigolog programs must be linked with the external world.
To this end, the \textit{communication manager} (CM) provides a complete interface with all the external devices, platforms, and real-world environments that the application needs to interact with.
In turn, each external device or platform that is expected to interact with the
application (e.g., a robot, a software module, or even a user interface) is assumed to
have a corresponding \textit{device manager}, a piece of software that is able to talk
to the actual device, instruct it to execute actions, as well as gather
information and events from it. The device manager understands the ``hardware''
of the corresponding device and provides a high-level interface to the CM.
It provides an interface for the execution of exogenous events (e.g., \aAssign, \aStart, etc.), the retrieval of sensing outcomes for actions, and the occurrence of exogenous events (e.g.,
\photoLost \ as well as \aFinish \ and \aReady).
Because actual devices are independent of the \indigolog\ application and
may be in remote locations, device managers are meant to run in different
processes and, possibly, in different machines; they communicate then with the
CM via TCP/IP sockets.
The CM, in contrasts, is part of the \indigolog agent architecture and is tightly coupled with the main cycle. Still, since the CM needs to be open to the external world regardless of any computation happening in the main cycle, the CM and the main cycle run in different (but interacting) threads, though in the same process and Prolog\ run-time engine.\footnote{\SWIProlog\ provides a clean and efficient way of programming multi-threaded Prolog\ applications.}
So, in a nutshell, the CM is responsible of executing actions in the real world and gathering information from it in the form of sensing outcome and exogenous events by communicating with the different device managers.

More concretely, given a domain high-level action (e.g.,
$\aAssign(Srvc,Id,Task,Inputs,ExOutputs)$), the CM is in charge of: \textit{(i)}
deciding which actual ``device'' should execute the action;
\textit{(ii)} ordering its execution by the device via its
corresponding device manager; and finally \textit{(iii)} collecting
the corresponding sensing outcome. To realize the execution of
actions, the CM provides an implementation of \texttt{exec/2} to the
top-level main cycle: \texttt{exec(A,S)} orders the execution of
action \texttt{A}, returning \texttt{S} as its sensing outcome.

When the system starts, the CM starts up all device managers required by the
 application and sets up communications channels to them using TCP/IP stream
 sockets. Recall that each real world device or environment has to have a
 corresponding device manager that understands it.

 After this initialization process, the CM enters into a \textit{passive mode}
 in which it asynchronously listens for messages arriving from the various
 devices managers. This passive mode should allow the top-level main cycle to
 execute without interruption until a message arrives from some device manager.

 In general, a message can be an exogenous event, a sensing outcome of some
 recently executed action, or a system message (e.g., a device being
 closed unexpectedly).  The incoming message should be read and handled in an
 appropriate way, and, in some cases, the top-level main cycle should be
 notified of the occurred event. 

\subsection{The Domain Application}
\label{sec:framework-indigolog_platform-domain_application}

From the user perspective, probably the most relevant aspect of the
architecture is the specification of the domain application.
Any domain application must provide:
\begin{enumerate}[itemsep=1pt,parsep=1pt]
\item An \emph{axiomatization of the dynamics of the world}. Such
axiomatization would depend on the temporal projector to be used.
\item One or more \textit{high-level agent programs} that will dictate the
different agent behaviors available. In general, these will be \indigolog\
 programs.
\item All the necessary \textit{execution information} to \textit{run} the
application in the external world. This amounts to specifying which external
devices the application relies on (e.g., the device manager for the ER1 robot),
and how high-level actions are actually executed on these devices (that is, by
which device each high-level action is to be executed).
Information on how to translate high-level symbolic actions and sensing results
into the device managers' low-level representations, and vice-versa, could also be provided.
\end{enumerate}

\section{The \smartpm Definition Tool}
\label{sec:approach-smartpm_definition_tool}

A process modeling language provides appropriate syntax and semantics to precisely specify business process requirements, in order to support automated process verification, validation, simulation and automation. One of the main obstacles in applying AI techniques to real problems is the difficulty to model the domains. Usually, this requires that people that have developed the AI system carry out the modeling phase since the representation depends very much on a deep knowledge of the internal working of the AI tools. On the contrary, during the realization of the \smartpm system we took care of the above aspect, and we worked on the definition of a language (named \smartML) that allow non-experts entering knowledge on processes through a user-friendly interface. Such knowledge, together with the business process to be enacted, is automatically translated in \sitcalc, \indigolog and PDDL.

In this section, we aim at describing the main components of the \smartpm Definition Tool. Specifically, we first present \smartML, our declarative language used for representing tasks and contextual data linked to a pervasive scenario, and \emph{Eclipse BPMN}, a graphical editor that allows to specify business processes using the BPMN 2.0 notation~\cite{BPMN20}. Then, we focus on the XML-to-IndiGolog Parser component, that translates a \smartML specification in a SitCalc Theory (which defines the initial situation and the set of data fluents and available actions with their pre- and post-conditions) and the BPMN control flow of the process in a \indigolog program corresponding to the process to be executed.

\subsection{The \smartML Modeling Language}
\label{sec:framework-smartpm_definition_tool-smartml}

%Our annotation syntax allows to represent planning domains and problems with the complexity of those describable in PDDL version 2.2~\cite{PDDL22}, 

The synthesis of a dynamic process requires a tight integration of process activities and contextual data in which the process is embedded in. The context is represented in the form of a \emph{Domain Theory} \textsf{D}, that involves capturing a set of tasks $t_i \in \textsf{T}$ (with $i \in 1..n$) and supporting information, such as the people/agents that may be involved in performing the process (roles or participants), the data and so forth.

Tasks are collected in a specific repository, and each task can be considered as a single step that consumes input data and produces output data. Data are represented through some ground atomic terms $v_{\emph{1}}[y_\emph{1}],v_{\emph{2}}[y_\emph{2}],...,v_{m}[y_m] \in \textsf{V}$ that range over a set of tuples (i.e., unordered sets of zero or more attributes) $y_\emph{1}, y_\emph{2},\allowbreak\dotsc\,y_m$ of \emph{data objects}, defined over some \emph{data types}. In short, a data object depicts an entity of interest. Some data types are pre-specified and used for representing the \emph{resource perspective} of the process.

For example, in our scenario we need to define data objects for representing participants (e.g., data type $Participant=\{act\emph{1},\,act\emph{2},\,act\emph{3},\,act\emph{4},\,rb\emph{1},\,rb\emph{2}\}$) and capabilities (e.g., data type $Capability=\{extinguisher,movement,\allowbreak\dotsc\,hatchet\}$).

\begin{footnotesize}
\begin{alltt}
\textbf{\underline{Resource Perspective :}}

Participant = \emph{\{act1,act2,act3,act4,rb1,rb2\}}
Capability = \emph{\{movement,hatchet,camera,gprs,extinguisher,battery,digger,powerpack\}}
\end{alltt}
\end{footnotesize}

The data types $Participant$ and $Capability$ are already pre-defined for being used in the framework, and the Process Designer is only required to provide values (i.e., to associate data objects) to the above types.

On the contrary, other data types need to be defined for describing the contextual scenario in which the process will be embedded. In our example, we may need of a data type $Location\_type = \{loc\emph{00},\,loc\emph{10},\,\allowbreak\dotsc\,\,loc\emph{33}\}$) for representing locations in the area. Moreover, a data type $Status\_type = \{ok,fire,debris\}$ may be defined for describing if a specific location is on fire or buried by debris.

\begin{footnotesize}
\begin{alltt}
\textbf{\underline{User-Defined Data Types :}}

Location_type = \emph{\{loc00,loc10,loc20,loc30,loc01,loc11,loc02,loc03,loc13,loc23,loc31,\\loc32,loc33\}}
Status_type = \emph{\{ok,fire,debris\}}
\end{alltt}
\end{footnotesize}

Under this representation, we consider possible values of a data type as constant symbols that univocally identify data objects in the scenario of interest. Each tuple $y_j$ may contain one or more data objects belonging to different data types. The domain $dom(v_{j}[y_j])$ over which a term is interpreted can be of various types:

\begin{itemize}[itemsep=1pt,parsep=1pt]
\item Boolean\_type: $dom(v_{j}[y_j])$ = \{$true,false$\};
\item Integer\_type: $dom(v_{j}[y_j]) = \{x...y\} \ s.t. \ x,y \in \mathbb{Z} \land (x <= y)$;
\item Functional: the domain contains a fixed number of data objects of a designated type.
\end{itemize}

The data type $Boolean\_type$ (and its respective ``objects'' \emph{true} and \emph{false}) and the type $Integer\_type$ are already pre-specified. However, since integer numbers form a countably infinite set, there is the need to set a lower and an upper bound to specify which finite subset of integers is relevant for the case to deal with. In the below example, we are considering the subset of the integers from 0 to 30.

\begin{footnotesize}
\begin{alltt}
\textbf{\underline{Pre-Defined Data Types :}}

Boolean_type = \emph{\{true,false\}}
Integer_type = \emph{\{0,1,2,3,4,...,30\}}
\end{alltt}
\end{footnotesize}

Terms can be used to express properties of domain objects (and relations over objects), and argument types of a term - taken from the set of data types previously defined\footnote{Predefined data types, like \emph{Boolean\_type} and \emph{Integer\_type}, can not be used as arguments of an atomic term.} - represent the finite domains over which the term is interpreted. Again, some terms are pre-specified for being used in the framework. For example, since each task has to be assigned to a participant that provides all of the skills required for executing that task, there is the need to consider the participants ``capabilities''. This can be done through a boolean term $provides[prt:Participant,cap:Capability]$ that is $true$ if the capability $cap$ is provided by $prt$ and $false$ otherwise. At the same way, a boolean term $requires[task:Task,cap:Capability]$ is needed for specifying which capabilities are required for executing a specific task\footnote{The special data type \emph{Task} will be defined later and can be used only as argument of the pre-defined term \emph{requires}.}.

\begin{footnotesize}
\begin{alltt}
\textbf{\underline{Pre-Defined Terms :}}

provides[prt:Participant,cap:Capability] = (bool:Boolean\_type)
requires[task:Task,cap:Capability] = (bool:Boolean\_type)
\end{alltt}
\end{footnotesize}

Moreover, we may need boolean terms for indicating if people have been evacuated from a location (e.g., $evacuated[loc:Location\_type] = (bool:Boolean\_type)$), integer terms for representing the battery charge level of each robot (e.g., $batteryLevel[prt:Participant] = (int:Integer\_type)$) or for indicating the number of pictures taken in a specific location (e.g., $photoTaken[loc:Location\_type] = (int:Integer\_type)$), and functional terms for recording the position of each actor (e.g., $at[prt:Participant] = (loc:Location\_type)$) and robot (e.g., $atRobot[prt:Participant] = (loc:Location\_type)$) in the area or for representing the updated situation of each location (e.g., $status[loc:Location\_type] = (st:Status\_type)$). Some terms may be used as \emph{constant values}, and in this case the set of arguments taken as input by the single term is empty. For example, the term $generalBattery[] = (int:Integer\_type)$ reflects the battery charge level stored in the power pack and used for recharging the battery of each robot (the term $batteryRecharging[] = (int:Integer\_type)$ indicates the amount of battery that is charged after each recharging action). The terms $moveStep[] = (int:Integer\_type)$ and $debrisStep[] = (int:Integer\_type)$ reflects the amount of battery consumed respectively after the robot has been moved from a location to another one and after having removed debris from a specific location.

Finally, terms can also be used for expressing static relations over objects. The term $neigh[loc\emph{1}:Location\_type,loc\emph{2}:Location\_type] = (bool:Boolean\_type)$ indicates all adjacent locations in the area (for example, $neigh[loc00,loc01] = true$), while the term $covered[loc:Location\_type]  = (bool:Boolean\_type)$ reflects the locations covered by the network provided by the fixed antenna.

For each term, the process designer has to decide which ones are \textbf{relevant for adaptation} and which ones have not to be considered for that. A term that is considered as relevant for adaptation will be continuously monitored by the PMS, and if its value becomes different from the one expected after a task execution, the PMS will provide adaptation features.

\begin{footnotesize}
\begin{alltt}

\textbf{\underline{Atomic Terms :}}

\textbf{\underline{Relevant for Adaptation :}}

\emph{at[prt:Participant] = (loc:Location_type)}
\emph{evacuated[loc:Location_type] = (bool:Boolean\_type)}
\emph{status[loc:Location_type] = (st:Status_type)}

\textbf{\underline{Not Relevant for Adaptation :}}

\emph{atRobot[prt:Participant] = (loc:Location_type)}
\emph{batteryLevel[prt:Participant] = (int:Integer\_type)}
\emph{photoTaken[loc:Location_type] = (int:Integer\_type)}
\emph{generalBattery[] = (int:Integer\_type)}
\emph{batteryRecharging[] = (int:Integer\_type)}
\emph{moveStep[] = (int:Integer\_type)}
\emph{debrisStep[] = (int:Integer\_type)}
\emph{neigh[loc1:Location\_type,loc2:Location\_type] = (bool:Boolean\_type)}
\emph{covered[loc:Location\_type]  = (bool:Boolean\_type)}
\end{alltt}
\end{footnotesize}

In addition to atomic terms, we allow the designer to define \emph{complex terms}. They are declared as basic atomic terms, with the additional specification of a well-formed first-order formula $\varphi$ that determines the truth value for the term. In our case study, we may need to express that an actor is connected to the network if s/he is in a covered location or if s/he is in a location adjacent to a location where a robot is located (and is thus connected through the robot):

\begin{footnotesize}
\begin{alltt}
isConnected(prt:Participant) \{
\quad \text{EXISTS}(l1:Location\_type,\,l2:Location\_type,\,rbt:Participant).((at(act)=l1) \text{AND}
\quad (Covered(l1) \, \text{OR} \, (atRobot(rbt)=l2 \, \text{AND} \, Neigh(l1,l2) \text{AND} 
\quad isRobotConnected(rbt))))\}.
\end{alltt}
\end{footnotesize}

The interpretation of complex terms derives from the corresponding first-order formula and is enacted at run-time by the \indigolog interpreter. Formulae can be negated (NOT) and existentially or universally quantified (EXISTS and FORALL). A complex term can not appear in task effects and can not involve recursion. On the contrary, complex terms can be used within a task precondition. The description of the complex term $isRobotConnected$ is given in the Appendix.

Concerning the definition of process tasks, the process designer is required to specify which tasks are applicable to the dynamic scenario under study. Those tasks will be stored in a specific \emph{tasks repository}, and can be used for composing the control flow of the process (cf. Section~\ref{sec:framework-smartpm_definition_tool-BPMN}) and for adaptation purposes.

\begin{footnotesize}
\begin{alltt}
\textbf{\underline{Tasks Repository :}}

Tasks = \emph{\{go, move, takephoto, evacuate, updatestatus, extinguishfire, chargebattery\}}
\end{alltt}
\end{footnotesize}

Each task is annotated with \emph{preconditions} and \emph{effects}. Preconditions are logical constraints defined as a conjunction of %relational predicates and of numeric fluents based on
atomic terms, and they can be used to constrain the task assignment and must be satisfied before the task is applied, while effects establish the outcome of a task after its execution.
\begin{mydef}
\label{task_def_smartml}
A \emph{task} $t[x] \in$ \emph{\textsf{T}} is a tuple $t = (Act_t,x,Pre_t,Eff_t)$ that
consists of:
\begin{itemize}[itemsep=1pt,parsep=1pt]
\item the name $Act_t$ of the action involved in the enactment of the task (it often coincides with the task itself);
\item a tuple of data objects $x$ as input parameters;
\item a set of preconditions $Pre_t$, represented as the conjunction of $k$ atomic conditions defined over some specific terms, $Pre_t = \bigwedge_{l \in 1..k} pre_{t_{l}}$. Each $pre_{t_{l}}$ can be represented as \{$v_j[y_j] \ \textbf{op} \ \textbf{expr}$\}, where:
    \begin{itemize}
    \item $v_j[y_j] \in \emph{\textsf{V}}$ is an atomic term, with $y_j \subseteq x$, i.e., admissible data objects for $y_j$ need to be defined as task input parameters;
    \item An $\textbf{expr}$ can be a \underline{boolean value} (if $v_j$ is a boolean term); an \underline{input} \underline{parameter} identified by a data object (if $v_j$ is a functional term); an \underline{integer number} or an \underline{expression} involving integer numbers and/or terms, combined with the arithmetic operators \{+,-\} (if $v_j$ is a integer term);
    %\item $\textbf{op} \in \{<,>,==,\leq,\geq\}$ is a relational operator.
    \item The condition $\textbf{op}$ can be expressed as the equality ($==$) between boolean terms or functional terms and an admissible \textbf{expr}. On the contrary, if $v_j$ is a integer term, it is possible to define the $\textbf{op}$ condition as an expression that make use of relational binary comparison operators ($<, >, =, \leq, \geq$) and involve integer numbers and/or integer terms in the \textbf{expr} field.
    \end{itemize}
\item a set of deterministic effects $Eff_t$, represented as the conjunction of $h$ atomic conditions defined over some specific terms, $Eff_t = \bigwedge_{l \in 1..h} eff_{t_{l}}$. Each $eff_{t_{l}}$ ($with \ l \in 1..h$) can be represented as \{$v_j[y_j] \ \textbf{op} \ \textbf{expr}$\}, where:
    \begin{itemize}
    \item $v_j[y_j] \in \emph{\textsf{V}}$ and $\textbf{expr}$ are defined as for preconditions.
    %\item $\textbf{op} \in \{$=$,$+=$,$-=$\}$ is used for assigning  ($=$) to a term a value consistent with the \textbf{expr} field or for incrementing ($+=$) or decrementing ($-=$) an integer term by that value.
    \item The condition $\textbf{op}$ may include assignment expressions to update the values of integer terms. A numeric effect consists of an assignment operator, the integer term to be updated and a integer number or a numeric expression. Assignment operators include \myi direct assignment ($=$), to assign to a integer term a value defined by an integer number; \myii relative assignments, which can be used to increase ($+$=) or decrease ($-$=) the value of a integer term (additive assignment effects). %, as well as to scale-up ($*$=) or scale-down ($/=$) the value of a integer variable (scaling assignment effects). Similarly, it is possible to update the value of boolean terms or functional terms with an assignment operator, restricted to direct assignment (=), to assign respectively a boolean value to a boolean term or a task input parameter identified by a data object to a functional term.\\
    \end{itemize}
\end{itemize}
\end{mydef}

Note that if no preconditions are specified, then the task is always executable. Moreover, the process designer is required to make explicit if a task effect can be considered as \emph{supposed} or \emph{automatic}. A \emph{supposed effect} indicates that the participant executing the task has to physically return an outcome for the supposed effect, that can be or not can be equal to the one declared during the task definition. If a supposed effect involves a relevant term, it is clear that the outcome returned assumes a great value for monitoring purposes. Otherwise, an effect can be flagged as \emph{automatic}, meaning that when the task terminates, the effect is automatically applied without the need to consider the task outcomes.

As an example, the task \aGo \ involves two input parameters $from$ and $to$ of type $Location\_type$, representing the starting and arrival locations. The participant that will be assigned for executing the task $Go$ is indicated with $PRT$. An instance of this task can be executed only if $PRT$ is currently at the starting location $from$ and provides the required capabilities for executing the task $go$. As a consequence of task execution, the actor moves from the starting to the arrival location, and this is reflected by assigning to the functional term $at[actor]$ the value $to$ in the effect. The fact that the effect of the task \aGo \ is $supposed$ \ means that after the execution of the task the participant will return the real outcome indicating her/his final position. Each task $t_i \in \textsf{T}$ together with its preconditions, effects and parameters can be represented as an XML annotation:

%\footnote{(Non c'è bisogno di specificare nelle precondizioni la capabilities richieste dal partecipante PRT per eseguire un particolare task.Di questo aspetto se ne occupa automaticamente l'engine.).}

\begin{footnotesize}
\begin{alltt}
\textbf{\underline{Description of the task \aGo :}}

<task>
    <name>go</name>
    <parameters>
        <arg>from - Location_type</arg>
        <arg>to - Location_type</arg>
    </parameters>
    <precondition>at[PRT] == from AND isConnected[PRT] == true</precondition>
    <effects>
        <supposed>at[PRT] = to</supposed>
    </effects>
</task>
\end{alltt}
\end{footnotesize}

Let us observe now the task \aMove \. Like the task \aGo \, it involves two input parameters $from$ and $to$ of type $Location\_type$, that represent the starting and arrival locations given in input to the robot. We want to underline that the participant $PRT$ which will be assigned to execute a \aMove \ task will be for sure a robot. This can be inferred by analyzing the pre-defined terms \emph{provides} and \emph{requires}, that will be used by the PMS for understanding if a specific participant is capable to execute a specific task.

\begin{footnotesize}
\begin{alltt}
\textbf{\underline{Description of the task \aGo :}}

<task>
    <name>move</name>
    <parameters>
        <arg>from - Location_type</arg>
        <arg>to - Location_type</arg>
    </parameters>
    <precondition>atRobot[PRT] == from AND batteryLevel[PRT] >= moveStep[] AND
                  isRobotConnected[PRT] == true
    </precondition>
    <effects>
        <supposed>atRobot[PRT] = to</supposed>
        <automatic>batteryLevel[PRT] -= moveStep[]</automatic>
    </effects>
</task>
\end{alltt}
\end{footnotesize}

An instance of the task \aMove \ can be executed only if the robot $PRT$ is currently at the starting location $from$ and provides enough battery charge for executing the movement. As a consequence of task execution, the robot moves from the starting to the arrival location, and this is reflected by assigning to the functional term $atRobot[PRT]$ the value $to$ in the effect. This first effect of \aMove \ has been flagged as $supposed$, meaning that after the execution of the task the robot will return the real outcome indicating its final position. However, since $atRobot$ is not considered as a relevant term (see above), if the final position of the robot will differ with the one declared at design time, no adaptation is required. The second effect of \aMove \ is \emph{automatic}, and states that after the execution of a \aMove \ task the battery level of the robot has to be decreased of a fixed quantity, corresponding to moveStep[].

We can also represents exogenous events with \smartML. They reflect possible external events coming from the environment that can modify asynchronously atomic terms at run-time. In our case study, we can deal with three different exogenous events:

\begin{footnotesize}
\begin{alltt}
\textbf{\underline{Exogenous Events :}}

Ex\_events = \emph{\{\photoLost, \fireRisk, \rockSlide\}}
\end{alltt}
\end{footnotesize}

The definition of an exogenous event is similar to the classical task definition provided by \smartML. However, for defining an exogenous event there is no need to specify any precondition, and effects can only be considered as \emph{automatic} (i.e., they are automatically applied to the involved terms when the exogenous event is catched). For example, the exogenous event $\rockSlide(loc)$ alerts about a rock slide collapsed in location $loc$, and its effect concerns to modify the value of the atomic term $[loc]$ to the value '\emph{debris}'.

\begin{footnotesize}
\begin{alltt}
\textbf{\underline{Description of the exogenous event \rockSlide :}}

<ex-event>
    <name>rockSlide</name>
    <parameters>
        <arg>loc - Location_type</arg>
    </parameters>
    <effects>
        <automatic>status[loc] = debris</automatic>
    </effects>
</ex-event>
\end{alltt}
\end{footnotesize}

Finally, we want to underline that the \smartML syntax allows to represent planning domains and problems with the complexity of those describable in PDDL version 2.2~\cite{PDDL22}, that is characterized for enabling the representation of realistic planning domains. The translation algorithms are described in Section~\ref{sec:framework-builders}.

\subsection{Defining Processes in \smartpm through BPMN}
\label{sec:framework-smartpm_definition_tool-BPMN}

%In recent years, there has been a significant increase in the availability of tools to aid in the creation and definition of processes (primarily for business applications). These tools allow the user to describe a process, usually through a mixture of graphical and textual building blocks. Examples of such systems include

%To model a process that has varying degrees of structure, we need a toolkit that supports the nature of the process. BPM and BPMN excel at the more structured aspects, supporting linear flow, predictable events, gateways to make decisions with finite and foreseeable outcomes, and validation to apply structure to data. When process flow devolves into knowledge work or an unforeseen exception occurs, what began as a structured process may suddenly turn into case management. Ideally, the "case" is managed in a standardized way within the system and does not become a support ticket with the IT department to remodel the process.

Starting from a domain theory $\textsf{D}$, the control flow of a dynamic process in \smartpm can be defined through the BPMN notation. The Business Process Modeling Notation (BPMN) was released to the public in May 2004 by the BPMI Notation Working Group and was adopted as OMG standard\footnote{\url{http://www.omg.org/spec/BPMN/2.0/}} for business process modeling in February 2006. BPMN provides a graphical notation for specifying business processes based on a flowcharting technique similar to activity diagrams from UML. In January 2001, the version 2.0 of the language has been released~\cite{BPMN20}. If compared to the previous specifications of the language, which provided only verbal descriptions of the graphic notations elements and modeling rules, BPMN 2.0 received a formal definition in the form of a meta-model, that defines the abstract syntax and semantics of the modeling constructs. The BPMN specification also provides a mapping between the graphic elements of the notation and the underlying constructs of BPEL (Business Process Execution Language)~\cite{BPMNtoBPEL}.

Formally, in \smartpm we define a dynamic process as as a directed graph consisting of tasks, gateways, events and transitions between them.
\begin{mydef}
Given a domain theory \emph{\textsf{D}} and a set of tasks \emph{\textsf{T}}, a \emph{dynamic process} $\emph{\textsf{P}}$ is a tuple (N,L) where:
\begin{itemize}[itemsep=1pt,parsep=1pt]
\item N  = $T \cup E \cup W \cup X$ is a finite set of nodes, such that :
\begin{itemize}[itemsep=1pt,parsep=1pt]
\item T is a set of task instances, i.e., occurrences of a specific task $t \in \emph{\textsf{T}}$ in the range of the dynamic process; %Formally, a task instance is a tuple $tn = (t,Id_{tn})$ where $Id_{tn}$ is the unique identifier of the task instance $tn$ and $t \in \emph{\textsf{T}}$ (cf. Definition~\ref{taskdef}) is the ``task type'' it belongs to. For each pair of task instances $tn_1$ and $tn_2$ that have the same task identifier $Id_{t}$, then $Id_{tn_{1}} \neq Id_{tn_{2}}$ (i.e., $tn_1$ and $tn_2$ are different occurrences of the same task $t$). In the rest of the paper, even if we prefer speak about ``tasks'' for indicating activities occurring in the template, it must be clear that we refer to ``occurrences of tasks'';
\item E is a finite set of events, that consists of a single start event $\ocircle$ and a single end event $\odot$;
\item W = $W_{PS} \cup W_{PJ}$ is a finite set of parallel gateways, represented in the control flow with the $\diamond$ shape with a ``plus'' marker inside.
\item X = $X_{ES} \cup X_{EJ}$ is a finite set of exclusive gateways, represented in the control flow with the $\diamond$ shape with a ``X'' marker inside.
\end{itemize}
\item L = $L_T \cup L_E \cup L_{W_{PS}} \cup L_{W_{PJ}} \cup L_{X_{ES}} \cup L_{X_{EJ}}$ is a finite set of transitions connecting events, task instances and gateways:
\begin{itemize}[itemsep=1pt,parsep=1pt]
\item $L_T : T \rightarrow (T \cup W_{PS} \cup W_{PJ} \cup X_{ES} \cup X_{EJ} \cup {\odot})$
\item $L_E : \ocircle \rightarrow (T \cup W_{PS} \cup X_{ES} \cup {\odot})$
\item $L_{W_{PS}} : W_{PS} \rightarrow 2^T$
\item $L_{W_{PJ}} : W_{PJ} \rightarrow (T \cup W_{PS} \cup X_{ES} \cup {\odot})$
\item $L_{X_{ES}} : X_{ES} \rightarrow 2^T$
\item $L_{X_{EJ}} : X_{EJ} \rightarrow (T \cup X_{ES} \cup W_{PS} \cup {\odot})$
\end{itemize}
\end{itemize}
\end{mydef}

Note that the constructs used for defining a dynamic process are basically a subset of the ones definable through the BPMN notation. The intuitive meaning of these structures should be clear: an execution of the process starts at $\ocircle$ and ends at $\odot$; a \emph{task} is an atomic activity executed by the process; \emph{parallel splits} $W_{PS}$ open parallel parts of the process, whereas \emph{parallel joins} $W_{PJ}$ re-unite parallel branches; exclusive gateways are used to create alternative flows in a process where only one of the path can be taken on the basis of  a given condition. \emph{Transitions} are binary relations describing in which order the flow objects (tasks, events and gateways) have to be performed, and determine the \emph{control flow} of the dynamic process.

In the \smartpm system, we use the Eclipse BPMN editor\footnote{\url{http://eclipse.org/bpmn2-modeler/}} for defining the control flow of a dynamic process. The BPMN editor provides visual, graphical editing and creation of BPMN 2.0 business processes. We have customized the tool for our needs, by adding the possibility to link domain theories written through \smartML to the control flow of the process.

Once a process is ready for being executed (i.e., the process designer has completed the definition of the domain theory and of the process control flow), the last step before sending the dynamic process to the \indigolog engine for its enactment consists in instantiating the domain theory with a \emph{starting condition}, which reflects different assignment of values to the atomic terms. We assume complete information about the starting condition, since the logical framework described in Chapter~\ref{ch:approach} works under the \emph{closed-world assumption}~\cite{Reiter@NMR1987}. Basically, this means we force the process designer to instantiate every atomic term with an admissible value that represent what is known in the starting state about the dynamic scenario. Specifically, the starting condition is a conjunction $\{v_{\emph{1}}[y_\emph{1}] == val_\emph{1} \land v_{\emph{2}}[y_\emph{2}] = val_\emph{2}... \land v_{j}[y_j] == val_j\}$, where $val_j$ (with $j \in 1..m$) represents the j-th value assigned to the j-th atomic term.

\subsection{The XML-to-IndiGolog Parser}
\label{sec:framework-smartpm_definition_tool-xml-to-indigolog}

From a technical point of view, a dynamic process built through Eclipse BPMN and annotated with \smartML is basically a file saved in the XML format that is taken as input from the XML-to-IndiGolog Parser component, which translates the process specification in situation calculus and \indigolog readable formats, in order to make the process executable by the \indigolog engine.

We show the rules used for translating the \smartML specification of a dynamic process in a SitCalc Theory:
\begin{itemize}[itemsep=1pt,parsep=1pt]
\item Each data type defined in \smartML, together with its associated data objects, is converted in a corresponding situation calculus predicate. For example, the pre-defined data type \emph{Participant} corresponds to a predicate \emph{Service} in situation calculus. The same is valid for the user-defined data types $Location\_type$ and $Status\_type$, which are converted in the corresponding situation calculus versions, and for the all the tasks stored in the \smartML tasks repository. A situation calculus predicate $Task$ indicates which tasks will be considered by the PMS for the process execution and adaptation (if required).
\item The \smartML pre-defined terms, such as $provides$ and $requires$, have again a corresponding translation as situation calculus predicates.
\item Each atomic term will result in a situation calculus data fluent. For all those terms that are flagged as \emph{relevant}, an expected fluent is also provided. For example, the \emph{non relevant} term $atRobot[prt:Participant] = (loc:Location\_type)$ will be translated in a situation calculus data fluent $atRobot_{\varphi}(prt,s)$, where $prt$ is the participant (i.e., the robot) associated to the fluent and $s$ is a situation term. A non relevant data fluent is not monitored for adaptation purposes. On the contrary, the \emph{relevant} term $at[prt:Participant] = (loc:Location\_type)$ will be converted in a data fluent $at_{\varphi}(prt,s)$ and in an expected fluent $at_{\Psi}(prt,s)$, that will respectively record the physical and expected positions of $prt$ in every situation $s$, and are continuously monitored by the PMS.
\item Tasks definitions are crucial for expressing situation calculus preconditions axioms and successor state axioms for each fluent. Let us consider, for example, the definition of the task \aGo.

    \begin{footnotesize}
    \begin{alltt}
    <task>
        <name>go</name>
        <parameters>
            <arg>from - Location_type</arg>
            <arg>to - Location_type</arg>
        </parameters>
        <precondition>at[PRT] == from AND
                      isConnected[PRT] == true
        </precondition>
        <effects>
        <supposed>at[PRT] = to</supposed>
        </effects>
    </task>
    \end{alltt}
    \end{footnotesize}

First of all, the parser analyzes the structure of the XML element \texttt{<task>} for defining which admissible workitems including the task \aGo \ can be executed by the \indigolog PMS. By analyzing the structure of \aGo, it is clear that it will require two inputs (specifically, two \emph{Location\_type} elements) and it will return only an outcome, that is again a \emph{Location\_type} element. This will led the parser to generate the following code:

\begin{equation*}
\begin{array}{l}
\forall \ (from,to,id) \ s.t. \ Identifier(id) \ \land \ Location\_type(from) \land \\
\qquad Location\_type(to) \Rightarrow workitem(\aGo,id,[from,to],[to]).
\end{array}
\end{equation*}

Then, the parser builds the precondition axiom for the task \aGo \ by scanning the content of the XML element \texttt{<precondition>}:

\begin{equation*}
\begin{array}{l}
\Poss(\aAssign(prt,id,\aGo,[from,to],[to]),s) \Leftrightarrow \\
\quad service(prt) \ \land \ workitem(\aGo,id,[from,to],[to])\ \land \\
\quad \fAt_{\varphi}(prt,s) = from\ \land \isConnected(prt,s).
\end{array}
\end{equation*}
Basically, the precondition axiom for the task \aGo \ states that \aGo \ can be assigned to a service \emph{prt} in situation $s$ if and only if \emph{prt} is located in $from$ and is connected to the network. Moreover, the precondition axiom verifies that by associating at run-time a whatever identifier to the task \aGo \ with inputs \emph{from} and \emph{to} and expected effect equal to \emph{to}, it is possible to define an admissible workitem.

Finally, the parser analyzes all the elements nested in the XML element \texttt{<effects>}. For each \emph{supposed} effect, if the atomic term involved in the effect has been flagged as \emph{relevant}, then the parser generates a data fluent for capturing the physical outcome of the task and an expected fluent for recording the expected value after task execution. In the case of the task \aGo, the following fluents will be generated:
\begin{equation*}
\begin{array}{l}
\fAt_\varphi(prt,do(a,s)) = loc \equiv{} \\
\quad \big(\exists \ loc \ s.t. \ Location\_type(loc) \ \land \\
\quad a=\aRelease(prt,id,\aGo,[from,to],[to],[loc])\big) \ \lor{} \\
\quad\big(\fAt_\varphi(prt,s) = loc\ \land \\
\quad \neg \exists \ loc' \ s.t. \ Location\_type(loc') \ \land\\
\quad a=\aRelease(prt,id,\aGo,[from,to],[to],[loc']) \land (loc' \neq loc)  \big).
\end{array}
\end{equation*}
\begin{equation*}
\begin{array}{l}
\fAt_\Psi(prt,do(a,s)) = to \equiv{} \\
\quad \big(\exists \ loc \ s.t. \ Location\_type(loc) \ \land \\
\quad a=\aRelease(prt,id,\aGo,[from,to],[to],[loc])\big) \ \lor{} \\
\quad\big(\fAt_\Psi(prt,s) = to\ \land \\
\quad \neg \exists \ to',loc \ s.t. \ Location\_type(loc) \ \land \ Location\_type(to') \ \land \\
\quad a=\aRelease(prt,id,\aGo,[from,to],[to'],[loc]) \land (to' \neq to)  \big).
\end{array}
\end{equation*}
When $prt$ terminates the execution of the task \aGo, its final position $loc$ in situation $s$ is stored in a specific data fluent $\fAt_\varphi$, while the expected fluent $\fAt_\psi$ will assume the desired value \emph{to} without considering the real outcome of the task.

If a task description includes \emph{automatic effects} or \emph{supposed effects} involving atomic terms flagged as \emph{non relevant}, the parser will generate the successor state axioms only for the data fluents associated to the terms. For example, the definition of the task \aMove \ (that instructs a robot to move from a position to another) involves these two kinds of effects (cf. Section~\ref{sec:framework-smartpm_definition_tool-smartml}), meaning that the parser will generate the successor state axioms for $\fAtRobot_\varphi$ (cf. Equation~\ref{eq:eq_atRobot}) and for $\fBatteryLevel_\varphi$ (cf. Equation~\ref{eq:eq_BatteryLevel}).
\item Each exogenous event formalized through \smartML will be translated in a corresponding situation calculus exogenous event. Specifically, the parser analyzes the effects provided by the execution of the exogenous event and applies it to the involved data fluent. Remember that exogenous events can only change values of data fluents, by leaving the expected reality untouched. Sometimes the effects of an exogenous event affect a data fluent for which there exists another task whose execution can modify the value of the same fluent. In such a case, the parser needs to combine the effects provided by an exogenous event with the effects returned after a task execution, by generating a single successor state axiom for the data fluent involved. For example, let us consider the \smartML definition of the task \aTakePhoto \ and of the exogenous event \photoLost:

    \begin{footnotesize}
    \begin{alltt}
    <task>
        <name>takephoto</name>
        <parameters>
            <arg>loc - Location_type</arg>
        </parameters>
        <precondition>at[PRT] == loc AND isConnected[PRT] == true</precondition>
        <effects>
            <supposed>photoTaken[loc] = true</supposed>
        </effects>
    </task>

    <ex-event>
        <name>photoLost</name>
        <parameters>
        <arg>loc - Location_type</arg>
        </parameters>
        <effects>
        <automatic>photoTaken[loc] = false</automatic>
        </effects>
    </ex-event>
    \end{alltt}
    \end{footnotesize}
    The task \aTakePhoto \ is used for instructing a service (specifically, a human actor) $PRT$ to move in a location \emph{loc} in order to take some pictures. When $PRT$ terminates the execution of the task \aTakePhoto, the physical outcome of the task is stored in the atomic term $photoTaken[loc]$. The same happens when an exogenous event \photoLost \ is captured by the PMS, but in this last case the atomic term is asynchronously switched to the \emph{false} value. The parser, which has already generated a data fluent $\fPhotoTaken_{\varphi}$, is in charge to associate a single successor state axiom that reflect every possibility of changing the fluent:
    \begin{equation*}
    \begin{array}{l}
    \fPhotoTaken_\varphi(loc,do(a,s)) = true \equiv{} \\
    \quad \big(\exists \ q_j \ s.t. \ a=\aRelease(prt,id,\aTakePhoto,[loc],[true],[q_j]) \ \land (q_j = true)\big) \lor{} \\
    \quad\big(\fPhotoTaken_\varphi(loc,s) = true \ \land \\
    \quad \big((a\neq\photoLost(loc)) \ \lor \\
    \quad \neg \exists \ q_j' \ s.t. \ (a=\aRelease(prt,id,\aTakePhoto,[loc],[true],[q_j']) \land (q_j' = false))\big).
    \end{array}
    \end{equation*}

    Consider also that the effect of \aTakePhoto \ is a \emph{supposed} one, and the involved term $photoTaken$ has been flagged as $relevant$. Therefore, the parser generates an expected fluent $\fPhotoTaken_{\Psi}$ where recording the desired outcome of \aTakePhoto.

    \begin{equation*}
    \begin{array}{l}
    \fPhotoTaken_\Psi(loc,do(a,s)) = true \equiv{} \\
    \quad \exists \ q_j \ s.t. \ a=\aRelease(prt,id,\aTakePhoto,[loc],[true],[q_j]) \lor{} \\
    \quad\big(\fPhotoTaken_\Psi(loc,s) = true \ \land \\
    \quad \big(\neg \exists \ q_j' \ s.t. \ (a=\aRelease(prt,id,\aTakePhoto,[loc],[false],[q_j'])\big).
    \end{array}
    \end{equation*}

    Clearly the successor state axiom of $\fPhotoTaken_\Psi$ is not affected by exogenous events, since they are though to modify only the physical reality.
    \item Complex terms are basically first-order formula evaluated on the atomic terms. They are directly translated in situation calculus abbreviations.
    \item The initial condition used for instantiating any value of atomic terms in the domain theory corresponds with the initial situation $S_0$ in \indigolog.
\end{itemize}

The XML-to-IndiGolog Parser is also able to convert the control flow of a dynamic process, defined through the BPMN language, in a valid \indigolog based process. The conversion is straightforward, and concerns to map every possible BPMN construct used for formalizing processes in \smartpm (cf. Section \ref{sec:framework-smartpm_definition_tool-BPMN} for the list of admitted constructs) in a valid \indigolog construct, as shown in Table~\ref{tab:indigolog_constructs}. We omit here the full translation algorithm, but we refer the reader to~\cite{SMARTPM2011} for a similar complete mapping between \indigolog and WS-BPEL.

\section{Building the Planning Domain and the Planning Problem}
\label{sec:framework-builders}

In order to exploit our planning-based recovery mechanism, every task/annotation/property associated to a dynamic process needs to be translated in PDDL. A PDDL definition consists of two parts: the domain and the problem definition. The planning domain is built when a dynamic process is ready to be executed, i.e., the process designer has completed the definition of the control flow of the dynamic process with an associated domain theory represented in the \smartML language (cf. Section~\ref{sec:framework-smartpm_definition_tool-smartml}). Specifically, the \smartML domain theory is given as input to a software module named \emph{Domain Builder}, that is in charge to convert such specification in a planning domain that complies with the PDDL 2.2~\cite{PDDL22} language, characterized for enabling the representation of realistic planning domains.

Basically, the Domain Builder starts analyzing the definition of atomic/complex terms and data types as formalized in the previous sections, and by making explicit the \emph{actions} associated to each annotated task stored in the repository linked to the dynamic process under execution, together with the associated pre-conditions, effects and input parameters. Basically, the planning domain describes how terms may vary after a task execution, and reflects the contextual properties constraining the execution of tasks stored in a specific tasks repository.

In the following, we discuss how a domain theory defined through \smartML can be translated into a PDDL file representing the planning domain:
\begin{itemize}[itemsep=1pt,parsep=1pt]
\item each data type corresponds to an \emph{object type} in the planning domain;
\item boolean terms and complex terms have a straightforward representation as \emph{relational predicates} and \emph{derived predicates} (to model the dependency of given facts from other facts) in the planning domain;
\item integer terms correspond to PDDL \emph{numeric fluents}, and are used for modeling non-boolean resources (e.g., the battery level of a robot) in the planning domain;
\item functional terms do not have a direct representation in PDDL 2.2, but may be replaced as relational predicates. Since an object function $f:Object^n \to Object$ map tuples of objects with domain types $D^n$ to objects with co-domain type $U$, it may be coded in the planning domain as a relational predicate \emph{P} of type $(D^n,U)$;
    % intuitively, if $n = 1$, \emph{P(x,y)} means \emph{f(x) = y};
\item a given task, together with the associated pre-conditions and effects and input parameters, is translated in a PDDL \emph{action schema}. An action schema describes how the relational predicates and/or numeric fluents may vary after the action execution. In the following, it is shown the PDDL representation of the task \aGo:

    \begin{footnotesize}\begin{alltt}
    (:action go
     :parameters (?x - service ?from - location_type ?to - location_type)
     :precondition (and (provides ?x movement) (free ?x)
                        (at ?x ?from) (isConnected ?x))
     :effect (and (not (at ?x ?from)) (at ?x ?to))
    )
    \end{alltt}\end{footnotesize}

    This task can be executed only if the participant denoted with \emph{x} is free and is currently located in his/her starting location \emph{from} and is connected to the network. The desired effect turns the value of the predicate $at(x,to)$ to \emph{true} and $at(x,from)$ to \emph{false}, meaning the actor moved in a new location. Let us note that respect the definition of the task \aGo \ given with \smartML, the PDDL version of the task requires to explicitly specify in the preconditions the capabilities required by a generic service $x$ for executing the specific task and the information about the availability of the service. Moreover, the list of arguments is augmented with the information related to the service $x$ that will execute the task. These information are crucial for allowing the planner to schedule correctly the task during the building of the plan.
\end{itemize}

When something relevant happens during the process execution, a new planning problem is built at run-time on the same planning domain defined above, through the description of an initial state and of a desired goal. As thoroughly discussed in Chapter~\ref{ch:approach}, process adaptation is required when it is sensed a misalignment between $\Phi(s)$ (the ``wrong'' physical reality) and $\Psi(s)$ (the ``safe'' expected reality). The \emph{Problem Builder} takes as input from the Synchronizer the two realities and converts them in a PDDL planning problem. Specifically:
\begin{itemize}[itemsep=1pt,parsep=1pt]
\item for each data type defined in the SitCalc Theory, all the possible object instances of that particular data type are explicitly instantiated as \emph{constant symbols} in the planning problem (e.g., the fact that $act1$, $act2$, $act3$, $act4$, $rb1$ and $rb2$ are \emph{Services}, $loc00$, ..., $loc33$ are \emph{Location\_type}, etc.);
\item Basically, the initial state of the planning problem corresponds to the physical reality that need to be adjusted, plus some information for helping the planner to build correctly the recovery procedure. The initial state is composed by the conjunction of all data fluents defined in the SitCalc Theory (that correspond to PDDL relational predicates or numeric fluents in the planning domain), and the information concerning which services are possibly free for executing the tasks in the recovery plan (i.e., if $s$ is the situation where the deviation has been sensed, for each service $srv$ we need to make explicit which fluent $\fFree(srv,s)$ holds). Moreover, for each service $srv$ and capability $c$ defined in the SitCalc theory, it is required the list of the predicates $provides(srv,c)$ equal to \emph{true}.
\item the \emph{goal state} of the planning problem is a logical expression over facts. In our approach, the goal state is built in order to reflect a safe state to be reached after the execution of a recovery procedure. Therefore, it is composed by the conjunction of all the expected fluents (and their corresponding values) in situation $s$.
   \end{itemize}

Appendix~\ref{appendix_A} shows the complete PDDL code that describes a planning domain and a planning problem for solving the exception depicted in Fig.~\ref{fig:fig_introduction-case_study-context_2} and~\ref{fig:fig_introduction-case_study-context_3}.

Finally, we want to briefly introduce the role of the \emph{Translator} component. Basically, it waits for a plan to be synthesized and, when the planner produces the recovery plan, the Translator converts it in a \indigolog procedure that is executable by the \indigolog PMS. The translation is straightforward, since the planner returns a sequence of workitems $w_1,..,w_n$ (i.e., the tasks to be executed together with their input/outputs and the information about the services that will execute them), corresponding to an \indigolog procedure $\delta_a=(\delta_{a_{1}};...;\delta_{a_{n}})$. This procedure is sent to the Synchronizer component, that will build the adapted process and sends it back to the PMS.

%Space constraints inhibit the presentation of the full technical details, which are accessible to the reader as an online appendix at the URL: \url{http://dl.dropbox.com/u/8845400/Planlets__BPM2012__OnlineAppendix.pdf}\footnote{If the paper will be accepted, a technical report will be published as accompanying the main paper published in the proceedings, instead of such an online appendix.}.

%corresponding to the contextual scenario presented in our case study,  

\section{\smartpm in Action}
\label{sec:framework-screenshots}

In this section, we show some screenshot of the \smartpm system while executing the process defined in our case study described in Section~\ref{sec:introduction-case_study}. In Fig.~\ref{fig:eclipse_bpmn} it is shown the main window of the Eclipse BPMN modeler, that we use for building the control flow of our dynamic processes.

\begin{figure}[t]
\centering{
 \includegraphics[width=0.95\columnwidth]{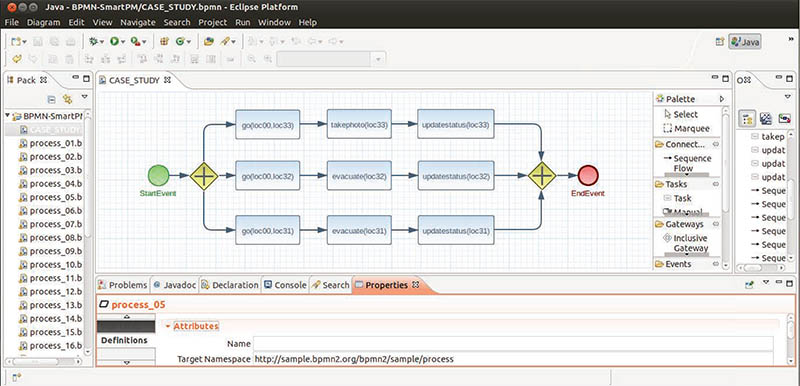}
 } \caption{The Eclipse BPMN modeler.}
 \label{fig:eclipse_bpmn}
\end{figure}

A Java module named the \emph{SmartML Editor} (cf. Fig.~\ref{fig:smartml_editor}(a)) is used for building domain theories based on the \smartML specification. The editor allows to load previously saved specifications to be attached to a BPMN process, and to build new specifications customized on the basis of the designer needs. An \emph{Edit} menu (cf. Fig.~\ref{fig:smartml_editor}(b)) provides some guided masks for driving the user in the definition of the various objects of the specification. Basically, the editor drives the user inputs in order to facilitate the creation of a \smartML valid domain theory. The editor integrates also a \emph{syntax checker} that verifies on the fly if the current specification is compliant with the \smartML formal one defined in Section~\ref{sec:framework-smartpm_definition_tool-smartml}.

\begin{figure}[h]
\centering{
 \includegraphics[width=0.91\columnwidth]{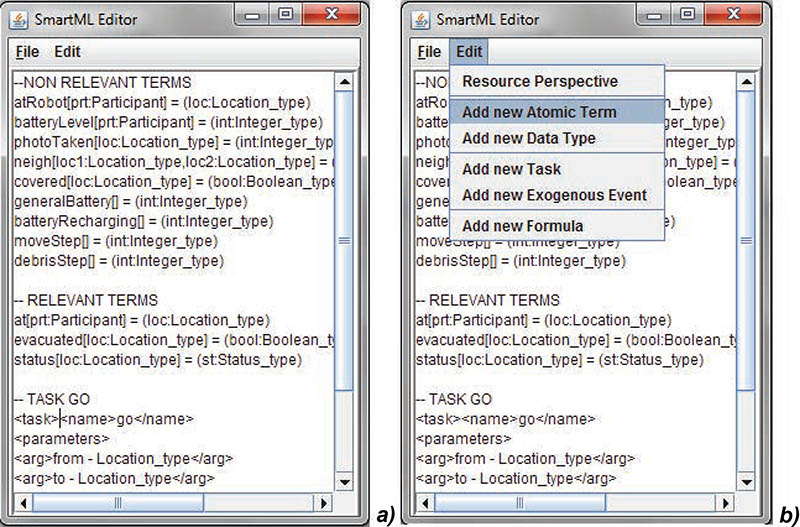}
 } \caption{The \smartML editor (a) and the menu for inserting new terms/tasks/objects/formulae to the specification (b).}
 \label{fig:smartml_editor}
\end{figure}

Once a process is ready for being executed, a new window is opened for allowing the process designer to instantiate the starting configuration for the atomic terms in the domain theory and for binding actual input/output parameters with data objects provided with \smartML. The \smartML specification plus the BPMN process is now passed to the XML-to-IndiGolog parser component, that build an \indigolog program and passes it to the \indigolog PMS for the execution. In parallel, the Domain Builder component builds the PDDL planning domain to be used for a possible future adaptation of the process.

\begin{figure}[h]
\centering{
 \includegraphics[width=0.99\columnwidth]{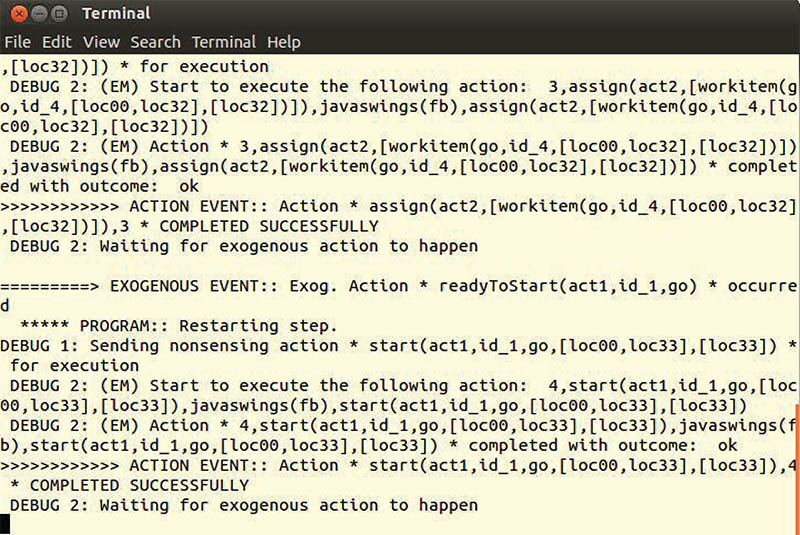}
 } \caption{The main window of the \indigolog PMS}
 \label{fig:indigolog}
\end{figure}

Fig.~\ref{fig:indigolog} depicts the main window of the \indigolog PMS showing the log of all actions exchanged between the PMS and services. In the screenshot, we can identify an \aAssign \ action that assigns to service $act2$ the task $\aGo(id\_4,[loc00,loc32],[loc32])$.

In Fig.~\ref{fig:dev_manager} we show some screenshots representing our Task Handler for executing tasks. The Task Handler is implemented in Java, by the use of standard Java 2D graphical libraries. Specifically, when the \indigolog PMS assigns a task to a service, this event is notified to the selected service through a popup window (cf. Fig.~\ref{fig:dev_manager}(a)). When a service is ready to start a task, it pushes the button \emph{Start It}, and a \aReady \ action is sent back to the \indigolog PMS. In Fig.~\ref{fig:indigolog} we can also see the presence of a row \texttt{========> EXOGENOUS EVENT}, that represents the fact that the PMS has captured the \aReady \ action sent by the service. We have to underline that the actions sent by services to PMS (i.e., the \aReady \ and the \aFinish \ actions) are recognized by the \indigolog \ PMS as ``good'' exogenous events. In response, the \indigolog PMS can command the service to start the task execution through a \aStart \ action (cf. the bottom part of Fig.~\ref{fig:indigolog}).

Now, let us suppose that service \emph{act1} has been instructed to start the task $\aGo(id\_1,[loc00,loc33],[loc33])$. Service $act1$ can choose one of the valid outcomes (i.e., a list of $Location\_type$ data objects) and pushes the button \emph{End Task} when the task is completed (cf. Fig.~\ref{fig:dev_manager}(b)). If the outcome provided by $act1$ is different from the one expected (meaning it reaches a different location respect the one desired), the \indigolog PMS senses the deviation, builds a planning problem that reflects the gap between physical and expected reality and launches the LPG-TD planner (cf. Fig.~\ref{fig:planner_screen}), which is in charge to synthesize the recovery plan. Note that the deviation we are analyzing is the same shown in our case study; i.e., we are supposing that \emph{act\emph{1}} has reached location \emph{loc03} rather than location \emph{loc33} (cf. Fig.~\ref{fig:fig_introduction-case_study-context_2}(b)).

\begin{figure}[t]
\centering{
 \includegraphics[width=0.9\columnwidth]{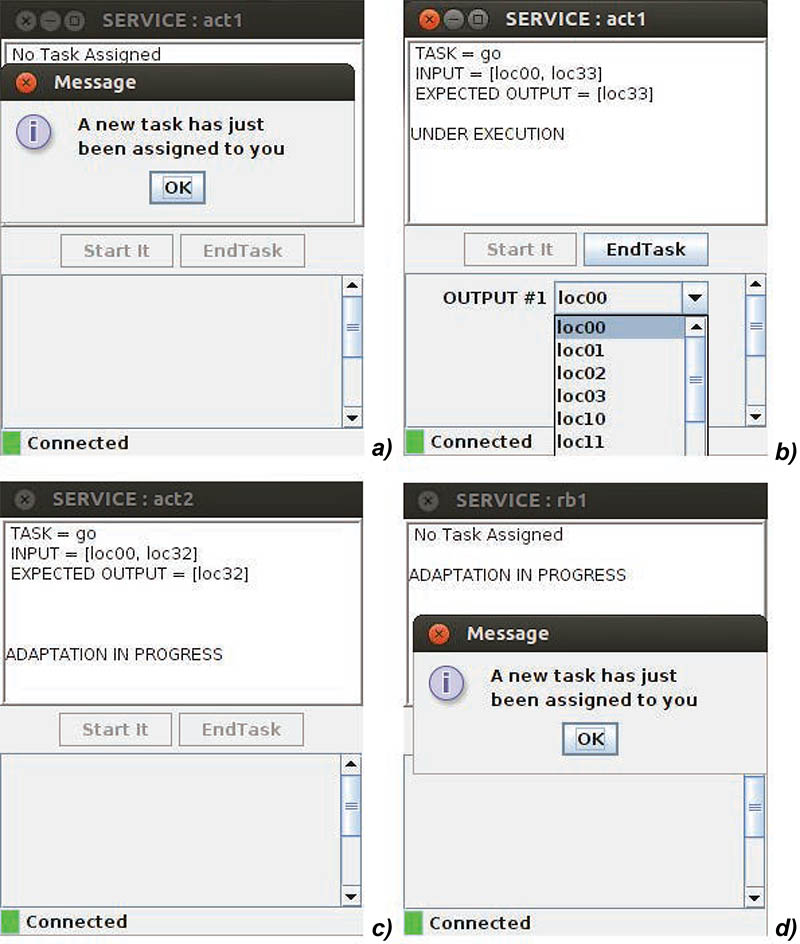}
 } \caption{The Work-list Handler of \smartpm.}
 \label{fig:dev_manager}
\end{figure}

During the synthesis of the recovery plan, every running task is interrupted for the time required to the planner for building the plan and to the \indigolog PMS for executing it (cf. Fig.~\ref{fig:dev_manager}(c)). When the planner finds a recovery procedure, it is passed back to the Synchronization component, that converts it in a executable \indigolog process and sends it to the \indigolog PMS for its enactment.

Since all the actors/robots need to be continually inter-connected to execute the process, the Planner finds a recovery procedure that first instructs the robots to move in specific positions for maintaining the network connection, and then re-assigns the task \emph{go(loc03,loc33)} to \emph{act1} (cf. Fig.~\ref{fig:fig_introduction-case_study-context_3}). In Fig.~\ref{fig:dev_manager}(d) it is shown the assignment of a recovery task to the service $rb1$. Specifically, service $rb1$ executes the first task of the recovery procedure dealing with the deviation (cf. Fig.~\ref{fig:fig_introduction-case_study-context_3}(a)).

\bigskip

We conclude this chapter by pointing out that the \smartpm System is a real working proof-of-concept prototype system that implements our approach to dynamic process adaptation shown in Section~\ref{ch:approach}. While other approaches and systems rely on pre-defines rules to specify the exact behaviors when special events are triggered, here we simply model (a subset
of) the running environment and the actions' effects, without considering any possible exceptional event. We argue that, in most of cases, modeling the environment, even in detail, is easier than modeling all possible exceptions.

\begin{figure}[t]
\centering{
 \includegraphics[width=0.95\columnwidth]{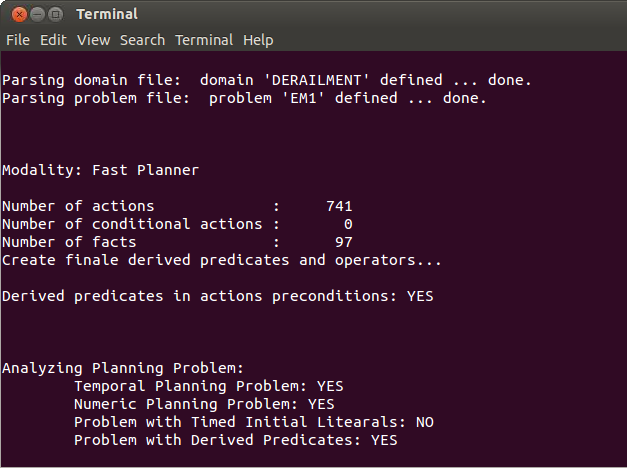}
 } \caption{The main window of the LPG-td planner.}
 \label{fig:planner_screen}
\end{figure}

\chapter{Validation}
\label{ch:validation}

This chapter reports on performance evaluation and system validation activities. Specifically, in Section~\ref{sec:validation-experiments} we first report on experimental evaluation results, in terms of time needed for automatically adapting the dynamic process taken from our case study when exceptions of growing complexity arise. Then, in Section~\ref{validation-effectiveness} we measure the effectiveness of \smartpm in finding recovery procedures by simulating the execution of thousands of processes instances having different structures of the control-flows.

%INTRODUZIONE ALLA FASE DI VALIDAZIONE (qual'è stato l'obiettivo principale degli esperimenti).

\section{Performances of \smartpm in Computing Recovery Procedures}
\label{sec:validation-experiments}

In order to investigate the feasibility of the \smartpm approach, we performed some testing to learn the time amount needed for synthesizing a recovery plan for different adaptation problems. We made our tests by using the LPG-td planner\footnote{LPG-td was awarded at the 4th International Planning Competition\footnote{\url{http://ipc.icaps-conference.org/}} (IPC 2004) as the ``top performer in plan quality''.}~\cite{LPG}. Such a planner is based on a stochastic local search in the space of particular ``action graphs'' derived from the planning problem specification. The basic search scheme of LPG-td is inspired to Walksat~\cite{WALKSAT}, an efficient procedure for solving SAT-problems. More details on the search algorithm and heuristics devised for this planner can be found at~\cite{LPG,LPGth}.

We chose LPG-td as (i) it treats the full range of PDDL2.2\footnote{PDDLv2.2~\cite{PDDL22} is characterized for enabling the representation of realistic planning domains, which include (in particular) actions and goals involving numerical expressions, operators with universally quantified effects or existentially quantified preconditions, operators with disjunctive or implicative preconditions, derived predicates and plan metrics. However, currently, our formalism does not allow to represent conditional and universally quantified effects.}~\cite{PDDL22} and (ii) even if it is primarily thought as a satisficing planner, it is able to compute also quality plans under a pre-specified metric. In fact, LPG-td has been developed in two versions: a version tailored to computation speed, named LPG-td.speed, which produces \emph{sub-optimal plans}, and a version tailored for \emph{plan quality}, named LPG-td.quality. LPG-td.speed generates sub-optimal solutions that do not prove any guarantee other than the correctness of the solution. LPG-td.quality differs from LPG-td.speed basically for the fact that it does not stop when the first plan is found but continues until a stopping criterion is met. In our experiments, the optimization criteria was fixed as the minimum number of actions needed for the planner to reach the goal. Therefore, a quality plan uses the smallest number of actions needed for reaching a goal state.
It is important to underline that satisficing planning is easy (polynomial), while optimal planning is hard (NP-complete)~\cite{HELMERT}.

\begin {table}[tp]\scriptsize
\caption {Time performances of LPG-td for adaptation problems of growing complexity.}
\label{smartpm_validation_table}
\centering
\begin{tabular}{ccccccc}
\hline
\ Length of the \ & \ Problem \ & \ Avg. time needed for a \ & \ Avg. length of a \ & \ Avg. time needed for \\
\ recovery proc. \ & \ instances \ & \ sub-optimal sol. (sec) \ & \ sub-optimal sol.  \ & \ a quality sol. (sec)\\
\hline
\hline
1&29&6,769&3&7,768\\
2&36&7,213&3&16,865\\
3&32&7,846&4&24,123\\
4&25&8,128&5&37,017\\
5&21&8,598&8&39,484\\
6&17&8,736&9&52,421\\
7&13&9,188&13&73,526\\
8&12&9,953&14&81,414\\
%9&9&10,226&16&95,862\\
%10&6&12,648&17&110,448\\
\hline
\end{tabular}
\end{table}\setlength{\textfloatsep}{0.6cm}

The experimental setup was performed with the test case shown in our running example. We stored in the task repository 20 different emergency management tasks, annotated with 28 relational predicates, 2 derived predicates and 4 numeric fluents, in order to make the planner search space very challenging. Then, we provided 185 different planning problems of different complexity, by manipulating ad-hoc the values of the initial state and the goal in order to devise adaptation problems of growing complexity.%\footnote{Some test instances, together with the inputs for the planner, are available at the URL: \url{http://www.dis.uniroma1.it/~marrella/public/Planlets_CoopIS2012_TestCases.zip}.}.

As shown in Table~\ref{smartpm_validation_table}, the column labeled as ``Length of the recovery procedure'' indicates the smallest number of actions needed for devising a plan of a specific length. Our purpose was to measure (in seconds) the computation time needed for finding a sub-optimal solution and a quality solution for problems that require a recovery procedure of growing complexity. The column labeled as ``Average length of a sub-optimal solution'' indicates the average number of actions that compose a sub-optimal solution for a problem of a given complexity. A sub-optimal solution is found in less time than a quality one, but generally it includes more tasks than the ones strictly needed. This means that when the complexity of the recovery procedure grows, the quality of a sub-optimal solution decreases. For example, as shown in table~\ref{smartpm_validation_table}, on 21 different planning problems requiring a recovery procedure of length 5, the LPG-td planner is able to find, on average, a sub-optimal plan in 8,598 seconds (with 3 more tasks, on average) and a quality plan (which consists exactly of the 5 tasks needed for the recovery) in 39,484 seconds, without the need of any domain expert intervention. Consequently, the approach is feasible for medium-sized dynamic processes used in practice\footnote{We did our tests by using an Intel U7300 CPU 1.30GHz Dual Core, 4GB RAM machine.}.

%The \smartpm system has been validated in laboratory tests through the use of a real process of the Italian Railway company (that is ``Reti Ferroviarie Italiane'').

\section{Effectiveness of \smartpm in Adapting Processes}
\label{validation-effectiveness}

When developing a PMS with automatic adaptation features, an important aspect that needs to be analyzed concerns the \emph{effectiveness} of the PMS in executing process instances that have different structures. We define effectiveness as the \emph{ability of the PMS of executing every task included in the process control flow by adapting automatically the process itself if some failure arises, without the need of any manual intervention of the process designer at run-time}.

In order to measure the effectiveness of the \smartpm system in executing dynamic processes, we simulated the execution of 3600 process instances having different structure. Starting from tasks repositories with growing sizes, we generated process instances composed by sequences of tasks or by parallel branches. The simulation has concerned the testing of:
\begin{itemize}[itemsep=1pt,parsep=1pt]
\item 1200 process instances with a control flow composed by a sequence of tasks;
\item 1200 process instances with a control flow composed by 3 parallel branches with tasks to be executed in concurrency;
\item 1200 process instances with a control flow composed by 5 parallel branches with tasks to be executed in concurrency.
\end{itemize}
Our purpose was to measure the success rate in executing correctly the above process instances, by automatically adapting them if required. To this end, we developed a software module named the \smartpm \emph{Simulator}, which is able to build automatically \indigolog processes and SitCalc theories and simulating their execution on the basis of some customizable parameters:
\begin{itemize}[itemsep=1pt,parsep=1pt]
\item \textbf{Tasks repository size}: We allowed the generation of tasks repositories storing respectively 25, 50 or 75 tasks.
\item \textbf{Number of available services}: This parameter indicates the number of services available in the initial situation for tasks assignment. We fixed this value to 5 services.
\item \textbf{Maximum number of conditions in tasks preconditions/effects}: We allowed the generation of tasks having a maximum of 5 logical conditions in the precondition axioms, and a maximum of 5 different outcomes as effects.
\item \textbf{Number of available fluents}: We allowed the generation of SitCalc theories composed by 50 relevant data fluents (meaning that, for each data fluent, the \smartpm Simulator automatically builds the corresponding expected fluent for monitoring possible tasks failures). The generated fluents can assume only boolean values.
\item \textbf{Percentage of failures}: This parameter may assume two possible values (30\% or 70\%), and affects the percentage of tasks error during the process execution. For example, if a process instance is composed by 10 tasks, and the percentage of failures is equal to 70\%, this means that 7 tasks will terminate with some outcomes different from the ones expected, and process adaptation is required.
\item \textbf{Percentage of capabilities coverage}: This parameter can assume two possible values (30\% or 70\%), and affects the ability of each available service to execute the tasks stored in the repository. We assumed that each task is associated with a unique capability; therefore, if - for example - the tasks repository stores 75 tasks and the percentage of capabilities coverage is equal to 30\%, this means that each available service will be able to execute at least 22 tasks in the repository. Moreover, we assume that for each task in the control flow there exists at least one service that is able to execute that task.
\item \textbf{Number of tasks in the process control-flow}: Given a specific structure of the control flow, a fixed percentage of capabilities coverage and a fixed percentage of failures, for each possible size of the tasks repository the \smartpm \emph{Simulator} generated 100 process instances with control flows composed respectively by 5, 8, 10, 12, 14, 15, 18, 20, 22, 25 tasks. Tasks composing the control flow are randomly picked from the tasks repository. For control flows with $n$ concurrent branches, tasks are organized in a way that allows every branch to contain at least one task.
\end{itemize}

%Therefore, tests have been organized in 3 different groups, where each group is characterized by a certain number of process instances having a fixed structure.

Test results are shown respectively in Figures~\ref{fig:fig_validation-sequence_5},~\ref{fig:fig_validation-parallel_3} and~\ref{fig:fig_validation-parallel_5}. Each figure reports a diagram related to the execution of 1200 process instances having \myi a control flow composed by a sequence of processes (cf. Fig.~\ref{fig:fig_validation-sequence_5}) \myii a control flow composed by 3 parallel branches (cf. Fig.~\ref{fig:fig_validation-parallel_3}) and \myiii a control flow composed by 5 parallel branches (cf. Fig.~\ref{fig:fig_validation-parallel_5}). For each diagram, on the y-axis it is reported the size of the tasks repository, and on the x-axis is indicated the success rate (i.e., the effectiveness) in executing process instances with specific characteristics.

For example, let us consider the diagram in Fig.~\ref{fig:fig_validation-sequence_5}, that shows the effectiveness of \smartpm in executing processes with a structure composed by a sequence of tasks. When the size of the tasks repository is fixed to 25 tasks, the 4 colored bars denote respectively the effectiveness of the system in executing 400 process instances (each bar reflects the enactment of 100 instances) with a \emph{percentage of capabilities coverage} and a \emph{percentage of failures} varying from 30\% to 70\%. For example, the blue bar represents the effectiveness of the system in executing 100 process instances (with a variable dimension of the number of tasks composing the control flow, see above) with a percentage of failures of 30\% and a percentage of capabilities coverage equal to 30\%. According to the results shown in the diagram, the percentage of success is equal to 80\%, meaning that 80 processes between the 100 that was executed have been correctly enacted and terminated, whereas for the other 20\% the system did not found any recovery plan for dealing with the exception, meaning that a manual support to adaptation was required. The other bars, instead, indicate the effectiveness of the system by increasing the percentage of failures and the percentage of capabilities coverage. The analysis of the performed tests put in evidence some interesting aspects, that are valid for each of the 3 process structures we tested and for every possible size of the tasks repository:
\begin{itemize}[itemsep=1pt,parsep=1pt]
\item When the percentage of failures increases, the effectiveness of the \smartpm system decreases.
\item Given a fixed percentage of failures, to increase the percentage of capabilities coverage means that the effectiveness of the \smartpm system increases, because there are more possibilities for a task in the repository to be selected by an available service.
\item In general, to have a large tasks repository helps to increase the effectiveness of the \smartpm system, since the planner has more possibilities to select tasks for building a recovery procedure.
\end{itemize}

Moreover, we also compared the obtained data with the purpose to understand if the structure of process instances may affect the effectiveness of \smartpm. To this end, for each possible combination of the percentage of failures with the percentage of capabilities coverage, we reorganized the collected data by generating 4 further diagrams that show the effectiveness of \smartpm after executing process instances whose control flows are composed by sequences of tasks or by parallel branches. The test results are shown respectively in Figures~\ref{fig:fig_validation_compare_1},~\ref{fig:fig_validation_compare_2} and~\ref{fig:fig_validation_compare_3} and~\ref{fig:fig_validation_compare_4}. For example, in Fig.~\ref{fig:fig_validation_compare_2} we are comparing the execution of process instances with a percentage of failures equal to 30\% and a percentage of capabilities coverage fixed to 70\%. For instance, when the size of the tasks repository is equal to 75, the effectiveness of \smartpm in executing 100 process instances composed by a sequence of tasks is equal to 85\%. The effectiveness decreases if the instances have tasks organized in 3 parallel branches (84\%) and in 5 parallel branches (79\%). In general, if the process control flow contains parallel branches, the effectiveness of the \smartpm system decreases as the number of parallel branches increases, since possibly more services are involved at the same time for tasks execution, by letting only few services available for process adaptation.

To sum, the execution of 3600 process instances with different structures was a valid test for measuring the effectiveness of \smartpm, that was able to complete correctly (i.e., without any intervention of domain experts at run-time) 2537 process instances, corresponding to an effectiveness of about 70,5\%. Finally, we want to underline that the effectiveness of a PMS depends also by the ability of the process designer in formalizing the domain theory associated to a dynamic process. For example, by testing our case study (cf. Section~\ref{sec:validation-experiments}) and by generating randomly an elevate number of possible failures, the planner was always able to find a proper recovery procedure and to adapt the process.

% with the purpose to measure the percentage of success in executing and terminating process instances by letting the system adapting  automatically the process the process.

%\item \textbf{Tasks Repository size}: The number of tasks stored in the task repository directly affects the percentage of success

\begin{sidewaysfigure}
\centering{
 \includegraphics[width=0.95\columnwidth]{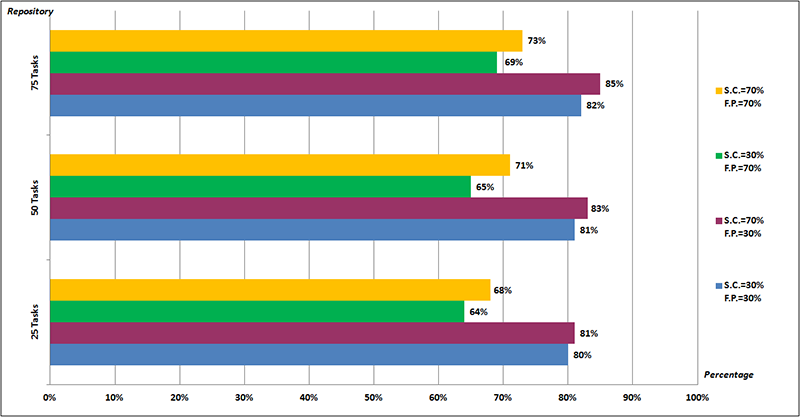}
 } \caption{Effectiveness of \smartpm when executing processes composed by a sequence of tasks.}
 \label{fig:fig_validation-sequence_5}
\end{sidewaysfigure}

\begin{sidewaysfigure}
\centering{
 \includegraphics[width=0.95\columnwidth]{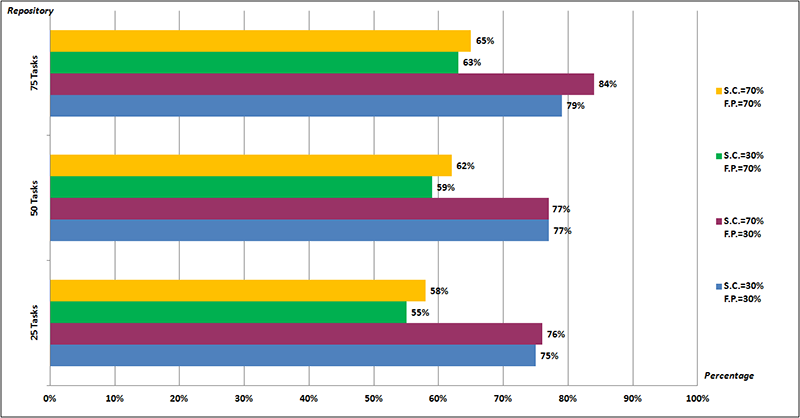}
 } \caption{Effectiveness of \smartpm when executing processes composed by 3 parallel branches.}
 \label{fig:fig_validation-parallel_3}
\end{sidewaysfigure}

\begin{sidewaysfigure}
\centering{
 \includegraphics[width=0.95\columnwidth]{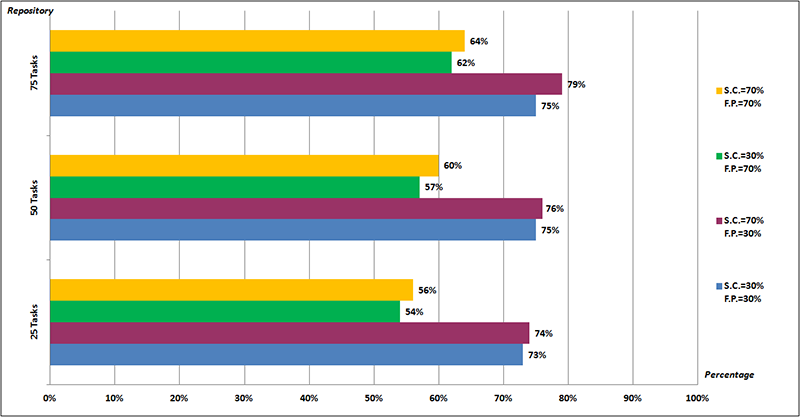}
 } \caption{Effectiveness of \smartpm when executing processes composed by 5 parallel branches.}
 \label{fig:fig_validation-parallel_5}
\end{sidewaysfigure}

\begin{figure}[t]
\centering{
 \includegraphics[width=0.8\columnwidth]{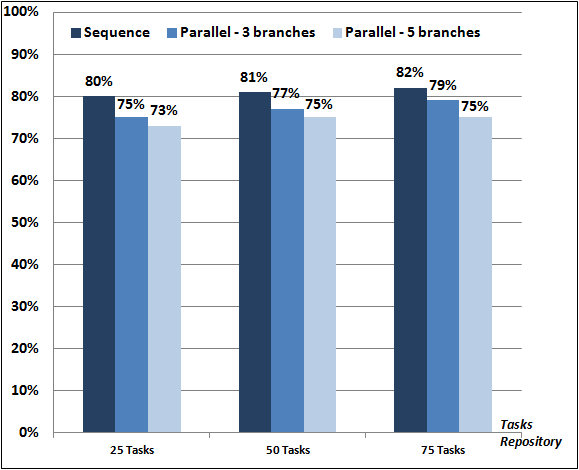}
 } \caption{Measure of the \smartpm effectiveness by comparing process instances having different structures, with a percentage of capabilities coverage equal to 30\% and a percentage of failures equal to 30\%.}
 \label{fig:fig_validation_compare_1}
\end{figure}

\begin{figure}[t]
\centering{
 \includegraphics[width=0.8\columnwidth]{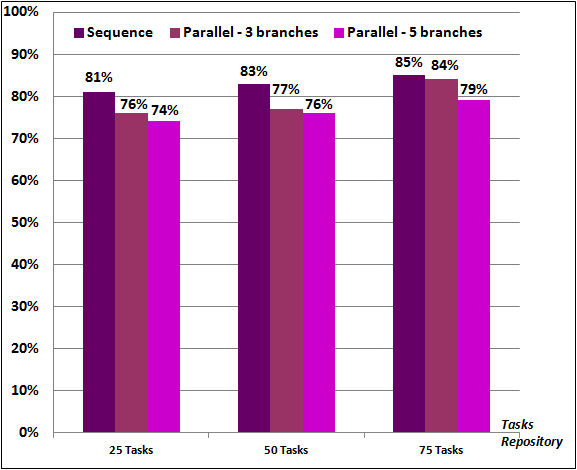}
 } \caption{Measure of the \smartpm effectiveness by comparing process instances having different structures, with a percentage of capabilities coverage equal to 70\% and a percentage of failures equal to 30\%.}
 \label{fig:fig_validation_compare_2}
\end{figure}

\begin{figure}[t]
\centering{
 \includegraphics[width=0.8\columnwidth]{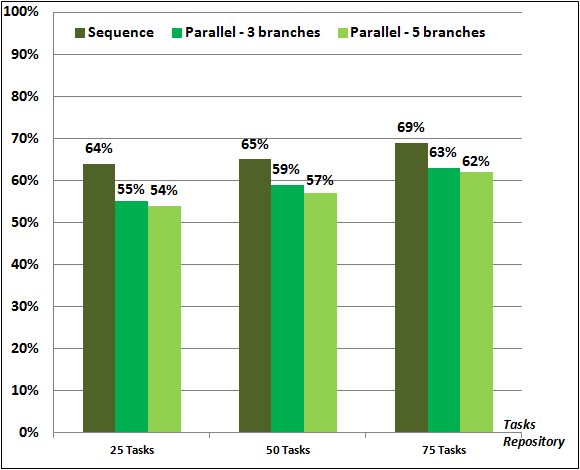}
 } \caption{Measure of the \smartpm effectiveness by comparing process instances having different structures, with a percentage of capabilities coverage equal to 30\% and a percentage of failures equal to 70\%.}
 \label{fig:fig_validation_compare_3}
\end{figure}

\begin{figure}[t]
\centering{
 \includegraphics[width=0.8\columnwidth]{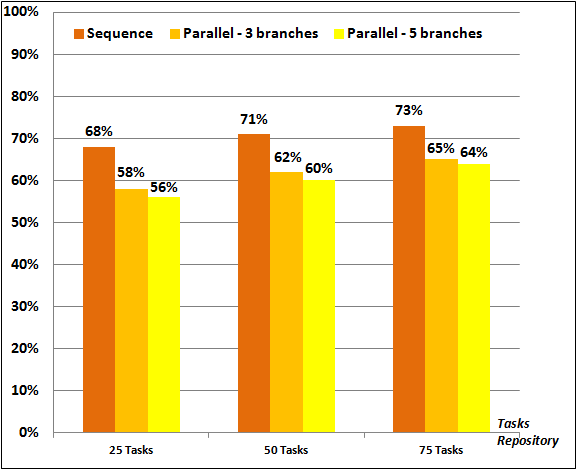}
 } \caption{Measure of the \smartpm effectiveness by comparing process instances having different structures, with a percentage of capabilities coverage equal to 70\% and a percentage of failures equal to 70\%.}
 \label{fig:fig_validation_compare_4}
\end{figure}

\chapter{Automatic Generation of Process Templates}
\label{ch:templates}

Current workflow technology is based on the idea that, in general, there exists an underlying fixed process that can be used to automate the work. Once identified, a process is formalized into a \emph{process model} and automated through a Process Management System that can execute it repeatedly. Conventional Business Process Management solutions require us to pre-define a detailed model of the process (control flow, data and resources) which captures every possible case (i.e., process instance) to be executed at run-time~\cite{WeskeBook2007}. This approach works for processes where procedures are well known, repeatable (the work is done in the same way every time; i.e., there is enough similarity in executing each process instance) and can be planned in advance with some level of detail. As it emerges from the discussion conducted in Chapter~\ref{ch:introduction}, the need to deal with \emph{knowledge-intensive processes} and \emph{dynamic processes} and provide support for flexible process management has emerged as a leading research topic in the BPM domain~\cite{ReichertBook2012}.

In this chapter, we present an approach that allows us to automatically synthesize a \emph{library} of \emph{process templates} starting from a representation of the contextual domain in which the process is embedded in and from an extensive repertoire of tasks defined for such a context. A template depicts the best-practice procedure drawn up with whatever contextual information available at the time; it describes a recommended control flow for the process that does not only work in a specific state of the world, but can be enacted in a range of states satisfying the context conditions. In order to build process templates, we make use of \emph{partial-order planning algorithms} (aka POP~\cite{Weld@AImag1994,TraversoBook2004}), which guarantee some interesting properties in the construction of the template:
\begin{itemize}[itemsep=1pt,parsep=1pt]
\item \emph{Correctness}. Tasks composing the template are contextually
    selected from a specific repository and partially ordered in a way consistent with the context conditions to ensure that the template's objectives are achieved. Hence, a template is proven \emph{correct} relative to the initial state of knowledge about the context.
\item \emph{Sound concurrency}. A process template has the property of \emph{sound concurrency} in the execution of its activities. In fact, concurrent activities of a process template are proven to be effectively \emph{independent} one from another (i.e., concurrent tasks cannot affect the same data). This means more flexibility during process execution. At runtime, the most appropriate execution path can be selected from those allowed by the design time process template definition, without the risk of interference between concurrent tasks.
\item \emph{Executability in partially known environments}. The use of classical plan-space algorithms for building the template requires complete knowledge of the starting state (i.e., it is not admitted that any fact is unknown). Once synthesized, a template can be executed in several different starting states, since it (usually) requires a fragment of the knowledge of the starting state to successfully achieve its objectives. We identify the \emph{weakest preconditions} of process templates, and all the states satisfying such preconditions are good candidates for executing them. In this sense, templates can be enacted in \emph{partially known environments}.
\end{itemize}
We exploit the idea behind POP of representing flexible plans that enables deferring decisions. Instead of committing prematurely to a complete, totally ordered sequence of actions, plans are represented as a partially ordered set, and only the required ordering decisions are recorded. A process template is generated on the basis of such a partially ordered
set of activities, and we are able to identify what knowledge about the starting state is required for successful template execution. Moreover, we build step-by-step a library of process template specifications and support efficient retrieval of appropriate templates in partially known environments.

The rest of the chapter is organized as follows. Section~\ref{sec:templates-case_study} introduces a running example, derived from the main case study presented in Section~\ref{sec:introduction-case_study}. Section~\ref{sec:templates-preliminaries} discusses some preliminary notions around the use of partial-order planning techniques. Section~\ref{sec:templates-template} introduces the concept of process template. Section~\ref{sec:templates-approach} presents the general approach used for synthesizing a library of process templates, starting from a planning domain and a planning problem. Section~\ref{sec:templates-algorithms} gives some technical details about the algorithms used for computing process templates. Section~\ref{sec:templates-experiments} reports on experimental evaluation results. Section~\ref{sec:templates-related_work} discusses related work and Section~\ref{conclusion} concludes by discussing benefits, limitations and future developments of the approach.

\section{Case Study}
\label{sec:templates-case_study}

Let us consider the emergency management scenario described in Fig.~\ref{fig:fig_templates-case_study}(a). It concerns a train derailment and depicts a map of the area (as a 4x4 grid of locations) where the disaster happened. For the sake of simplicity, we consider the same contextual information introduced in Section~\ref{sec:introduction-case_study}. Therefore, we have a derailed train composed of a locomotive (located in \emph{loc33}) and two coaches (located in \emph{loc32} and \emph{loc31} respectively), and a response team sent to the derailment scene. The team is composed of four actors and two robots, initially located in \emph{loc00}. If compared with the scenario introduced in Section~\ref{sec:introduction-case_study}, in this case we do not consider any information related to the network connection between actors and robots. We also remember that each process participant provides a set of specific capabilities. For example, actor $act\emph{1}$ is able to extinguish fire and take pictures, while $act\emph{2}$ and $act\emph{3}$ can evacuate people from train coaches. The two robots, instead, may remove debris from specific locations. Each robot has a battery and each action consumes a given amount of battery charge. When the battery of a robot is discharged, actor $act\emph{4}$ can charge it. Fig.~\ref{fig:fig_templates-case_study}(b) summarizes the above and shows the initial battery charge level of each robot.

\begin{figure}[t]
\centering{
 \includegraphics[width=0.9\columnwidth]{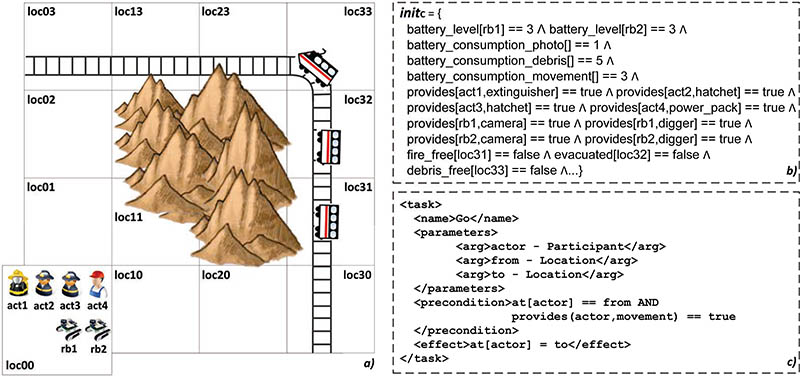}
 } \caption{Area and context of the intervention.}
 \label{fig:fig_templates-case_study}
\end{figure}

Suppose now that the goal of an incident response plan defined for such a context is to evacuate people from the coach located in \emph{loc32}, to extinguish fire in the coach in \emph{loc31} and finally to remove debris from \emph{loc33}. If compared with the case study described in Section~\ref{sec:introduction-case_study}, now we do not provide any pre-built incident response plan for dealing with the disaster scenario. In fact, since the process may be different every time it is defined because it strictly depends on the actual contextual information (the positions of actors/robots, the location of every coach, the battery level of robots, etc.), it is unrealistic to assume that the process designer can pre-define all the possible process models for dealing with this environment. Moreover, if contextual data describing the environment are known, the synthesis of a process dealing with such environment is not straightforward, as the correctness of the process model is highly constrained by the values (or combination of values) of contextual data.

A simple approach to solving our problem is to build a process as a sequence of activities, e.g., the sequence of actions shown in Fig.~\ref{fig:fig_templates-sequence}. However, this solution is highly ``inefficient'', since as many actions are independent, and they could be executed concurrently to reduce intervention time; e.g., a robot could removing debris in parallel with the extinguishing of the fire in \emph{loc31}. But, at the same time, a process designer that has to design such a process may find difficult to organize activities for concurrent execution, since each action, for its executability, depends on the values of contextual data (e.g., a robot needs enough battery charge for moving into a location and removing debris). Also dependencies between actions play a key role in the definition of the process model (e.g., in order to evacuate people at \emph{loc32}, a robot must have removed the debris beforehand). Finally, a process designer usually tends to represent more contextual information than that strictly needed for defining a process. For example, the execution of the process in Fig.~\ref{fig:fig_templates-sequence} does not involve actor $act\emph{3}$, meaning that any information concerning $act\emph{3}$ (e.g., its capabilities, its location, etc.) is not required for synthesizing and executing the process. In order to overcome the above issues, we propose a solution that involves exploiting \emph{partial-order planning} for generating a library of \emph{process templates} for different contextual cases. Our templates provide \emph{sound concurrency} in the execution of their activities and are executable in \emph{partially known environments}.
%and can be seen as the ``closest thing'' to completely defined process models.
%%YL what do you mean by "closest thing to completely defined process models"?

\begin{figure}[t]
\centering{
 \includegraphics[width=0.9\columnwidth]{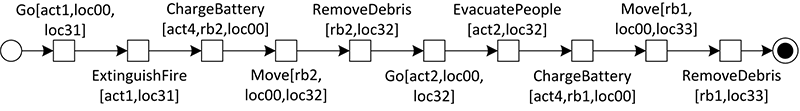}
 } \caption{A process dealing with the scenario of Fig.~\ref{fig:fig_templates-case_study}(a).}
 \label{fig:fig_templates-sequence}
\end{figure}

%The definition of an incident response plan could be not so straightforward

%as a business process involves a dynamically selected set of activities to be executed on the field by the first responders.

\section{Partial-Order Planning}
\label{sec:templates-preliminaries}

Planning systems are problem-solving algorithms that operate on explicit representations of states and actions. The standard
representation language for classical planners is known as the Planning Domain Definition Language (PDDL(cf.~\cite{PDDL})); it allows one to formulate a problem $\textsf{PR}$ through the description of the initial state of the world $init_{\textsf{PR}}$, the description of the desired goal condition $goal_{\textsf{PR}}$ and a set of possible actions. An action definition defines the conditions under which an action can be executed, called \emph{preconditions}, and its \emph{effects} on the state of the world. The set of all action definitions $\Omega$ is the \emph{domain} $\textsf{PD}$ of the planning problem. Each action $a \in \Omega$ has a precondition list and an effect list, denoted respectively as $Pre_a$ and $Eff_a$. A planner that works on such inputs generates a sequence of actions (the \emph{plan}) that corresponds to a path from the initial state to a state meeting the goal condition. In this chapter, we represent planning domains and planning problems making use of PDDL version 2.1\footnote{PDDL 2.1 is characterized for enabling the representation of realistic planning domains, which include (in particular) actions with (linear) continuous numeric effects and effects dependent on the durations of the actions, actions and goals involving numerical expressions, operators with universally quantified effects or existentially quantified preconditions. However, currently, our formalism does not allow to handle negative preconditions, disjunctive preconditions and conditional effects.}(cf.~\cite{Fox@JAIR2003}).

In the literature, there exists a wide range of different planning techniques, that are characterized by the specific assumptions made. In this chapter, we make use of \emph{plan-space planning algorithms}. They differ from classical \emph{state-space planning algorithms}, that explore only strictly linear sequences of actions directly connected to the start or goal, by devising totally ordered plans. A plan space is an implicit directed graph whose nodes are \emph{partially specified plans} and whose edges correspond to refinement operations intended to further complete a partial plan, i.e., to achieve an open goal or to remove a possible inconsistency. In order to demonstrate our approach, we focus on Partial-Order Planning (POP) algorithms~\cite{TraversoBook2004,Weld@AImag1994}, a specific type of plan-space planning algorithms. POP algorithms take as input a planning problem defined in PDDL and search the space of partial plans without committing to a totally ordered sequence of actions. They work back from the goal, by adding actions to the plan to achieve each subgoal. A tutorial introduction to POP algorithms can be found in~\cite{Weld@AImag1994}.

Basically, a \textbf{partial plan} is a three-tuple $\textsf{P} = (A,O,CL)$, where
$A \subseteq \Omega$ is a set of (ground) actions, $O$ is a set of \emph{ordering constraints} over $A$, and $CL$ is a set of \emph{causal links} over $A$. Ordering constraints $O$ are of the form $a \prec b$, which is read as ``$a$ before $b$'' and means that action $a$ must be executed sometime before action $b$, but not necessarily immediately before. Causal links $CL$ may be represented as $c \xrightarrow{p} d$, which is read as ``$c$ achieves $p$ for $d$'' and means that $p$ is an effect of action $c$ and a precondition for action $d$. It also asserts that $p$ must remain $true$ from the time of action $c$ to the time of action $d$. In other words, the plan may not be extended by adding a new action that conflicts with the causal link and makes $p$ false between $c$ and $d$.
Consequently, a precondition without a causal link requires further refinement to the plan to establish it, and is considered to be an \emph{open condition} in the partial plan. Loosely speaking, the open conditions are preconditions of actions in the partial plan which have not yet been achieved in the current partial plan. More formally, an open condition is of the form $(p,a)$, where $p \in Pre_a$ and $a \in A$, and there is no causal link $b \xrightarrow{p} a$ (where $b$ is any action of the partial plan $\textsf{P}$).

%A causal link $c \xrightarrow{p} d$ is called \textbf{unsafe} if there exists an action $f \in A$ such that \myi $\neg p \in Eff_f$ and \myii $O \cup [c \prec f \prec d]$ is consistent. In such a case, $f$ is also said to \textbf{threaten} the causal link $c \xrightarrow{p} d$. Open conditions $OC$ and unsafe links $UL$ are also called \textbf{flaws} in the partial plan. Therefore a solution plan can be seen as a partial plan with no flaws (i.e., $OC = \emptyset$ and $UL = \emptyset$).
A classical POP algorithm starts with a null partial plan $\textsf{P}$ and keeps refining it until a solution plan is found. The null partial plan contains two dummy actions $a_0$ $\prec$ $a_\infty$ where the preconditions of $a_\infty$ correspond to the top level goals $goal_{\textsf{PR}}$ of the problem, and the effects of $a_0$ correspond to the conditions in the initial state $init_{\textsf{PR}}$. Intuitively, a refinement operation avoids adding to the partial plan any constraints that are not strictly needed for addressing the refinement objective. This is called the \emph{least commitment principle}~\cite{Weld@AImag1994}. The main advantage of the least-commitment philosophy is that decisions about action ordering are postponed until a decision is forced; constraints are not added to a partial plan unless strictly needed, thus guaranteing flexibility in the execution of the plan and by possibly permitting actions to run concurrently. A \textbf{consistent} plan is defined as a plan with no cycles in the ordering constraints and no conflicts with the causal links. A consistent plan with no open conditions is a \textbf{solution}~\cite{TraversoBook2004}.

\section{Process Templates}
\label{sec:templates-template}

Our approach for the generation of a process template requires to explicitly model the contextual knowledge in which the dynamic process is embedded through some declarative rules (some pre-defined at design time, some known just before the synthesis of the template) and logical constraints expressed in terms of task preconditions and effects. Such information are given as input to an external partial-order planner~\cite{TraversoBook2004,Weld@AImag1994} that will be in charge to build a \emph{process template}, i.e., a graph of activities reflecting the dynamic process required for solving the specific contextual problem. The language we are going to show can be seen as a dialect of \smartML (cf. Section~\ref{sec:framework-smartpm_definition_tool-smartml}), that we customized for describing process templates.

The synthesis of a dynamic process requires a tight integration of process activities and contextual data in which the process is embedded in. The context is represented in the form of a \emph{Domain Theory} \textsf{D}, that involves capturing a set of tasks $t_i \in \textsf{T}$ (with $i \in 1..n$) and supporting information, such as the people/agents that may be involved in performing the process (roles or participants), the data and so forth. Tasks are collected in a specific repository, and each task can be considered as a single step that consumes input data and produces output data. Data are represented through some ground atomic terms $v_{\emph{1}}[y_\emph{1}],v_{\emph{2}}[y_\emph{2}],...,v_{m}[y_m] \in \textsf{V}$ that range over a set of tuples (i.e., unordered sets of zero or more attributes) $y_\emph{1}, y_\emph{2},\allowbreak\dotsc\,y_m$ of \emph{data objects}, defined over some \emph{data types}. In short, a data object depicts an entity of interest.   Under this representation, we consider possible values of a data type as constant symbols that univocally identify data objects in the scenario of interest.

\vskip 0.5em \noindent\colorbox{light-gray}{\begin{minipage}{0.98\textwidth}
\begin{example}
\emph{In our scenario we need to define data objects for representing participants (e.g., data type $Participant=\{act\emph{1},\,act\emph{2},\,act\emph{3},\,act\emph{4},\,rb\emph{1},\,rb\emph{2}\}$), capabilities (e.g., data type $Capability=\{extinguisher,movement,\allowbreak\dotsc\,hatchet\}$) and locations in the area (e.g., data type $Location = \{loc\emph{00},\,loc\emph{10},\,\allowbreak\dotsc\,\,loc\emph{33}\}$)}.
\end{example}
\end{minipage}
}\vskip 0.5em

Each tuple $y_j$ may contain one or more data objects belonging to different data types.
The domain $dom(v_{j}[y_j])$ over which a term is interpreted can be of various types: (i) \emph{Boolean}: $dom(v_{j}[y_j])$ = \{$true,false$\}, (ii) \emph{Integer}: $dom(v_{j}[y_j]) = \mathbb{Z}$, (iii) \emph{Functional}: the domain contains a fixed number of data objects of a designated type.
%\begin{mydef}
%Given a ground atomic term $v_{j}$ and a tuple $y_j$ of domain objects, an \textbf{interpretation} I is a function $I : v_{j}[y_{j}] \rightarrow (val_j \in dom(v_{j}[y_j]))$ that assigns to a specific term $v_{j}$ a single data object $val_j$ taken from the domain associated to the term itself.
%\end{mydef}
Terms can be used to express properties of data objects (and relations over objects) and argument types of a term (taken from the set of data types previously defined) represent the finite domains over which the term is interpreted.

\vskip 0.5em \noindent\colorbox{light-gray}{\begin{minipage}{0.98\textwidth}
\begin{example}
\emph{In our example, we may need boolean terms for expressing the presence of a fire in a location (e.g., $fire\_free[loc : Location] = (bool:Boolean)$), integer terms for representing the battery charge level of each robot (e.g., $battery\_level[prt:Participant] \in \mathbb{Z}$) or functional terms for recording the position of each actor in the area (e.g., $at[prt:Participant] = (loc : Location)$).}
\end{example}
\end{minipage}
}\vskip 0.5em

Moreover, since each task has to be assigned to a participant that provides all of the skills required for executing that task, there is the need to consider the participants ``capabilities''. This can be done through a boolean term $provides[prt:Participant,cap:Capability]$ that is $true$ if the capability $cap$ is provided by $prt$ and $false$ otherwise. %As an instance, in the example shown in Section~\ref{example}, each robot provides a battery ($provides[rb1,battery] = true$ and $provides[rb2,battery] = true$), whereas each actor provides different capabilities (e.g., $provides[a4,power\_pack] = true$, $provides[a1,extinguisher] = true$, etc.).

Each task is annotated with \emph{preconditions} and \emph{effects}. Preconditions are logical constraints defined as a conjunction of %relational predicates and of numeric fluents based on
atomic terms, and they can be used to constrain the task assignment and must be satisfied before
the task is applied, while effects establish the outcome of a task after its execution. Note that, as shown in Fig.~\ref{fig:fig_templates-anatomy}(a), our approach treats each task as a ``black box'' and no assumption is made about its internal behavior (we consider the task execution as an instantaneous activity).
\begin{mydef}
\label{taskdef}
A \emph{task} $t[x] \in$ \emph{\textsf{T}} %is a tuple $t = (Act_t,Id_t,x,Pre_t,Eff_t)$ that
consists of:
\begin{itemize}[itemsep=1pt,parsep=1pt]
\item the name %$Act_t$
of the action involved in the enactment of the task (it often coincides with the task itself);
%\item a unique identifier $Id_t$ of the task;
\item a tuple of data objects $x$ as input parameters;
\item a set of preconditions $Pre_t$, represented as the conjunction of $k$ atomic conditions defined over some specific terms, $Pre_t = \bigwedge_{l \in 1..k} pre_{t_{l}}$. Each $pre_{t_{l}}$ can be represented as \{$v_j[y_j] \ \textbf{op} \ \textbf{expr}$\}, where:
    \begin{itemize}[itemsep=1pt,parsep=1pt]
    \item $v_j[y_j] \in \emph{\textsf{V}}$ is an atomic term, with $y_j \subseteq x$, i.e., admissible data objects for $y_j$ need to be defined as task input parameters;
    \item An $\textbf{expr}$ can be a \underline{boolean value} (if $v_j$ is a boolean term); an \underline{input} \underline{parameter} identified by a data object (if $v_j$ is a functional term); an \underline{integer number} or an \underline{expression} involving integer numbers and/or terms, combined with the arithmetic operators \{+,-\} (if $v_j$ is a integer term);
    \item $\textbf{op} \in \{<,>,==,\leq,\geq\}$ is a relational operator. The condition $\textbf{op}$ can be expressed as the equality ($==$) between boolean terms or functional terms and an admissible \textbf{expr}. On the contrary, if $v_j$ is a integer term, it is possible to define the $\textbf{op}$ condition as an expression that make use of relational binary comparison operators ($<, >, =, \leq, \geq$) and involve integer numbers and/or integer terms in the \textbf{expr} field.
    \end{itemize}
\item a set of deterministic effects $Eff_t$, represented as the conjunction of $h$ atomic conditions defined over some specific terms, $Eff_t = \bigwedge_{l \in 1..h} eff_{t_{l}}$. Each $eff_{t_{l}}$ ($with \ l \in 1..h$) can be represented as \{$v_j[y_j] \ \textbf{op} \ \textbf{expr}$\}, where:
    \begin{itemize}[itemsep=1pt,parsep=1pt]
    \item $v_j[y_j] \in \emph{\textsf{V}}$ and $\textbf{expr}$ are defined as for preconditions.
    \item $\textbf{op} \in \{$=$,$+=$,$-=$\}$ is used for assigning  ($=$) to a term a value consistent with the \textbf{expr} field or for incrementing ($+=$) or decrementing ($-=$) an integer term by that value.
    %\item The condition $\textbf{op}$ may include assignment expressions to update the values of integer terms. A numeric effect consists of an assignment operator, the integer term to be updated and a integer number or a numeric expression. Assignment operators include \myi direct assignment ($=$), to assign to a integer term a value defined by an integer number; \myii relative assignments, which can be used to increase ($+$=) or decrease ($-$=) the value of a integer term (additive assignment effects). %, as well as to scale-up ($*$=) or scale-down ($/=$) the value of a integer variable (scaling assignment effects). Similarly, it is possible to update the value of boolean terms or functional terms with an assignment operator, restricted to direct assignment (=), to assign respectively a boolean value to a boolean term or a task input parameter identified by a data object to a functional term.\\
    \end{itemize}
\end{itemize}
\end{mydef}
Note that if no preconditions are specified, then the task is always executable. As we will see in Section~\ref{sec:templates-approach}, the use of classical partial-order planning techniques for synthesizing process templates imposes some limitation in the expressiveness of the language used for defining the Domain Theory \textsf{D}. Specifically, negative preconditions are not admitted (e.g., the use of the NOT operator is forbidden and all the atomic conditions that require to evaluate if a boolean term is equal to \emph{false} will be ignored) and we assume that all effects are deterministic.

\vskip 0.5em \noindent\colorbox{light-gray}{\begin{minipage}{0.98\textwidth}
\begin{example}
\emph{The task $Go$ (cf. Fig.~\ref{fig:fig_templates-case_study}(c)) involves two input parameters $from$ and $to$ of type $Location$, representing the starting and arrival locations, and an input parameter $actor$ of type $Participant$ representing the first responder that will execute the task. An instance of this task can be executed only if $actor$ is currently at the starting location $from$ and provides the required capabilities for executing the task $Go$. As a consequence of task execution, the actor moves from the starting to the arrival location, and this is reflected by assigning to the functional term $at[actor]$ the value $to$ in the effect.}
\end{example}
\end{minipage}
}\vskip 0.5em

%Each task $t_i \in \textsf{T}$ together with its preconditions, effects and parameters can be represented as an XML annotation.
%\begin{scriptsize}
%\begin{verbatim}
%<task>
%    <name>Go</name>
%    <id>Id_1</id>
%    <parameters>
%        <arg>actor - Participant</arg>
%        <arg>from - Location</arg>
%        <arg>to - Location</arg>
%    </parameters>
%    <precondition>at[actor] == from
%                  AND provides(actor,movement) == true
%    </precondition>
%    <effect>at[actor] = to</effect>
%</task>
%\end{verbatim}
%\end{scriptsize}

Modeling a business process involves representing how a business pursues its objectives/goals. Objectives are represented in terms of a process goal to be satisfied. The goal may vary depending on the specific $Process \ Case$ $\textsf{C}$ to be handled. A case $\textsf{C}$ reflects an instantiation of the domain theory $\textsf{D}$ with a starting condition $init_\textsf{C}$ and a goal condition $goal_\textsf{C}$.
%The starting condition %and the goal condition
%reflects different assignment of values to the
%%YL changed (OK)
Both conditions are conjunctions of ground atomic terms. We do not assume complete information about the starting condition; this means we allow a process designer to instantiate only the ground atomic terms necessary for representing what is known about the starting state, i.e.,
 $init_\textsf{C} = \{v_{\emph{1}}[y_\emph{1}] == val_\emph{1} \land %v_{\emph{2}}[y_\emph{2}] = val_\emph{2}
... \land v_{j}[y_j] == val_j\}$, where $val_j$ (with $j \in 1..m$) represents the j-th value assigned to the j-th atomic term.
%%YL == rather then =...is a condition (OK)
%Moreover, for our convenience, we introduced a new symbol $\Vdash$ for specifying those contextual information appearing in the starting state $init_\textsf{C}$ and concerning a specific participant $prt$. Specifically, if $init_{\textsf{C}} \Vdash \textsf{V}\{prt_1,...,prt_z\}$, it means that every atomic term related to $prt_x$ (with $x \in 1..z$) has been instantiated with some value in $init_\textsf{C}$. For example, if $init_{\textsf{C}} \Vdash \textsf{V}\{rb1\}$, then $init_{\textsf{C}} = provides[rb\emph{1},camera]=true \land provides[rb\emph{1},digger]=true \land at[rb\emph{1}]=loc00 \land ...$, while remaining terms related to the other participants are considered to be unknown in the starting state.
 %All atomic terms not explicitly listed in $init_\textsf{C}$ are assumed to be false (this statement is called the closed world assumption~\cite{Reiter:1987}).
Fig.~\ref{fig:fig_templates-case_study}(b) shows a portion of $init_\textsf{C}$ concerning the scenario depicted in  Fig.~\ref{fig:fig_templates-case_study}(a). The goal is a %partially specified
condition represented as a conjunction of some specific terms we want to make true through the execution of the process.
%the STRIPS representation restricts the type of goal states that can be specified to those matching a conjunction of positive literals. %Effects $Eff_t$ list the changes which the task imposes on the current state of the world

\vskip 0.5em \noindent\colorbox{light-gray}{\begin{minipage}{0.98\textwidth}
\begin{example}
\emph{For example, in the scenario shown in Section~\ref{sec:templates-case_study}, the goal has to be represented as : $goal_\textsf{C} = \{fire\_free[loc\emph{31}] == true \land evacuated[loc\emph{32}] == true \land \ debris\_free[loc\emph{33}] == true\}$.}
\end{example}
\end{minipage}
}\vskip 0.5em

The syntax of goal conditions is the same as for tasks preconditions.
A state is a complete assignment of values to atomic terms in \textsf{V}.
Given a case $\textsf{C}$, an intermediate state $state_{\textsf{C}_{i}}$ is the result of $i$ tasks performed so far, and atomic terms in \textsf{V} may be thought of as ``properties'' of the world whose values may vary across states.
\begin{mydef}
A task $t$ can be performed in a given $state_{\textsf{\emph{C}}_{i}}$ (and in this case we say that $t$ is \textbf{executable} in $state_{\textsf{\emph{C}}_{i}}$) iff $state_{\textsf{\emph{C}}_{i}} \vdash Pre_{t}$, i.e. $state_{\textsf{\emph{C}}_{i}}$ \textbf{satisfies} the preconditions $Pre_t$ for the task $t$.
\end{mydef}
%%YL I would use \models for "satisfies" (I think you meant \models rather then \vdash, or to change the statement "satisfies" with "models"...Since I'm not sure of what you really meant, I avoid to touch this).
%MODELLO
Moreover, if executed, the effects $Eff_t$ of $t$ modify some atomic terms in \textsf{V} and change %the current state
$state_{\textsf{C}_{i}}$ into a new state $state_{\textsf{C}_{i+1}} = update(state_{\textsf{C}_{i}},Eff_{t})$.
%\begin{equation} \label{eq_conv}
%state_{\textsf{C}_{i+1}} = update(state_{\textsf{C}_{i}},Eff_{t})
%\end{equation}
The $update$ function returns the new state obtained by applying effects $Eff_{t}$ on the current state $state_{\textsf{C}_{i}}$. %Since each single effect $eff_{t_{l}} \in Eff_{t}$ is basically an assignment of a value to an atomic term, the $update$ function substitutes all the matching assignments of atomic terms involved in $eff_{t_{l}}$ with the new values imposed by the task effect itself.
%$state_{\textsf{C}_{i+1}} = state_{\textsf{C}_{i}}/Eff^{-}_{t} \cup Eff^{+}_{t}$, that is obtained by removing negative effects and adding positive effects of $t$ to $state_{\textsf{C}_{i}}$.
Starting from a domain theory $\textsf{D}$, a \emph{Process Template} captures a partially ordered set of tasks, whose successful execution (i.e., without exceptions) leads from $init_\textsf{C}$ to $goal_\textsf{C}$. Formally, we define a template as a directed graph consisting of tasks, gateways, events and transitions between them.
\begin{mydef}
Given a domain theory \emph{\textsf{D}}, a set of tasks \emph{\textsf{T}} and a case \emph{\textsf{C}}, a Process Template $\emph{\textsf{PT}}$ is a tuple (N,L) where:
\begin{itemize}[itemsep=1pt,parsep=1pt]
\item N  = $T \cup E \cup W$ is a finite set of nodes, such that :
\begin{itemize}[itemsep=1pt,parsep=1pt]
\item T is a set of tasks instances, i.e., occurrences of a specific task $t \in \emph{\textsf{T}}$ in the range of the process template; %Formally, a task instance is a tuple $tn = (t,Id_{tn})$ where $Id_{tn}$ is the unique identifier of the task instance $tn$ and $t \in \emph{\textsf{T}}$ (cf. Definition~\ref{taskdef}) is the ``task type'' it belongs to. For each pair of task instances $tn_1$ and $tn_2$ that have the same task identifier $Id_{t}$, then $Id_{tn_{1}} \neq Id_{tn_{2}}$ (i.e., $tn_1$ and $tn_2$ are different occurrences of the same task $t$). In the rest of the paper, even if we prefer speak about ``tasks'' for indicating activities occurring in the template, it must be clear that we refer to ``occurrences of tasks'';
\item E is a finite set of events, that consists of a single start event $\ocircle$ and a single end event $\odot$;
\item W = $W_{PS} \cup W_{PJ}$ is a finite set of parallel gateways, represented in the control flow with the $\diamond$ shape with a ``plus'' marker inside.%which are pairs composed of parallel splits $W_{PS}$ and parallel joins $W_{PJ}$.
    %Parallel splits $W_{PS}$ insert a split in the process to create two or more parallel flows. Parallel joins $W_{PJ}$ merge parallel flows. Parallel gateways are represented in the control flow with the $\diamond$ shape with a ``plus'' marker inside;
\end{itemize}
\item L = $L_T \cup L_E \cup L_{W_{PS}} \cup L_{W_{PJ}}$ is a finite set of transitions connecting events, task instances and gateways:\\
\begin{itemize*}[itemjoin=\qquad]
\item $L_T : T \rightarrow (T \cup W_{PS} \cup W_{PJ} \cup {\odot})$
\item $L_E : \ocircle \rightarrow (T \cup W_{PS} \cup {\odot})$\\
\item $L_{W_{PS}} : W_{PS} \rightarrow 2^T$
\ \ \ \ \ \ \ \ \ \ \ \ \ \ \ \ \ \ \item $L_{W_{PJ}} : W_{PJ} \rightarrow (T \cup W_{PS} \cup {\odot})$
\end{itemize*}
\end{itemize}
\end{mydef}
\begin{figure}[t]
\centering{
 \includegraphics[width=0.8\columnwidth]{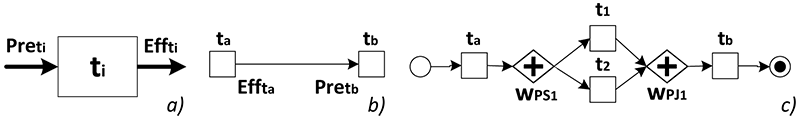}
 } \caption{Task anatomy (a), causality (b) and concurrency (c) in a process model.}
 \label{fig:fig_templates-anatomy}
\end{figure}
Note that the constructs used for defining a template are basically a subset of the ones definable through the BPMN notation. Intuitively, an execution of the process starts at $\ocircle$ and ends at $\odot$; a \emph{task} is an atomic activity executed by the process; \emph{parallel splits} $W_{PS}$ open parallel parts of the process, whereas \emph{parallel joins} $W_{PJ}$ re-unite parallel branches. \emph{Transitions} are binary relations describing in which order the flow objects (tasks, events and gateways) have to be performed, and determine the \emph{control flow} of the template. A transition $l \in L$ is usually represented as $p \rightarrow q$, where $(p,q) \in N$. This represents the fact that there is a transition from the flow object $p$ to the flow object $q$. For $n \in N$, $IN(n)/OUT(n)$ denotes the set of incoming/outgoing transitions of $n$, with the following restrictions :
\begin{itemize}[itemsep=1pt,parsep=1pt]
\item Only one outgoing/incoming flow may be associated with $\ocircle$ and $\odot$ respectively, i.e., $IN(\ocircle) = 0$, $OUT(\ocircle) = 1$, $IN(\odot) = 1$, $OUT(\odot) = 0$
\item Each parallel split $w_{PS} \in W_{PS}$ accepts one incoming flow and more outgoing flows, i.e., $IN(W_{PS}) = 1$, $OUT(W_{PS}) > 1$
\item Each parallel join $w_{PJ} \in W_{PJ}$ accepts more incoming flows and one outgoing flow, i.e., $IN(W_{PJ}) > 1$, $OUT(W_{PJ}) = 1$;
\item Every task $t \in T$ is connected exactly to one incoming/outgoing flow, i.e., $IN(t) = 1$, $OUT(t) = 1$.
\end{itemize}
%The way in which the objects are connected determines the behavior of the process template.
%%YL changes a bit; "causality" is a strange name for this relation (--> It is often used in BP literature).
For example, in Fig.~\ref{fig:fig_templates-anatomy}(b) we have a relation of \emph{causality} between tasks $t_a$ and $t_b$, stating that $t_a$ must take place before $t_b$ happens as $t_a$ achieves some of $t_b$'s preconditions. %Note that task in the sequence flow are ordered so that the effects of preceding activities satisfy the preconditions of subsequent tasks.
An important feature provided by a process template is \emph{concurrency}, i.e., several tasks can occur concurrently. In Fig.~\ref{fig:fig_templates-anatomy}(c) an example of concurrency between $t_1$ and $t_2$ is shown. In order to represent two or more concurrent tasks in a template, the process designer makes use of the parallel gateways, that indicate points of the template in which tasks can be carried out concurrently. A parallel gateway may act as a \emph{divergence element} (parallel split $W_{PS}$) or \emph{convergent element} (parallel join $W_{PJ}$). As a point of divergence, the diamond shape is used when many tasks have to be carried out at the same time and in any order, which indicates that all transitions that exit this shape will be enabled together. As a point of convergence, the diamond shape is used to synchronize paths that exit a divergence element. This means that a process template is a \emph{graph of tasks} (i.e., not a sequence) that imposes a \emph{partial order} on their execution.
% (i.e., the steps of the order are not necessarily ordered to any other steps of the order; only some are are ordered with respect to one another).
A \emph{linearization} of a process template is any linear ordering of the tasks that is consistent with the ordering constraints of the template itself~\cite{Godefroid:1996}; i.e., a linearization of a partial order is a potential \emph{execution path} of the template from the start event $\ocircle$ to the end event $\odot$. For example, the template in Fig.~\ref{fig:fig_templates-anatomy}(c) has two possible execution paths $r_1 = [\ocircle;t_a;t_1;t_2;t_b;\odot]$ and $r_2 = [\ocircle;t_a;t_2;t_1;t_b;\odot]$.
\begin{mydef}
Given a process template $\emph{\textsf{PT}}$ and an initial state $state_{\emph{\textsf{C}}_{0}} \vdash init_{\emph{\textsf{C}}}$, a state $state_{\emph{\textsf{C}}_{i}}$ is said to be \textbf{reachable} with respect to $\emph{\textsf{PT}}$ iff there exists an execution path $r = [\ocircle;t_1;t_2;...t_k;\odot]$ of $\emph{\textsf{PT}}$ and a task $t_i$ (with i $\in$ 1..k) such that
$state_{\emph{\textsf{C}}_{i}} = update(update(\ldots update(state_{\emph{\textsf{C}}_{0}},Eff_{t_{1}}) \ldots ,Eff_{t_{i-1}}),Eff_{t_{i}})$.
% $state_{\emph{\textsf{C}}_{i}} = update(state_{\emph{\textsf{C}}_{i-1}},Eff_{t_{i}})$, and $state_{\emph{\textsf{C}}_{i-1}} = update(update(...(update(init_\emph{\textsf{C}},Eff_{t_{1}}))))$.
\end{mydef}
%Therefore, $state_{\textsf{C}_{i}}$ is a state that may be reached along some possible execution of \textsf{PT} in $init_\textsf{C}$.
\begin{mydef}
A task $t_1$ \textbf{affects} the execution of a task $t_2$ ($t_1 \triangleright t_2$) iff there exists a reachable state $state_{\emph{\textsf{C}}_{i}}$ of $\emph{\textsf{PT}}$ (for some initial state $state_{\emph{\textsf{C}}_{0}}$) such that:
\begin{itemize}[itemsep=1pt,parsep=1pt]
\item $state_{\emph{\textsf{C}}_{i}} \vdash Pre_{t_{2}}$
\item $update(state_{\emph{\textsf{C}}_{i}},Eff_{t_{1}}) \nvdash Pre_{t_{2}}$
\end{itemize}
\end{mydef}
This means that $Eff_{t_{1}}$ modify some terms in \textsf{V} that are required as preconditions for making $t_2$ executable in $state_{\textsf{C}_{i}}$. %Moreover, it is straightforward to show that if $t_1 \triangleright t_2$ in $state_{\textsf{C}_{i}}$, then there exists a relation of causality between $t_1$ and $t_2$.
\begin{mydef}
Given a process template $\emph{\textsf{PT}}$, a case $\emph{\textsf{C}}$ and an initial state $state_{\emph{\textsf{C}}_{0}} \vdash init_{\emph{\textsf{C}}}$, an execution path $r = [\ocircle;t_1;t_2;...t_k;\odot]$ (where $k = |T|$) of $\emph{\textsf{PT}}$ is said to be \textbf{executable} in $\emph{\textsf{C}}$ iff:\\
\begin{itemize*}[itemjoin=\qquad]
\item $state_{\emph{\textsf{C}}_{0}} \vdash Pre_{t_{1}}$\\
\item for $1 \leq i \leq k-1$, $update(state_{\emph{\textsf{C}}_{i-1}},Eff_{t_{i}}) \vdash Pre_{t_{i+1}}$\\
\item $update(state_{\emph{\textsf{C}}_{k-1}},Eff_{t_{k}}) = state_{\emph{\textsf{C}}_{k}} \vdash goal_\emph{\textsf{C}}$
\end{itemize*}
\end{mydef}
\begin{mydef}
\label{executability}
A process template $\emph{\textsf{PT}}$ is said to be \textbf{executable} in a case $\emph{\textsf{C}}$ iff any execution path of $\emph{\textsf{PT}}$ is executable in $\emph{\textsf{C}}$.
\end{mydef}
The concept of execution path of a template helps in defining formally the \emph{independence} property between concurrent tasks:
\begin{mydef}
Given a process template $\emph{\textsf{PT}}$, a task $t_x$ is said to be \textbf{concurrent} with a task $t_z$ iff there exist two execution paths $r_1$ and $r_2$ of $\emph{\textsf{PT}}$ such that $r_1 = [\ocircle;t_1;t_2;...;t_x;...;t_z;...;\odot]$ and $r_2 = [\ocircle;t_1;t_2;...;t_z;...;t_x;...;\odot]$.
\end{mydef}
%A more interesting feature of concurrent tasks concerns their independence property:
\begin{mydef}
Two concurrent tasks $t1$ and $t2$ are said to be \textbf{independent} ($t1 \parallel t2$) iff $t1 \ntriangleright t2$ and $t2 \ntriangleright t1$; that is, $t1$ does not affect $t2$ and vice versa.
\end{mydef}

%The boxes represent activities and the relations between them are different kinds of constraints for executing these activities
%\marginpar{** Metto l'esempio completo in appendice?} 

\section{On Synthesizing a Library of Process Templates}
\label{sec:templates-approach}

Our approach is focussed on the development and use of a \emph{library} of \emph{process templates}.  These are reusable processes that achieve specified goals of interest in a range of starting states, i.e., any starting state that satisfies the template's required preconditions. We claim that in many cases, process template development can be partially automated through the use of AI planning tools.

Specifically, we focus on the use of a POP-based tool that can synthesize complex concurrent process models that are hard for humans to develop correctly (it is difficult for human designers to ensure that concurrent tasks never interfere with each other). The process designer's role is to specify the domain and context in which the template may be executed. Our POP-based tool can then be used to synthesize some candidate process models for the template. If the tool fails to generate a process model or the generated processes are of insufficient quality (e.g., they are too time consuming, unreliable, or lack concurrency), the designer can refine the domain theory and case to obtain better solutions.  Once a satisfactory template has been obtained, it is added to the library. The POP-based tool automatically identifies the required preconditions for the template to achieve its goal, meaning the template can be reused whenever a case that matches the template's preconditions arises.

In a dynamic process domain, there is a wide range of cases/contexts to handle.  New cases often arise and the requirements for the system frequently evolve. The designer develops and maintains the template library over time, in order to have templates that handle effectively most the cases that arise. The library also stores the templates specifications, i.e., their process domains, goals, and initial conditions/cases. New cases are often variants of existing cases and the designer will be able to adapt existing domain and case specifications to generate templates for the new cases using the tool. In the following, we describe an architecture and methodology for developing such a library-based approach.

\subsection{The General Framework}
\label{subsec:templates-approach-framework}
Our approach to the definition of a process template (cf.~Fig.~\ref{fig:fig_templates-approach})
requires a fundamental shift in how one thinks about modeling business processes. Instead of defining a process model ``by hand'', here, the process designer has to address his/her efforts to specifying the Domain Theory $\textsf{D}$ and the Case $\textsf{C}$ to be handled. In particular, the process designer has to ``guess'' the starting condition $init_{\textsf{C}}$, by instantiating only some atomic terms, that ones needed for depicting the context the user has in mind. This means that $init_{\textsf{C}}$ can be partially specified, i.e, not all terms need to be instantiated with some value. Also the goal condition $goal_\textsf{C}$ is required, since it reflects the target state after having executed the template.

\vskip 0.5em \noindent\colorbox{light-gray}{\begin{minipage}{0.98\textwidth}
\begin{example}
\emph{Let us consider the scenario depicted in Section~\ref{sec:templates-case_study}, represented with a Domain Theory $\textsf{D}_1$ and a goal condition $goal_{\textsf{C}_1} = \{fire\_free[loc\emph{31}] == true \land evacuated[loc\emph{32}] == true \land \ debris\_free[loc\emph{33}] == true\}$. Since the process designer may be interested in an emergency process that involves the fewest participants, s/he can start by modeling a starting condition $init_{\textsf{C}_1}$ with information involving only actors \emph{act1} and \emph{act2} and the robot \emph{rb1}, while terms involving \emph{act3}, \emph{act4} and \emph{rb2} are not explicitly instantiated in $init_{\textsf{C}_1}$.}
\end{example}
\end{minipage}
}\vskip 0.5em

A specific module named \texttt{PC2PR} is in charge of converting the Domain Theory $\textsf{D}$ and the Case $\textsf{C}$ just defined into the corresponding Planning Domain \textsf{PD} and Planning Problem \textsf{PR} specified in PDDL version 2.1 (cf.~\cite{Fox@JAIR2003}). Basically, \texttt{PC2PR} implements a function $f_{\texttt{PC2PR}} : (\textsf{D},init_\textsf{C},goal_\textsf{C}) \rightarrow (\textsf{PD},init_{\textsf{PR}},goal_{\textsf{PR}})$.
Since the use of classical partial-order algorithms for synthesizing the template requires the initial state of \textsf{PR} to be a complete state, we make the closed world assumption~\cite{Reiter@NMR1987} and
assume that every atomic term $v_{j}[y_{j}]$ that is not explicitly specified in $init_\textsf{C}$ is assumed to be false (if $v_{j}[y_{j}]$ is a boolean term) or ``not assigned'' (if $v_{j}[y_{j}]$ is a integer or a functional term) in $init_\textsf{PR}$. Technical details of the algorithm employed in \texttt{PC2PR} are shown in Section~\ref{subsec:templates-algorithms-PC2PR}.

\begin{figure}[t]
\centering{
 \includegraphics[width=0.99\columnwidth]{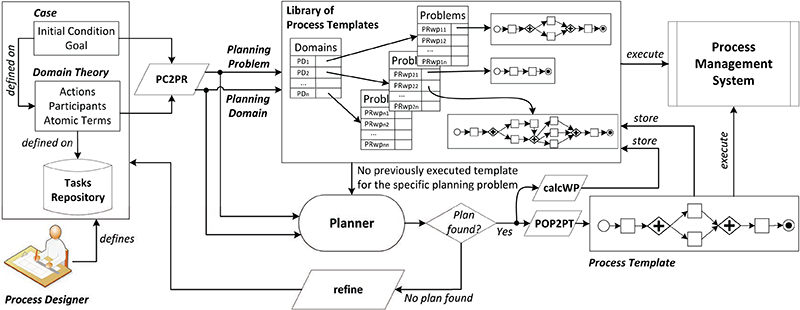}
 } \caption{Overview of the general approach.}
 \label{fig:fig_templates-approach}
\end{figure}

At the heart of our approach lies a library of process templates built for specific planning domains and problems/cases.
%%YL perhaps use $\textsf{D}$ and the Case $\textsf{C}$ instead of \textsf{PD} and \textsf{PR} below? (-> It would be a nice idea, but it would mean to have a further module translating weakest preconditions expressed for a planning problem in a condition expressed in our specific syntax. Maybe it could be updated in this way in the future refinements).
If library templates exist for the current values of \textsf{PD} and \textsf{PR}, we can retrieve an appropriate template and allow to execute it through an external PMS. However, if no template exists for the current values of \textsf{PD} and \textsf{PR}, we can invoke an external POP planner on these same inputs. The planner will try to synthesize a plan fulfilling the goal condition $goal_{\textsf{PR}}$.
If the planner is unable to find a plan, this suggests there are some missing elements in the definition of the Domain Theory $\textsf{D}$ or in the Case $\textsf{C}$. Hence, to address this particular case, one can try to \emph{refine} the case $\textsf{C}$ and add information so that it becomes possible to generate a plan. There are many ways to strengthen a problem description, such as adding to the starting condition $init_{\textsf{C}}$ some terms initially ignored (e.g., to specify the position of every participant), or adding new objects in $\textsf{D}$ or new activities in \textsf{T} (e.g., if a task for extinguish fire is missing).
%The process designer may change :
%\begin{itemize}
%\item \emph{the starting state $init_\textsf{C}$}. Changing the starting state appears to be reasonable, provided we do not make the trivial change of making goal atoms true. Some atomic terms initially ignored could be explicitly instantiated in the starting state $init_\textsf{C}$.
%\item \emph{the goal condition $goal_\textsf{C}$}. Weakening the goal formula is a possible way to go. We would then reduce the search for a plan fulfilling the goal.
%\item \emph{the Domain Theory $\textsf{D}$}. There are situations in which adding new objects or removing existing objects from $\textsf{D}$ is the only way that make the problem solvable.
%\item \emph{the set of available tasks}. Adding new activities in the task repository $\textsf{RP}_\textsf{T}$ may be a further possible way to go; it could happen that some task is missing.
%\end{itemize}
Our approach assumes that one specifies the context step-by-step, and requires the process designer to contribute to the system.

\vskip 0.5em \noindent\colorbox{light-gray}{\begin{minipage}{0.98\textwidth}
\begin{example}
\emph{If the planner is invoked with $init_{\textsf{PR}_1}$ (devised by applying $f_{\texttt{PC2PR}}$ on the triple $\textsf{D}_1$, $init_{\textsf{C}_1}$, $goal_{\textsf{C}_1}$), it will not be able to find any plan for the specific problem. This is because \emph{rb1} does not have enough battery charge for moving and removing debris. The designer can try to add new information to the problem description by instantiating in $init_{\textsf{C}_1}$ all those atomic terms related to actor \emph{act4}, the only one able to charge robot batteries, and devises a new starting condition $init_{\textsf{C}_2}$ (and, consequently, a new initial planning state $init_{\textsf{PR}_2})$. A planner invoked with $init_{\textsf{PR}_2}$ is finally able to find a consistent plan $\textsf{P}_1$ satisfying $goal_{\textsf{PR}_1}$.}
\end{example}
\end{minipage}
}\vskip 0.5em

When the POP planner is able to find a partially ordered plan $\textsf{P}$ consistent with the actual contextual information, three further steps are required. First of all, there is the need to translate the plan just found into a template \textsf{PT} that preserves the ordering constraints imposed by the plan.
A \textbf{solution plan} is a three-tuple $\textsf{P} = (A,O,CL)$ that specifies the causal relationships for the actions $a_i \in A$, but without specifying an exact order for executing them. Since the set of actions $A \in \textsf{P}$ and the set of ordering constraints $O$ over $A$ must be explicitly expressed as nodes and transitions for the template's control flow (as well as their intrinsic ordering),  %. For this purpose,
we developed a module \texttt{POP2PT}
%\footnote{The software implementing \texttt{POP2PT} is available at \url{www.dis.uniroma1.it/~marrella/documents/POP2PT_CAiSE2013.zip}}
implementing a function $f_{\texttt{POP2PT}} : \textsf{P} \rightarrow \textsf{PT}$
that takes as input \textsf{P} and converts it into a template \textsf{PT}. A detailed description of the module implementing \texttt{POP2PT} is presented in Section~\ref{subsec:templates-algorithms-POP2PT}.
%\vspace{-0.8em}
%\begin{equation} \label{eq_conv}
%f_{\texttt{POP2PT}} : \textsf{P} \rightarrow \textsf{PT}.\vspace{-0.5em}
%\end{equation}

\vskip 0.5em \noindent\colorbox{light-gray}{\begin{minipage}{0.98\textwidth}
\begin{example}
\emph{By applying $f_{\texttt{POP2PT}}$ to $\textsf{P}_1$, we devise the template $\textsf{PT}_1$ in~Fig.~\ref{fig:fig_templates-template}(a). Dashed arrows are causal links that imply an ordering constraint between pairs of tasks. For example, the ordering constraint between \emph{Go[act1,loc00,loc31]} and \emph{ExtinguishFire[act1,loc31]} is derived from the fact that \emph{Go} has the effect \emph{at[act1]=loc31} that is needed by \emph{ExtinguishFire} as precondition (i.e., \emph{act1} has to be located in \emph{loc31} for extinguish the fire in those location).}
\end{example}
\end{minipage}
}\vskip 0.5em

Secondly, our approach aims to $infer$ the weakest preconditions $w_\textsf{PT}$ under which the process template will achieve its objectives, i.e., to identify the least amount of information about the starting state that is required for the template to achieve its goal. The module we use for inferring $w_\textsf{PT}$ is called \texttt{calcWP} and works by analyzing the set of causal links $CL$ computed by the POP planner, to see which logical facts $f_k$ are involved in causal links that originate from the dummy start action $a_0$ and end in some $a_k \in A$. More formally:\vspace{-0.5em}
\begin{equation} \label{eq_infer}
%%YL I would write \forall cl_k, f_k, a_k) (->OK)
\forall (cl_k, f_k, a_k) \ s.t. \ cl_k = (a_0 \xrightarrow{f_k} a_k) \in CL, then \ f_k \in w_\textsf{PT}.\vspace{-0.5em}
\end{equation}
This exploits the fact that the effects of $a_0 \in A$ reflect the ``true facts'' in the starting state $init_\textsf{PR}$ that are actually used by the plan and ensure that it is executable and achieves the goal. A first result is that all atomic terms not explicitly instantiated in $init_\textsf{C}$ will correspond to facts assumed to be $false$ in $init_\textsf{PR}$. As the closed world assumption suggests, false facts do not appear in $init_\textsf{PR}$ and are not eligible for being used as effects of $a_0$. Secondly, by going through causal links as specified by~(\ref{eq_infer}), it is possible to identify all logical facts that - although being instantiated in $init_\textsf{PR}$ - are not required by the plan \textsf{P}. Basically, $w_\textsf{PT}$ is the conjunction of those facts strictly required for executing the plan \textsf{P} (and, consequently, the devised template \textsf{PT}), and is used for devising a new problem $\textsf{PR}_{wp}$ = \{$w_\textsf{PT},goal_{\textsf{PR}}$\}.
%These facts can be removed from $init_\textsf{PR}$ (by generating a new planning problem $\textsf{PR}_{wp}$ = \{$init_{\textsf{PR}_{wp}},goal_{\textsf{PR}}$\}) without affecting the correctness of the plan itself. Basically, $init_{\textsf{PR}_{wp}}$ is the smallest subset of $init_\textsf{PR}$ ensuring that execution of the plan \textsf{P} terminates in a final state satisfying $goal_{\textsf{PR}}$.
\begin{figure}[t]
\centering{
 \includegraphics[width=0.80 \columnwidth]{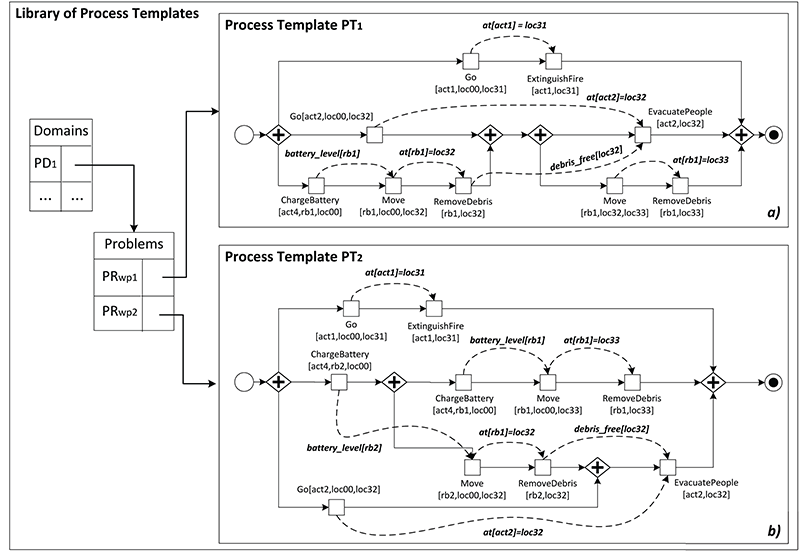}
 } \caption{Templates dealing with the scenario in Fig.~\ref{fig:fig_templates-case_study}.}
 \label{fig:fig_templates-template}
\end{figure}

\vskip 0.5em \noindent\colorbox{light-gray}{\begin{minipage}{0.98\textwidth}
\begin{example}
\emph{If we invoke \texttt{calcWP} on the causal links devised from $\textsf{P}_1$, we may easily infer $w_{\textsf{PT}_1}$. This means that for executing $\textsf{PT}_1$ (cf. Fig.~\ref{fig:fig_templates-template}(a)) we simply need to know the positions and capabilities of actors \emph{act1}, \emph{act2}, \emph{act4} and robot \emph{rb1}; the other contextual information is not strictly needed for a correct execution of the template.}
\end{example}
\end{minipage}
}\vskip 0.5em

Thirdly, after the process template \textsf{PT} has been synthesized starting from \textsf{P}, it can be stored in our library together with information about the planning domain \textsf{PD} and abstracted problem $\textsf{PR}_{wp}$. Specifically, for every different planning domain \textsf{PD} devised through our approach, there is a pointer to a list of different abstracted planning problems $\textsf{PR}_{wp}$ used for obtaining consistent plans in previous executions of our tool, together with the devised process templates. When a process designer defines a new Domain Theory $\textsf{D}_{new}$ and a Case $\textsf{C}_{new}$, the system checks if the corresponding planning domain $\textsf{PD}_{new}$ and problem $\textsf{PR}_{new}$ (obtained by applying $f_{\texttt{PC2PR}}$ to $\textsf{D}_{new}$ and $\textsf{C}_{new}$) are already present in our library. If the library contains a planning domain $\textsf{PD}$ and an abstracted planning problem $\textsf{PR}_{wp}$ (together with the associated template $\textsf{PT}_{lib}$) such that $\textsf{PD}_{new}$ = \textsf{PD} and $goal_{\textsf{PR}}$ = $goal_{\textsf{PR}_{new}}$ and with $init_{\textsf{PR}_{new}} \vdash w_\textsf{PT}$, then $\textsf{PT}_{lib}$ is executable respect to $\textsf{PR}_{new}$ (and therefore with respect to $\textsf{C}_{new}$). This makes our templates reusable in a variety of different situations, in which we don't have complete information about the starting state. At this point, the process designer may decide to execute through an external PMS the template $\textsf{PT}_{lib}$ just found, or to refine $\textsf{D}_{new}$ and $\textsf{C}_{new}$ if $\textsf{PT}_{lib}$ does not fit with the designer expectations (for example, if the designer wants a template with a higher degree of concurrency).

\vskip 0.5em \noindent\colorbox{light-gray}{\begin{minipage}{0.98\textwidth}
\begin{example}
\emph{Let us suppose that the template shown in Fig.~\ref{fig:fig_templates-template}(a) does not satisfies at all the process designer, since s/he could add one further robot \emph{rb2} to the scenario in order to increase the degree of parallelism in the tasks execution. It follows that a new starting condition $init_{\textsf{C}_3}$ including also contextual information about \emph{rb2} can be defined. The associated initial planning state $init_{\textsf{PR}_3}$, together with the original goal condition $goal_{\textsf{PR}_1}$ and the planning domain $\textsf{PD}_1$ are first used for verifying if a previously executed template is already stored the library. The library returns the template $\textsf{PT}_1$ shown in Fig.~\ref{fig:fig_templates-template}(a), since its weakest preconditions $w_{\textsf{PT}_1}$ are satisfied by $init_{\textsf{PR}_3}$ (i.e., $init_{\textsf{PR}_3} \vdash w_{\textsf{PT}_1}$), and goal condition and planning domain are the same as before. Even if the template in Fig.~\ref{fig:fig_templates-template}(a) is executable with $init_{\textsf{PR}_3}$, the designer may try to search for another plan that (maybe) could exploit the presence of the new robot \emph{rb2}. The planner builds a new plan starting from $init_{\textsf{PR}_3}$, and the associated template $\textsf{PT}_2$ is shown in Fig.~\ref{fig:fig_templates-template}(b). $\textsf{PT}_2$ requires the presence of one more robot (i.e., robot \emph{rb2}) and more contextual information for being executed (so its weakest preconditions $w_{\textsf{PT}_2}$ are ``richer'' than $w_{\textsf{PT}_1}$), but it provides an higher degree of concurrency in the execution of its tasks. This means that the process designer can choose which template is the best for her/his purposes: one with less concurrency in the tasks enactment but with the fewest participants (cf. Fig.~\ref{fig:fig_templates-template}(a)), or one with more concurrency but requiring more resources for being executed (cf. Fig.~\ref{fig:fig_templates-template}(b)).}
\end{example}
\end{minipage}
}\vskip 0.5em

%A so defined process template $\textsf{PT}$ can now be executed through an external PMS with a proper execution engine. The process template shown in Fig.~\ref{fig:fig_template} has been synthesized %by representing the Domain Theory \textsf{D} linked to the example shown in Section~\ref{example} as a PDDL planning domain, and the Case \textsf{C} as a PDDL planning problem through our approach.

Despite the fact that a template is executable ``as is'', it can be seen as an ``intermediate version'' of a completely defined process. In fact, the present POP-based tool cannot be used to synthesize templates involving loops or branching on conditions, and the designer may develop these manually by customizing the template to the specifics of the situation. In the future, one could experiment with more complex AI-planning systems to handle such cases.

\subsection{Properties}
A process template \textsf{PT} guarantees some interesting properties, such as the \emph{correctness} of the template with respect to the information available in the starting state, and the property of \emph{sound concurrency}, meaning that concurrent activities of a template are proven to be effectively independent one from another (i.e., they can not affect the same data).
%Note that these ordering constraints are consistent because at least one total order exists that satisfies them. Maintaining the consistency of a partially ordered set of actions is just one (simple) example of constraint satisfaction in planning. We see more in subsequent sections.
%\vspace{-0.5em}
\begin{mydef}
\label{def_correctness}
A process template \textsf{\emph{PT}} is \textbf{correct} respect to a domain theory \textsf{\emph{D}} and a case $\textsf{\emph{C}}$ iff it is executable in $\textsf{\emph{C}}$.
\end{mydef}
\begin{theorem}
Given a solution plan \emph{\textsf{P}}, a process template \emph{\textsf{PT}} synthesized for \emph{\textsf{P}} using our approach is \textbf{correct} for any process case $\textsf{\emph{C}}$ that satisfies the weakest preconditions $wp_{\emph{\textsf{PT}}}$ inferred from \emph{\textsf{P}}.
\end{theorem}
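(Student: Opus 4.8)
The plan is to reduce \textbf{correctness} of the template to the classical soundness of partial-order planning, and then to show that passing from the fully specified planning state $init_{\textsf{PR}}$ to an arbitrary case $\textsf{C}$ with $init_{\textsf{C}} \vdash w_{\textsf{PT}}$ preserves executability. By Definition~\ref{def_correctness} together with Definition~\ref{executability}, it suffices to prove that \emph{every} execution path (linearization) $r = [\ocircle;t_1;\ldots;t_k;\odot]$ of $\textsf{PT}$ is executable in $\textsf{C}$, so the whole argument reduces to reasoning about one arbitrary such path.

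First I would invoke the soundness of POP. Since $\textsf{P}=(A,O,CL)$ is a \textbf{solution}, it is consistent (acyclic $O$, no causal-link conflicts) and has no open conditions; hence every total order of $A$ respecting $O$ is executable from $init_{\textsf{PR}}$ and reaches a state satisfying $goal_{\textsf{PR}}$. Because $f_{\texttt{POP2PT}}$ constructs $\textsf{PT}$ so as to preserve exactly the ordering constraints $O$, the execution paths of $\textsf{PT}$ correspond precisely to the topological orders of $\textsf{P}$; it is therefore enough to argue about a single such linearization $r$.

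The crux, and the main obstacle, is to show that executing $r$ over the \emph{possibly different and only partially specified} initial state $init_{\textsf{C}}$ agrees, on every fluent that the plan actually consults, with its execution over $init_{\textsf{PR}}$. I would establish this by induction on the position $i$ along $r$, maintaining the invariant that each atomic precondition in $Pre_{t_i}$ holds in $state_{\textsf{C}_{i-1}}$. The argument rests on the no-open-condition property: every $p \in Pre_{t_i}$ is protected by a causal link $c \xrightarrow{p} t_i \in CL$. If $c=a_0$, then by the definition of \texttt{calcWP} (Equation~(\ref{eq_infer})) we have $p \in w_{\textsf{PT}}$, so $p$ holds in $init_{\textsf{C}}$ by the hypothesis $init_{\textsf{C}} \vdash w_{\textsf{PT}}$, and since no earlier task of $r$ threatens the link, $p$ survives to $state_{\textsf{C}_{i-1}}$. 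If instead $c \in A$, then $c=t_j$ with $j<i$ (the link induces $c \prec t_i \in O$, which $r$ respects), so $t_j$ established $p$ and the protected causal link guarantees no intervening task falsifies it. Here I would crucially use the language restrictions of Section~\ref{sec:templates-approach}: preconditions are \emph{positive} (no NOT) and effects are \emph{deterministic} with no conditional effects, so discrepancies between $init_{\textsf{C}}$ and $init_{\textsf{PR}}$ on fluents \emph{outside} $w_{\textsf{PT}}$, which by construction never appear in any causal link, can neither invalidate a precondition nor alter an effect, ruling out spurious interference.

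Finally, applying the same inductive argument to the dummy final action $a_\infty$, whose preconditions are exactly $goal_{\textsf{PR}}=goal_{\textsf{C}}$ and are all closed by causal links, yields $state_{\textsf{C}_k} \vdash goal_{\textsf{C}}$. Thus $r$ satisfies all three clauses of Definition~\ref{executability}, and since $r$ was arbitrary, $\textsf{PT}$ is executable, hence correct, in $\textsf{C}$. I expect the delicate step to be making ``no intervening task falsifies $p$'' fully rigorous: it requires translating the plan-space notion of a \emph{protected} (threat-free) causal link in $\textsf{P}$ into a statement about the concrete state sequence $state_{\textsf{C}_0},\ldots,state_{\textsf{C}_k}$ generated by the $update$ function, and it is precisely the absence of negative preconditions and of conditional effects that makes this translation sound.
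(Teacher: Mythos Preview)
Your proof is correct and considerably more detailed than the paper's. The paper disposes of the theorem in three sentences: it simply observes that a sound POP planner, by definition, returns a consistent plan leading from the initial state to the goal, and since $\textsf{D}$ and $\textsf{C}$ are encoded as a PDDL domain and problem, the synthesized template is correct with respect to Definition~\ref{def_correctness}. That is the entire argument; the paper does not unpack the quantification ``for any process case $\textsf{C}$ that satisfies $w_{\textsf{PT}}$'' at all.

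You, by contrast, actually prove that quantified statement. Your induction along an arbitrary linearization, together with the case split on whether a causal link originates at $a_0$ (so the protected literal lies in $w_{\textsf{PT}}$ by Equation~(\ref{eq_infer})) or at an earlier plan action, is precisely what is needed to transfer executability from the concrete planning state $init_{\textsf{PR}}$ to any $init_{\textsf{C}} \vdash w_{\textsf{PT}}$. Your explicit appeal to the language restrictions (no negative preconditions, no conditional effects) to rule out interference from fluents outside $w_{\textsf{PT}}$ is likewise a genuine ingredient absent from the paper's sketch. What the paper's brevity buys is readability; what your route buys is an actual argument for the theorem as stated, rather than for the weaker claim that $\textsf{PT}$ is correct merely for the single case from which $\textsf{P}$ was synthesized.
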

The proof of Theorem 1 is straightforward. By definition, a sound planner generates a \emph{consistent} plan~\cite{Weld@AImag1994} that leads from an initial state to a goal. Since we represent the domain theory $\textsf{D}$ and the case $\textsf{C}$ respectively as PDDL planning domain and problem, the planner synthesizes a plan (i.e., a process template) that is correct respect to Definition~\ref{def_correctness}. Note that we could also show that the process template guarantees the achievement of the goal.

A second property we can prove on templates is \emph{sound concurrency}. Despite the fact that in a process designed through the rules imposed by data patterns~\cite{data-patterns} and workflow patterns~\cite{workflow-patterns} the concurrent execution of two or more tasks should guarantee the consistency of data accessed by the concurrent tasks, in practice this is often not true. In fact, in complex environments there isn't a clear correlation between a change in the context and corresponding process changes, making difficult to design by hand a process template where concurrent tasks are also independent. On the contrary, all concurrent tasks of a template synthesized with our approach are proven to be \emph{independent} one from another.
\begin{theorem}
Given a process template \emph{\textsf{PT}} synthesized with our approach, all concurrent tasks are \textbf{independent}.
\end{theorem}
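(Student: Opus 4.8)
The plan is to exploit the tight correspondence between the partially ordered plan $\textsf{P} = (A,O,CL)$ produced by the POP planner and the template $\textsf{PT}$ obtained via $f_{\texttt{POP2PT}}$, which by construction preserves the ordering constraints $O$. The first step is to establish a bridging lemma: two tasks $t_x,t_z$ are \emph{concurrent} in $\textsf{PT}$ only if they are \emph{unordered} in $O$, i.e. neither $t_x \prec t_z$ nor $t_z \prec t_x$ is entailed by $O$. This follows directly from the definition of concurrency, since a concurrent pair admits two execution paths placing $t_x$ before $t_z$ and $t_z$ before $t_x$; as every execution path is a linearization consistent with $O$, any ordering constraint between the two would forbid one of those paths. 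Hence it suffices to prove that every unordered pair of tasks in a \emph{solution} (consistent, open-condition-free) plan is independent.

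I would then argue by contraposition: assume $t_x,t_z$ are concurrent (hence unordered) yet, without loss of generality, $t_x \triangleright t_z$. Unfolding the definition of $\triangleright$, there is a reachable state $state_{\textsf{C}_{i}}$ with $state_{\textsf{C}_{i}}\vdash Pre_{t_z}$ and $update(state_{\textsf{C}_{i}},Eff_{t_x})\nvdash Pre_{t_z}$; since $Pre_{t_z}$ is a conjunction of atomic conditions, at least one ground precondition $p \in Pre_{t_z}$ is true in $state_{\textsf{C}_{i}}$ but falsified by $Eff_{t_x}$. The key structural fact to invoke is that in a solution plan there are no open conditions, so $p$ is protected by a causal link $c \xrightarrow{p} t_z$ for some producer $c \in A$. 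Because $Eff_{t_x}$ deletes $p$, the action $t_x$ is a \emph{threat} to this causal link.

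The heart of the argument is the threat-resolution invariant of POP: a consistent plan contains no unresolved threats, so the threatening action $t_x$ must be either \emph{demoted} ($t_x \prec c$) or \emph{promoted} ($t_z \prec t_x$). In the demotion case, composing with the ordering $c \prec t_z$ implicit in the causal link yields $t_x \prec t_z$; in the promotion case we obtain $t_z \prec t_x$ directly. Either way $O$ orders the pair, contradicting the fact that they are unordered. Therefore $t_x \ntriangleright t_z$, and by symmetry $t_z \ntriangleright t_x$, so $t_x \parallel t_z$, which is exactly the claim.

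The main obstacle I anticipate is the faithful translation of the \emph{semantic} affect relation $\triangleright$ — defined over reachable template states and over preconditions built from functional and integer terms with relational operators — into the \emph{syntactic} notion of a threat against a causal link in $\textsf{P}$. For the purely boolean fragment this is the textbook add/delete-threat correspondence, but I would need to verify that a numeric effect (e.g. a decrement that pushes a term below a threshold appearing in a $\geq$ precondition) or a functional reassignment likewise registers as a genuine threat that POP orders around. Showing that the $\triangleright$ condition always witnesses a clobbered, causal-link-protected precondition — and never a spurious one — is the delicate point on which the rest of the proof rests.
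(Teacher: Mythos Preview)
Your argument is correct and follows the same contradiction skeleton as the paper's proof, but your formulation via \emph{threats} is actually more accurate. The paper writes that if $t_1 \triangleright t_2$ then ``this dependency between $t_1$ and $t_2$ would be represented with a causal link $t_1 \xrightarrow{e} t_2$'', which conflates clobbering with establishing; a causal link records that $t_1$ \emph{achieves} $e$ for $t_2$, not that it falsifies it. Your route --- find the protected precondition $p$, locate the causal link $c \xrightarrow{p} t_z$, recognise $t_x$ as a threat, and invoke promotion/demotion --- is the standard and technically correct POP argument and yields the ordering constraint the paper simply asserts. The obstacle you flag about numeric and functional preconditions is real and is not addressed in the paper's proof either; the paper implicitly treats all preconditions as propositional and leaves that gap open.
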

\begin{proof}
By contradiction, let us suppose that a process template \textsf{PT} has two concurrent tasks $t_1$ and $t_2$ such that $t_1 \nparallel t_2$.
Hence, $t_1$ (or $t_2$) has some effect affecting the precondition of $t_2$ (or of $t_1$). This means that $t_1 \triangleright t_2$ or $t_2 \triangleright t_1$. Since \textsf{PT} has been synthesized as result of a POP planner, this dependency between $t_1$ and $t_2$ would be represented with a causal link $t_1 \xrightarrow{e} t_2$ (or $t_2 \xrightarrow{e} t_1$), where $e$ is an effect of task $t_1$ and a precondition for task $t_2$ (or vice-versa). This causal link requires an ordering between $t_1$ and $t_2$, meaning they need to be executed (and represented in the process template) in sequence. But this means that $t_1$ and $t_2$ are not concurrent tasks, by contradicting the original hypothesis.
\end{proof} 

\section{Translation Algorithms}
\label{sec:templates-algorithms}

This section is focussed primarily on presenting technical details concerning the translation algorithms introduced in Section~\ref{sec:templates-approach}. Specifically, in the following, we analyze:
\begin{itemize}[itemsep=1pt,parsep=1pt]
\item the module \texttt{PC2PR} used for translating a domain theory \textsf{D} and a process case \textsf{C} into a planning domain \textsf{PD} and a planning problem \textsf{PR};
\item the module \texttt{POP2PT} used for converting a partially ordered plan \textsf{P} into a process template \textsf{PT}.
\end{itemize}

\subsection{Representing Domain Theories and Process Cases in PDDL}
\label{subsec:templates-algorithms-PC2PR}

To obtain a process template that handles a Case $\textsf{C}$, a corresponding PDDL planning problem definition \textsf{PR} has to be specified. This can be done by mapping $init_\textsf{C}$ to $init_{\textsf{PR}}$ and $goal_\textsf{C}$ to
$goal_{\textsf{PR}}$. The planning domain \textsf{PD} is built starting from the definition of ground atomic terms and data types as shown in Section~\ref{sec:templates-template}, and by making explicit the \emph{actions} associated with each annotated task $t \in \textsf{T}$, together with their pre-conditions, effects and input parameters. Basically, the
planning domain describes how predicates and functions change after an action's execution, and specifies the contextual properties constraining the execution of tasks stored in the tasks repository.

Our framework provides a software module named \texttt{PC2PR} in charge of performing such a translation, which makes use of PDDL version 2.1 (cf.~\cite{Fox@JAIR2003}). In the following, we discuss how the domain theory $\textsf{D}$ can be translated into a PDDL file representing the planning domain \textsf{PD}:
\begin{itemize}[itemsep=1pt,parsep=1pt]
\item the \emph{name} and the \emph{domain} of a data type correspond to an \emph{object type} in the planning domain;
\item boolean terms have a straightforward representation as \emph{relational predicates} (templates for logical facts) in the planning domain;
\item integer terms correspond to PDDL \emph{numeric fluents}, and are used for modeling non-boolean resources (e.g., the battery charge level of a robot) in the planning domain;
\item functional terms do not have a direct representation in PDDL v2.1, but may be represented as relational predicates. Since a functional term is a function $f:Object^n \to Object$ that maps tuples of objects with domain types $D^n$ to objects with co-domain type $U$, it may be encoded in the planning domain as a relational predicate \emph{P} of type $(D^n,U)$;
    % intuitively, if $n = 1$, \emph{P(x,y)} means \emph{f(x) = y};
\item a given task, together with the associated pre-conditions, effects and input parameters, is translated into a PDDL \emph{action schema}. An action schema describes how the relational predicates and/or numeric fluents change after the action's execution. For example, given the following XML specification of the task $Go \in \textsf{T}$ (with respect to the language provided in Section~\ref{sec:templates-template}):
    \begin{scriptsize}
    \begin{verbatim}
    <task>
    <name>Go</name>
    <parameters>
        <arg>prt - Participant</arg>
        <arg>from - Location</arg>
        <arg>to - Location</arg>
    </parameters>
    <precondition>at[prt] == from AND
                  provides[prt,movement] == true
    </precondition>
    <effect>at[prt] = to</effect>
    </task>
    \end{verbatim}
    \end{scriptsize}

    the module \texttt{PC2PR} produces the following PDDL representation:

    \begin{scriptsize}
    \begin{alltt}
    (:action go
    :parameters (?prt - participant ?from - location
                 ?to - location)
    :precondition (and ((at ?prt ?from)
                        (provides ?prt movement)))
    :effect (and (forall (?loc - location) (not (at ?prt ?loc)))
                 (at ?prt ?to))
    \end{alltt}
    \end{scriptsize}
This task can be executed only if the actor denoted by \emph{prt}
     is not currently located in the target location \emph{to} (and is
     located in his/her starting location \emph{from}) and is able to
     move into the area. The desired effect turns the value of the
     predicate $at(prt,to)$ to $true$ and $at(prt,from)$ to $false$,
     meaning the actor moved to the new location.
\end{itemize}
%The precondition of each action follows the same restriction as the problem's goal: They are a conjunction of positive literals. An action's effect, however, is a conjunction that can include both positive and negative literals.
The planning problem \textsf{PR} is built by translating $init_{\textsf{C}}$ into $init_{\textsf{PR}}$ and $goal_{\textsf{C}}$ into $goal_{\textsf{PR}}$ :
\begin{itemize}[itemsep=1pt,parsep=1pt]
\item for each data type defined in the planning domain, all the
  possible object instances of that particular data type are
  explicitly instantiated as \emph{constant symbols} in the initial
  state of the planning problem (e.g., the fact that $act1$, $act2$, $act3$,
  $act4$, $rb1$ and $rb2$ are \emph{Participants}, and that $loc00$, ..., $loc33$ are \emph{Locations});
\item a representation of the \emph{initial state} of the planning problem is needed. Basically, the initial state of the planning problem $init_{\textsf{PR}}$ is composed of a conjunction of relational predicates (representing functional and boolean terms in $init_{\textsf{C}}$) and the initial value of each numeric fluent (e.g., the value of the battery charge level for each robot), corresponding to the values of integer terms in $init_{\textsf{C}}$;
\item the \emph{goal condition} of the planning problem is a logical
  expression over facts, which partially specifies the state to be
  reached after the execution of the process template. Again, it is a
  condition represented as a conjunction of relational predicates and
  numeric fluent atoms representing the specific boolean, functional
  and integer terms we want to make true through the correct execution of the process template (as defined in $goal_\textsf{C}$).
\end{itemize}

\subsection{Translating a Partially Ordered Plan \textsf{P} into a \\Process Template \textsf{PT}}
\label{subsec:templates-algorithms-POP2PT}

Once the plan $\textsf{P}$ has been synthesized, it needs to be translated in a process template \textsf{PT}. As explained in Section~\ref{sec:templates-preliminaries} , a \textbf{solution plan} is a three-tuple $\textsf{P} = (A,O,CL)$, where $A$ is the set of actions appearing in the plan, $O$ and $CL$ are respectively the set of ordering constraints and of causal links over $A$.
Since the set of actions $A$ composing the plan and the set of ordering constraints $O$ over $A$ require to be explicitly expressed as nodes and transitions of the template's control flow (as well as their intrinsic ordering), we have implemented a module named \texttt{POP2PT}\footnote{The software implementing POP2PT is available at \url{http://www.dis.uniroma1.it/~marrella/documents/phd_thesis/templates/POP2PT.zip}} that takes as input a solution plan $\textsf{P}$ and converts it into a process template $\textsf{PT}$.

We provide two algorithms - named respectively ``\texttt{$Find_{PREC/NEXT}$}'' (cf. Algorithm~\ref{alg_1}) and ``\texttt{$Build_\textsf{PT}$}'' (cf. Algorithm~\ref{alg_2}) - to be executed sequentially for automatically computing process template $\textsf{PT}$. For each planning action $a_i \in A$, Algorithm~\ref{alg_1} is in charge of detecting which actions ``directly'' precede and follow $a_i$ in the plan. This information will be crucial for instantiating the transitions between the flow objects of the process template. However, this knowledge is not directly available in $O$. In fact, an ordering constraint $a \prec b$ between two actions $a \in A$ and $b \in A$ indicates that $a$ must be executed sometime before action $b$, but not necessarily immediately before. Algorithm~\ref{alg_1}, for each action $a_i \in A$, builds two sets containing the actions that immediately precede $a_i$ (the set $PREC(a_i)$) and immediately follow $a_i$ (the set $NEXT(a_i)$).
\begin{mydef}
Given an ordering constraint $(a \prec b) \in O$ between two actions $a \in A$ and $b \in A$, we say that \textbf{a directly precedes b} and \textbf{b directly follows a} iff no further action $c \in A$ exists such that $a \prec c$ and $c \prec b$.
\end{mydef}
Basically, for each ordering constraint $o_k = (a_i \prec a_j)\footnote{Before the execution of the algorithm, all the ordering constraints involving the dummy start action $a_0$ and the dummy end action $a_\infty$ are removed from $O$.}
\in \emph{O}$, the algorithm $Find_{PREC/NEXT}$ works as follows:
\begin{itemize}[itemsep=1pt,parsep=1pt]
\item if there does not exist any planning action $a_h \neq a_j$ that precedes $a_i$ - i.e., such that $(a_h \prec a_i) \in \emph{O}$ - then the only predecessor of $a_i$ is the dummy start action $a_0$. Therefore, $a_0$ is added to the set of predecessors of $a_i$ (i.e., $PREC(a_i) = PREC(a_i) \cup \{a_0\}$) and $a_i$ is added to the set of successors of $a_0$ (i.e., $NEXT(a_0) = NEXT(a_0) \cup \{a_i\}$).
\item if there does not exist any planning action $a_h \neq a_i$ that follows $a_j$ - i.e., such that $(a_j \prec {a_h}) \in \emph{O}$ - then the only successor of $a_j$ is the dummy end action $a_\infty$. Therefore, $a_j$ is added to the set of predecessors of $a_\infty$ (i.e., $PREC(a_\infty) = PREC(a_\infty) \cup \{a_j\}$) and $a_\infty$ is added to the set of successors of $a_j$ (i.e., $NEXT(a_j) = NEXT(a_j) \cup \{a_\infty\}$).
\item if there does not exist any planning action $a_h \neq a_i$ that precedes $a_j$ - i.e., such that $(a_h \prec a_j) \in \emph{O}$ - then $a_i$ directly precedes $a_j$ (and $a_j$ directly follows $a_i$), meaning that $PREC(a_j) = PREC(a_j) \cup \{a_i\}$ and $NEXT(a_i) = NEXT(a_i) \cup \{a_j\}$.
\item if there exists a planning action $a_h \neq a_i$ that precedes $a_j$ - i.e., such that $(a_h \prec a_j) \in \emph{O}$ - but there does not exist any finite sequence of actions $a_1, a_2, ... , a_n$ such that $(a_i \prec ... \prec a_1 \prec a_2 \prec ... \prec a_n \prec ... \prec a_h)$, then $a_i$ directly precedes $a_j$ (and $a_j$ directly follows $a_i$), meaning that $PREC(a_j) = PREC(a_j) \cup \{a_i\}$ and $NEXT(a_i) = NEXT(a_i) \cup \{a_j\}$.
\end{itemize}
\begin{figure}[t]
\centering{
 \includegraphics[width=0.9\columnwidth]{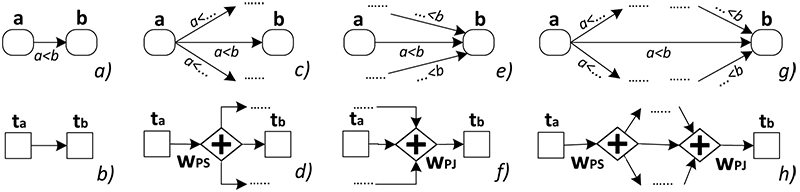}
 } \caption{Overview of the working of the Algorithm \texttt{$Find_{PREC/NEXT}$}.}
 \label{fig:fig_templates-alg}
\end{figure}
Starting from the two sets just computed, the algorithm \texttt{$Build_\textsf{PT}$} is in charge of building the process template $\textsf{PT}$, by instantiating the tasks, gateways, events and transitions between them. For every action $a \in A$, a corresponding task instance $t_a \in T$ (basically, a task instance is an occurrence of a specific task $t \in \textsf{T}$) is generated (cf. the function \emph{taskify} in Algorithm~\ref{alg_2}). Transitions between tasks depend on the number of predecessors/successors contained in $PREC(a)$/$NEXT(a)$. In particular, if an action $b \in A$ is such that $b \in PREC(a)$ or $b \in NEXT(a)$, it is clear that there must be some kind of transition between $t_a$ and $t_b$ in $\textsf{PT}$:
\begin{itemize}[itemsep=1pt,parsep=1pt]
\item if $b$ is the only successor of $a$, and $a$ is the only predecessor of $b$ (cf. Fig.~\ref{fig:fig_templates-alg}(a)), then $(t_a \rightarrow t_b) \in L_T$ (cf. Fig.~\ref{fig:fig_templates-alg}(b));
\item if $a$ is the only predecessor of $b$, but $b$ is not the only successor of $a$ (cf. Fig.~\ref{fig:fig_templates-alg}(c)), then a parallel split $w_{PS} \in W_{PS}$ is needed between $t_a$ and $t_b$. Hence $(t_a \rightarrow w_{PS}) \in L_T$ and $(w_{PS} \rightarrow t_b) \in L_{W_{PS}}$ (cf. Fig.~\ref{fig:fig_templates-alg}(d));
\item if $b$ is the only successor of $a$, but $a$ is not the only predecessor of $b$ (cf. Fig.~\ref{fig:fig_templates-alg}(e)), then a parallel join $w_{PJ} \in W_{PJ}$ is needed between $t_a$ and $t_b$. Hence $(t_a \rightarrow w_{PJ}) \in L_T$ and $(w_{PJ} \rightarrow t_b) \in L_{W_{PJ}}$ (cf. Fig.~\ref{fig:fig_templates-alg}(f));
\item if $t_a$ is not the only predecessor of $t_b$ and $t_b$ is not the only successor of $t_a$ (cf. Fig.~\ref{fig:fig_templates-alg}(g)), then a parallel split $w_{PS} \in W_{PS}$ and a parallel join $w_{PJ} \in W_{PJ}$ are needed between $t_a$ and $t_b$. Hence $(t_a \rightarrow w_{PS}) \in L_T$, $(w_{PS} \rightarrow w_{PJ}) \in L_{W_{PS}}$ and $(w_{PJ} \rightarrow t_b) \in L_{W_{PJ}}$ (cf. Fig.~\ref{fig:fig_templates-alg}(h)).
\end{itemize}
Finally, if an action $a \in A$ has no predecessors/successors (i.e., the set $PREC(a)/NEXT(a)$ is empty), this means that $t_a$ must be connected with the start event $\ocircle$  and end event $\odot$.

\begin{algorithm}
 \caption{\texttt{$Find_{PREC/NEXT}$} - Find actions predecessors and successors}\label{alg_1}
\DontPrintSemicolon
\SetKwInOut{Init}{Init}
 %\SetAlgoLined
 \KwData{\\
        \begin{itemize}[itemsep=0pt,parsep=0pt]
        \item \emph{A} : the set of actions appearing in the final plan $\textsf{P}$, including dummy actions $a_0$ and $a_\infty$.\\
        \item \emph{O} : the set containing the ordering constraints returned by the planner, in the form $o_k = (a_i \prec a_j)$, with $(a_i,a_j) \in A$. Ordering constraints that involve $a_0$ and $a_\infty$ are removed.\\
        \item $NEXT(a_i)$ : a set containing the list of successors of the i-th action $a_i \in A$.\\
		\item $PREC(a_i)$ : a set containing the list of predecessors of the i-th action $a_i \in A$.\\
        \item $k$ : an integer number, used as counter for the ordering constraints.
        \item $lenght(S)$ : returns the size of a set $S$.\\
		\item $add(a,S)$ : inserts an action $a$ into a set $S$.
        \end{itemize}
        }
 \KwResult{for each $a_i \in A$, it returns $NEXT(a_i)$ and $PREC(a_i)$}
 %\nlset{INIT}
 \Init{\\
 \emph{k} = 0\\
 \For{\textbf{\emph{each}} $a_i \in A$}{
 NEXT($a_i$) = $\emptyset$\\
 PREC($a_i$) = $\emptyset$
 }
 }
 \Begin{
 \While{$k < lenght(O)$}{
  \emph{take the k-th ordering constraint,} $o_k = (a_i \prec a_j)$ \emph{from O}\\
  %\nlset{SCAN}
  \emph{start scanning the O set}\\
  \If{$\nexists \ (a_h \in A) : a_h \neq a_j \land (a_h \prec a_i \in O)$}
  {
   $add(a_i,NEXT(a_0))$;\\
   $add(a_0,PREC(a_i))$;
   }
   \If{$\nexists \ (a_h \in A) : a_h \neq a_i \land (a_j \prec a_h \in O)$}
  {
  $add(a_\infty,NEXT(a_j))$;\\
  $add(a_j,PREC(a_\infty))$;
  }
  \eIf{$\nexists \ (a_h \in A) : a_h \neq a_i \land (a_h \prec a_j \in O) \ \textbf{\emph{OR}}$ \ $\exists \ (a_h \in A) : a_h \neq a_i \land (a_h \prec a_j \in O) \ \textbf{\emph{AND}}$ $\nexists$ \emph{any finite sequence of actions} $a_1,a_2,...,a_n$ \emph{such that} $(a_i \prec ... \prec a_1 \prec a_2 \prec ... \prec a_n \prec ... \prec a_h)$}
      {
       $add(a_j,NEXT(a_i))$;\\
	   $add(a_i,PREC(a_j))$;
        }
  {\textbf{do nothing}, \emph{because it means that} $a_i \prec a_h \prec a_j$. \emph{The ordering constraint} $a_i \prec a_h$ \emph{will be considered in a future iteration of the algorithm}.}
  %\nlset{INC}
  \emph{k++}\\
 }
 }
\end{algorithm}

\begin{algorithm}
\caption{\texttt{$Build_\textsf{PT}$} - Build Process Template}\label{alg_2}
\begin{scriptsize}
\DontPrintSemicolon
 %\SetAlgoLined
 \KwData{\\
        \begin{itemize}[itemsep=0pt,parsep=0pt]
        \item \emph{A} : the set of actions appearing in the final plan, including $a_0$ and $a_\infty$.\\
        \item $L$ = $L_T \cup L_E \cup L_{W_{PS}} \cup L_{W_{PJ}}$ is a finite set of transitions connecting events, task instances and gateways. Initially it is empty.\\
        \item $NEXT(a_i)/PREC(a_i)$ : a set with the list of successors/predecessors of $a_i$.\\
        \item $lenght(S)$ : returns the size of a set $S$.\\
		\item $insert(x,S)$ : given an element $x$ and a set $S$, if $x \notin S$ inserts $x$ into $S$.
        \item $taskify(a)$ : given a planning action $a \in A$, it generates a corresponding task instance $t_a \in T$.
        \end{itemize}
        }
 \Begin{
    \For{\textbf{\emph{each}} $a_i \in A$}{
    $t_{a_{i}}$ = $taskify(a_i)$\\
    \eIf{$lenght(PREC(a_i)) > 1$}
    {
    \For{\textbf{\emph{each}} $a_q \in PREC(a_i)$}{
    $t_{a_{q}}$ = $taskify(a_q)$\\
        \eIf{$lenght(NEXT(a_q)) > 1$}
        {$insert(\{w\_split_q \rightarrow w\_join_i\},L_{W_{PS}})$}
        {$insert(\{t_{a_{q}} \rightarrow w\_join_i\},L_T$)}
        }
    $insert(\{w\_join_i \rightarrow t_{a_{i}}\},L_{W_{PJ}})$
    }
    {
    \eIf{$lenght(NEXT(a_q)) > 1$ (let $a_q$ be the only predecessor of $a_i$)}
        {\eIf{$lenght(PREC(a_q)) > 0$}
        {$insert(\{w\_split_q \rightarrow t_{a_{i}}\},L_{W_{PS}})$}
        {$insert(\{\ocircle \rightarrow w\_split_q\},L_E)$\\
         $insert(\{w\_split_q \rightarrow t_{a_{i}}\},L_{W_{PS}})$}
        }
        {
        \eIf{$lenght(PREC(a_q)) > 0$}
        {$insert(\{t_{a_{q}} \rightarrow t_{a_{i}}\},L_T)$}
        {$insert(\{\ocircle \rightarrow t_{a_{i}}\},L_E)$}
        }
    }

    \eIf{$lenght(NEXT(a_i)) > 1$}
    {
    \For{\textbf{\emph{each}} $a_j \in NEXT(a_i)$}{
    $t_{a_{j}}$ = $taskify(a_j)$\\
        \eIf{$lenght(PREC(a_j)) > 1$}
        {$insert(\{w\_split_q \rightarrow w\_join_i\},L_{W_{PS}})$}
        {$insert(\{w\_split_i \rightarrow t_{a_{j}}\},L_{W_{PS}})$}
        }
    $insert(\{t_{a_{i}} \rightarrow w\_split_i\},L_T)$
    }
    {
    \eIf{$lenght(PREC(a_j)) > 1$ (let $a_j$ be the only successor of $a_i$)}
        {
        \eIf{$lenght(NEXT(a_j)) > 0$}
        {$insert(\{t_{a_{i}} \rightarrow w\_join_j\},L_T)$}
        {$insert(\{t_{a_{i}} \rightarrow w\_join_j\},L_T)$\\
         $insert(\{w\_join_j \rightarrow \odot\},L_{W_{PJ}})$}
        }
        {
        \eIf{$lenght(NEXT(a_j)) > 0$}
        {$insert(\{t_{a_{i}} \rightarrow t_{a_{j}}\},L_T)$}
        {$insert(\{t_{a_{i}} \rightarrow \odot\},L_T)$}
        }
        }

    }
    }
    \end{scriptsize}
\end{algorithm}

\section{Experiments}
\label{sec:templates-experiments}

To show the feasibility of the approach, we ran some experiments and measured the time required for
synthesizing a partially ordered plan for some variants of our running example described in Section~\ref{sec:templates-case_study}. We ran our tests using POPF2~\cite{POPF2_2010}, which is a temporal planner that handles PDDL 2.1~\cite{Fox@JAIR2003} and preserves the benefits of partial-order plan construction in terms of producing makespan-efficient, flexible plans.  Search in POPF2 is based around the idea of expanding a partial-order plan in a forward direction; steps added to the plan are ordered after a subset of those in the partial plan, rather than after every step in the plan so far.  We conducted the experiments on a Intel U7300 CPU 1.30GHz Dual Core, 4GB RAM machine.

The experimental setup was run on variants of the test case shown in our running example. We represented 7 planning actions in \textsf{PD} (corresponding to 7 different emergency management tasks stored in the tasks repository \textsf{T}), annotated with 7 relational predicates and 6 numeric fluents, in order to make the planner search space sufficiently challenging. Then, we defined 18 different planning problems of varying complexity by manipulating the number of facts in the goal. %\footnote{The test instances, represented as inputs for the POPF2 planner, are available at the URL: \url{http://www.dis.uniroma1.it/~marrella/documents/phd_thesis/templates/tests.zip}}.
As well, we examined how irrelevant domain knowledge affects the performance of the planner. Starting from a planning problem \textsf{PR} with an initial state $init_{\textsf{PR}}$ completely specified and with a goal condition $goal_{\textsf{PR}}$ expressed as the conjunction of $n$ facts, we manipulated the specification of the initial state $init_{\textsf{PR}}$ to reduce the number of known facts. In our experiments, the number of facts in goal condition ranges from 1 single fact to a conjunction of 6 logical facts (that make the contextual problem harder). As shown in Table 1, for a given goal condition composed of $n$ facts, our purpose was to measure the computation time needed for finding a sub-optimal solution for problems specified with starting states with a decreasing amount of knowledge. The column labeled as ``Knowledge in $init_{\textsf{PR}}$'' makes explicit which information is removed from the initial state of the planning problem with a given goal condition $goal_{\textsf{PR}}$. For example, if we consider our running scenario from Section~\ref{sec:templates-case_study}, whose goal condition is composed of 3 facts and characterized by a complete specification of the starting state, the time needed for finding a solution plan is of 0.13 seconds. After removing from the initial state all the information concerning the actor $act\emph{3}$, the time required for computing the plan decreases to 0.11 seconds. In general, for a given goal condition, removing ``irrelevant information'' from the initial state reduces the search space and the computation required for synthesizing the plan. It is also interesting to note that a sub-optimal solution includes more actions than those strictly required for fulfilling the goal condition. In particular, when the number of facts in a goal condition increases, the quality of the sub-optimal solution produced typically decreases. Based on our experiments, the approach seems quite feasible for medium-sized dynamic processes as used in practice.
\begin {table}[tp]\scriptsize
\caption {Time performances of POPF2.}
\label{validation_table}
\centering
\begin{tabular}{ccc}
\hline
Facts in $goal_{\textsf{PR}}$ & Knowledge in $init_{\textsf{PR}}$& Time for a sub-opt. sol.\\
\hline
\hline
\multirow{3}{*}{1} & complete state & 0.17 \\
                   & No information about act1 & 0.15 \\
                   & No information about act1 and act3 & 0.12 \\
                   %& No information about a1, a3 and rb1 & 0.08 \\
\hline
\multirow{3}{*}{2} & complete state & 0.12 \\
                   & No information about act3 & 0.10 \\
                   & No information about act3 and rb1 & 0.08 \\
\hline
\multirow{3}{*}{3} & complete state & 0.13 \\
                   & No information about act3 & 0.11 \\
                   & No information about act3 and rb2 & 0.09 \\
\hline
\multirow{3}{*}{4} & complete state & 0.21 \\
                   & No information about act3 & 0.20 \\
                   & No information about act3 and rb1 & 0.10 \\
\hline
\multirow{3}{*}{5} & complete state & 0.17 \\
                   & No information about act3 & 0.16 \\
                   & No information about act3 and rb1 & 0.10 \\
\hline
\multirow{3}{*}{6} & complete state & 1.56 \\
                   & No information about act3 & 1.19 \\
                   & No information about act3 and act1 & 1.13 \\
\hline
\hline
\end{tabular}
\end{table} 

\section{Related Work}
\label{sec:templates-related_work}

\emph{Process modeling} is the first and most important step in the BPM lifecycle~\cite{WeskeBook2007}, which intends to provide a high-level specification of a business process that is independent from implementation and serves as a basis for process analysis, automation, and verification. %In the research literature there are two dominant approaches:
%on which process modeling languages are developed;
%one based on \emph{graphical models}, and the other based on \emph{rule specifications}~\cite{SadiqSurvey07}.
%In a graph-based modeling language, a process is specified by a graphical process model that reflects the activities to be performed and the control flow between them. %(i.e., process logic is known prior to the process execution and thus can be pre-specified in process model).
%Graph-based modeling techniques fit well with well-structured and repetitive processes %(e.g., for processing an insurance claim)
%whose behavior can be fully pre-specified at design time~\cite{Leymann2000}. On the other hand, many processes are %knowledge-intensive and highly dynamic. The process participants decide on the activities to be executed as well as their order. Typically, these processes
%higly dynamic and cannot be fully pre-specified, by requiring loosely specified process models. %(e.g., medical guidelines providing assistance in the context of a particular medical problem).
%In a typical rule-based approach, process models are defined focusing on what shall be done by describing the activities that may be performed as well as related constraints prohibiting undesired process behavior~\cite{Reichert_book}. %At runtime, a rule inference engine examines data and control conditions and determines the best order for executing relevant business activities according to pre-defined rules~\cite{Reichert_book}.
%Despite the popularity of classical process modeling techniques,
The task of defining a model is often performed with the aid of tools that provide a graphical representation, but without any automatic generation of the process model. However in recent years, numerous AI planning-based approaches have been devised for the latter, and the closest to our approach are~\cite{Weske@ADBIS2004,Moreno@2007,Ferreira@IJCIS2006}.
%All of them describe activities (e.g., tasks or Web Services) together with their preconditions and effects, and exploit an external planner for devising a plan (i.e., a process model) that leads from an initial state to a goal state.
\cite{Weske@ADBIS2004} presents the basic idea behind the use of planning techniques for generating a process schema, but no implementation seems to be provided, and the direct use of the PDDL language for annotating tasks and specifying the
domain theory requires a deep understanding of AI planning technology.
% and its terminology is needed.
%This is because the models still lack formal semantics and automated analysis techniques, which
%analysis to the models or an easy way to reason about that semantic, that
%could be overcome using AI techniques.
% from fields such as artificial intelligence (AI).

% The AI community has investigated intelligent systems capable of achieving complex tasks in dynamic and uncertain environments.
%In particular, AI planning provides capabilities for synthesizing new process models and repairing previously defined processes that are no longer suitable for a given situation. Most work dealing with integrating business process management and planning techniques is focused on supporting executable processes and repairing them if something goes wrong during their enactment~\cite{Pernici2010,BPMDS}. Only a few papers have focused on the automatic generation of process models. Among them, the most interesting and closest to our approach are~\cite{Moreno2007} and ~\cite{Ferreira2006}.
In~\cite{Moreno@2007}, the authors exploit the IPSS planner~\cite{Moreno2006} for modeling processes in SHAMASH~\cite{SHAMASH}, a knowledge-based system that uses a rule-based approach. To automate the process model generation, they first translate the semantic representation of SHAMASH into the IPSS language. Then, IPSS produces a parallel plan of activities that is finally translated back into SHAMASH and is presented graphically to the user. This work proposes the scheduling of parallel activities  (that implicitly handle time and resource constraints), meta-modeling that deals with planning explicitly, and suggests that learning could be used for process optimization.
However, the emphasis here is on supporting processes for which one has complete knowledge. This assumption does not always hold for dynamic processes, where some contextual information may not be available at the time of process model synthesis. The work of~\cite{Ferreira@IJCIS2006} proposes a new life cycle for workflow management based on the continuous interplay between learning and planning. The approach is based on the use of machine learning algorithms for inferring pre-conditions and effects of activities, and generate a partially-ordered execution plan (i.e., a process model with concurrent branches) that complies to these rules. An interesting result concerns the possibility of producing process models even though the activities may not be accurately described. In such cases, the authors use a best-effort planner that is always able to create a plan, even though the plan may be incorrect. By refining the preconditions and effects of planning actions, the planner will be able to produce several candidate plans, and after a finite number of refinements, the best candidate plan (i.e., the one with the lowest number of unsatisfied preconditions) may be chosen and translated into a process model. Unfortunately, this approach has not been very successful, as the best plan typically moves further and further away from the correct solution.

\section{Conclusion}
\label{conclusion}
In this chapter, we developed a technique based on partial-order planning algorithms and declarative specifications of process tasks for synthesizing a library of concurrent process templates to be enacted in partially specified contextual scenarios. Characteristic of these processes is the role of contextual data acting as a driver for process modeling. We are currently working on a complete implementation and thorough validation of the whole approach, including the formalization of metrics for evaluating process templates' quality. A future direction for this work is to generate hierarchical process templates, with high-level templates achieving more general goals that can invoke simpler templates to achieve some of their subgoals. We also plan to address expressiveness limitations, such as handling preferences, representing negative preconditions (negative literals in goals and preconditions are not supported by most POP planners, including POPF2) and tasks with context-dependent effects.

\chapter{Recovering Dynamic Processes in YAWL}
\label{ch:planlets}

The need to automatically adapt processes in response to exceptions, events and contextual changes
has emerged as a leading issue, both in dynamic and pervasive scenarios and in processes with a varying degree of structuring (cf. Section~\ref{sec:introduction-spectrum}). In order to demonstrate the general validity of the approach for automatic adaptation proposed in Chapter~\ref{ch:approach}, a further direction of the author's research activity was to integrate such an approach into YAWL~\cite{YAWLBook2009}, that is among the most well-known PMSs coming from academia. To this end, our contribution has been twofold:

\begin{itemize}[itemsep=1pt,parsep=1pt,topsep=1pt]
\item In~\cite{BPMDEMO2011} we contextualize and demonstrate our adaptation approach in the service-oriented environment provided by the YAWL system. We leverage the \emph{Flexibility as a Service} approach~\cite{faas} to obtain a complete integration between YAWL and \smartpm by designing and implementing a so-called YAWL Custom Service~\cite{YAWLimpl}, named the \smartpm Service. At design time, the process designer is thus able to associate atomic tasks in the YAWL specification with the \smartpm Service. Atomic tasks defined in YAWL may be decomposed into complex sub-processes defined for the \smartpm environment. At runtime the service is then able to check-out a work-item and execute the corresponding subprocess. When a subprocess is executed in \smartpm, the automatic adaptivity features are exploited in order to react to exogenous events and to progress the subprocess. Upon completion of a subprocess, control is passed back to the YAWL environment along with any data that was produced.
\item In~\cite{CoopIS2012} we rely on the modeling capabilities provided by the YAWL language and on the process enactment and exception handling capabilities provided by the YAWL environment, and we propose significant extensions at both the modeling and architectural level. Specifically, we introduce and define \textsc{Planlets}, as self-contained YAWL specifications where process tasks are annotated at design-time with pre-conditions, desired effects and post-conditions. In the presence of an exception, this approach allows delegating to an external planner the automatic run-time synthesis of a suitable recovery procedure by contextually selecting the compensation tasks from a specific repository linked to the \textsc{Planlet} under execution.
\end{itemize}

For describing the two approaches, we have considered again a motivating example drawn from an emergency management setting (cf. Section~\ref{sec:planlets-running_example}). It clearly represents an application scenario for which it is unrealistic to assume that all exceptional situations, as well as required exception handlers, can be anticipated at design-time and thus be incorporated a priori into the pre-specified process model.

The rest of the chapter is organized as follows. In Section~\ref{sec:planlets-YAWL_architecture} we briefly focus on the exception handling approach implemented in the YAWL system, as presented in~\cite{YAWLBook2009}.
In Section~\ref{sec:planlets-YAWL_SmartPM} we show how the YAWL environment (and its imperative modeling approach) can be complemented with our \smartpm execution environment that exploits a declarative modeling approach to deal with environmental changes and exceptions in process executions.
In Section~\ref{sec:planlets-approach} we provide an in-depth discussion and concrete design and implementation proposal of how the YAWL architecture can be extended to support \textsc{Planlets} and to integrate planning capabilities.
Finally, Section~\ref{sec:planlets-conclusions} discusses limitation and future developments of the approach.

\section{Running Example}
\label{sec:planlets-running_example}

\begin{figure}[t]
\centering{
 \includegraphics[width=0.99\columnwidth]{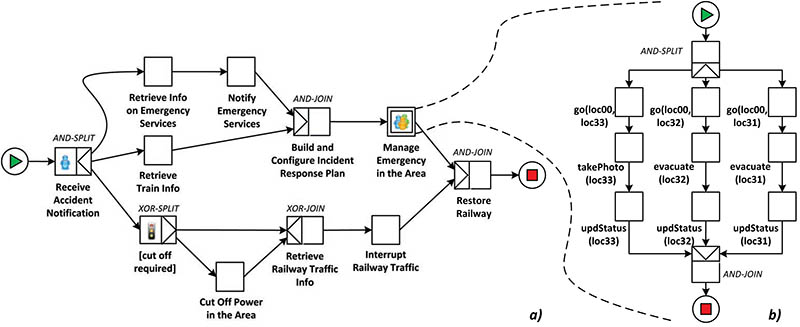}
 } \caption{The YAWL process defined for a train derailment scenario (a), in which the composite task ``Manage Emergency in the Area'' is a dynamic process (b).}
\label{fig:fig_planlets-yawl_process}
\end{figure}

As an application scenario, we consider again an emergency management process defined for train derailments. The corresponding YAWL process to be executed is shown in Figure~\ref{fig:fig_planlets-yawl_process}(a). The process starts when the railway traffic control center receives an accident notification from the train driver and collects some information about the derailment, including the train ID code, the GPS location and the number of coaches and passengers. Then it could be required to cut off the power in the area and to interrupt the railway traffic near the derailment scene. In parallel, after having collected additional information about the train (e.g., security equipment) and about emergency services available in the area, an emergency response team can be sent to the derailment scene.

Collected information is used for defining and configuring at run-time an incident response plan, defined by a contextually and dynamically selected set of activities to be executed on the field by first responders. Such activities are abstracted into the composite task\footnote{A composite task is a container for another YAWL sub-net, with its own set of elements.} ``Manage Emergency in the Area'' (cf. Fig.~\ref{fig:fig_planlets-yawl_process}(b)). The subnet is composed by three parallel branches with tasks that instruct first responders to act for evacuating people from train coaches, to take pictures and to assess the gravity of the accident.

Note that the YAWL sub-process defined in Figure~\ref{fig:fig_planlets-yawl_process}(b) corresponds to the same process used as case study in Section~\ref{sec:introduction-case_study}. For the sake of simplicity, we also consider the same contextual information and scenario shown in Figure~\ref{fig:fig_introduction-case_study-context_1}(b) and described in Section~\ref{sec:introduction-case_study}.

\section{The YAWL Architecture}
\label{sec:planlets-YAWL_architecture}

The YAWL system, like the majority of classical PMSs, allows to define stable and well-understood processes and offer adequate support as long as the processes are structured and do not require much flexibility. The exception handling capabilities provided by YAWL were designed and implemented starting from the
the conceptual framework for workflow exception handling presented in~\cite{WorkflowExceptionPatterns2006}. In order to understand how exceptions are detected and handled in YAWL we refer to the architecture\footnote{The picture refers to the architecture defined in~\cite{YAWLBook2009}.} in Figure~\ref{fig:fig_planlets-yawl_architecture}.

\begin{figure}[t]
\centering
\includegraphics[width=0.9\columnwidth]{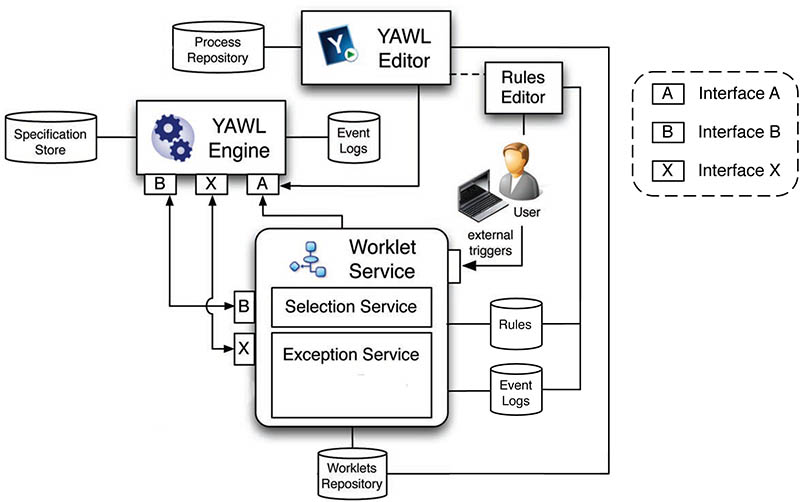}
\caption{The YAWL architecture.}
\label{fig:fig_planlets-yawl_architecture}
\end{figure}

At design-time, the process designer identifies the possible events that may result in deviations from the expected process execution at run-time, and defines both the detection and exception handling strategies. To date, YAWL\footnote{We refer to the final release of YAWL 2.1.} is able to deal with eight different types of exceptions, i.e., case-level and workitem-level pre/post-execution constraints violations, case-level and workitem-level externally triggered exceptions, timeouts and unavailability of resources.

For each exception that can be anticipated, it is possible to define an exception handling process, named \emph{exlet}, which includes a number of exception handling primitives (for removing, suspending, continuing, completing, failing and restarting a workitem/case) and one or more compensatory processes in the form of \emph{worklets} (i.e., self-contained YAWL specifications executed as a replacement for a workitem or as compensatory processes). Exlets are linked to specifications by defining specific \emph{rules} (through the \emph{Rules Editor} graphical tool), in the shape of Ripple Down Rules specified as \texttt{if} \emph{condition} \texttt{then} \emph{conclusion}, where the \emph{condition} defines the exception triggering condition and the \emph{conclusion} defines the exlet.

At run-time, exceptions are detected and managed by the \emph{Exception Service}, a sub-service of the \emph{Worklet Service}.  The \emph{Exception Service} is notified by the engine via \emph{Interface X} of exception triggering events (which include timeouts, resource unavailabilities and notifications fired when a case/workitem begins/ends in order to check pre/post-execution constraints) that may result in an exception\footnote{Externally triggered exceptions are instead notified by the environment, specifically by a client involved in process execution.}. For each event notification, the service determines whether an exception has occurred and, if so, it executes the corresponding exlet. Exception handling primitives are directly executed invoking the corresponding methods (for removing, suspending, etc. a workitem/case) provided by the engine-side of \emph{Interface X}. If the exlet includes a compensation worklet, the \emph{Exception Service} first retrieves it from the repository, then loads it into the engine via \emph{Interface A} and finally starts it via \emph{Interface B}. The worklet is then executed by the engine as a new separate case, possibly in parallel with the parent case if it was not suspended by the exlet. 

\section{Making YAWL and SmartPM interoperate}
\label{sec:planlets-YAWL_SmartPM}

%In this section, we shows how an imperative modeling approach used to define stable and well-understood processes can be complemented by a modeling approach that enables automatic process adaptation and exploits planning techniques to deal with environmental changes and exceptions that may occur during process execution.

%In the example shown in Figure~\ref{fig:fig_planlets-yawl_process}(a), we have a high-level process for managing railway accidents, which has been modeled in and is currently managed by YAWL.

%

The service-oriented approach that characterizes the architecture of YAWL makes the system easily extendable and provides direct support for implementing the \emph{Flexibility as a Service} approach~\cite{faas}. In YAWL the engine manages running cases, but is not directly responsible for task executions. Resources, entities and systems able to execute tasks are abstracted as \emph{services} that interact with the YAWL engine via a set of interfaces. Specifically, the interaction between the engine and the Custom Services mainly occurs through Interface B (cf. Figure~\ref{sec:planlets-YAWL_architecture}).

At design-time each atomic task in a YAWL specification can be associated with a so-called Custom Service~\cite{YAWLimpl} (e.g., the Default Worklist Handler/Resource Service, the Worklet Service, the Declare Service, etc.) that at run-time is responsible for task execution according to its internal logic. To this end, we designed and implemented a Custom Service, named \smartpm Service, that allows the YAWL execution environment to delegate the execution of sub-processes and activities to the \smartpm execution environment which is able to automatically adapt a process to deal with emerging changes and exceptions. As shown in Figure~\ref{fig:fig_planlets-yawl_smartpm}, the \smartpm Service allows the interaction between the two environments.

\begin{figure}[t]
\centering
\includegraphics[width=0.9\columnwidth]{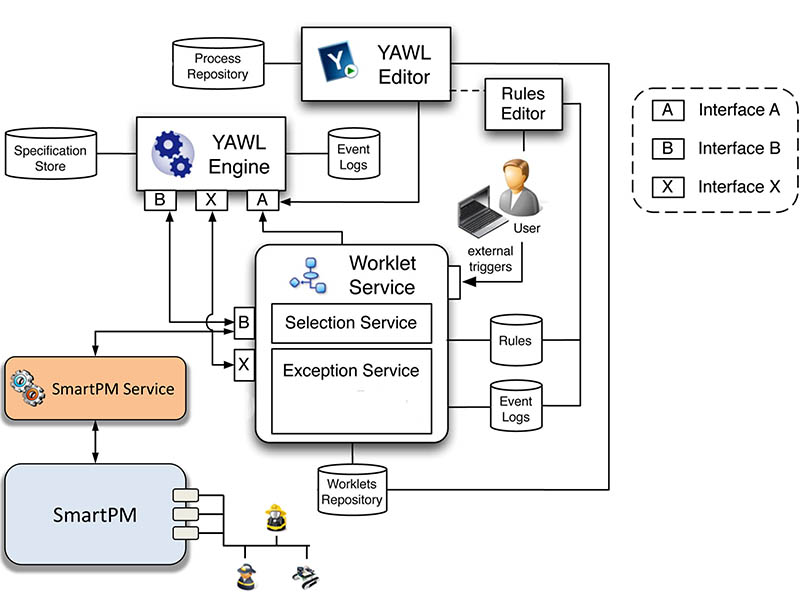}
\caption{The integration between YAWL and \smartpm.}
\label{fig:fig_planlets-yawl_smartpm}
\end{figure}

The \smartpm Service enables the decomposition of YAWL tasks into processes to be executed by \smartpm. At design-time, the process designer is thus able to associate atomic tasks in the YAWL specification with the \smartpm Service. The service requires as input variable a process to be performed by \smartpm, defined according to the formalism introduced in Chapter~\ref{ch:approach}. If the \smartpm process is already available at design-time, it can be directly associated with the input variable required by the service. However, a process to be delegated to \smartpm can also be built starting from an available template, which is configured and finalized by exploiting data produced as output by other tasks in the main YAWL process. In this case, the executable \smartpm process has to be produced as output by a YAWL task that precedes the task associated with the \smartpm Service. The domain-dependent configuration of a \smartpm process can be done either manually by a process designer or automatically by a dedicated service.

In the example shown in Figure~\ref{fig:fig_planlets-yawl_process}(a), we have an high-level process for managing railway accidents, which has been modeled in and is currently managed by YAWL. One of the tasks, specifically the one where a team of operators has to act on the field for evacuating people from train coaches (cf. the complex activity ``Manage Emergency in the Area'' in Figure~\ref{fig:fig_planlets-yawl_process}(a)), represents a step in the process model referring to a complex dynamic process (cf. Figure~\ref{fig:fig_planlets-yawl_process}(b)). As thoroughly described in Section~\ref{sec:introduction-case_study}, we assume that operators are equipped with mobile devices, and aspects like context-awareness and managing frequent exogenous events (e.g., bad connections between devices and operators) is crucial. For this reason, the high dynamism of the operating environment requires that such activities are executed by the \smartpm environment, which provides automatic adaptation features. Therefore, every time the complex activity ``Manage Emergency in the Area'' gets enabled during process execution, the enactment of its corresponding sub-process model is delegated to the \smartpm environment through the \smartpm service.

As any other Custom Service, the \smartpm service implements the service-side of Interface B and is thus able to receive notifications from the YAWL engine when a new work-item is created and delegated to the service for execution. Invoking the specific methods provided by the engine-side of Interface B, the \smartpm Service is then able to check-out a work-item (i.e., notify the engine that the work-item is going to be executed) and execute the corresponding \smartpm input process. Once the execution of the subprocess has been completed, the service can check-in the work-item (i.e., notify the engine of execution completion), again via Interface B, along with the corresponding output. Specifically, the \smartpm Service produces as output, among other possible values, a boolean one that indicates whether the input subprocess was successfully completed or not. Control is then passed back to the YAWL environment, which can continue to carry out the main process.

A screen-cast of a demonstration that shows the interaction between the two environments is available at \url{http://www.dis.uniroma1.it/~marrella/public/DemoBPM2011.zip}. The application presented in the demo is currently a proof of concept and aimed at demonstrating the feasibility and validity of the approach. In the demo application some tasks managed by YAWL and the devices of the operators are simulated, whereas the integration among the two systems and the communication between \smartpm and autonomous devices is real.
The next step towards the definition and implementation of a full-fledged solution has been achieved in~\cite{CoopIS2012} and is described in the following section, where we discusses how concretely the adaptation approach shown in Chapter~\ref{ch:approach} can be built on top of YAWL.

%We demonstrate the feasibility and validity of the approach by showing the design and execution of an emergency management process defined for train derailments. The high dynamism of the operating environment requires some activities to be executed by the \textsc{SmartPM} subsystem and we show how it is able to automatically build and execute recovery plans in response to environmental changes and external events.

\section{The Planlets Approach}
\label{sec:planlets-approach}

In this section we propose a solution that builds on top of YAWL~\cite{YAWLBook2009}, and consists in annotating at design-time a YAWL specification with additional information which allows process instances to be automatically recovered. In particular, we assume the tasks of a YAWL process specification
to be annotated with \emph{pre-conditions}, \emph{desired effects} and \emph{post-conditions}. Failures arise either when associated pre-conditions for a task are not satisfied at the time the task is to be started, or when post-conditions do not hold after the execution of the task. Effects represent the changes that a successful task execution imposes on the \emph{state} of the world, reflecting the current value of the contextual properties that constraint the process under execution. Hence, the process designer just states \emph{what} conditions have to be satisfied, without having to anticipate \emph{how} these can be fulfilled.
In order to formalize the concept, we introduce the definition of \textsc{Planlet}:
\begin{mydef}[Planlet]
Let $YN$ be a YAWL net, $T$ be the tasks defined in $YN$, and $V$ be the set of variables defined in $YN$.
Let $Expr(V)$ be the set of expressions over the variables in $V$.
A \textsc{Planlet} is a tuple $(YN,Pre,Post,Eff)$ where
\begin{itemize}[itemsep=1pt,parsep=1pt,topsep=1pt]
\item $Pre : T \rightarrow Expr(V)$ returns an expression representing the pre-conditions of tasks in $T$;
\item $Post : T \rightarrow Expr(V)$ returns an expression representing the post-conditions of tasks in $T$;
\item $Eff : T \rightarrow Expr(V)$ returns an expression representing the effects of tasks $T$.
\end{itemize}
\end{mydef}
%Concretely, in our approach, pre- and post-conditions are defined as XPath expressions over the variables of the YAWL specification.
The role of pre/post-conditions and effects for a YAWL task is twofold:
\begin{enumerate}[itemsep=1pt,parsep=1pt]
\item pre-conditions and post-conditions enable run-time process execution monitoring and exception detection: they are checked respectively before and after task executions, and the violation of a pre-condition or post-condition results in an exception to be handled;
\item along with the input/output parameters consumed/produced by the task, pre-conditions and effects provide a complete specification of the task: this allows the task to be represented as an action in a planning domain description and therefore used for solving a planning problem built to handle an exception.
\end{enumerate}
At design-time, the annotated tasks are stored in a repository linked to the \textsc{Planlet} specification, which may contain also other annotated tasks deriving from previous executions on the same contextual domain. At run-time, while instances of the YAWL specification are carried on, tasks become enabled. Every time a task $t \in T$ becomes enabled, expression $Pre(t)$ is evaluated; similarly, upon the completion of $t$, expression $Post(t)$ is evaluated.
If an evaluation returns false upon enablement or completion of a task, the system is in an \emph{invalid state} and, hence, the YAWL specification instance needs to be adapted to come back into the ``right track''. In order to do that, the case execution is suspended, and a recovery procedure is automatically synthesized.

%A \textsc{Planlet} is a self-contained \textsc{Yawl} net, so it may be modeled and executed as an independent \textsc{Yawl} process, or as a sub-net of a parent \textsc{Yawl} net (this is the case of our running example).

To provide more details, let us assume that the current \textsc{Planlet} is $\delta_0$ = ($\delta_1;\delta_2$) in which $\delta_1$ is the part of the \textsc{Planlet} already executed and $\delta_2$ is the part of the \textsc{Planlet} which remains to be executed when an exception is identified. The adapted \textsc{Planlet} is $\delta'_0$ = $(\delta_1;\delta_h;\delta_2)$. However, whenever a \textsc{Planlet} needs to be adapted, every running task is interrupted, since the ``repair'' sequence of tasks $\delta_h = [t_1, \ldots, t_n]$ is placed before them. Thus, active branches can only resume their execution after the repair sequence has been executed. This last requirement is fundamental to avoid the risk of introducing data inconsistencies during a repair.

The automatic synthesis of the recovery procedure $\delta_h$ is enacted on-the-fly by an external planner. A planner solves the problem to find a sequence of actions that move a system state from the initial one to a target goal, using a predefined set of admissible actions. Each action is associated the set of pre-conditions $Pre(t)$ in order for that step to be chosen, as well as the effects $Eff(t)$ obtained as result of the action's execution.

Along with defining the set of admissible actions, it is also crucial to define how the state is represented, since pre-conditions and effects of actions are given in term of the chosen state representation. The actions' set and the state definition are often referred to as \emph{planning domain}. The standard representation language of planners to define actions and state is the Planning Domain Definition Language (PDDL)~\cite{PDDL}. In the context of adaptation of instances of YAWL specifications, each task specification is associated with a different action in the planning domain; the task's pre-conditions and effects are translated in PDDL and associated to the corresponding action.
%The effects are obtained as conjunction of the post-conditions (translated in PDDL) and an expression that somehow considers to the variables that tasks are giving in output, according to the YAWL specification. The desired target state are such that the post-conditions hold for the task that has triggered the adaptation.
In addition to the so-created planning domain, when an exception arises, the invalid state and the pre-condition (or post-condition) violated is given in input to a planner, which can try to build a plan. If the plan exists, the planner is eventually going to return it. In this case, the plan is converted into a sequence of YAWL tasks which are assigned to qualifying participants. When the converted plan is carried out, the original suspended process is restored for execution.

\vskip 0.5em \noindent\colorbox{light-gray}{\begin{minipage}{0.98\textwidth}
\begin{example}
\emph{Let us consider the example introduced in Section~\ref{sec:planlets-running_example}. The composite activity ``Manage Emergency in the Area'' may be modeled as a self-contained \textsc{Planlet} specification (cf. Fig.~\ref{fig:fig_planlets-yawl_process}(b)), linked to a repository containing a set of emergency management (annotated) tasks, that range from the simple activity of taking pictures to the more complex extinguishment of a fire. An explicit representation of contextual information (the connection of each actor to the network, the map of the area, the battery level of each robot etc.) is needed for preserving the correct \textsc{Planlet} execution.
Suppose now that the task \emph{go(loc00,loc33)} is assigned to actor \emph{act1}, which reaches instead the location \emph{loc03} (cf. Fig.~\ref{fig:fig_introduction-case_study-context_2}(b)). This means that the actor \emph{act1} is not more connected to the network and s/he is in a position different than the desired one, resulting in a \emph{post-condition failure}. The planner builds a planning problem by taking as initial state the invalid state of the \textsc{Planlet}, and as goal a state where all actors/robots are inter-connected to the network and \emph{act1} is in the desired location \emph{loc33}. The recovery plan is automatically synthesized by contextually selecting tasks from the repository linked to the \textsc{Planlet}. Suppose, for example, that the two robots \emph{rb1} and \emph{rb2} have an empty battery. In such a case, the planner devises on-the-fly a possible solution, composed by a sequence of 5 tasks [\emph{chargeBattery(act4,rb1),move(rb1,loc00,loc03),go(act1,loc03,loc33),chargeBattery(act4,rb2),\\move(rb2,loc00,loc33)}] that change the state of the world as shown in Fig.~\ref{fig:fig_introduction-case_study-context_3}(d). Note that the recovery plan is synthesized by taking care of the skills of process participants, and their availability for task assignment and execution. Hence, each task composing the plan is already associated to the participant that will execute it.}
\end{example}
\end{minipage}
}\vskip 0.5em

\subsection{Incorporating Planlets into YAWL}
\label{subsec:planlets-approach-architecture}

%The proposed approach is not meant to replace existing consolidated approaches, but it rather aims to complement them.

\begin{figure}[t]
\centering
\includegraphics[width=0.95\columnwidth]{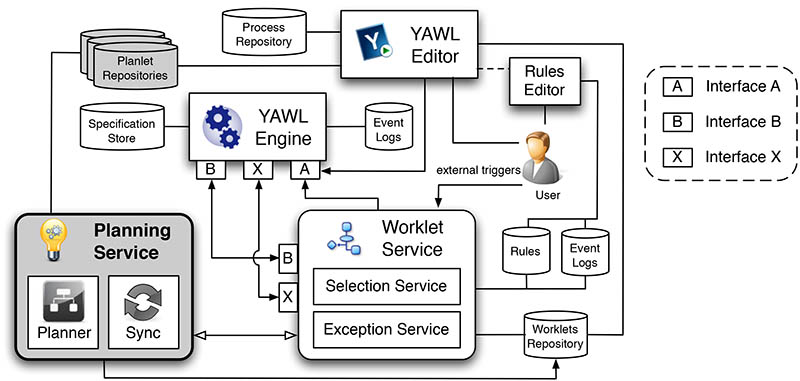}
\caption{The YAWL architecture extended with the \ps.}
\label{fig:architecture}
\end{figure}

The architectural extension and integration we designed takes advantage of YAWL's exception detection capabilities and leverages the flexibility of the exlet-based handling techniques. From an architectural perspective, as shown in Fig.~\ref{fig:architecture}, planning capabilities are provided by a \ps that implements the planning logic and algorithm. In order to define the role of the \ps and clarify how it interacts with existing YAWL architectural components and services, we follow the process and exception handling life-cycle, from process design, enactment and monitoring to exception detection, handling and (possibly) resolution. At design time, the process designer builds one or more \textsc{Planlet} \emph{Repositories} (or modifies the existing ones), by inserting/deleting annotated tasks and by (possibly) modifying the contextual domain linked to each repository. Tasks involved in a \textsc{Planlet} specification are selected from a specific \textsc{Planlet} repository, since they are thought to be enacted in a specific contextual domain. Before to execute a \textsc{Planlet}, the process designer instantiates the initial values for the properties of the contextual domain. As shown in Section~\ref{subsec:planlets-approach-yawl_model}, tasks pre- and post-conditions are automatically translated in YAWL pre- and post-constraints.
%
% As a planning-based exception handling approach can be considered as data-driven (i.e., case data affect the evaluation of pre- and post-conditions, as well as the definition of planning goals), we focus on workitem pre/post-execution constraints' violations.
%
In order
%the process designer
to delegate the exception handling to the \ps, we introduce the possibility of mapping a \emph{compensation activity} to the \ps. By defining this mapping instead of explicitly selecting a compensation worklet, the process designer configures the \emph{Exception Service} so that the generation of the compensation worklet is delegated to the \emph{Planning Service}. Fig.~\ref{fig:planneractivation} shows an excerpt of the rule file
%produced by the \emph{Rules Editor} and
defined for detecting and handling a workitem-level pre-execution (or post-execution) constraint violation. Lines 1-4 define the exception triggering condition (a pre- or a post-condition failure), while lines 5-12 define the exception handling exlet (which consists in suspending the current case, performing some compensation activities and then resuming the suspended case). In our extended version, the mapping of a compensation task to the \ps is identified by a \texttt{<target>} element containing the \texttt{PlanningService} value (line 9), in order to enact planning capabilities.

\begin{figure*}[t!]
\centering
\includegraphics[width=0.75\textwidth]{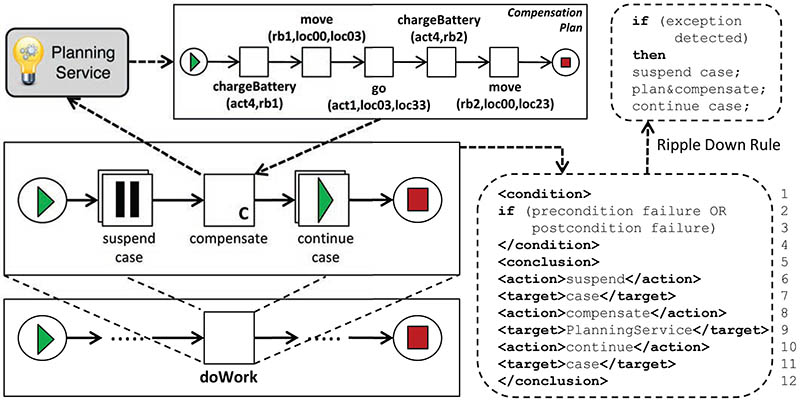}
\caption{\ps activation hierarchy for exception handling.}
\label{fig:planneractivation}
\end{figure*}

\vskip 0.5em \noindent\colorbox{light-gray}{\begin{minipage}{0.98\textwidth}
\begin{example}
\emph{If we consider our running example, the compensation plan devised in Fig.~\ref{fig:planneractivation} corresponds to the one needed for re-establishing the network connection between actors/robots and for instructing actor act1 to move in the desired location.}
\end{example}
\end{minipage}
}\vskip 1em

\noindent\textbf{Planning Service Activation.} When the \emph{Exception Service} activates the \ps, it provides as input all case data associated with the running case, along with the detected violation over pre- and post-conditions. Based on this information,
and on the specifications of available tasks, stored in the repository linked to the \textsc{Planlet} under execution, the \emph{Synchronization} component of the \ps is able to build the planning domain and to define a planning problem, and submit them to the \emph{Planner} module in charge of synthesizing a recovery plan. If the \emph{Planner} is able to successfully synthesize a compensation plan, it stores it as an executable specification (i.e., a worklet) in the \emph{Worklets Repository} and notifies the \emph{Exception Service}. The \emph{Exception Service} is then able to enact the execution of the compensation worklet as if it was manually selected at design time, by loading the specification into the engine and launching it as a separate case.
% in parallel with the parent case.
When the execution completes, output data produced by the worklet are mapped back to the parent case and subsequent actions in the exlet are executed. Following the exlet defined in Fig.~\ref{fig:planneractivation}, as the compensation worklet synthesized by the \emph{Planner} is supposed to recover from the constraint violation, the suspended case can then be resumed and executed. If no valid plan can be found by the \emph{Planner}, a notification alert is sent to an administrator, who is charge of handling the unsolved exception, e.g., manually building a compensation process or just canceling the process case.

%Consider in Fig.~\ref{fig:planneractivation} the execution hierarchy that leads to the activation of the \ps when a workitem-level post-constraint violation occurs for the \emph{doWork} task. After suspending the workitem, the \emph{Exception Service} recognizes the mapping performed at design time and delegates the generation of the compensation activities to the \ps.

%In the current YAWL implementation, the \texttt{<target>} element corresponding to the \texttt{compensate} action (lines 8-9) contains the name of the worklet to be executed as compensation process.

\subsection{Annotating YAWL Specifications in Planlets}
\label{subsec:planlets-approach-yawl_model}

A main step of our approach in YAWL consists in enriching the process model with a specification of process tasks, in terms of pre-conditions, desired effects and post-conditions, and with an explicit representation of the contextual domain needed for the correct process enactment.

In YAWL, each atomic task $t$ can be linked to a decomposition.
Decompositions can have a number of input and output \emph{parameters}, each identified by a \emph{name} and characterized by a \emph{type} dictating valid values it may store, and define the so-called YAWL Service that will be responsible for task execution. As process data are represented through net-level variables, inbound and outbound mappings define how data is transferred from net variables to task variables and vice-versa. We propose to extend task specifications at the decomposition level, with the possibility of defining pre-conditions, post-conditions and effects as logical formulae and expressions over task parameters.
%(or variables) defined for them,
%describing what data must be supplied to a running task instance (i.e., input variables), and what data that task instance will eventually deliver (i.e., output variables). Each parameter has a \emph{name} it may be referenced by, a \emph{type} dictating valid values it may store, a \emph{designation} indicating how that data may be used (i.e., as input, output, or both), and a \emph{scope} defining the visibility of the parameter (task or net scope).
\vskip 0.5em
\noindent{\textbf{Defining and representing finite domain types.}}
The definition of a \textsc{Planlet} requires the specification of the data types that characterize the information
%(represented as net and task variables)
manipulated by process instances and define the domains over which predicates and functions are interpreted.
%However, in a \textsc{Planlet}, data types also define the domains over which predicates and functions are interpreted and, for object functions, the domain of the result type.
In order to have a compact and finite representation of a process state, given by the values assumed by process variables at a given point in the execution, all data types
%of predicates' and functions' arguments (and functions' result type, or codomain)
must correspond to \emph{finite} domains over which variables of that type can range; this requirement is imposed by the planning-based approach we propose.
%Therefore, a \textsc{Planlet} designer has to explicitly define for each data type the finite domain over which variables of that type can range.
Examples of such domains are finite integer intervals or sets of strings, and other enumerated domains. As YAWL applies strong data typing and all data types are defined using XML Schemas, this can be easily achieved by defining data types as XML Schemas and using restrictions (e.g., via the \texttt{enumeration} constraint) to limit the content of an XML element to a set of acceptable values.
%, thus defining a finite domain for all variables of that type.

In our example, we need to define data types for representing actors, robots\footnote{Although emergency operators and robots can be considered as \emph{resources} or \emph{services} able to execute tasks and can be represented in the organizational model provided by YAWL, we also need to explicitly represent them in the process because we need to define predicates and functions over these domains.} and locations in the area.
The following listing provides the (simplified) definition of the data type $Loc = \{loc00,\,loc10,\,\allowbreak\dotsc\,,\,loc33\}$.

\begin{footnotesize}
\begin{alltt}
<xs:simpleType name="Loc">
  <xs:restriction base="xs:string">
    <xs:enumeration value="loc00"/>
      ...
    <xs:enumeration value="loc33"/>
  </xs:restriction>
</xs:simpleType>
\end{alltt}
\end{footnotesize}

Under this representation, we consider possible values as constant symbols
%representing object names
that univocally identify objects in the domain of interest; different variables of the same type having the same value denote the same object.
\vskip 0.5em
\noindent\textbf{Defining and representing predicates and functions.}
The annotation of tasks with preconditions, postconditions and effects requires the definition of the set of predicates, numeric functions and object functions to be used when extending task definitions. Predicates and functions have to be completely specified at design time. Predicates can be used to express properties of domain objects and relations over objects. A predicate
consists of a predicate \emph{symbol} $P$ and a set of \emph{typed parameters} or \emph{arguments}\footnote{Predicates with no arguments, i.e., with arity 0, are allowed and can be considered as propositions; they are directly represented as boolean variables.}. Argument types (taken from the set of data types previously defined) represent the finite domains over which predicates are interpreted. A generic predicate declaration is of the form:
\begin{align*}
P(arg_1:type_1,\,arg_2:type_2,\,\dotsc\,,\,arg_n:type_n)
\end{align*}
%, and allowed parameter types are only those that appear in the set of data types defined in the specification. As a requirement, the arguments of predicates must range over  \emph{finite} domains.
In our example, we may need predicates for expressing the presence of a fire in a location or whether a location is covered by the network signal provided by the main antenna, or relations, such as the adjacency between locations, i.e.,
%$Fire(loc:Loc)$, $Covered(loc:Loc)$, $Adjacent(loc1:Loc,\,loc2:Loc)$.
\begin{align*}
Fire(loc:Loc)\qquad Covered(loc:Loc)\qquad Adjacent(loc1:Loc,\,loc2:Loc)
\end{align*}
In addition to basic predicates, we allow the designer to define \emph{derived} predicates. They are declared as basic predicates, with the additional specification of a well-formed formula $\varphi$
that determines the truth value for the predicate :
\begin{align*}
P(arg_1:type_1,\,\dotsc\,,\,arg_n:type_n)\, \{\, \varphi\, \}
\end{align*}
In our domain, we may need to express that an actor is connected to the network if s/he is in a covered location or if s/he is in a location adjacent to a location where a robot is located (and is thus connected through the robot); assuming we have defined the data types $Robot=\{rb1,\,rb2\}$ and $Actor=\{act1,\,act2,\,act3,\,act4\}$, we have:
\begin{flalign*}
& Connected(act:Actor) \, \{ \, \text{EXISTS}(l1:Loc,\,l2:Loc,\,rbt:Robot) \, (&\\
& (at(act)=l1) \, \text{AND} \, (Covered(l1) \, \text{OR} \, (atRobot(rbt)=l2 \, \text{AND} \, Adjacent(l1,l2))))) \}&
\end{flalign*}

\noindent\textbf{Numeric and Object Functions.} Numeric and object functions allow to represent and handle numeric values and domain objects as functions of other objects. Function declarations
are of the form:
\begin{align*}
f(arg_1:type_1,\,arg_2:type_2,\,\dotsc\,,\,arg_n:type_n):DataType
\end{align*}
and consist of a function \emph{symbol} $f$, a set of \emph{typed parameters}\footnote{Numeric functions with no arguments are allowed, and can be considered as state variables rather than constants, as their value may change during process executions; they are represented as integer or float/double variables.}, and a \emph{return type}.
Numeric functions have as return type an integer or a real number, whereas object functions have a return type taken from the set of data types defined in the net specification. The arguments of functions range over \emph{finite} domains, and for object functions the same requirement holds for result types (i.e., object functions must have finite co-domains).
In our example, we need to keep track of the battery level of the robots.
%and represent the battery discharge quantity.
%Assuming we have defined the data type $Robot=\{rb1,\,rb2\}$, these values
This can be represented through the numeric function
%$batteryLevel(robot:Robot):Integer$.
{$$batteryLevel(robot:Robot):Integer\qquad \text{and}\qquad batteryStep:Integer$$}
%{\footnotesize
%\setlength{\belowdisplayskip}{5pt}%
%\setlength{\abovedisplayskip}{5pt}%
%\begin{align*}
%batteryLevel(robot:Robot):Integer
%\end{align*}
%}
Similarly, we can represent the position of actors and robots
%Assuming we have also defined the data type $Actor=\{a1,\,a2,\,a3,\,a4\}$,
by defining the following functional predicates that map actors and robots to their location:
%$at(actor:Actor):Loc$ and $atRobot(robot:Robot):Loc$.\\
\begin{align*}
at(actor:Actor):Loc\qquad \text{and}\qquad atRobot(robot:Robot):Loc
\end{align*}
\vskip 0.5em
\noindent{\textbf{State variables representation.}}
The use of predicates and functions requires that at run-time we represent the corresponding logical interpretations, as state variables that hold \emph{(a)}~the truth value of the defined predicates over domain objects, and \emph{(b)}~the values of the defined functions with respect to different argument assignments. The interpretations are used to evaluate pre- and post-conditions, and are modified as a result of task executions. As a consequence of the declaration of a predicate or function, two new data types are automatically generated and added to the XML data types definitions for the net:
\begin{enumerate}[itemsep=1pt,parsep=1pt]
\item[T1.] a complex data type that is able to represent the name of the predicate or function and
\begin{itemize}[itemsep=0.5pt,parsep=0.5pt]
\item for predicates, all argument assignments for which the predicate holds\footnote{Basically, a set containing all object tuples for which the predicate is true.} (i.e., the current interpretation $P^{\mathcal{I}}$ for the predicate);
%for each predicate, a single net-level state variable of this type is defined and it can be initialized so as to contain all objects for which the predicate holds in the initial state;
\item for functions, all argument assignments for which the function is defined, along with the corresponding value\footnote{Basically, a map where object tuples are mapped to objects.} (i.e., the current interpretation $f^{\mathcal{I}}$ for the function);
%for each function, a single net-level state variable of this type is defined and it can be initialized so as to contain all values for the objects for which the function is defined in the initial state;
\end{itemize}
\item[T2.] a complex data type that is able to represent a predicate or function instance, in terms of the name of the predicate or function, the set of arguments and their assignment, and the truth value or numeric/object value of the predicate or function with respect to the specific assignment; different parameters of this type can be defined for process tasks, to be used for representing the effects that they can have on the predicate or function interpretation.
\end{enumerate}
For each predicate and function, a single net-level state variable of type T1 is defined and it can be initialized so as to contain all values for the objects for which the predicate is true or the function is defined in the initial state. Derived predicates are not explicitly represented through net-level state variables, as their interpretation can be always derived from the corresponding formula, and they can not appear in task effects (but task effects can act on the basic predicates and functions that appear in the formula, thus indirectly modifying the truth value for the derived predicate).\\
\vskip 0.2em
\noindent{\textbf{Initial interpretation.}}
The initial interpretation over which a process instance is executed is given by an assignment of values to the state variables that represent truth values for predicates and values functions. The initial interpretation is thus defined by a set of grounded predicates (initial facts) and initialization values for functions.\\
% (e.g., $at(a1)=startingPoint$, $batteryLevel(rb1)=2$, etc.).
%As a process case progresses, task executions modify and produce changes on the current state, according to their effects.

The following listings provide an example of a possible initialization for the variables of type T1 for the $Adjacent$ predicate and the $batteryLevel$ function, representing the grounded predicates (facts) $Adjacent(loc00,\,loc01)$, $Adjacent(loc00,\,loc10)$ and so on, and the assignments $batteryLevel(rb1) = 2$ and $batteryLevel(rb2) = 4$.\\

%% first column
\begin{minipage}[t]{0.5\textwidth}
\begin{footnotesize}
\begin{alltt}
<PredicateAdjacent>
  <name>Adjacent</name>
  <interpretation>
    <loc1>loc00</loc1>
    <loc2>loc01</loc2>
  </interpretation>
  <interpretation>
    <loc1>loc00</loc1>
    <loc2>loc10</loc2>
  </interpretation>
    ...
</PredicateAdjacent>
\end{alltt}
\end{footnotesize}
\end{minipage}
%%second column
\begin{minipage}[t]{0.5\textwidth}
\begin{footnotesize}
\begin{alltt}
<FunctionBatteryLevel>
  <name>BatteryLevel</name>
  <interpretation>
    <robot>rb1</robot>
    <value>2</value>
  </interpretation>
  <interpretation>
    <robot>rb2</robot>
    <value>4</value>
  </interpretation>
</FunctionBatteryLevel>
\end{alltt}
\end{footnotesize}
\end{minipage}
%
%\paragraph{Pre-conditions, post-conditions and effects.}
%\label{subsec:prepost}
\vskip 0.8em
\noindent{\textbf{Pre-conditions, post-conditions and effects.}}
%In our view,
Pre-conditions, post-conditions and effects are defined at design time as logical annotations associated with tasks in a \textsc{Planlet}.
%In detail, this can be done at the task specification level.
%In YAWL, every atomic task $t$ that requires work to be performed has associated a \emph{decomposition} describing the data that will be assigned and updated when the task is performed and the so-called YAWL Service that will be responsible for the taskÕs execution. Decompositions can have a number of \emph{parameters} (or variables) defined for them, describing what data must be supplied to a running task instance (i.e., input variables), and what data that task instance will eventually deliver (i.e., output variables). Each parameter has a \emph{name} it may be referenced by, a \emph{type} dictating valid values it may store, a \emph{designation} indicating how that data may be used (i.e., as input, output, or both), and a \emph{scope} defining the visibility of the parameter (task or net scope). In addition, inbound and outbound mappings define how data is transferred from net variables to task variables and vice-versa~\cite{yawlbook}.
%In our model, we propose to extend task specifications with the possibility of defining preconditions, postconditions and effects.
%Preconditions, postconditions and effects of YAWL tasks can be expressed as logical formulae constructed from predicates with argument terms and logical connectives.
Basically, we assume a first-order predicate logic with numeric and object functions, with the restriction that free variables are not allowed and thus all variable symbols must be task parameter names or occur in the scope of a quantifier. The language is clearly
%syntax of the proposed constructs are
inspired from PDDL, although we prefer an infix notation for the operators.

Pre and postconditions can be defined as an atomic formula as well as the conjunction (AND) or disjunction (OR) of formulae,
% (using the logical AND and OR operators)
and formulae can be negated (NOT) and existentially or universally quantified (EXISTS and FORALL).

%For example, one might use propositional logic to describe the effects of actions, but this representation would preclude describing actions with universally quantified effects.

%(using the NOT, EXISTS and FORALL operators).
%Supported operators thus include the logical AND, OR and NOT to define conjunctive, disjunctive and negative preconditions and postconditions, as well as EXISTS and FORALL to define existentially and universally quantified pre and postconditions.
Atomic formulae are (basic or derived) predicates defined over argument terms, where a term can be a task parameter or a constant;
%moreover, a formula can also be expressed as
the equality ($==$) or negated equality (!=) predicates between terms are supported. In addition, it is possible to define conditions as expressions that make use of relational binary comparison operators ($<, >, =, \leq, \geq$) and involve numeric expression. Numeric expressions are constructed, using arithmetic operators ($+, -, *, /$), from primitive numeric expressions, which include integers and real numbers, and numeric functions defined over argument terms. Numeric expressions are not allowed to appear as terms, i.e., as arguments to predicates or values of task parameters (otherwise, predicates would be defined over arguments with \emph{infinite} ranges).
%In addition to numeric functions, the language allows defining object functions to represent typed domain objects other than numerical values.
Object functions are defined over argument terms, and can be used as terms only in equality and inequality atomic formulae.

Task effects basically define the changes that a successful task execution imposes on the current state of the world, as represented in the process execution context.
%Thus, one might describe the effects of actions with first-order predicate calculus, but this description assumes that all effects are deterministic. It would be difficult to represent the precise effects of an action, such as flipping a coin or prescribing a particular medication for a sick patient (who might or might not get better), without some form of probabilistic representation.
A task effect can be defined as an atomic formula as well as the conjunction of formulae
%(AND operator)
, and atomic formulae can be negated.
%(NOT operator).
%and universally quantified.
%Supported operators thus include the logical AND and NOT to define conjunctive and negative effects.
%, as well as FORALL to define universally quantified effects.
As for preconditions, atomic formulae are predicates defined over argument terms (task parameters or constants), but derived predicates and equality are not allowed. In addition to boolean predicates, task effects may include assignment expressions to update the values of numeric functions. A numeric effect consists of an assignment operator, the numeric function to be updated and a numeric expression. Assignment operators include \myi direct assignment ($=$), to assign to a numeric function a numerical value defined by a number or by a numeric expression; \myii relative assignments, which can be used to increase ($+$=) or decrease ($-$=) the value of a numeric function (additive assignment effects), as well as to scale-up ($*$=) or scale-down (/=) the value of a numeric function (scaling assignment effects). Similarly, it is possible to update the value of object functions with an assignment operator, restricted to direct assignment (=), to assign to an object function a typed object identified by a term.
\vskip 0.5em
\noindent\textbf{Marking up task specifications.}
Task pre-conditions, post-conditions and effects
% expressed at design time as strings of operands and operators defined according to the syntax rules defined above, and
are represented in task specifications via the \texttt{<precondition>}, \texttt{<effect>} and \texttt{<postcondition>} markup elements. In our example, consider the task labeled as \texttt{go}, which require that an actor moves from a location to another in the area.
% We can assume they are all mapped to the same decomposition, which
It defines two input parameters $from$ and $to$ of type $Loc$, representing the starting and arrival locations, and an input parameter $actor$ of type $Actor$ representing the emergency operator that executes the task. An instance of this task can be executed only if s/he is currently at the starting location and is connected to the network. As a consequence (effect) of task execution, the actor moves from the starting to the arrival location, but we need, as post-condition, to verify whether the arrival location has been reached and the actor is still connected to the network. We can thus define the following annotations:
%\todo{ci vorrebbe anche un altra postcondizione oltre a quella che verifica l'effetto, senno sembra che le postcondizioni sono solo verificare l'effetto...si pu˜ mettere anche connected? o in effetti le postcondizioni verificano solo l'effetto?}

\begin{footnotesize}
\begin{alltt}
<precondition>at(actor) == from AND Connected(actor)</precondition>
<effect>at(actor) = to</effect>
<postcondition>at(actor) == to AND Connected(actor)</postcondition>
\end{alltt}
\end{footnotesize}

%\begin{footnotesize}
%\begin{itemize}
%\item $Pre$(go): (Connected($operator$)) AND (At($operator$) == $from$))
%\item \emph{Eff}(go): (At($operator$) = $to$)
%\item $Post$(go): (At($operator$) == $to$)
%\end{itemize}
%\end{footnotesize}
%\begin{footnotesize}
%\begin{alltt}
%<decomposition id="Go" ... >
%  <inputParam>
%    <index>0</index><name>operator</name><type>Operator</type> ...
%  </inputParam>
%   ...
%  <precondition>Connected(operator) AND At(operator)==from)</precondition>
%  <effect>At(operator)=to</effect>
%  <postcondition>At(operator)==to</postcondition>
%</decomposition>
%\end{alltt}
%\end{footnotesize}

%\todo[inline]{questa roba continua a non convincermi...nelle precondizioni bisogna dire che sia che l'operatore non e' in to e sia che e' in from??? E nelle postcondizioni l'inverso (aggiungendo quindi che oltre a verificare che sia in to, si verifica anche che non sia in from??)}
%\todo[inline]{Anche se ne abbiamo gia' parlato, non torna il discorso sugli effetti. Quali effetti vengono sempre eseguiti e quali no, tra predicati, assegnazioni e aggiornamenti? Come faccio a capire cosa viene modificato da un task? Bisogna guardare SOLO le postcondizioni, ed assumere che i predicati e le funzioni che compaiono li hanno valori prodotti dal task? Bisogna guardare gli effetti?}
%\todo[inline]{Manca ancora tanta roba, tra cui: come si passa da pre e postcondizioni a constraints, con quali variabili il task modifica le cose e come vengono determinate (vedi commento precedente), come tutta sta roba va in PDDL, cosa si fa quando c'e' eccezione, come si costruisce il problema di planning, .... e' tutto piu' incasinato di quello che sembrava}

% \subsection{Representing Task Effects}
% \label{subsec:effects}
% Task executions produce changes on the state represented by process variables, changing the interpretation of predicates and functions according to the defined effects.
The designer can distinguish between: \myi \emph{direct effects}, i.e., effects that always take place after an execution, and therefore the corresponding changes on the state variables are automatically performed when the task completes (e.g., if an effect of the form $BatteryLevel(robot)$ += $5$ is marked as automatic, after task execution the value for $BatteryLevel(robot)$ is directly increased by 5); and \myii \emph{supposed} effects, i.e., effects that define changes that are assumed to be performed only when the task is considered as an action in a planning domain. Supposed effects can be interpreted as the effects that a task is supposed to have, but the actual produced changes are defined at run-time as a result of the concrete execution, such as the actual truth value of a predicate or the actual value for a direct assignment.

\vskip 0.5em \noindent\colorbox{light-gray}{\begin{minipage}{0.98\textwidth}
\begin{example}
\emph{In our example, \emph{at(actor) = to} is a supposed effect, as the actual value for \emph{at(actor)} is produced as a task output and may be different from the desired one (i.e., the value of the \emph{to} variable prescribed in the effect). If the designer needs to verify whether a task execution has produced the intended effect, s/he has to define a corresponding post-condition (i.e., the \emph{at(actor) == to}).}
\end{example}
\end{minipage}
}\vskip 0.5em

Direct effects can be directly represented by generating an outbound mapping with an XQuery expression that adds/removes a tuple to/from the state variable representing predicate's interpretation (for positive/negative predicates), or updates the value for a tuple in the state variable representing function's interpretation (for assignment effects). In supposed effects, the actual values are produced by workitem executions, and all predicates and functions that appear in the effect expression have to be represented as task variables, so as to allow to specify (according to task's execution logic) the truth value for predicates or the value for functions. To this end, we represent each predicate and function that appears in the supposed effects as task parameters of type T2, where the predicate/function name is given and fixed, the values for the argument variables (i.e., the grounding) are defined by the inbound mappings for task parameters
%(recall that as no free variables are allowed, all variables in an effect correspond to task parameters),
and the predicate/function actual value will be defined as a result of task execution. For these variables, outbound mappings are then generated, including XQuery expressions to update net-level state variables as for direct effects.

\vskip 0.5em \noindent\colorbox{light-gray}{\begin{minipage}{0.98\textwidth}
\begin{example}
\emph{
Fig.~\ref{fig:fig_planlets-screenshot} shows an example of how a variable can be used to represent an effect and how the actual value for \emph{at(act1)} can be produced as output; we show that the output value for $\emph{at(act1)}$ is produced by a sensor (i.e., a GPS device) supporting the worklist handler. The produced value, in the example 'loc33', is then used to update the net variable representing the \emph{at} interpretation to reflect that \emph{at(act1) $\mapsto$ loc33}.
}
\end{example}
\end{minipage}
}\vskip 0.5em

\begin{figure}[t]
\centering
\includegraphics[width=0.8\columnwidth]{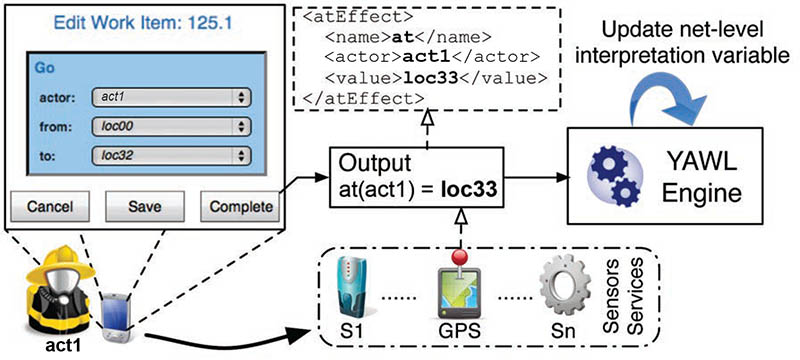}
\caption{A task effect represented as a variable assignment.}
\label{fig:fig_planlets-screenshot}
\end{figure}
\noindent{\textbf{State model and exceptions}}
In a \textsc{Planlet}, a process state $S$ is given by the token marking $m_S$ (as defined in~\cite{YAWLBook2009} for YAWL nets) and the logical interpretation $\mathcal{I_S}$ that assigns truth values to predicates and values to functions.
%, as represented by state variables (cf. Section~\ref{subsec:predfunct}).
The initial state over which a process instance is executed is given by the initial marking and an assignment of values to the state variables that represent the initial interpretation for predicates and functions.
%At run time all task executions have to be preceded and followed by the evaluation of pre and postcondition formulae.
When a task $t$
%, with pre and postcondition formulae $\varphi_{pre}$ and $\varphi_{post}$ and effects expression {$expr_{\emph{\text{eff}}}$},
becomes enabled in a state $S$ (as determined by $m_S$), its execution can start only if the task precondition formula $\varphi_{pre}$ is true in $\mathcal{I}_S$, i.e.,  $\mathcal{I}_S \models \varphi_{pre}$. A task execution changes the interpretation according to actual task effects (which for a successful execution are given by the corresponding effects expression $expr_{\emph{\text{eff}}}$) and leads to a new state $S'$ where $m_{S'}$ is the produced marking and $\mathcal{I}_{S'}$ is the new interpretation. A completed task is considered as successfully executed if its postcondition formula $\varphi_{post}$ is true in $\mathcal{I}_{S'}$, i.e., $\mathcal{I}_{S'} \models \varphi_{post}$.
At run time all task executions are thus preceded and followed by the verification of  whether $\mathcal{I} \models \varphi_{pre}$ and $\mathcal{I} \models \varphi_{post}$\footnote{As $\varphi_{pre}$ and $\varphi_{post}$ are \emph{closed} formulae, their truth values can be considered as the answers to the corresponding boolean queries, given the interpretation $\mathcal{I}$.}.
%the evaluation of pre and postcondition formulae
%$\varphi_{pre}$ and $\varphi_{post}$ in the current interpretation $\mathcal{I}$, i.e., the %verification of  whether $\mathcal{I} \models \varphi_{pre}$ and $\mathcal{I} \models \varphi_{post}$\footnote{as $\varphi_{pre}$ and $\varphi_{post}$ are \emph{closed} formulae, their truth values can be considered as the answers to the corresponding boolean queries, given the interpretation $\mathcal{I}$}.
In this model, an exception occurs in a given state with an interpretation $\mathcal{I}$ if a task is enabled but $\mathcal{I} \not\models \varphi_{pre}$ or if a task has completed but $\mathcal{I} \not\models \varphi_{post}$.
\vskip 0.5em
\noindent{\textbf{From pre-/post-conditions to pre-/post-execution constraints}}
As part of its exception handling mechanism, YAWL supports the definition of workitem-level pre- and post-execution constraints, as rules with conditions that \myi are checked when the workitem becomes enabled and when it is completed,
%(i.e., immediately before and after execution),
and \myii if violated, they trigger an exception and the execution of an exception handling process (i.e., a YAWL exlet). Conditions are defined over case variables as strings of operands and arithmetic, comparison and logical operators; conditional expressions may also take the form of boolean XQuery expressions~\cite{YAWLBook2009}.
In our approach, we leverage on this built-in feature and map the evaluation of pre- and post-conditions to the evaluation of pre and post-execution constraints, by automatically translating $\varphi_{pre}$ and $\varphi_{post}$ formulae for each task into YAWL conditional expressions. While arithmetic, comparison and logical operators in our annotation language directly map to the operators supported by YAWL, predicates and functions can be resolved by appropriate XQuery expressions, according to a standard evaluation algorithm for boolean queries\footnote{we recall that no free variables are allowed and all formulae are closed}; basically:
\begin{itemize}[itemsep=1pt,parsep=1pt]
\item all variables in the formula that correspond to task parameters (for which an inbound or outbound mapping from or to a net variable is defined as an XQuery expression) assume the values as defined by the mappings %(i.e., variables are \emph{grounded})
\item each function, which becomes defined over a tuple of ground terms, is replaced by an XQuery expression that extract from function's interpretation state variable the value corresponding to the tuple (i.e., $f(c_1,\, \dotsc\,,\,c_k)$ is replaced by the value $f^{\mathcal{I}}(c_1,\, \dotsc\,,\,c_k)$);
\item each predicate, which becomes defined over a tuple of ground terms, is replaced by a boolean XQuery expressions that verify whether the tuple is contained in the predicate interpretation state variable ($\mathcal{I} \models P(c_1,\, \dotsc\,,\,c_k)$ if $(c_1,\,\dotsc \,,\, c_n) \in P^{\mathcal{I}}$)
\item existentially and universally quantified formulae are replaced by boolean XQuery expressions that verify whether there exist an assignment for the quantified typed variable that makes the formula true, or whether the formula is true for all possible assignments (among the finite possible values defined in the data type specification)
\item derived predicates in preconditions are replaced by the corresponding formulae, which are evaluated as per the previous items
\end{itemize}
\vskip 0.5em
\noindent{\textbf{Representing Planlet Annotations in PDDL.}}
\label{YAWLPDDL}
In order to exploit our planning-based recovery mechanism, every task/annotation/property associated to a \textsc{Planlet} needs to be translated in PDDL. A PDDL definition consists of two parts: the domain and the problem definition. The planning domain is built starting by the definition of basic/derived predicates, object/numeric functions and data types as shown in the previous sections, and by making explicit the \emph{actions} associated to each annotated task stored in the repository linked to the \textsc{Planlet} under execution, together with the associated pre-conditions, effects and input parameters. Basically, the planning domain describes how predicates and functions may vary after an action execution, and reflects the contextual properties constraining the execution of tasks stored in a specific \textsc{Planlet} repository. Our annotation syntax allows to represent planning domains and problems with the complexity of those describable in PDDL version 2.2~\cite{PDDL22}, that is characterized for enabling the representation of realistic planning domains.

In the following, we discuss how our annotations are translated into a PDDL file representing the planning domain:
\begin{itemize}[itemsep=1pt,parsep=1pt]
\item the \emph{name} and the \emph{domain} of a data type corresponds to an \emph{object type} in the planning domain;
\item basic and derived predicates have a straightforward representation as \emph{relational predicates} (templates for logical facts) and \emph{derived predicates} (to model the dependency of given facts from other facts) in the planning domain;
\item numeric functions correspond to PDDL \emph{numeric fluents}, and are used for modeling non-boolean resources (e.g., the battery level of a robot) in the planning domain;
\item object functions do not have a direct representation in PDDLv2.2, but may be replaced as relational predicates. Since an object function $f:Object^n \to Object$ map tuples of objects with domain types $D^n$ to objects with co-domain type $U$, it may be coded in the planning domain as a relational predicate \emph{P} of type $(D^n,U)$;
    % intuitively, if $n = 1$, \emph{P(x,y)} means \emph{f(x) = y};
\item a given YAWL task, together with the associated pre-conditions and effects and input parameters, is translated in a PDDL \emph{action schema}. An action schema describes how the relational predicates and/or numeric fluents may vary after the action execution. In the following, it is shown the PDDL representation of the task \emph{go}:
    \begin{footnotesize}\begin{alltt}
    (:action go
    :parameters (?x - actor ?from - loc ?to - loc)
    :precondition (and (not (at ?x ?to)) (at ?x ?from) (connected ?x))
    :effect (and (not (at ?x ?from)) (at ?x ?to)))\end{alltt}\end{footnotesize}
    This task can be executed only if the actor denoted with \emph{x} is not currently located in the target location \emph{to} (and, consequently, is located in his/her starting location \emph{from}) and is connected to the network. The desired effect turns the value of the predicate $at(x,to)$ to TRUE and $at(x,from)$ to FALSE, meaning the actor moved in a new location.
\end{itemize}
When an exception arises, on a same planning domain a new planning problem is built at run-time, through the description of an initial state (that corresponds to the invalid state $s$ of the process) and the description of the desired goal (a safe state $s'$, derived from the violated pre- or post-condition).
\begin{itemize}[itemsep=1pt,parsep=1pt]
\item for each data type defined in the planning domain, all the possible object instances of that particular data type are explicitly instantiated as \emph{constant symbols} in the planning problem (e.g., the fact that $act1$, $act2$, $act3$, $act4$ are \emph{Actors}, $rb1$ and $rb2$ are \emph{Robots}, $loc00$, ..., $loc33$ are \emph{Locations});
\item a representation of the \emph{initial state} of the planning environment is needed. Basically, the initial state of the planning problem corresponds to an invalid state (i.e., a state that needs to be fixed after a pre- or post-condition violation during the process execution). It is composed by a conjunction of relational predicates, derived predicates (e.g., the information about which actors/robots are currently connected to the network) and by the current value of each numeric fluent (e.g., the value of the battery level for each robot);
\item the \emph{goal state} of the planning problem is a logical expression over facts. In our approach, the goal state is built in order to reflect a safe state to be reached after the execution of a recovery procedure. Suppose that $t$ is the task whose pre-conditions $Pre(t)$ (or post-conditions $Post(t)$) are not verified. The safe state $s'$ corresponding to the goal state of the planning problem is generated starting from the invalid state $s$, by substituting the wrong facts that led to the exception with the content of the pre-conditions (or post-conditions) violated.
    \vskip 0.5em \noindent\colorbox{light-gray}{\begin{minipage}{0.9\textwidth}
    \begin{example}
    \emph{In the case of our running example, actor \emph{act1} reaches a wrong location \emph{loc03} after the execution of the task \emph{go}, and s/he is no longer connected to the network, resulting in a post-condition failure (cf. Figure~\ref{fig:fig_introduction-case_study-context_2}(b)). Therefore, whereas the initial state of the planning problem generated for dealing with such an exception corresponds exactly to the invalid state of the process (hence, with the value of the predicate \emph{Connected(act1)=FALSE} and \emph{at(act1)=loc03}), the goal state is composed by all those predicates/functions not affected by the task \emph{go} (that must remain unchanged after the recovery procedure), and by the post-conditions just violated, i.e., \emph{Connected(act1)=TRUE} and \emph{at(act1)=loc33}, which is the desired location.}
    \end{example}
    \end{minipage}
    }\vskip 0.5em
   \end{itemize}
%Space constraints inhibit the presentation of the full technical details, which are accessible to the reader as an online appendix at the URL: \url{http://dl.dropbox.com/u/8845400/Planlets__BPM2012__OnlineAppendix.pdf}\footnote{If the paper will be accepted, a technical report will be published as accompanying the main paper published in the proceedings, instead of such an online appendix.}. 

\section{Conclusion}
\label{sec:planlets-conclusions}

In this chapter, we have customized our general approach for automatic adaptation of dynamic processes presented in Chapter~\ref{ch:approach} by discussing its deployment on top of existing PMSs. 

Specifically, we have first shown how the YAWL environment can be complemented with the \smartpm execution environment by leveraging the ``Flexibility as a Service'' approach~\cite{faas}. A screen-cast of a demonstration that shows the interaction between the two environments is available at \url{http://www.dis.uniroma1.it/~marrella/public/DemoBPM2011.zip}.

Then, we have presented a concrete design and implementation proposal of how the YAWL architecture can be extended to integrate planning capabilities. For this aim, we have proposed the approach of \planlets, self-contained YAWL specifications with recovery features, based on modeling of pre- and post-conditions of tasks and the use of planning techniques. The feasibility of the \textsc{Planlet} approach has been investigated by performing some testing to learn the time amount needed for synthesizing a recovery plan for different adaptation problems. We made our tests by using the LPG-td planner~\cite{LPG,LPGth}, and the obtained results can be compared with the ones shown in Section~\ref{sec:validation-experiments}.

Currently, the implementation of the \planlets approach is ongoing, in collaboration with researchers from the Queensland University of Technology (QUT, Australia).

\chapter{Conclusion}
\label{ch:conclusion}

The research activity outlined in this thesis has been devoted to define a general approach, a concrete architecture and a prototype PMS for automatic adaptation of dynamic processes, on the basis of a declarative specification of process tasks and relying on well-established planning techniques. Our purpose was to demonstrate that the combination of procedural and imperative models with declarative elements, along with the exploitation of techniques from the field of artificial intelligence (AI) such as planning algorithms and tools, can increase the ability of existing PMSs of supporting dynamic processes.

To this end, we developed a prototype PMS named \smartpm, which is specifically tailored for supporting collaborative work of process participants during pervasive scenarios. %, where different participants need to collaborate in order to reach a common goal.
The adaptation mechanism deployed on \smartpm is based on execution monitoring for detecting failures at run-time, which does not require the definition of the adaptation strategy in the process itself (as most of the current approaches do), and on automatic planning techniques for the synthesis of the recovery procedure.

%Through the workpad feedbacks, we demonstrated that ...Such tests will provide useful insights on the cases in which an automatic approach is convenient wrt. more traditional exception handlers defined at design-time.

In order to exploit the automatic adaptation features provided by \smartpm, some extra modeling effort is required at design time. In fact, our approach requires that processes are defined at best partly textually (for the formalization of the domain theory associated to the dynamic process) and partly graphically (for the definition of the process control flow). However, we think the overhead is compensated at run-time by the automation of adaptation procedures. While, in general, such modelling effort may seem significant, in practice it is comparable to the effort needed to encode the adaptation logic using alternative methodologies like happens, for example, in rule-based approaches.

%Recently many researchers have started to exploit the good performance of classical planners to solve a much wider range of problems that, although they may not appear to be "deterministic planning" problems, nevertheless fit within the classical planning model (i.e., propositional description of a known state and goal, deterministic actions that modify a state). The approach typically consists of creating classical planning problems whose solution is directly or indirectly used to obtain a solution to the original problem. In this way, classical planners have been used for dealing with more expressive planning problems, including incomplete information, temporally extended goals and preferences, as well as to solve problems in bioinformatics (e.g. genome rearrangement and gene regulatory networks), and active diagnosis. In some cases, modifications of the classical planner may be necessary.

The assumptions of classical planning (determinism in the action effects, the closed-world assumption for the instantiation of the initial situation, etc.) we used for modeling dynamic processes has a twofold consequence. On the one hand, we can exploit the good performance of classical planners (e.g., LPG-td~\cite{LPG}, POPF2~\cite{POPF2_2010}) to solve real-world problems with a realistic complexity. In fact, test results reported in Section~\ref{sec:validation-experiments} show that the time overhead introduced by a planner for synthesizing recovery procedures with variable length for adaptation problems of growing complexity (in terms of size and parameters of the corresponding planning problems) is in the order of tens of seconds, which makes the approach feasible for medium-sized dynamic processes used in practice. On the other hand, classical planning imposes some restrictions for addressing more expressive problems, including incomplete information, preferences and multiple task effects. Future works will include an extension of our approach dealing with the above aspects and by considering also classical business scenarios, with the purpose to maintain the planning process very responsive.

A second future work concerns to integrate the approach for building process templates (cf. Chapter~\ref{ch:templates}) in the \smartpm system.
%Our approach requires to pre-define a domain theory and a control flow for a dynamic process in order to cope with the specific contextual scenario, taking into account the needs and requirements that arise from the pervasive situation.
In fact, when considering specific classes of emergency management processes (e.g., earthquakes, floods, building collapses, etc.), it is possible to identify recurring and predefined activities that have to be performed according to general guidelines and emergency action plans. In order to balance between the need to create ad-hoc process
specifications and to exploit guidelines and recurrent activity patterns, it is possible to take
advantage of our approach for generating process templates. Processes can be designed on-demand from provided templates, which are tailored and configured to create an appropriate process specification. This kind of approach would allow to speed up process definition activities, as well as to facilitate the reuse of process fragments. %One of the future work consist in providing templates defined directly with the \indigolog language

Finally, even if our approach is able to adapt a process instance at run-time, it does not allow to evolve the original process model on the basis of exceptions captured. Therefore, a third future work concerns to avoid to consider all deviations from the process as errors, but as a natural and valuable part of the work activity, which provides the opportunity for learning and thus evolving the process model for future instantiations.

%Le azioni di compensazione quando non viene trovata una procedura di recupero o un servizio non è disponibile sono ancora scarse.

\appendix
\chapter{The Full Code of the Example}
\label{appendix_A}

This appendix is focussed primarily for describing the full code of the example related to our case study. We show, in sequence, the \smartML specification linked to the dynamic process depicted in Section~\ref{sec:introduction-case_study}, the \indigolog program to be executed by the PMS and the PDDL Planning Domain and Problem built for dealing with the exception detected in Fig.~\ref{fig:fig_introduction-case_study-context_2}. We underline that the \indigolog program reflecting the approach shown in Chapter~\ref{ch:approach} is written with the \texttt{Prolog} language.

%A demo of the system in action can be found at the following link:~\url{http://www.dis.uniroma1.it/~marrella/documents/phd_thesis/SmartPM_demo.zip}.

\section*{The \smartML specification of the domain theory}

\begin{footnotesize}
\begin{alltt}
TYPES = \{Participant, Capability, Location_type, Status_type, Boolean\_type, Integer\_type\}

    \emph{\textbf{Participant} = \{act1,act2,act3,act4,rb1,rb2\}}
    \emph{\textbf{Capability} = \{movement,hatchet,camera,gprs,extinguisher,battery,digger,powerpack\}}
    \emph{\textbf{Location_type} = \{loc00,loc10,loc20,loc30,loc01,loc11,loc02,loc03,loc13,loc23,loc31,
          loc32,loc33\}}
    \emph{\textbf{Status_type} = \{ok,fire,debris\}}
    \emph{\textbf{Boolean\_type} = \{true,false\}}
    \emph{\textbf{Integer\_type} = \{0..30\}}

PRE-DEFINED TERMS = \{provides, requires\}

    \emph{\textbf{provides[prt:Participant]} = (bool:Boolean\_type)}
    \emph{\textbf{requires[prt:Participant]} = (bool:Boolean\_type)}

NON RELEVANT TERMS = \{atRobot, batteryLevel, photoTaken, neigh, covered, generalBattery,\\batteryRecharging, moveStep, debrisStep\}

    \emph{\textbf{atRobot[prt:Participant]} = (loc:Location_type)}
    \emph{\textbf{batteryLevel[prt:Participant]} = (int:Integer\_type)}
    \emph{\textbf{photoTaken[loc:Location_type]} = (int:Integer\_type)}
    \emph{\textbf{neigh[loc1:Location_type,loc2:Location_type]} = (bool:Boolean\_type)}
    \emph{\textbf{covered[loc:Location_type]} = (bool:Boolean\_type)}
    \emph{\textbf{generalBattery[]} = (int:Integer\_type)}
    \emph{\textbf{batteryRecharging[]} = (int:Integer\_type)}
    \emph{\textbf{moveStep[]} = (int:Integer\_type)}
    \emph{\textbf{debrisStep[]} = (int:Integer\_type)}

RELEVANT TERMS = \{evacuated, at, status\}

   \emph{\textbf{at[prt:Participant]} = (loc:Location_type)}
   \emph{\textbf{evacuated[loc:Location_type]} = (bool:Boolean\_type)}
   \emph{\textbf{status[loc:Location_type]} = (st:Status_type)}

COMPLEX TERMS = \{isConnected, isRobotConnected\}

isConnected(prt:Participant) \{
\quad \text{EXISTS}(l1:Location\_type,\,l2:Location\_type,\,rbt:Participant).((at(act)=l1) \text{AND}
\quad (covered(l1) \, \text{OR} \, (atRobot(rbt)=l2 \, \text{AND} \, neigh(l1,l2)=true \text{AND} isRobotConnected(rbt))))\}.

isRobotConnected(rbt:Participant) \{
\quad \text{EXISTS}(l1:Location\_type,\,l2:Location\_type).((atRobot(rbt)=l1) \text{AND}
\quad (covered(l1) \text{OR}
\quad (neigh(l1,l2) \text{AND} Covered(l2)) \text{OR}
\quad (\text{EXISTS}(l3:Location\_type,\,l4:Location\_type,
\quad \quad \quad \quad l5:Location\_type,\,rbt2:Participant).atRobot(rbt2)=l3 \text{AND}
\quad \quad ((neigh(l1,l3) AND covered(l3)) OR
\quad \quad  (neigh(l1,l5) AND neigh(l3,l5) AND covered(l3)) OR
\quad \quad  (neigh(l3,l4) AND covered(l4) AND neigh(l1,l3)) OR
\quad \quad  (neigh(l3,l4) AND covered(l4) AND neigh(l1,l5) AND neigh(l3,l5))))))\}.


EXOGENOUS EVENTS = \{\photoLost, \fireRisk, \rockSlide\}

<ex-events>
    <ex-event>
        <name>rockSlide</name>
        <parameters>
            <arg>loc - Location_type</arg>
        </parameters>
        <effects>
            <automatic>status[loc] = debris</automatic>
        </effects>
    </ex-event>
    <ex-event>
        <name>fireRisk</name>
        <parameters>
            <arg>loc - Location_type</arg>
        </parameters>
        <effects>
            <automatic>status[loc] = fire</automatic>
        </effects>
    </ex-event>
    <ex-event>
        <name>photoLost</name>
        <parameters>
        <arg>loc - Location_type</arg>
        </parameters>
        <effects>
        <automatic>photoTaken[loc] = false</automatic>
        </effects>
    </ex-event>
</ex-events>

TASKS REPOSITORY = \{go, move, takephoto, evacuate, updatestatus, 
extinguishfire, chargebattery\}

<tasks>
    <task>
        <name>go</name>
        <parameters>
            <arg>from - Location_type</arg>
            <arg>to - Location_type</arg>
        </parameters>
        <precondition>at[PRT] == from AND isConnected[PRT] == true</precondition>
        <effects>
            <supposed>at[PRT] = to</supposed>
        </effects>
    </task>
    <task>
        <name>move</name>
        <parameters>
            <arg>from - Location_type</arg>
            <arg>to - Location_type</arg>
        </parameters>
        <precondition>atRobot[PRT] == from AND batteryLevel[PRT] >= moveStep[] AND
                      isRobotConnected[PRT] == true
        </precondition>
        <effects>
            <supposed>atRobot[PRT] = to</supposed>
            <automatic>batteryLevel[PRT] -= moveStep[]</automatic>
        </effects>
    </task>
    <task>
        <name>takephoto</name>
        <parameters>
            <arg>loc - Location_type</arg>
        </parameters>
        <precondition>at[PRT] == loc AND isConnected[PRT] == true</precondition>
        <effects>
            <supposed>photoTaken[loc] = true</supposed>
        </effects>
    </task>
    <task>
        <name>evacuate</name>
        <parameters>
            <arg>loc - Location_type</arg>
        </parameters>
        <precondition>at[PRT] == loc AND isConnected[PRT] == true AND
                      status[loc] == ok AND evacuated[loc] == false
        </precondition>
        <effects>
            <supposed>evacuated[loc] = true</supposed>
        </effects>
    </task>
    <task>
        <name>updatestatus</name>
        <parameters>
            <arg>loc - Location_type</arg>
        </parameters>
        <precondition>at[PRT] == loc AND isConnected[PRT] == true AND
                      status[loc] == ok
        </precondition>
        <effects>
            <supposed>status[loc] = ok</supposed>
        </effects>
    </task>
    <task>
        <name>removedebris</name>
        <parameters>
            <arg>loc - Location_type</arg>
        </parameters>
        <precondition>atRobot[PRT] == loc AND isRobotConnected[PRT] == true AND
                      status[loc] == debris AND batteryLevel[PRT] >= debrisStep[]
        </precondition>
        <effects>
            <supposed>status[loc] = ok</supposed>
            <automatic>batteryLevel[PRT] -= debrisStep[]</automatic>
        </effects>
    </task>
    <task>
        <name>extinguishfire</name>
        <parameters>
            <arg>loc - Location_type</arg>
        </parameters>
        <precondition>at[PRT] == loc AND isConnected[PRT] == true AND
                      status[loc] == fire
        </precondition>
        <effects>
            <supposed>status[loc] = ok</supposed>
        </effects>
    </task>
    <task>
        <name>chargebattery</name>
        <parameters>
            <arg>rb - Participant</arg>
        </parameters>
        <precondition>provides[rb,battery] == true AND at[PRT] == atRobot[rb] AND
                      isConnected[PRT] == true AND generalBattery[] >= batteryRecharging[]
        </precondition>
        <effects>
            <automatic>batteryLevel[rb] += batteryRecharging[]</automatic>
            <automatic>generalBattery[] -= batteryRecharging[]</automatic>
        </effects>
    </task>
</tasks>
\end{alltt}
\end{footnotesize}

\section*{The \indigolog Program}

\begin{footnotesize}
\begin{alltt}
\%\%\%\%\%\%\%\%\%\%\%\%\%\%\%\%\%\%\%\%
\% FILE: aPMS.pl
\% DESCRIPTION : The most recent version of the IndiGolog adaptive Process Management System.
\% VERSION : 1.0
\%\%\%\%\%\%\%\%\%\%\%\%\%\%\%\%\%\%\%\%

/* SOME PRE-BUILT COMMANDS AND MACROS*/
:- dynamic controller/1.

cache(_):-fail.

causes_true(_,_,_) :- false.
causes_false(_,_,_) :- false.

actionNum(X,X).

square(X,Y) :- Y is X * X.
member(ELEM,[HEAD|_]) :- ELEM=HEAD.
member(ELEM,[_|TAIL]) :- member(ELEM,TAIL).
listEqual(L1,L2) :- subset(L1,L2),subset(L2,L1).

/* THE LIST OF AVAILABLE SERVICES */
service(S) :- domain(S,[act1,act2,act3,act4,rb1,rb2]).

/* THE LIST OF CAPABILITIES RELEVANT FOR THE PROCESS OF INTEREST */
capability(B) :- domain(B,[movement,hatchet,camera,gprs,extinguisher,battery,digger,powerpack]).

/* THE CAPABILITIES PROVIDED BY EACH SERVICE*/
provides(act1,movement).
provides(act1,gprs).
provides(act1,extinguisher).
provides(act1,camera).
provides(act2,movement).
provides(act2,gprs).
provides(act2,hatchet).
provides(act3,movement).
provides(act3,gprs).
provides(act3,hatchet).
provides(act4,movement).
provides(act4,powerpack).
provides(act4,gprs).
provides(rb1,battery).
provides(rb1,digger).
provides(rb2,battery).
provides(rb2,digger).

/* PRE-DEFINED DATA TYPES */

boolean_type(Q) :- domain(Q,[true,false]).

integer_type(N) :- domain(N,[0,1,2,3,4,5,6,7,8,9,10,11,12,13,14,15,16,17,18,19,20,
21,22,23,24,25,26,27,28,29,30]).

/* USER-DEFINED DATA TYPES */

status_type(S) :- domain(S,[ok,fire,debris]).

location_type(L) :- domain(L,[loc00,loc01,loc02,loc03,loc10,loc11,loc13,loc20,loc23,loc30,
loc31,loc32,loc33]).

/* TASKS DEFINED IN THE PROCESS SPECIFICATION */
task(T) :- domain(T,[chargebattery,go,move,evacuate,removedebris,takephoto,
updatestatus,extinguishfire]).

/* THE LIST OF ADMISSIBLE IDENTIFIERS */
task_identifiers([id_1,id_2,id_3,id_4,id_5,id_6,id_7,id_8,id_9,id_10,
id_11,id_12,id_13,id_14,id_15,id_16,id_17,id_18,id_19,id_20,id_21,
id_22,id_23,id_24,id_25,id_26,id_27,id_28,id_29,id_30,id_adapt]).
id(D) :- domain(D,task_identifiers).

/* THE CAPABILITIES REQUIRED BY EACH TASK*/
requires(go,movement).
requires(evacuate,hatchet).
requires(takephoto,camera).
requires(updatestatus,gprs).
requires(extinguishfire,extinguisher).
requires(move,battery).
requires(removedebris,digger).
requires(chargebattery,powerpack).

/* THE LIST OF ADMISSIBLE WORKITEMS */
workitem(go,ID,[FROM,TO],[TO]) :- id(ID),location_type(FROM),location_type(TO).
workitem(takephoto,ID,[LOC],[true]) :- id(ID),location_type(LOC),boolean_type(true).
workitem(evacuate,ID,[LOC],[true]) :- id(ID),location_type(LOC),boolean_type(true).
workitem(move,ID,[FROM,TO],[TO]) :- id(ID),location_type(FROM),location_type(TO).
workitem(chargebattery,ID,[RB],[]) :- id(ID),service(RB).
workitem(removedebris,ID,[LOC],[ST]) :- id(ID),location_type(LOC),status_type(ST).
workitem(extinguishfire,ID,[LOC],[ST]) :- id(ID),location_type(LOC),status_type(ST).
workitem(updatestatus,ID,[LOC],[ST]) :- id(ID),location_type(LOC),status_type(ST).

/* TASKS PRECONDITIONS */

prim_action(assign(SRVC,ID,TASK,_INPUTS,_EXOUTPUTS)) :- service(SRVC),id(ID),task(TASK).

poss(assign(SRVC,[workitem(go,ID,[FROM,TO],[TO])]),
    and(isConnected(SRVC),at(SRVC)=FROM)) :-
        service(SRVC),id(ID),location_type(FROM),location_type(TO).

poss(assign(SRVC,[workitem(takephoto,ID,[LOC],[true])]),
    and(isConnected(SRVC),at(SRVC)=LOC)) :-
        service(SRVC),id(ID),location_type(LOC),boolean_type(true).

poss(assign(SRVC,[workitem(evacuate,ID,[LOC],[true])]),
    and(isConnected(SRVC),and(at(SRVC)=LOC,
    and(evacuated(LOC)=false,status(LOC)=ok)))) :-
        service(SRVC),id(ID),location_type(LOC),boolean_type(true).

poss(assign(SRVC,[workitem(move,ID,[FROM,TO],[TO])]),
    and(isRobotConnected(SRVC),
    and(atRobot(SRVC)=FROM,batteryLevel(SRVC)>=moveStep))) :-
        service(SRVC),id(ID),location_type(FROM),location_type(TO).

poss(assign(SRVC,[workitem(chargebattery,ID,[RB],[])]),
    and(isConnected(SRVC),(and(at(SRVC)=atRobot(RB),
    and(provides(RB,battery),generalBattery>=batteryRecharging))))) :-
        service(SRVC),id(ID),service(RB).

poss(assign(SRVC,[workitem(removedebris,ID,[LOC],[ST])]),
    and(isRobotConnected(SRVC),and(atRobot(SRVC)=LOC,
    and(status(LOC)=debris,batteryLevel(SRVC)>=debrisStep)))) :-
        service(SRVC),id(ID),location_type(LOC),status_type(ST).

poss(assign(SRVC,[workitem(extinguishfire,ID,[LOC],[ST])]),
    and(isConnected(SRVC),and(at(SRVC)=LOC,status(LOC)=fire))) :-
        service(SRVC),id(ID),location_type(LOC),status_type(ST).

poss(assign(SRVC,[workitem(updatestatus,ID,[LOC],[ST])]),
    and(isConnected(SRVC),and(at(SRVC)=LOC,status(LOC)=ok))) :-
        service(SRVC),id(ID),location_type(LOC),status_type(ST).

/* BASIC ACTIONS AND ENGINE FLUENTS FOR MANAGING THE TASK LIFE-CYCLE */

rel_fluent(assigned(SRVC,ID,TASK)) :- service(SRVC),id(ID),task(TASK).
causes_val(assign(SRVC,ID,TASK,_INPUTS,_EXOUTPUTS),assigned(SRVC,ID,TASK),true,true).
causes_val(release(SRVC,ID,TASK,_INPUTS,_EXOUTPUTS,_PHOUTPUTS),assigned(SRVC,ID,TASK),false,true).

rel_fluent(reserved(SRVC,ID,TASK)) :- task(TASK), service(SRVC), id(ID).
causes_val(readyToStart(SRVC,ID,TASK),reserved(SRVC,ID,TASK),true,true).
causes_val(finishedTask(SRVC,ID,TASK,_),reserved(SRVC,ID,TASK),false,true).

prim_action(start(SRVC,ID,TASK)) :- service(SRVC), id(ID), task(TASK).
poss(start(SRVC,ID,TASK),reserved(SRVC,ID,TASK)) :- service(SRVC),id(ID),task(TASK).

prim_action(ackCompl(SRVC,ID,TASK)) :- task(TASK),service(SRVC),id(ID).
poss(ackCompl(SRVC,ID,TASK),neg(reserved(SRVC,ID,TASK))).

prim_action(release(SRVC,ID,TASK,_INPUTS,_EXOUTPUTS,_PHOUTPUTS)) :- service(SRVC),id(ID),task(TASK).
poss(release(_SRVC,_ID,_TASK,_INPUTS,_EXOUTPUTS,_PHOUTPUTS), true).

rel_fluent(free(SRVC)) :- service(SRVC).
causes_val(release(SRVC,_ID,_TASK,_INPUTS,_EXOUTPUTS,_PHOUTPUTS),free(SRVC),true,true).
causes_val(assign(SRVC,_ID,_TASK,_INPUTS,_EXOUTPUTS),free(SRVC),false,true).

/* IF THIS FLUENT HOLDS, THE SYSTEM SWITCHES FROM USING THE PLAN-BASED ADAPTATION MECHANISM
TO USE THE BUILT-IN ADAPTATION APPROACH */
rel_fluent(built_in_adaptation).

/*USER-DEFINED STATIC FLUENTS*/
fun_fluent(neigh(LOC1,LOC2)) :- location_type(LOC1),location_type(LOC2).
fun_fluent(covered(LOC)) :- location_type(LOC).

/*FLUENT AT*/
fun_fluent(at(SRVC)) :- service(SRVC).

causes_val(release(SRVC,ID,go,_I,_E,[OUTPUT]),at(SRVC),OUTPUT,
    assigned(SRVC,[workitem(go,ID,[FROM,TO],[TO])])) :-
	   service(SRVC),id(ID),location_type(OUTPUT),location_type(FROM),location_type(TO).

fun_fluent(at_exp(SRVC)) :- service(SRVC).

causes_val(release(SRVC,ID,go,_I,_E,[OUTPUT]),at_exp(SRVC),TO,
    and(neg(adapting),assigned(SRVC,[workitem(go,ID,[FROM,TO],[TO])]))) :-
	   service(SRVC),id(ID),location_type(OUTPUT),location_type(FROM),location_type(TO).

/*FLUENT AT_ROBOT*/
fun_fluent(atRobot(SRVC)) :- service(SRVC).

causes_val(release(SRVC,ID,move,_I,_E,[OUTPUT]),atRobot(SRVC),OUTPUT,
    assigned(SRVC,[workitem(move,ID,[FROM,TO],[TO])])) :-
        service(SRVC),id(ID),location_type(OUTPUT),location_type(FROM),location_type(TO).


/*FLUENT EVACUATED*/
fun_fluent(evacuated(P)) :- location_type(P).

causes_val(release(SRVC,ID,evacuate,_I,_E,[OUTPUT]),evacuated(LOC),OUTPUT,
   assigned(SRVC,[workitem(evacuate,ID,[LOC],[true])])) :-
        service(SRVC),id(ID),location_type(LOC),boolean_type(true),boolean_type(OUTPUT).

fun_fluent(evacuated_exp(P)) :- location_type(P).

causes_val(release(SRVC,ID,evacuate,_I,_E,[OUTPUT]),evacuated_exp(LOC),true,
   and(neg(adapting),assigned(SRVC,[workitem(evacuate,ID,[LOC],[true])]))) :-
        service(SRVC),id(ID),location_type(LOC),boolean_type(true),boolean_type(OUTPUT).

/*FLUENT STATUS*/
fun_fluent(status(P)) :- location_type(P).

causes_val(release(SRVC,ID,extinguishfire,_I,_E,[OUTPUT]),status(LOC),OUTPUT,
    assigned(SRVC,[workitem(extinguishfire,ID,[LOC],[ok])])) :-
            location_type(LOC),status_type(ok),status_type(OUTPUT).

causes_val(release(SRVC,ID,removedebris,_I,_E,[OUTPUT]),status(LOC),OUTPUT,
    assigned(SRVC,[workitem(removedebris,ID,[LOC],[ok])])) :-
        location_type(LOC),status_type(ok),status_type(OUTPUT).

causes_val(release(SRVC,ID,updatestatus,_I,_E,[OUTPUT]),status(LOC),OUTPUT,
    assigned(SRVC,[workitem(updatestatus,ID,[LOC],[ok])])) :-
        location_type(LOC),status_type(ok),status_type(OUTPUT).

fun_fluent(status_exp(P)) :- location_type(P).

causes_val(release(SRVC,ID,extinguishfire,_I,_E,[OUTPUT]),status_exp(LOC),ok,
    and(assigned(SRVC,[workitem(extinguishfire,ID,[LOC],[ok])]),neg(adapting))) :-
        location_type(LOC),status_type(ok),status_type(OUTPUT).

causes_val(release(SRVC,ID,removedebris,_I,_E,[OUTPUT]),status_exp(LOC),ok,
    and(assigned(SRVC,[workitem(removedebris,ID,[LOC],[ok])]),neg(adapting))) :-
        location_type(LOC),status_type(ok),status_type(OUTPUT).

causes_val(release(SRVC,ID,updatestatus,_I,_E,[OUTPUT]),status_exp(LOC),ok,
    and(assigned(SRVC,[workitem(updatestatus,ID,[LOC],[ok])]),neg(adapting))) :-
	   location_type(LOC),status_type(ok),status_type(OUTPUT).

/*FLUENT BATTERYLEVEL*/
fun_fluent(batteryLevel(SRVC)) :- service(SRVC).

causes_val(release(SRVC,ID,move,_I,_E,[OUTPUT]),batteryLevel(SRVC),L,
    and(assigned(SRVC,[workitem(move,ID,[FROM,TO],[TO])]),
                            L is batteryLevel(SRVC)-moveStep)) :-
        service(SRVC),location_type(OUTPUT),location_type(FROM),location_type(TO).

causes_val(release(SRVC,ID,removedebris,_I,_E,[OUTPUT]),batteryLevel(SRVC),L,
    and(assigned(SRVC,[workitem(removedebris,ID,[LOC],[ok])]),
                                L is batteryLevel(SRVC)-debrisStep)) :-
        service(SRVC),location_type(LOC),status_type(ok),status_type(OUTPUT).

causes_val(release(SRVC,ID,chargebattery,_I,_E,[]),batteryLevel(RB),L,
    and(assigned(SRVC,[workitem(chargebattery,ID,[RB],[])]),
                            L is batteryLevel(RB)+batteryRecharging)) :-
        service(SRVC),service(RB).

/*FLUENT PHOTO TAKEN*/
fun_fluent(photoTaken(P)) :- location_type(P).

causes_val(release(SRVC,ID,takephoto,_I,_E,[OUTPUT]),photoTaken(LOC),OUTPUT,
  	assigned(SRVC,[workitem(takephoto,ID,[LOC],[true])])) :-
        	service(SRVC),id(ID),location_type(LOC),boolean_type(true),boolean_type(OUTPUT).

fun_fluent(photoTaken_exp(P)) :- location_type(P).

causes_val(release(SRVC,ID,takephoto,_I,_E,[OUTPUT]),photoTaken_exp(LOC),true,
  	and(neg(adapting),assigned(SRVC,[workitem(takephoto,ID,[LOC],[true])]))) :-
        	service(SRVC),id(ID),location_type(LOC),boolean_type(true),boolean_type(OUTPUT).

/*FLUENT GENERAL BATTERY*/
fun_fluent(generalBattery).

causes_val(release(SRVC,ID,chargebattery,_I,_E,[]),generalBattery,L,
	and(assigned(SRVC,[workitem(chargebattery,ID,[RB],[])]),L is generalBattery-batteryRecharging)) :-
        	service(SRVC),id(ID),service(RB).

/*FLUENT BATTERY RECHARGING*/
fun_fluent(batteryRecharging).

/*FLUENT MOVE STEP*/
fun_fluent(moveStep).

/*FLUENT DEBRIS STEP*/
fun_fluent(debrisStep).


/*ABBREVIATIONS*/

proc(isConnected(SRVC),
	and(provides(SRVC,movement),	
	or(covered(at(SRVC)),
    some(rb,and(service(rb),and(provides(rb,battery),
        and(or(neigh(at(SRVC),atRobot(rb)),
        at(SRVC)=atRobot(rb)),isRobotConnected(rb))))))
	)).


proc(isRobotConnected(RBT),
  and(service(RBT),and(provides(RBT,battery),
    or(covered(atRobot(RBT)),
    or(some(loc,and(location_type(loc),and(covered(loc),neigh(atRobot(RBT),loc)))),
    or(some(rb,and(service(rb),
        and(provides(rb,battery),and(neg(rb=RBT),and(neigh(atRobot(RBT),atRobot(rb)),
	or(covered(atRobot(rb)),
        some(lc2,and(location_type(lc2),and(covered(lc2),neigh(atRobot(rb),lc2)))))))))),
       		some(rb,and(service(rb),and(provides(rb,battery),and(neg(rb=RBT),
	    		some(lc,and(location_type(lc),and(neigh(atRobot(RBT),lc),and(neigh(atRobot(rb),lc),
			or(covered(atRobot(rb)),
                some(lc2,and(location_type(lc2),and(covered(lc2),neigh(atRobot(rb),lc2)))))
)))))))))
				    )
				  )
	    )	
   )
).

/* INITIAL STATE */

initially(reserved(SRVC,ID,TASK),false) :- task(TASK), service(SRVC), id(ID).
initially(assigned(SRVC,ID,TASK),false) :- task(TASK), service(SRVC), id(ID).
initially(finished,false).
initially(adapting,false).
initially(built_in_adaptation,false).

initially(free(act1),true).
initially(free(act2),true).
initially(free(act3),false).
initially(free(act4),false).
initially(free(rb1),true).
initially(free(rb2),true).

initially(batteryRecharging,10).
initially(moveStep,2).
initially(debrisStep,3).
initially(generalBattery,30).

initially(at(act1),loc00).
initially(at_exp(act1),loc00).
initially(at(act2),loc00).
initially(at_exp(act2),loc00).
initially(at(act3),loc00).
initially(at_exp(act3),loc00).
initially(at(act4),loc00).
initially(at_exp(act4),loc00).

initially(atRobot(rb1),loc00).
initially(atRobot(rb2),loc00).

initially(evacuated(loc00),false).
initially(evacuated_exp(loc00),false).
initially(evacuated(loc01),false).
initially(evacuated_exp(loc01),false).
initially(evacuated(loc02),false).
initially(evacuated_exp(loc02),false).
initially(evacuated(loc03),false).
initially(evacuated_exp(loc03),false).
initially(evacuated(loc10),false).
initially(evacuated_exp(loc10),false).
initially(evacuated(loc11),false).
initially(evacuated_exp(loc11),false).
initially(evacuated(loc13),false).
initially(evacuated_exp(loc13),false).
initially(evacuated(loc20),false).
initially(evacuated_exp(loc20),false).
initially(evacuated(loc23),false).
initially(evacuated_exp(loc23),false).
initially(evacuated(loc30),false).
initially(evacuated_exp(loc30),false).
initially(evacuated(loc31),false).
initially(evacuated_exp(loc31),false).
initially(evacuated(loc32),false).
initially(evacuated_exp(loc32),false).
initially(evacuated(loc33),false).
initially(evacuated_exp(loc33),false).

initially(photoTaken(loc00),false).
initially(photoTaken_exp(loc00),false).
initially(photoTaken(loc01),false).
initially(photoTaken_exp(loc01),false).
initially(photoTaken(loc02),false).
initially(photoTaken_exp(loc02),false).
initially(photoTaken(loc03),false).
initially(photoTaken_exp(loc03),false).
initially(photoTaken(loc10),false).
initially(photoTaken_exp(loc10),false).
initially(photoTaken(loc11),false).
initially(photoTaken_exp(loc11),false).
initially(photoTaken(loc13),false).
initially(photoTaken_exp(loc13),false).
initially(photoTaken(loc20),false).
initially(photoTaken_exp(loc20),false).
initially(photoTaken(loc23),false).
initially(photoTaken_exp(loc23),false).
initially(photoTaken(loc30),false).
initially(photoTaken_exp(loc30),false).
initially(photoTaken(loc31),false).
initially(photoTaken_exp(loc31),false).
initially(photoTaken(loc32),false).
initially(photoTaken_exp(loc32),false).
initially(photoTaken(loc33),false).
initially(photoTaken_exp(loc33),false).

initially(status(loc00),ok).
initially(status_exp(loc00),ok).
initially(status(loc01),ok).
initially(status_exp(loc01),ok).
initially(status(loc02),ok).
initially(status_exp(loc02),ok).
initially(status(loc03),ok).
initially(status_exp(loc03),ok).
initially(status(loc10),ok).
initially(status_exp(loc10),ok).
initially(status(loc11),ok).
initially(status_exp(loc11),ok).
initially(status(loc13),ok).
initially(status_exp(loc13),ok).
initially(status(loc20),ok).
initially(status_exp(loc20),ok).
initially(status(loc23),ok).
initially(status_exp(loc23),ok).
initially(status(loc30),ok).
initially(status_exp(loc30),ok).
initially(status(loc31),ok).
initially(status_exp(loc31),ok).
initially(status(loc32),ok).
initially(status_exp(loc32),ok).
initially(status(loc33),ok).
initially(status_exp(loc33),ok).

initially(batteryLevel(act1),0).
initially(batteryLevel(act2),0).
initially(batteryLevel(act3),0).
initially(batteryLevel(act4),0).
initially(batteryLevel(rb1),15).
initially(batteryLevel(rb2),15).

initially(covered(loc00),true).
initially(covered(loc10),true).
initially(covered(loc20),true).
initially(covered(loc11),true).
initially(covered(loc01),true).
initially(covered(loc02),true).

initially(neigh(loc00,loc10),true).
initially(neigh(loc00,loc11),true).
initially(neigh(loc00,loc01),true).
initially(neigh(loc11,loc10),true).
initially(neigh(loc11,loc01),true).
initially(neigh(loc11,loc00),true).
initially(neigh(loc11,loc20),true).
initially(neigh(loc11,loc02),true).
initially(neigh(loc10,loc20),true).
initially(neigh(loc10,loc00),true).
initially(neigh(loc10,loc11),true).
initially(neigh(loc10,loc01),true).
initially(neigh(loc01,loc02),true).
initially(neigh(loc01,loc11),true).
initially(neigh(loc01,loc10),true).
initially(neigh(loc01,loc00),true).
initially(neigh(loc02,loc03),true).
initially(neigh(loc02,loc13),true).
initially(neigh(loc02,loc01),true).
initially(neigh(loc02,loc11),true).
initially(neigh(loc03,loc02),true).
initially(neigh(loc03,loc13),true).
initially(neigh(loc13,loc03),true).
initially(neigh(loc13,loc23),true).
initially(neigh(loc13,loc02),true).
initially(neigh(loc23,loc13),true).
initially(neigh(loc23,loc33),true).
initially(neigh(loc23,loc32),true).
initially(neigh(loc33,loc23),true).
initially(neigh(loc33,loc32),true).
initially(neigh(loc32,loc33),true).
initially(neigh(loc32,loc23),true).
initially(neigh(loc32,loc31),true).
initially(neigh(loc31,loc32),true).
initially(neigh(loc31,loc20),true).
initially(neigh(loc31,loc30),true).
initially(neigh(loc30,loc31),true).
initially(neigh(loc30,loc20),true).
initially(neigh(loc20,loc30),true).
initially(neigh(loc20,loc31),true).
initially(neigh(loc20,loc10),true).
initially(neigh(loc20,loc11),true).

/* EXOGENOUS EVENTS USED FOR SWITCHING A TASK FROM A STATE TO ANOTHER */
prim_action(A) :- exog_action(A).
poss(A,true) :- exog_action(A).

exog_action(readyToStart(SRVC,ID,TASK)) :- task(TASK), service(SRVC), id(ID).
exog_action(finishedTask(SRVC,ID,TASK,_V)) :- task(TASK), service(SRVC), id(ID).

/* INTERNAL EXOGENOUS ACTION FOR LOADING THE RECOVERY PROCEDURE */
exog_action(planReady(_PLAN)).

/* DOMAIN-DEPENDENT EXOGENOUS EVENTS */

exog_action(photoLost(LOC)) :- location_type(LOC).
causes_val(photoLost(LOC),photoTaken(LOC),false,true).

exog_action(fireRisk(LOC)) :- location_type(LOC).
causes_val(fireRisk(LOC),status(LOC),fire,true).

exog_action(rockSlide(LOC)) :- location_type(LOC).
causes_val(rockSlide(LOC),status(LOC),debris,true).

/* INDIGOLOG INTERNAL ACTIONS */

prim_action(initPMS).
poss(initPMS, true).

prim_action(endPMS).
poss(endPMS, true).

prim_action(finish).
poss(finish,true).
rel_fluent(finished).
causes_val(finish,finished,true,true).

rel_fluent(realityChanged).
initially(realityChanged,false).
causes_val(release(_,_),realityChanged,true,neg(adapting)).

prim_action(resetReality).
poss(resetReality,true).
causes_val(resetReality,realityChanged,false,true).

prim_action(adaptFinish).
poss(adaptFinish,true).

prim_action(adaptStart).
poss(adaptStart,true).

prim_action(adaptFound).
poss(adaptFound,true).

prim_action(adaptNotFound).
poss(adaptNotFound,true).

/*COMMANDS FOR INVOKING THE PLANNER*/
prim_action(invokePlanner).
poss(invokePlanner,true).

rel_fluent(adaptationPlanReady).
initially(adaptationPlanReady,true).
causes_val(invokePlanner,adaptationPlanReady,false,true).
causes_val(planReady(_P),adaptationPlanReady,true,true).

fun_fluent(recoveryPlan).
initially(recoveryPlan,[]).
causes_val(planReady(P),recoveryPlan,P,true).

prim_action(stop).
poss(stop,false).

rel_fluent(adapting).
causes_val(adaptStart,adapting,true,true).
causes_val(adaptFound,adapting,false,true).

/* DEFINITION OF THE RELEVANT FLUENT */
proc(relevant,or(some(srvc,and(service(srvc),and(provides(srvc,movement),
    or(neg(at(srvc)=at_exp(srvc)),neg(isConnected(srvc)))))),
    some(poi,and(location_type(poi),or(neg(evacuated_exp(poi)=evacuated(poi)),
    or(neg(photoTaken_exp(poi)=photoTaken(poi)),neg(status_exp(poi)=status(poi)))))))).

/* THE MONITOR PROCEDURE */
proc(monitor,[?(writeln('Monitor')),ndet([?(neg(relevant)),?(writeln('NonRelevant'))],
					 [?(relevant),?(writeln('Relevant')),
					 [adaptStart,createPlanningProblem,?(report_message(user, 'About to adapt...')),
                     if(built_in_adaptation,pconc([adaptingProgram,adaptFinish],
                        while(adapting,[?(writeln('waiting')),wait])),
					 [invokePlanner,manageRecoveryProcess(recoveryPlan),adaptFound,adaptFinish])
					 ]]),
					 resetReality]).

/* THE BUILT-IN ADAPTATION PROCEDURE */
proc(adaptingProgram,searchn([?(true),adapt,[adaptFound,
        ?(report_message(user, 'Adaptation program found!'))]],			
        [assumptions([[assign(N,D,T,I,E)]),readyToStart(N,D,T)],
                              [start(N,D,T),finishedTask(N,D,T,E)]])])).

proc(adapt,plans(0,10)).

proc(plans(M,N),[if(M=N,[adaptNotFound],[?(M<(N+1)),
    ndet([actionSequence(M),?(neg(relevant))],
    [?(SUCCM is M+1),plans(SUCCM,N)])])]).

proc(actionSequence(N),ndet([?(N=0)],[?(N>0),
	pi([n,t,i,e],[ ?(and(service(n),and(free(n),capable(n,[workitem(t,id_adapt,i,e)])))),
	assign(n,id_adapt,t,i,e),
	start(n,id_adapt,t),
	ackCompl(n,id_adapt,t),
	release(n,id_adapt,t,i,e),
	?(PRECN is N-1), actionSequence(PRECN)])
])).

proc(capable(SRVC,TASK),and(findall(CAP,requires(TASK,CAP),D),
    and(findall(CAP,provides(SRVC,CAP),C),subset(D,C)))).

/* THE MANAGE EXECUTION PROCEDURE */

proc(manageExecution(X),
    [atomic([pi(n,[?and(free(n),capable(n,X))),
    assign(n,X)])]),pi(n,[?(assigned(n,_d,X)=true),executionHelp(n,X)]),
    atomic([pi(n,[?(assigned(n,_d,X)=true),[release(n,X),printALL]])])]).

proc(executionHelp(_N,[]),[]).
proc(executionHelp(N,[workitem(T,D,I,E)|TAIL]),
[start(N,D,T), ackCompl(N,D,T), executionHelp(N,TAIL)]).


proc(manageRecoveryProcess([]),[]).
proc(manageRecoveryProcess(N),[?(N=[recoveryTask(TASKNAME,SERVICE,INPUTS)|TAIL]),
					manageRecoveryTask(TASKNAME,SERVICE,INPUTS),manageRecoveryProcess(TAIL)]).

proc(manageRecoveryTask(TASKNAME,SERVICE,INPUTS),
[pi(o,[?(and(A=workitem(TASKNAME,_D,INPUTS,o),listelem(A))),
manageExecution(SERVICE,[workitem(TASKNAME,id_adapt,INPUTS,o)])])]).

proc(manageExecution(S,X),[assign(S,X),pi(n,[?(assigned(n,X)=true),
executionHelp(n,X)]),atomic([pi(n,[?(assigned(n,X)=true),[release(n,X),printALL]])])]).

proc(main, mainControl(N)) :- controller(N), !.
proc(main, mainControl(5)).

proc(mainControl(5), prioritized_interrupts([
interrupt(and(neg(finished),neg(adaptationPlanReady)),
    [?(writeln('>>>>>>>>>>>> Waiting for a recovery plan...')), wait]),
interrupt(and(neg(finished),realityChanged), monitor),
interrupt(true, [process,finish]),
interrupt(neg(finished), wait)])).

/* THE INDIGOLOG MAIN PROCESS */

proc(process,[initPMS,
rrobin([
[manageExecution([workitem(go,id_1,[loc00,loc33],[loc33])]),
manageExecution([workitem(takephoto,id_2,[loc33],[true])]),
 manageExecution([workitem(updatestatus,id_3,[loc33],[ok])])],
rrobin([
[manageExecution([workitem(go,id_4,[loc00,loc32],[loc32])]),
manageExecution([workitem(evacuate,id_5,[loc32],[true])]),
 manageExecution([workitem(updatestatus,id_6,[loc32],[ok])])],
[manageExecution([workitem(go,id_7,[loc00,loc31],[loc31])]),
manageExecution([workitem(evacuate,id_8,[loc31],[true])]),
 manageExecution([workitem(updatestatus,id_9,[loc31],[ok])])]
])]),
endPMS]).

/* CREATION OF THE PLANNING PROBLEM WHEN A DEVIATION IS SENSED */

proc(createPlanningProblem,[writeFile('Planners/LPG-TD/problem.pddl')]).

proc(writeFile(File),[?(open(File, write, Stream)),
?(writeln(Stream,('(define (problem EM1) (:domain Derailment)'))),
?(writeln(Stream,('(:objects'))),
?(writeln(Stream,('act1 - service'))),
?(writeln(Stream,('act2 - service'))),
?(writeln(Stream,('act3 - service'))),
?(writeln(Stream,('act4 - service'))),
?(writeln(Stream,('rb1 - service'))),
?(writeln(Stream,('rb2 - service'))),
?(writeln(Stream,('ok - status_type'))),
?(writeln(Stream,('fire - status_type'))),
?(writeln(Stream,('debris - status_type'))),
?(writeln(Stream,('movement - capability'))),
?(writeln(Stream,('hatchet - capability'))),	
?(writeln(Stream,('camera - capability'))),	
?(writeln(Stream,('gprs - capability'))),	
?(writeln(Stream,('extinguisher - capability'))),		
?(writeln(Stream,('battery - capability'))),	
?(writeln(Stream,('digger - capability'))),	
?(writeln(Stream,('powerpack - capability'))),
?(writeln(Stream,('loc00 - location_type'))),
?(writeln(Stream,('loc01 - location_type'))),
?(writeln(Stream,('loc02 - location_type'))),
?(writeln(Stream,('loc03 - location_type'))),
?(writeln(Stream,('loc10 - location_type'))),
?(writeln(Stream,('loc11 - location_type'))),
?(writeln(Stream,('loc13 - location_type'))),
?(writeln(Stream,('loc20 - location_type'))),
?(writeln(Stream,('loc23 - location_type'))),
?(writeln(Stream,('loc30 - location_type'))),
?(writeln(Stream,('loc31 - location_type'))),
?(writeln(Stream,('loc32 - location_type'))),
?(writeln(Stream,('loc33 - location_type'))),	
?(writeln(Stream,(')'))),
?(writeln(Stream,('(:init'))),
if(free(act1)=true,?(writeln(Stream,('(free act1)'))),[]),
if(free(act2)=true,?(writeln(Stream,('(free act2)'))),[]),
if(free(act1)=true,?(writeln(Stream,('(free act3)'))),[]),
if(free(act2)=true,?(writeln(Stream,('(free act4)'))),[]),
if(free(rb1)=true,?(writeln(Stream,('(free rb1)'))),[]),
if(free(rb2)=true,?(writeln(Stream,('(free rb2)'))),[]),
?(writeln(Stream,('(provides act1 movement)'))),
?(writeln(Stream,('(provides act1 gprs)'))),
?(writeln(Stream,('(provides act1 camera)'))),
?(writeln(Stream,('(provides act1 extinguisher)'))),
?(writeln(Stream,('(provides act2 movement)'))),
?(writeln(Stream,('(provides act2 gprs)'))),
?(writeln(Stream,('(provides act2 camera)'))),
?(writeln(Stream,('(provides act2 hatchet)'))),
?(writeln(Stream,('(provides act3 movement)'))),
?(writeln(Stream,('(provides act3 gprs)'))),
?(writeln(Stream,('(provides act3 camera)'))),
?(writeln(Stream,('(provides act3 hatchet)'))),
?(writeln(Stream,('(provides act4 movement)'))),	
?(writeln(Stream,('(provides act4 powerpack)'))),	
?(writeln(Stream,('(provides act4 gprs)'))),	
?(writeln(Stream,('(provides rb1 battery)'))),	
?(writeln(Stream,('(provides rb1 digger)'))),		
?(writeln(Stream,('(provides rb2 battery)'))),	
?(writeln(Stream,('(provides rb2 digger)'))),		
?(writeln(Stream,('(covered loc00)'))),		
?(writeln(Stream,('(covered loc10)'))),		
?(writeln(Stream,('(covered loc20)'))),		
?(writeln(Stream,('(covered loc01)'))),			
?(writeln(Stream,('(covered loc11)'))),		
?(writeln(Stream,('(covered loc02)'))),		
?(writeln(Stream,('(neigh loc00 loc10)'))),		
?(writeln(Stream,('(neigh loc00 loc11)'))),	
?(writeln(Stream,('(neigh loc00 loc01)'))),		
?(writeln(Stream,('(neigh loc11 loc10)'))),		
?(writeln(Stream,('(neigh loc11 loc01)'))),		
?(writeln(Stream,('(neigh loc11 loc00)'))),	
?(writeln(Stream,('(neigh loc11 loc20)'))),		
?(writeln(Stream,('(neigh loc11 loc02)'))),		
?(writeln(Stream,('(neigh loc10 loc20)'))),			
?(writeln(Stream,('(neigh loc10 loc00)'))),		
?(writeln(Stream,('(neigh loc10 loc11)'))),	
?(writeln(Stream,('(neigh loc10 loc01)'))),		
?(writeln(Stream,('(neigh loc01 loc02)'))),			
?(writeln(Stream,('(neigh loc01 loc11)'))),
?(writeln(Stream,('(neigh loc01 loc10)'))),	
?(writeln(Stream,('(neigh loc01 loc00)'))),	
?(writeln(Stream,('(neigh loc02 loc03)'))),		
?(writeln(Stream,('(neigh loc02 loc13)'))),		
?(writeln(Stream,('(neigh loc02 loc01)'))),		
?(writeln(Stream,('(neigh loc02 loc11)'))),		
?(writeln(Stream,('(neigh loc03 loc02)'))),			
?(writeln(Stream,('(neigh loc03 loc13)'))),	
?(writeln(Stream,('(neigh loc13 loc03)'))),	
?(writeln(Stream,('(neigh loc13 loc23)'))),	
?(writeln(Stream,('(neigh loc13 loc02)'))),		
?(writeln(Stream,('(neigh loc23 loc13)'))),		
?(writeln(Stream,('(neigh loc23 loc33)'))),	
?(writeln(Stream,('(neigh loc23 loc32)'))),		
?(writeln(Stream,('(neigh loc33 loc23)'))),	
?(writeln(Stream,('(neigh loc33 loc32)'))),	
?(writeln(Stream,('(neigh loc32 loc33)'))),	
?(writeln(Stream,('(neigh loc32 loc23)'))),	
?(writeln(Stream,('(neigh loc32 loc31)'))),	
?(writeln(Stream,('(neigh loc31 loc32)'))),	
?(writeln(Stream,('(neigh loc31 loc20)'))),		
?(writeln(Stream,('(neigh loc31 loc30)'))),
?(writeln(Stream,('(neigh loc30 loc31)'))),	
?(writeln(Stream,('(neigh loc30 loc20)'))),
?(writeln(Stream,('(neigh loc20 loc30)'))),
?(writeln(Stream,('(neigh loc20 loc31)'))),
?(writeln(Stream,('(neigh loc20 loc10)'))),	
?(writeln(Stream,('(neigh loc20 loc11)'))),
?(write(Stream,('('))),?(write(Stream,('at act1 '))),
?(write(Stream,(at(act1)))),?(writeln(Stream,(')'))),
?(write(Stream,('('))),?(write(Stream,('at act2 '))),
?(write(Stream,(at(act2)))),?(writeln(Stream,(')'))),
?(write(Stream,('('))),?(write(Stream,('at act3 '))),
?(write(Stream,(at(act3)))),?(writeln(Stream,(')'))),
?(write(Stream,('('))),?(write(Stream,('at act4 '))),
?(write(Stream,(at(act4)))),?(writeln(Stream,(')'))),
?(write(Stream,('('))),?(write(Stream,('atRobot rb1 '))),
?(write(Stream,(atRobot(rb1)))),?(writeln(Stream,(')'))),
?(write(Stream,('('))),?(write(Stream,('atRobot rb2 '))),
?(write(Stream,(atRobot(rb2)))),?(writeln(Stream,(')'))),
?(write(Stream,('(= '))),?(write(Stream,('(batteryLevel rb1) '))),
?(write(Stream,(batteryLevel(rb1)))),?(writeln(Stream,(')'))),
?(write(Stream,('(= '))),?(write(Stream,('(batteryLevel rb2) '))),
?(write(Stream,(batteryLevel(rb2)))),?(writeln(Stream,(')'))),
?(write(Stream,('(= '))),?(write(Stream,('(batteryRecharging) '))),
?(write(Stream,(batteryRecharging))),?(writeln(Stream,(')'))),
?(write(Stream,('(= '))),?(write(Stream,('(generalBattery) '))),
?(write(Stream,(generalBattery))),?(writeln(Stream,(')'))),
?(write(Stream,('(= '))),?(write(Stream,('(debrisStep) '))),
?(write(Stream,(debrisStep))),?(writeln(Stream,(')'))),
?(write(Stream,('(= '))),?(write(Stream,('(moveStep) '))),
?(write(Stream,(moveStep))),?(writeln(Stream,(')'))),
if(photoTaken(loc00)=true,?(writeln(Stream,('(photoTaken loc00)'))),[]),
if(photoTaken(loc01)=true,?(writeln(Stream,('(photoTaken loc01)'))),[]),
if(photoTaken(loc02)=true,?(writeln(Stream,('(photoTaken loc02)'))),[]),
if(photoTaken(loc03)=true,?(writeln(Stream,('(photoTaken loc03)'))),[]),
if(photoTaken(loc10)=true,?(writeln(Stream,('(photoTaken loc10)'))),[]),
if(photoTaken(loc11)=true,?(writeln(Stream,('(photoTaken loc11)'))),[]),
if(photoTaken(loc13)=true,?(writeln(Stream,('(photoTaken loc13)'))),[]),
if(photoTaken(loc20)=true,?(writeln(Stream,('(photoTaken loc20)'))),[]),
if(photoTaken(loc23)=true,?(writeln(Stream,('(photoTaken loc23)'))),[]),
if(photoTaken(loc30)=true,?(writeln(Stream,('(photoTaken loc30)'))),[]),
if(photoTaken(loc31)=true,?(writeln(Stream,('(photoTaken loc31)'))),[]),
if(photoTaken(loc32)=true,?(writeln(Stream,('(photoTaken loc32)'))),[]),
if(photoTaken(loc33)=true,?(writeln(Stream,('(photoTaken loc33)'))),[]),
?(write(Stream,('('))),?(write(Stream,('status loc00 '))),
?(write(Stream,(status(loc00)))),?(writeln(Stream,(')'))),
?(write(Stream,('('))),?(write(Stream,('status loc01 '))),
?(write(Stream,(status(loc01)))),?(writeln(Stream,(')'))),
?(write(Stream,('('))),?(write(Stream,('status loc02 '))),
?(write(Stream,(status(loc02)))),?(writeln(Stream,(')'))),
?(write(Stream,('('))),?(write(Stream,('status loc03 '))),
?(write(Stream,(status(loc03)))),?(writeln(Stream,(')'))),
?(write(Stream,('('))),?(write(Stream,('status loc10 '))),
?(write(Stream,(status(loc10)))),?(writeln(Stream,(')'))),
?(write(Stream,('('))),?(write(Stream,('status loc11 '))),
?(write(Stream,(status(loc11)))),?(writeln(Stream,(')'))),
?(write(Stream,('('))),?(write(Stream,('status loc13 '))),
?(write(Stream,(status(loc13)))),?(writeln(Stream,(')'))),
?(write(Stream,('('))),?(write(Stream,('status loc20 '))),
?(write(Stream,(status(loc20)))),?(writeln(Stream,(')'))),
?(write(Stream,('('))),?(write(Stream,('status loc23 '))),
?(write(Stream,(status(loc23)))),?(writeln(Stream,(')'))),
?(write(Stream,('('))),?(write(Stream,('status loc30 '))),
?(write(Stream,(status(loc30)))),?(writeln(Stream,(')'))),
?(write(Stream,('('))),?(write(Stream,('status loc31 '))),
?(write(Stream,(status(loc31)))),?(writeln(Stream,(')'))),
?(write(Stream,('('))),?(write(Stream,('status loc32 '))),
?(write(Stream,(status(loc32)))),?(writeln(Stream,(')'))),
?(write(Stream,('('))),?(write(Stream,('status loc33 '))),
?(write(Stream,(status(loc33)))),?(writeln(Stream,(')'))),
if(evacuated(loc00)=true,?(writeln(Stream,('(evacuated loc00)'))),[]),
if(evacuated(loc01)=true,?(writeln(Stream,('(evacuated loc01)'))),[]),
if(evacuated(loc02)=true,?(writeln(Stream,('(evacuated loc02)'))),[]),
if(evacuated(loc03)=true,?(writeln(Stream,('(evacuated loc03)'))),[]),
if(evacuated(loc10)=true,?(writeln(Stream,('(evacuated loc10)'))),[]),
if(evacuated(loc11)=true,?(writeln(Stream,('(evacuated loc11)'))),[]),
if(evacuated(loc13)=true,?(writeln(Stream,('(evacuated loc13)'))),[]),
if(evacuated(loc20)=true,?(writeln(Stream,('(evacuated loc20)'))),[]),
if(evacuated(loc23)=true,?(writeln(Stream,('(evacuated loc23)'))),[]),
if(evacuated(loc30)=true,?(writeln(Stream,('(evacuated loc30)'))),[]),
if(evacuated(loc31)=true,?(writeln(Stream,('(evacuated loc31)'))),[]),
if(evacuated(loc32)=true,?(writeln(Stream,('(evacuated loc32)'))),[]),
if(evacuated(loc33)=true,?(writeln(Stream,('(evacuated loc33)'))),[]),
?(writeln(Stream,(')'))),
?(writeln(Stream,('(:goal'))),
?(writeln(Stream,('(and'))),
?(write(Stream,('('))),?(write(Stream,('at act1 '))),
?(write(Stream,(at_exp(act1)))),?(writeln(Stream,(')'))),
?(write(Stream,('('))),?(write(Stream,('at act2 '))),
?(write(Stream,(at_exp(act2)))),?(writeln(Stream,(')'))),
?(write(Stream,('('))),?(write(Stream,('at act3 '))),
?(write(Stream,(at_exp(act3)))),?(writeln(Stream,(')'))),
?(write(Stream,('('))),?(write(Stream,('at act4 '))),
?(write(Stream,(at_exp(act4)))),?(writeln(Stream,(')'))),
?(write(Stream,('('))),?(write(Stream,('status loc00 '))),
?(write(Stream,(status_exp(loc00)))),?(writeln(Stream,(')'))),
?(write(Stream,('('))),?(write(Stream,('status loc01 '))),
?(write(Stream,(status_exp(loc01)))),?(writeln(Stream,(')'))),
?(write(Stream,('('))),?(write(Stream,('status loc02 '))),
?(write(Stream,(status_exp(loc02)))),?(writeln(Stream,(')'))),
?(write(Stream,('('))),?(write(Stream,('status loc03 '))),
?(write(Stream,(status_exp(loc03)))),?(writeln(Stream,(')'))),
?(write(Stream,('('))),?(write(Stream,('status loc10 '))),
?(write(Stream,(status_exp(loc10)))),?(writeln(Stream,(')'))),
?(write(Stream,('('))),?(write(Stream,('status loc11 '))),
?(write(Stream,(status_exp(loc11)))),?(writeln(Stream,(')'))),
?(write(Stream,('('))),?(write(Stream,('status loc13 '))),
?(write(Stream,(status_exp(loc13)))),?(writeln(Stream,(')'))),
?(write(Stream,('('))),?(write(Stream,('status loc20 '))),
?(write(Stream,(status_exp(loc20)))),?(writeln(Stream,(')'))),
?(write(Stream,('('))),?(write(Stream,('status loc23 '))),
?(write(Stream,(status_exp(loc23)))),?(writeln(Stream,(')'))),
?(write(Stream,('('))),?(write(Stream,('status loc30 '))),
?(write(Stream,(status_exp(loc30)))),?(writeln(Stream,(')'))),
?(write(Stream,('('))),?(write(Stream,('status loc31 '))),
?(write(Stream,(status_exp(loc31)))),?(writeln(Stream,(')'))),
?(write(Stream,('('))),?(write(Stream,('status loc32 '))),
?(write(Stream,(status_exp(loc32)))),?(writeln(Stream,(')'))),
?(write(Stream,('('))),?(write(Stream,('status loc33 '))),
?(write(Stream,(status_exp(loc33)))),?(writeln(Stream,(')'))),
if(evacuated_exp(loc00)=true,?(writeln(Stream,('(evacuated loc00)'))),[]),
if(evacuated_exp(loc01)=true,?(writeln(Stream,('(evacuated loc01)'))),[]),
if(evacuated_exp(loc02)=true,?(writeln(Stream,('(evacuated loc02)'))),[]),
if(evacuated_exp(loc03)=true,?(writeln(Stream,('(evacuated loc03)'))),[]),
if(evacuated_exp(loc10)=true,?(writeln(Stream,('(evacuated loc10)'))),[]),
if(evacuated_exp(loc11)=true,?(writeln(Stream,('(evacuated loc11)'))),[]),
if(evacuated_exp(loc13)=true,?(writeln(Stream,('(evacuated loc13)'))),[]),
if(evacuated_exp(loc20)=true,?(writeln(Stream,('(evacuated loc20)'))),[]),
if(evacuated_exp(loc23)=true,?(writeln(Stream,('(evacuated loc23)'))),[]),
if(evacuated_exp(loc30)=true,?(writeln(Stream,('(evacuated loc30)'))),[]),
if(evacuated_exp(loc31)=true,?(writeln(Stream,('(evacuated loc31)'))),[]),
if(evacuated_exp(loc32)=true,?(writeln(Stream,('(evacuated loc32)'))),[]),
if(evacuated_exp(loc33)=true,?(writeln(Stream,('(evacuated loc33)'))),[]),
if(photoTaken_exp(loc00)=true,?(writeln(Stream,('(photoTaken loc00)'))),[]),
if(photoTaken_exp(loc01)=true,?(writeln(Stream,('(photoTaken loc01)'))),[]),
if(photoTaken_exp(loc02)=true,?(writeln(Stream,('(photoTaken loc02)'))),[]),
if(photoTaken_exp(loc03)=true,?(writeln(Stream,('(photoTaken loc03)'))),[]),
if(photoTaken_exp(loc10)=true,?(writeln(Stream,('(photoTaken loc10)'))),[]),
if(photoTaken_exp(loc11)=true,?(writeln(Stream,('(photoTaken loc11)'))),[]),
if(photoTaken_exp(loc13)=true,?(writeln(Stream,('(photoTaken loc13)'))),[]),
if(photoTaken_exp(loc20)=true,?(writeln(Stream,('(photoTaken loc20)'))),[]),
if(photoTaken_exp(loc23)=true,?(writeln(Stream,('(photoTaken loc23)'))),[]),
if(photoTaken_exp(loc30)=true,?(writeln(Stream,('(photoTaken loc30)'))),[]),
if(photoTaken_exp(loc31)=true,?(writeln(Stream,('(photoTaken loc31)'))),[]),
if(photoTaken_exp(loc32)=true,?(writeln(Stream,('(photoTaken loc32)'))),[]),
if(photoTaken_exp(loc33)=true,?(writeln(Stream,('(photoTaken loc33)'))),[]),
?(writeln(Stream,('(isConnected act1)'))),
?(writeln(Stream,('(isConnected act2)'))),
?(writeln(Stream,('(isConnected act3)'))),
?(writeln(Stream,('(isConnected act4)'))),
?(writeln(Stream,('))'))),
?(writeln(Stream,('(:metric minimize (total-time))'))),
?(writeln(Stream,(')'))),
?(nl(Stream)),?(close(Stream))]).

\end{alltt}
\end{footnotesize}

\section*{PDDL Planning Domain}

\begin{footnotesize}
\begin{alltt}
(define (domain Derailment)
(:requirements :derived-predicates :typing :fluents :equality)
(:types service capability location_type status_type)

(:predicates
(free ?x - service)
(provides ?x - service ?c - capability)
(neigh ?y1 - location_type ?y2 - location_type)
(covered ?y - location_type)
(at ?x - service ?y - location_type)
(atRobot ?x - service ?y - location_type)
(status ?y - location_type ?s - status_type)
(evacuated ?y - location_type)
(photoTaken ?l - location_type)
(isRobotConnected ?x - service)
(isConnected ?x - service)
)

(:functions
(batteryLevel ?x - service)
(moveStep)
(debrisStep)
(generalBattery)
(batteryRecharging)
)

(:action go
:parameters (?x - service ?from - location_type ?to - location_type)
:precondition (and (provides ?x movement) (free ?x) (at ?x ?from) (isConnected ?x))
:effect (and (not (at ?x ?from)) (at ?x ?to))
)

(:action move
:parameters (?x - service ?from - location_type ?to - location_type)
:precondition (and (provides ?x battery) (free ?x) (atRobot ?x ?from) (>= (batteryLevel ?x)
                   (moveStep)) (isRobotConnected ?x))
:effect (and (not (atRobot ?x ?from)) (atRobot ?x ?to) (decrease (batteryLevel ?x) (moveStep)))
)

(:action extinguishfire
:parameters (?x - service ?y - location_type)
:precondition (and (provides ?x extinguisher) (free ?x) (at ?x ?y)
                   (status ?y fire) (isConnected ?x))
:effect (status ?y ok)
)

(:action evacuate
:parameters (?x - service ?y - location_type)
:precondition (and (provides ?x hatchet) (free ?x) (at ?x ?y) (status ?y ok)
                   (not (evacuated ?y)) (isConnected ?x))
:effect (evacuated ?y)
)

(:action removedebris
:parameters (?x - service ?y - location_type)
:precondition (and (provides ?x digger) (free ?x) (atRobot ?x ?y) (status ?y debris)
              (>= (batteryLevel ?x) (debrisStep)) (isRobotConnected ?x))
:effect (and (status ?y ok) (decrease (batteryLevel ?x) (debrisStep)))
)

(:action chargebattery
:parameters (?x1 - service ?x2 - service ?y - location_type)
:precondition (and (provides ?x1 powerpack) (free ?x1) (at ?x1 ?y) (provides ?x2 battery)
                   (atRobot ?x2 ?y)  (>= (generalBattery) (batteryRecharging)) (isConnected ?x1))
:effect (and (decrease (generalBattery) (batteryRecharging)) 
             (increase (batteryLevel ?x2) (batteryRecharging)))
)

(:action updatestatus
:parameters (?x - service ?y - location_type)
:precondition (and (provides ?x gprs) (free ?x) (at ?x ?y) (status ?y ok) (isConnected ?x))
:effect (status ?y ok)
)

(:action takephoto
:parameters (?x - service ?y - location_type)
:precondition (and (provides ?x camera) (free ?x) (at ?x ?y) (isConnected ?x))
:effect (photoTaken ?y)
)

(:derived (isConnected ?x - service)
	(and (provides ?x movement)
	(or (exists (?y - location_type) (and (at ?x ?y) (covered ?y)))	
            (exists (?r - service ?y - location_type ?z - location_type)
                    (and (provides ?r battery) (at ?x ?y) (atRobot ?r ?z)
						 (or (neigh ?y ?z) (= ?y ?z)) (isRobotConnected ?r)))
									      )
									  )
)

(:derived (isRobotConnected ?x - service) (and (provides ?x battery)
	  (or (exists (?y - location_type)  (and (atRobot ?x ?y) (covered ?y)))
	      (exists (?y - location_type ?z - location_type) 
                (and (atRobot ?x ?y) (covered ?z) (neigh ?y ?z)))
	      (exists (?r - service ?y - location_type ?z - location_type ?k - location_type)
                  (and (provides ?r battery) (not (= ?x ?r))
		      (atRobot ?x ?y) (atRobot ?r ?z) (neigh ?y ?z) (or (covered ?z) 
                    (and (covered ?k) (neigh ?k ?z)))))
	      (exists (?r - service ?y - location_type ?z - location_type 
                        ?k - location_type ?c - location_type)
                  (and (provides ?r battery)
	              (not (= ?x ?r)) (atRobot ?x ?y) (atRobot ?r ?z) (neigh ?y ?k) (neigh ?z ?k)
                       (or (covered ?z) (and (covered ?k) (neigh ?k ?z))) ))
	  )
)
)
)
\end{alltt}
\end{footnotesize}

\section*{PDDL Planning Problem}

\begin{footnotesize}
\begin{alltt}
(define (problem EM1) (:domain Derailment)
(:objects
act1 - service
act2 - service
act3 - service
act4 - service
rb1 - service
rb2 - service
ok - status_type
fire - status_type
debris - status_type
movement - capability
hatchet - capability
camera - capability
gprs - capability
extinguisher - capability
battery - capability
digger - capability
powerpack - capability
loc00 - location_type
loc01 - location_type
loc02 - location_type
loc03 - location_type
loc10 - location_type
loc11 - location_type
loc13 - location_type
loc20 - location_type
loc23 - location_type
loc30 - location_type
loc31 - location_type
loc32 - location_type
loc33 - location_type
)
(:init
(free act1)
(free act2)
(free act3)
(free act4)
(free rb1)
(free rb2)
(provides act1 movement)
(provides act1 gprs)
(provides act1 camera)
(provides act1 extinguisher)
(provides act2 movement)
(provides act2 gprs)
(provides act2 camera)
(provides act2 hatchet)
(provides act3 movement)
(provides act3 gprs)
(provides act3 camera)
(provides act3 hatchet)
(provides act4 movement)
(provides act4 powerpack)
(provides act4 gprs)
(provides rb1 battery)
(provides rb1 digger)
(provides rb2 battery)
(provides rb2 digger)
(covered loc00)
(covered loc10)
(covered loc20)
(covered loc01)
(covered loc11)
(covered loc02)
(neigh loc00 loc10)
(neigh loc00 loc11)
(neigh loc00 loc01)
(neigh loc11 loc10)
(neigh loc11 loc01)
(neigh loc11 loc00)
(neigh loc11 loc20)
(neigh loc11 loc02)
(neigh loc10 loc20)
(neigh loc10 loc00)
(neigh loc10 loc11)
(neigh loc10 loc01)
(neigh loc01 loc02)
(neigh loc01 loc11)
(neigh loc01 loc10)
(neigh loc01 loc00)
(neigh loc02 loc03)
(neigh loc02 loc13)
(neigh loc02 loc01)
(neigh loc02 loc11)
(neigh loc03 loc02)
(neigh loc03 loc13)
(neigh loc13 loc03)
(neigh loc13 loc23)
(neigh loc13 loc02)
(neigh loc23 loc13)
(neigh loc23 loc33)
(neigh loc23 loc32)
(neigh loc33 loc23)
(neigh loc33 loc32)
(neigh loc32 loc33)
(neigh loc32 loc23)
(neigh loc32 loc31)
(neigh loc31 loc32)
(neigh loc31 loc20)
(neigh loc31 loc30)
(neigh loc30 loc31)
(neigh loc30 loc20)
(neigh loc20 loc30)
(neigh loc20 loc31)
(neigh loc20 loc10)
(neigh loc20 loc11)
(at act1 loc03)
(at act2 loc00)
(at act3 loc00)
(at act4 loc00)
(atRobot rb1 loc00)
(atRobot rb2 loc00)
(= (batteryLevel rb1) 15)
(= (batteryLevel rb2) 15)
(= (batteryRecharging) 10)
(= (generalBattery) 30)
(= (debrisStep) 3)
(= (moveStep) 2)
(status loc00 ok)
(status loc01 ok)
(status loc02 ok)
(status loc03 ok)
(status loc10 ok)
(status loc11 ok)
(status loc13 ok)
(status loc20 ok)
(status loc23 ok)
(status loc30 ok)
(status loc31 ok)
(status loc32 ok)
(status loc33 ok)
)
(:goal
(and
(at act1 loc33)
(at act2 loc00)
(at act3 loc00)
(at act4 loc00)
(status loc00 ok)
(status loc01 ok)
(status loc02 ok)
(status loc03 ok)
(status loc10 ok)
(status loc11 ok)
(status loc13 ok)
(status loc20 ok)
(status loc23 ok)
(status loc30 ok)
(status loc31 ok)
(status loc32 ok)
(status loc33 ok)
(isConnected act1)
(isConnected act2)
(isConnected act3)
(isConnected act4)
))
(:metric minimize (total-time))
)
\end{alltt}
\end{footnotesize} 
%\input{Appendix/BPMNtoIndiGolog}
%\input{Appendix/BPMNtoPDDL}

%\marginpar{This is a fancy margin note!}

\backmatter
% bibliography
\cleardoublepage
%\phantomsection
\bibliographystyle{sapthesis} % BibTeX style
\bibliography{myPapers,state_of_the_art,sitcalc,WfMS_AI,planning} % BibTeX database without .bib extension
\end{sloppypar}
\end{document}